\newcommand{\Subscript}[2]{#1$_#2$}
\def\floor#1{\lfloor #1 \rfloor}
\def\1{\bm{1}}
\def\cN{{\mathcal{N}}}
\def\rd{{\textnormal{d}}}
\def\rvr{{\mathbf{r}}}
\def\rvv{{\mathbf{v}}}
\def\rvx{{\mathbf{x}}}
\def\EE{{\mathbb{E}}}
\def\vzero{{\bm{0}}}
\def\vb{{\bm{b}}}
\def\ve{{\bm{e}}}
\def\vv{{\bm{v}}}
\def\vx{{\bm{x}}}
\def\vy{{\bm{y}}}
\def\vz{{\bm{z}}}
\def\mI{{\bm{I}}}
\DeclareMathAlphabet{\mathsfit}{\encodingdefault}{\sfdefault}{m}{sl}
\SetMathAlphabet{\mathsfit}{bold}{\encodingdefault}{\sfdefault}{bx}{n}
\newcommand{\grad}{\ensuremath{\nabla}}
\newcommand{\E}{\mathbb{E}}
\newcommand{\R}{\mathbb{R}}
\newcommand{\KL}{D_{\mathrm{KL}}}
\newcommand{\TVD}{D_{\mathrm{TV}}}
\newcommand{\der}{\mathrm{d}}
\DeclareMathOperator{\Tr}{Tr}
\def\thickhline{%
  \noalign{\ifnum0=`}\fi\hrule \@height \thickarrayrulewidth \futurelet
   \reserved@a\@xthickhline}
\def\@xthickhline{\ifx\reserved@a\thickhline
               \vskip\doublerulesep
               \vskip-\thickarrayrulewidth
             \fi
      \ifnum0=`{\fi}}
\newlength{\thickarrayrulewidth}
\newtheorem{lemma}{Lemma}
\newtheorem{theorem}{Theorem}
\newtheorem{proposition}{Proposition}
\newtheorem{definition}{Definition}
\newtheorem{corollary}[theorem]{Corollary}
\title{Reverse Diffusion Monte Carlo}
\newcommand*\samethanks[1][\value{footnote}]{\footnotemark[#1]}
\author{{Xunpeng Huang\thanks{Equal contribution. Random order.}$\ \ ^\dag$
\quad Hanze Dong\samethanks$\ \ ^{\dag\S}$
\quad Yifan Hao$^\dag$
\quad Yi-An Ma$^\ddag$
\quad Tong Zhang$^\P$
}\\\\
$^\dag$ {The Hong Kong University of Science and Technology}\\
$^\S$ Salesforce AI Research\\
$^\ddag$ {University of California, San Diego}\\$^\P$ University of Illinois Urbana-Champaign 
}
\begin{document}

\maketitle

\begin{abstract}
We propose a Monte Carlo sampler from the reverse diffusion process.
Unlike the practice of diffusion models, where the intermediary updates---the score functions---are learned with a neural network, we transform the score matching problem into a mean estimation one.
By estimating the means of the regularized posterior distributions, we derive a novel Monte Carlo sampling algorithm called reverse diffusion Monte Carlo (rdMC), which is distinct from the Markov chain Monte Carlo (MCMC) methods. We determine the sample size from the error tolerance and the properties of the posterior distribution to yield an algorithm that can approximately sample the target distribution with any desired accuracy. Additionally, we demonstrate and prove under suitable conditions that sampling with rdMC can be significantly faster than that with MCMC. 
For multi-modal target distributions such as those in Gaussian mixture models, rdMC greatly improves over the Langevin-style MCMC sampling methods both theoretically and in practice. 
The proposed rdMC method offers a new perspective and solution beyond classical MCMC algorithms for the challenging complex distributions.



\end{abstract}

\section{Introduction}

Recent success of diffusion models has shown great promise for the the reverse diffusion processes in generating samples from a complex distribution~\citep{song2020score,rombach2022high}.
In the existing line of works, one is given samples from the target distribution and aims to generate more samples from the same target. 
One would diffuse the target distribution into a standard normal one, and use score matching to learn the transitions between the consecutive intermediary distributions~\citep{ho2020denoising,song2020score}.
Reversing the learned diffusion process leads us back to the target distribution.
The benefit of the reverse diffusion process lies in the efficiency of convergence from any complex distribution to a normal one~\citep{chen2022sampling,lee2023convergence}.
For example, diffusing a target multi-modal distribution into a normal one is not harder than diffusing a single mode.
Backtracking the process from the normal distribution directly yields the desired multi-modal target.
If one instead adopts the forward diffusion process from a normal distribution to the multi-modal one, there is the classical challenge of mixing among the modes, as illustrated in Figure ~\ref{fig:gmm_intro}.
This observation motivates us to ask:
\begin{center}
{\it Can we create an efficient, general purpose Monte Carlo sampler from reverse diffusion processes?}
\end{center}

For Monte Carlo sampling, while we have access to an unnormalized density function $p_*(\vx)\propto\exp(-f_*(\vx))$, samples from the target distribution are unavailable~\citep{neal1993probabilistic,jerrum1996markov,robert1999monte}.
As a result, we need a different and yet efficient method of score estimation to perform the reverse SDE.
This leads to the first contribution of this paper.
We leverage the fact that the diffusion process from our target distribution $p_*$ towards a standard normal one $p_{\infty}$ is an Ornstein-Uhlenbeck (OU) process, which admits explicit solutions.
We thereby transform the score matching problem into a non-parametric mean estimation one, \emph{without} training a parameterized diffusion model.
We name this new algorithm as  {\emph{reverse diffusion Monte Carlo} (\textsc{rdMC})}.

To implement \textsc{rdMC} and solve the aforementioned mean estimation problem, we propose two approaches.
One is to sample from a normal distribution $\rho_t$---determined at each time $t$ by the transition kernel of the OU process---then compute the mean estimates weighted by the target $p_*$.
This approach translates all the computational challenge to the sample complexity in the importance sampling estimator.
The iteration complexity required to achieve an overall $\epsilon$ TV accuracy is $O(\epsilon^{-2})$, under minimal assumptions.
Another approach is to use Unadjusted Langevin Algorithm (ULA) to generate samples from the product distribution of the target density $p_*$ and the normal one $\rho_t$.
This approach greatly reduces sample complexity in high dimensions, and yet is a better conditioned algorithm than ULA over $p_*$ due to the multiplication of the normal distribution.
In our experiments, we find that a combination of the two approaches excels at distributions with multiple modes.

We then analyze the efficacy and efficiency of our \textsc{rdMC} method.
We study the benefits of the reverse diffusion approach for both  \emph{multi-modal distributions} and high dimensional heavy-tail distributions that \emph{breaks the isoperimetric properties}~\citep{gross1975logarithmic,poincare1890equations,vempala2019rapid}.
In multi-modal distributions, the Langevin algorithm based MCMC approaches suffer from an exponential cost for the need of barrier crossing, which makes mixing time extremely long.
The \textsc{rdMC} approach can circumvent the hurdle and solve the problem.
For high-dimensional heavy-tail distributions, our \textsc{rdMC}  method circumvents the often-required isoperimetric properties of the target distributions, thereby avoiding the curse of dimensionality.

\begin{figure}
    \centering
     \vspace{-0.3cm}
    \begin{tabular}{ccccc}
          \includegraphics[width=2.35cm]{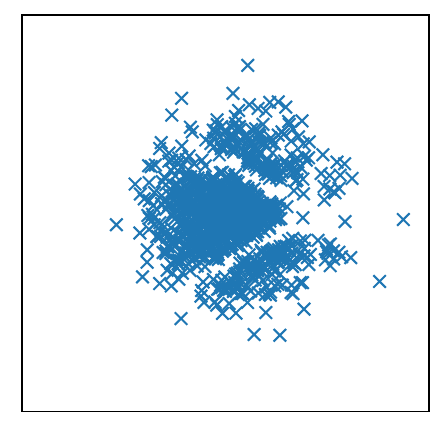} & \includegraphics[width=2.35cm]{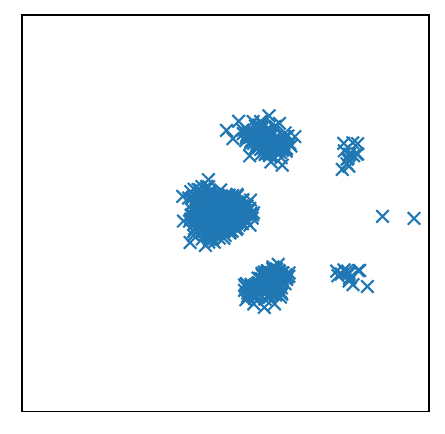} & 
          \includegraphics[width=2.35cm]{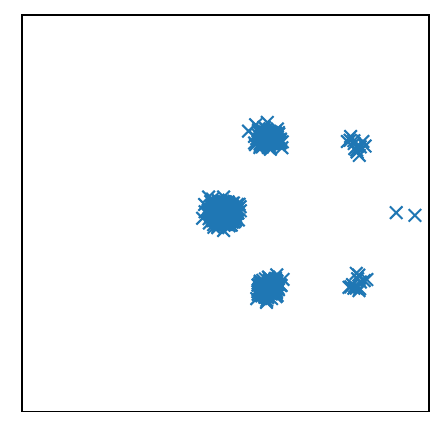} &
             \includegraphics[width=2.35cm]{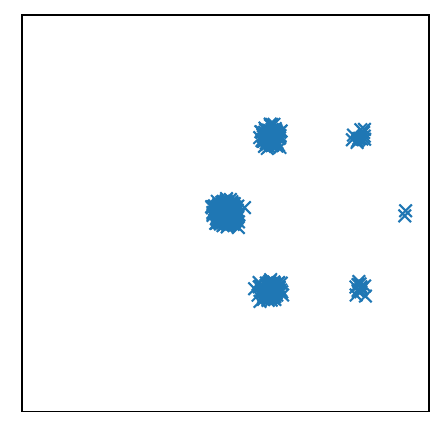} &
          \includegraphics[width=2.35cm]{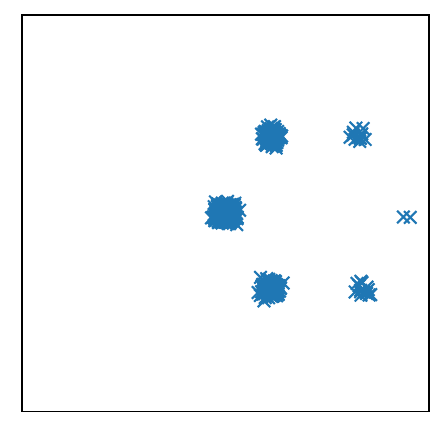} \\
          \includegraphics[width=2.35cm]{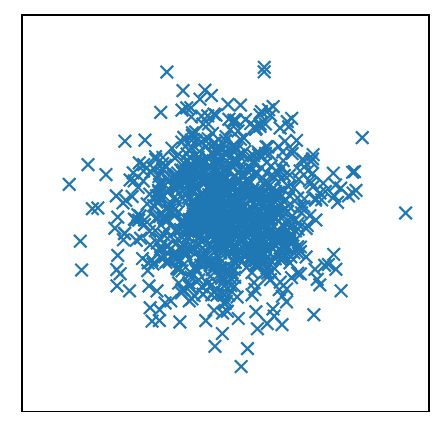} &  
      \includegraphics[width=2.35cm]{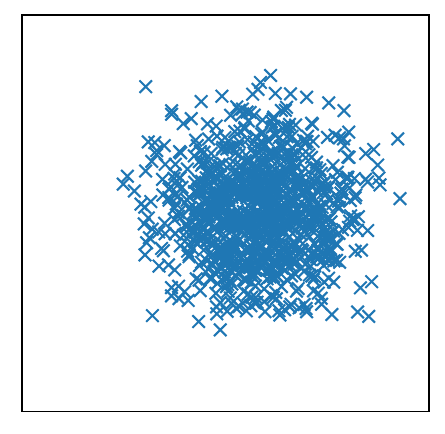} & \includegraphics[width=2.35cm]{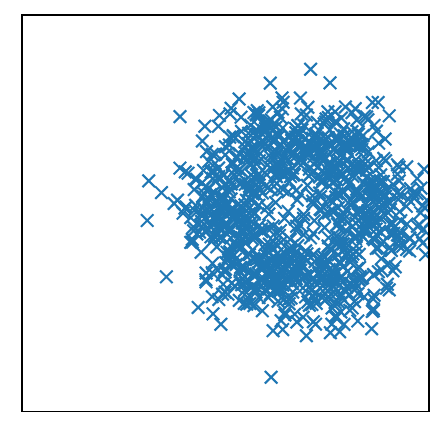} & 
          \includegraphics[width=2.35cm]{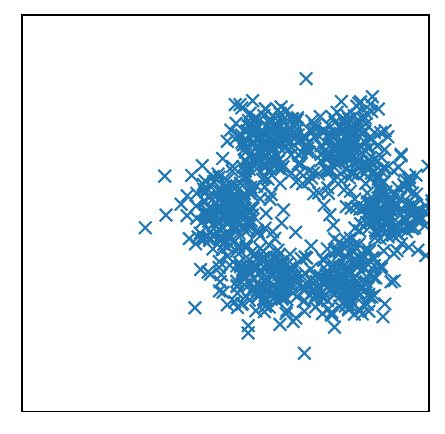} &
          \includegraphics[width=2.35cm]{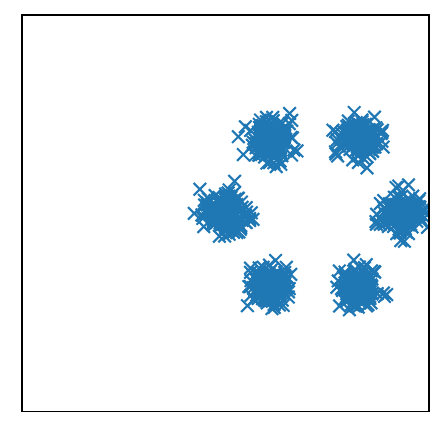} \\
    \end{tabular}
    \vspace{-0.3cm}
    \caption{ Langevin dynamics (first row) versus reverse SDE (second row). The first and second rows depict the intermediate states of the Langevin algorithm and the reverse SDE, respectively, illustrating the transition from a standard normal $p_0$ to a Gaussian mixture $p_*$. It can be observed that due to the local nature of the information contained in $\nabla \ln p_*$, the Langevin algorithm tends to get stuck in modes close to the initializations. In contrast, the reverse SDE excels at transporting particles to different modes proportional to the target densities.}
    \vspace{-0.3cm}
    \label{fig:gmm_intro}
\end{figure}

\noindent\textbf{Contributions.} 
{
We propose a non-parametric sampling algorithm that leverages the reverse SDE of the OU process. 
Our proposed approach involves estimating the score function by a mean estimation sub-problem for posteriors, enabling the efficient generation of samples through the reverse SDE.
We focus on the complexity of the sub-problems and
establish the convergence of our algorithm. We found that our approach effectively tackles sampling tasks with ill-conditioned log-Sobolev constants. For example, it excels in challenging scenarios characterized by multi-modal and long-tailed distributions.
Our analysis sheds light on the varying complexity of the sub-problems at different time points, providing a fresh perspective for revisiting score estimation in diffusion-based models.
}


\section{Preliminaries}

In this section, we begin by introducing related work from the perspectives of Markov Chain Monte Carlo (MCMC) and diffusion models, and we discuss the connection between these works and the present paper. 
Next, we provide a notation for the reverse process of diffusion models, which specifies the stochastic differential equation (SDE) that particles follow in \textsc{rdMC}.

We first introduce the related works as below.


\noindent\textbf{Langevin-based MCMC.} The mainstream gradient-based sampling algorithms are mainly based on the continuous Langevin dynamics (LD) for sampling from a target distribution $p_*\propto \exp(-f_*)$.
The Unadjusted Langevin Algorithm (ULA) discretizes LD using the Euler-Maruyama scheme and obtains a biased stationary distribution.
Due to its simplicity, ULA is widely used in machine learning.
Its convergence has been investigated for different criteria, including Total Variation (TV) distance, Wasserstein distances \citep{durmus2019high}, and Kullback-Leibler (KL) divergence \citep{dalalyan2017further, cheng2018convergence, ma2019sampling, freund2021convergence}, in different settings, such as strongly log-concave, log-Sobolev inequality (LSI) \citep{vempala2019rapid}, and Poincar\'e inequality (PI) \citep{chewi2021analysis}. 
Several works have achieved acceleration convergence for ULA by decreasing the discretization error with higher-order SDE, e.g., underdamped Langevin dynamics \citep{cheng2018underdamped, ma2021there,mou2021high}, and aligning the discretized stationary distribution to the target $p_*$, e.g., Metropolis-adjusted Langevin algorithm \citep{dwivedi2018log} and proximal samplers \citep{lee2021structured,chen2022improved,chen2022improved,altschuler2023faster}. \cite{liu2016stein,dong2023particlebased} also attempt to perform sampling tasks with deterministic algorithms whose limiting ODE is derived from the Langevin dynamics.

Regarding the convergence guarantees, most of these works have landscape assumptions for the target distribution $p_*$, e.g., strong log-concavity, LSI, or PI.
For more general distributions, \citet{erdogdu2021convergence} and \citet{mousavi2023towards} consider KL convergence in modified LSI and weak PI.
These extensions allow for slower tail-growth of negative log-density $f_*$, compared to the quadratic or even linear case.
Although these works extend ULA to more general distributions, the computational burden of ill-conditioned isoperimetry still exists, e.g., exponentially dependent on the dimension \citep{raginsky2017non}.  
In this paper, we introduce another SDE to guide the sampling algorithm, which is  non-Markovian and time-inhomogeneous.
Our algorithm discretizes such an SDE and can reduce the isoperimetric and dimension dependence wrt TV convergence. 

\noindent\textbf{Diffusion Models and Stochastic Localization.}
In recent years, diffusion models have gained significant attention due to their ability to generate high-quality samples~\citep{ho2020denoising,rombach2022high}.
The core idea of diffusion models is to parameterize the score, i.e., the gradient of the log-density, during the entire forward OU process from the target distribution $p_*$.
In this condition, the reverse SDE is associated with the inference process in diffusion models to perform unnormalized sampling for intricate target distributions.
Apart from conventional MCMC trajectories, the most desirable property of this process is that, if the score function can be well approximated, it can sample from a general distribution~\cite{chen2022sampling,lee2022convergence,lee2023convergence}.
This implies the reverse SDE has a lower dependency on the isoperimetric requirement for the target distribution, which inspires the designs of new sampling algorithms. On the other hand, stochastic localization framework \citep{eldan2013thin,eldan2020taming,eldan2022analysis,el2022sampling,montanari2023sampling} formalizes the general reverse diffusion and discusses the relationship with current diffusion models.
Along this line, \citet{montanari2023posterior} consider the unnormalized sampling for the symmetric spiked model. However, for the properties of the general unnormalized sampling,  the investigation is limited.

This work employs the backward path of diffusion models to design a sampling strategy which we can prove to be more effective than the forward path of Langevin algorithm under suitable conditions. Several other recent studies \citep{vargas2023denoising,berner2022optimal,zhang2023diffusion,vargas2023expressiveness,vargas2023bayesian} 
have also utilized the diffusion path in their posterior samplers. 
Instead of the closed form MC sampling approach, the above works learns an approximate score function via a parametrized model, following the standard practice of the diffusion generative models.
The resulting approximate distribution following the backward diffusion path is akin to the ones obtained from variational inference, and contains errors that depend on the expressiveness of the parametric models for the score functions.
In addition, the convergence of the sampling process using such an approach depends on the generalization error of learned score functions \citep{tzen2019theoretical} (See Appendix for more details). It remains open how to bound sample complexity of such approaches, due to the need for bounding the error of the learned score function along the entire trajectory.

In contrast, \textsc{rdMC} proposed in this paper can be directly analyzed.
Specifically, our algorithm draws samples from the unnormalized distribution without training data and the algorithm is designed to avoid learning based score function estimation.
We are interested in the comparisons between \textsc{rdMC} and conventional MCMC algorithms.
Thus, we analyze the complexity to estimate the score by drawing samples from the posterior distribution and found that our proposed algorithm is much more efficient in certain cases when the log-Sobolev constant of the target distribution is small. 



\noindent\textbf{Reverse SDE in Diffusion Models.}
In this part, we introduce the notations and formulation of the reverse SDE associated with the inference process in diffusion models, which will also be commonly used in the following sections. We begin with an OU process starting from $p_*$ formulated as

\vspace{-.45cm}
\begin{equation}
    \label{eq:ou_sde}
    \der \vx_t = -\vx_t \der t + \sqrt{2}\der B_t,\quad \vx_0\sim p_0 \propto e^{-f_*},
\end{equation}
\vspace{-.45cm}

where $(B_t)_{t\ge 0}$ is Brownian motion and $p_0$ is assigned as $p_*$. 
This is similar to the forward process of diffusion models~\citep{song2020score} whose stationary distribution is $\cN\left(\vzero,\mI\right)$ and
we can track the random variable $\vx$ at time $t$ with density function $p_t$.

According to \cite{cattiaux2021time}, under mild conditions in the following forward process
$
    \der \vx_t = b_t(\vx_t)\der t + \sqrt{2}\der B_t,
$
the reverse process also admits an SDE description. If we fix the terminal time $T>0$ and set
$
    \tilde{\vx}_t = \vx_{T-t},\ \mathrm{for}\ t\in[0, T],
$
 the process $(\tilde{\vx}_t)_{t\in[0,T]}$ satisfies the SDE
$
    \der \tilde{\vx}_t = \tilde{b}_t(\tilde{\vx}_t)\der t+ \sqrt{2} \der B_t,
$
where the reverse drift satisfies the relation
$
    b_t + \tilde{b}_{T-t} = 2 \grad \ln p_t,  \vx_t\sim p_t.$
In this condition, the reverse process of Eq.~\eqref{eq:ou_sde} is as $
   \der \tilde{\vx}_t = \left(\tilde{\vx}_t + 2\grad\ln p_{T-t}(\tilde{\vx}_t)\right)\der t +\sqrt{2} \der B_t, \tilde{\vx}_t\sim \tilde{p}_t.
$
Thus, once the score function $\nabla\ln p_{T-t}$ is obtained, the reverse SDE induce a sampling algorithm.

\noindent\textbf{Discretization and realization of the reverse process. }
To numerically solve the previous SDE, suppose $k\coloneqq \floor{t/\eta}$ for any $t\in[0,T]$, we approximate the score function $\grad\ln p_{T-t}$ with $\vv_{k}$ for $t\in [k\eta, (k+1)\eta]$. 
This modification results in a new SDE, given by

\vspace{-.45cm}
\begin{equation}
    \label{sde:readl_prac_bps}
    \der \tilde{\vx}_t = \left(\tilde{\vx}_t + \vv_k(\tilde{\vx}_{k\eta})\right)\der t + \sqrt{2} \der B_t, \quad t\in\left[k\eta, (k+1)\eta\right].
\end{equation}
\vspace{-.45cm}

Specifically, when $k=0$, we set $\tilde{\vx}_0$  to be sampled from $\tilde{p}_0$, which can approximate $p_T$.
To find the closed form of the solution by setting an auxiliary random variable as $\rvr_i\left(\tilde{\vx}_t,t\right)\coloneqq \tilde{\vx}_{t,i} e^{-t}$, we have

\vspace{-.4cm}
{\footnotesize
\begin{equation*}
    \begin{aligned}
        \der \rvr_i(\tilde{\vx}_t,t) = &\left(\frac{\partial \rvr_i}{\partial t} + \frac{\partial \rvr_i}{\partial \tilde{\vx}_{t,i}}\cdot \left[\vv_{k,i} + \tilde{\vx}_{t,i}\right] +   \Tr \left(\frac{\partial^2 \rvr_i}{\partial \tilde{\vx}_{t,i}^2}\right)\right)\der t + \frac{\partial \rvr_i}{\partial \tilde{\vx}_{t,i}}\cdot \sqrt{2} \der B_t\\
        = & \vv_{k,i}e^{-t}\der t + \sqrt{2} e^{-t}\der B_t,
    \end{aligned}
\end{equation*}
}
\vspace{-.4cm}

where the equalities are derived by the It\^o's Lemma.
Then, we set the initial value $\rvr_0(\tilde{\vx}_t,0)=\tilde{\vx}_{t,i}$, 
and integral on both sides of the equation. We have

\vspace{-.45cm}
\begin{equation}
    \label{eq:par_update}
    \begin{aligned}
         \tilde{\vx}_{t+s} 
         = e^{s}\tilde{\vx}_t + \left(e^{s}-1\right)\vv_k + \mathcal{N}\left(0,  \left(e^{2s}-1\right)\mI_d\right).
    \end{aligned}
\end{equation}
\vspace{-.45cm}

For the specific construction of $\vv_k$ to approximate the score, we defer it to Section~\ref{sec:rdmc_imp}.

\section{The Reverse Diffusion Monte Carlo Approach}
\label{sec:rdmc_imp}



As shown in SDE~\eqref{sde:readl_prac_bps}, we introduce an estimator $\vv_k\approx \grad_{\vx} \ln p_{T-t}(\vx)$ to implement the reverse diffusion. This section is dedicated to exploring viable estimators and the benefits of the reverse diffusion process.
We found that we can reformulate the score as an expectation with the transition kernel of the forward OU process. We also derive the intuition that \textsc{rdMC} can reduce the isoperimetric dependence of the target distribution compared with conventional Langevin algorithm.
Lastly, we introduce the detailed implementation of \textsc{rdMC} in real practice with different score estimators.

\subsection{Score function is the expectation of the posterior}

We start with the formulation of $\grad_{\vx} \ln p_{t}(\vx)$. 
In general SDEs, the score functions $\grad\ln p_{t}$ do not have an analytic form.
However, our forward process is an OU process (SDE~\eqref{eq:ou_sde}) whose transition kernel is given as {\footnotesize $p_{t|0}(\vx|\vx_0)= \left(2\pi \left(1-e^{-2t}\right)\right)^{-d/2}
     \cdot \exp \left[\frac{-\left\|\vx -e^{-t}\vx_0 \right\|^2}{2\left(1-e^{-2t}\right)}\right].$}
Such conditional density presents the probability of obtaining $\vx_{t}=\vx$ given $\vx_0$ in SDE~\eqref{eq:ou_sde}.
Note that $p_t(\vx) = \EE_{p_0(\vx_0)} p_{t|0}(\vx|\vx_0)$, we can use the property to derive other score formulations.
Bayes' theorem demonstrates that the score can be reformulated as an expectation by the following Lemma.
\begin{lemma} \label{lem:reform}
Assume that Eq.~\eqref{eq:ou_sde} defines the forward process. The score function can be rewritten as
{\footnotesize
\begin{equation}
    \begin{aligned}
        &\grad_{\vx} \ln p_{T-t}(\vx) =  \mathbb{E}_{\vx_0 \sim q_{T-t}(\cdot|\vx)}\frac{ e^{-(T-t)}\vx_0-\vx}{\left(1-e^{-2(T-t)}\right)},\\&q_{T-t}(\vx_0|\vx)  \propto \exp\left(-f_{*}(\vx_0)-\frac{\left\|\vx - e^{-(T-t)}\vx_0\right\|^2}{2\left(1-e^{-2(T-t)}\right)}\right).
    \end{aligned}\label{eq:distribution_q}
    \end{equation}}
\end{lemma}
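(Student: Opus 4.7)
The plan is to start from the definition of $p_{T-t}(\vx)$ as a marginalization of the joint, differentiate under the integral, and then recognize the result as a posterior expectation via Bayes' rule. Concretely, since the forward process is the OU SDE~\eqref{eq:ou_sde}, the transition kernel at time $s>0$ given $\vx_0\sim p_0\propto e^{-f_*}$ is the Gaussian $p_{s|0}(\vx\mid\vx_0)=(2\pi(1-e^{-2s}))^{-d/2}\exp\bigl(-\|\vx-e^{-s}\vx_0\|^2/(2(1-e^{-2s}))\bigr)$ quoted just above the lemma. I would therefore write
\begin{equation*}
p_{T-t}(\vx)=\int p_0(\vx_0)\,p_{T-t\mid 0}(\vx\mid \vx_0)\,\der \vx_0.
\end{equation*}

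Next, I would compute $\grad_\vx\ln p_{T-t}(\vx)=\grad_\vx p_{T-t}(\vx)/p_{T-t}(\vx)$ by differentiating inside the integral. The only $\vx$-dependent factor inside the integrand is the Gaussian kernel, whose log-gradient is explicit:
\begin{equation*}
\grad_\vx \ln p_{T-t\mid 0}(\vx\mid\vx_0)=\frac{e^{-(T-t)}\vx_0-\vx}{1-e^{-2(T-t)}},
\qquad \grad_\vx p_{T-t\mid 0}(\vx\mid\vx_0)=p_{T-t\mid 0}(\vx\mid\vx_0)\cdot\frac{e^{-(T-t)}\vx_0-\vx}{1-e^{-2(T-t)}}.
\end{equation*}
Dividing by $p_{T-t}(\vx)$ and pulling the factor $(e^{-(T-t)}\vx_0-\vx)/(1-e^{-2(T-t)})$ inside the integral gives
\begin{equation*}
\grad_\vx\ln p_{T-t}(\vx)=\int \frac{p_0(\vx_0)\,p_{T-t\mid 0}(\vx\mid\vx_0)}{p_{T-t}(\vx)}\cdot\frac{e^{-(T-t)}\vx_0-\vx}{1-e^{-2(T-t)}}\,\der\vx_0.
\end{equation*}

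Finally I would invoke Bayes' theorem: the weight $p_0(\vx_0)\,p_{T-t\mid 0}(\vx\mid\vx_0)/p_{T-t}(\vx)$ is exactly the conditional density of $\vx_0$ given $\vx_{T-t}=\vx$. Since $p_{T-t}(\vx)$ does not depend on $\vx_0$ and $p_0\propto e^{-f_*}$, this conditional density is proportional (in $\vx_0$) to $\exp\bigl(-f_*(\vx_0)-\|\vx-e^{-(T-t)}\vx_0\|^2/(2(1-e^{-2(T-t)}))\bigr)$, which is the stated $q_{T-t}(\vx_0\mid\vx)$. The integral representation then becomes the expectation asserted in Eq.~\eqref{eq:distribution_q}.

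The only nontrivial step is justifying the interchange of $\grad_\vx$ with the integral; this is routine under the mild assumption that $f_*$ makes $p_0$ a proper density, because for each $t\in(0,T)$ the Gaussian kernel $p_{T-t\mid 0}(\vx\mid\vx_0)$ and its $\vx$-gradient are bounded uniformly on compact $\vx$-sets by an integrable (against $p_0$) majorant, so dominated convergence applies. No step of the argument requires new assumptions beyond those already in the problem setup, so I expect the proof to be short and essentially a direct computation.
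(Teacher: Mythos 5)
Your proposal is correct and follows essentially the same route as the paper's proof: marginalize $p_{T-t}$ over the OU transition kernel, differentiate under the integral, and identify the resulting normalized weight as the posterior $q_{T-t}(\cdot\mid\vx)$ via Bayes' rule. The only addition beyond the paper's argument is your explicit dominated-convergence justification for exchanging $\grad_\vx$ with the integral, which the paper leaves implicit; that is a reasonable point to flag but does not change the substance.
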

{The proof of Lemma \ref{lem:reform} is presented in Appendix \ref{l1}.
For any \( t > 0 \), we observe that \( -\log q_{T-t} \) incorporates an additional quadratic term. In scenarios where \( p_* \) adheres to the log-Sobolev inequality (LSI), this term enhances \( q_{T-t} \)'s log-Sobolev (LSI) constant, thereby accelerating convergence. Conversely, with heavy-tailed \( p_* \) (where \( f_* \)'s growth is slower than a quadratic function), the extra term retains quadratic growth, yielding sub-Gaussian tails and log-Sobolev properties. Notably, as \( t \) approaches \( T \), the quadratic component becomes predominant, rendering \( q_{T-t} \) strongly log-concave and facilitating sampling.
In summary, every \( q_{T-t} \) exhibits a larger LSI constant than \( p_* \). As \( t \) increases, this constant grows, ultimately leading \( q_{T-t} \) towards strong convexity. Consequently, this provides a sequence of distributions with LSI constants surpassing those of \( p_* \), enabling efficient score estimation for \( \nabla\ln p_{T-t} \). }
 
From Lemma \ref{lem:reform}, the expectation formula of  $q_{T-t}(\cdot|\vx)$ can be obtained by empirical mean estimator with sufficient samples from $q_{T-t}(\cdot|\vx)$.
Thus, the gradient complexity required in this sampling subproblem becomes the bottleneck of our algorithm. 
Suppose $\{\vx_k^{(i)}\}$ is samples drawn from $q_{T-k\eta}(\cdot|\vx)$ when $\tilde{\vx}_{k\eta}=\vx$ for any $\vx\in\R^d$, the construction of $\vv_k(\vx)$ in Eq.~\eqref{sde:readl_prac_bps} can be presented as
\begin{equation}
    \label{eq:construction_of_v}
    \textstyle
    \vv_k(\vx) = \frac{1}{n_k}\sum_{i=1}^{n_k}  \vv_k^{(i)}(\vx)\quad \mathrm{where}\quad \vv^{(i)}_k(\vx) =  2\left(1-e^{-2(T-k\eta)}\right)^{-1}\cdot \left(e^{-(T-k\eta)}{\vx^{(i)}_k} - \vx\right).
\end{equation}

\begin{algorithm}[t]
    \caption{\textsc{rdMC}: reverse diffusion Monte Carlo }
    \label{alg:bpsa}
    \begin{algorithmic}[1]
            \STATE {\bfseries Input:} Initial particle $\tilde{\vx}_0$ sampled from $\tilde{p}_0$, Terminal time $T$, Step size $\eta, \eta'$, Sample size $n$.
            \FOR{$k = 0$ to $\lfloor T/\eta \rfloor-1$}
                \STATE Set $\vv_k=\vzero$;
                \STATE Create $n$ Monte Carlo  samples to estimate 
                {\\\small ${\vv_k\approx\ } \mathbb{E}_{\vx \sim q_{T-t}}\left[-\frac{\tilde{\vx}_{k\eta} - e^{-(T-k\eta)}\vx}{\left(1-e^{-2(T-k\eta)}\right)}\right],\text{ where } q_{T-t}(\vx|\tilde{\vx}_{k\eta})  \propto \exp\left(-f_{*}(\vx)-\frac{\left\|\tilde{\vx}_{k\eta} - e^{-(T-k\eta)}\vx\right\|^2}{2\left(1-e^{-2(T-k\eta)}\right)}\right).$} \label{alg:inner_MC}
                \STATE $\tilde{\vx}_{{(k+1)\eta}}=e^\eta\tilde{\vx}_{k\eta}+\left(e^\eta - 1\right)\vv_k +\xi \quad \text{where $\xi$ is sampled from }  \mathcal{N}\left(0,  \left(e^{2\eta}-1\right)\mI_d\right)$. \label{alg:rdMC_update}
            \ENDFOR
            \STATE {\bfseries Return: $ \tilde{\vx}_{\lfloor T/\eta \rfloor \eta}$} .
    \end{algorithmic}
\end{algorithm}



\begin{figure}
    \centering
    \begin{tabular}{ccc}
          \includegraphics[width=4.cm]{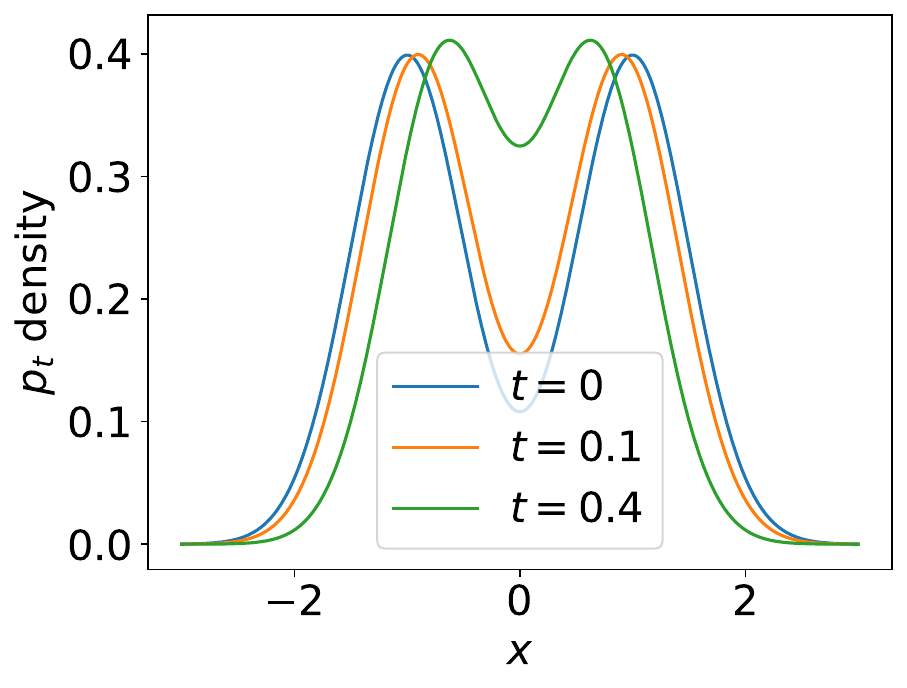} & 
          \includegraphics[width=4.cm]{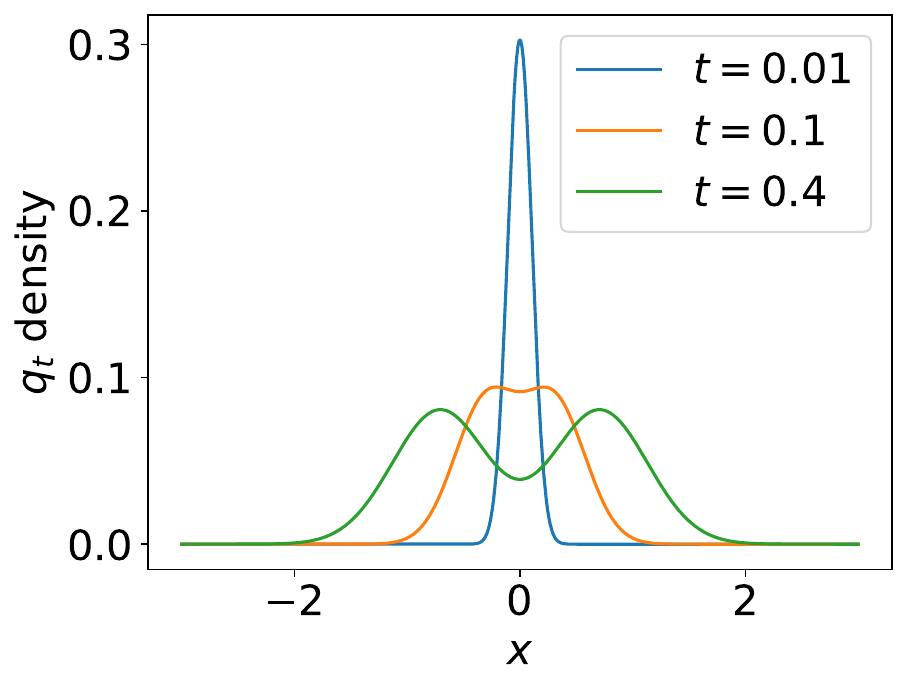} & \includegraphics[width=4.cm]{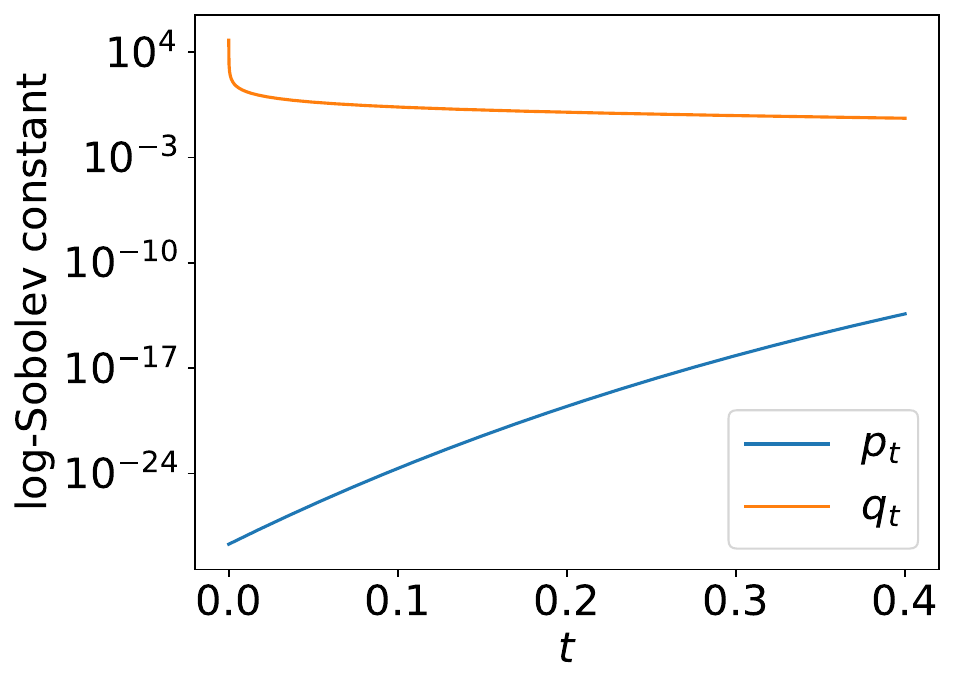}  \\
    \end{tabular}
    \vspace{-0.3cm}
    \caption{Illustrations of $p_t$, $q_t$, and their log-Sobolev (LSI) constants. The target distribution $p_*$ is a Gaussian mixture. We choose  $q_t(\cdot|x=0)$ for illustration.
    As $t$ increases, the modes of $p_t$ collapse to zero rapidly, corresponding to an improving LSI constant. For $q_t$, the barrier height of $q_t$ remains small, resulting in a relatively large LSI constant as long as $T=O(1)$.
    Thus initializing with $p_T$ and performing \textsc{rdMC} reduces computation complexity for multi-modal $p_*$.
    }
    \label{fig:log_sobolev}
\end{figure}
\subsection{Reverse SDE vs Langevin dynamics: Intuition}

From Figure \ref{fig:gmm_intro}, we observe that the \textsc{rdMC} method deviates significantly from the conventional gradient-based MCMC techniques, such as Langevin dynamics. 
It visualizes the paths from a Gaussian distribution to a mixture of Gaussian distributions.
It can be observed that Langevin dynamics, being solely driven by the gradient information of the target density $p_*$, tends to get trapped in local regions, resulting in uneven sampling of the mixture of Gaussian distribution.  
More precisely, $p_*$ admits a small LSI constant due to the separation of the modes \citep{ma2019sampling,schlichting2019poincare,menz2014poincare}. Consequently, the convergence of conventional MCMC methods becomes notably challenging in such scenarios \citep{tosh2014lower}.

To better demonstrate the effect of our proposed SDE, 
we compute the LSI constant estimates for $1$-$d$ case in Figure \ref{fig:log_sobolev}.
Due to the shrinkage property of the forward process, the LSI constant of $p_t$ can be exponentially better than the original one.
Meanwhile, for a $T=O(1)$, the LSI constant of $q_t$ is also well-behaved.
In our algorithm, a well-conditioned LSI constant for both $p_T$ and $q_T$ reduces the computation overhead.
Thus, we can choose a intermediate $T$ to connect those local modes and perform reverse SDE towards $p_*$. 
\textsc{rdMC} can distribute samples more evenly across different modes and enjoy faster convergence in those challenging cases. 
 Moreover, if the growth rate of $-\log p_*$ is slower than the quadratic function (heavy tail), we can choose a large $T$ and use $p_\infty$ to estimate $p_T$. Then, all $-\log q_t$ have quadratic growth, which implies log-Sobolev property.
These intuitions also explain why diffusion models excel in modeling complex distributions in high-dimensional spaces. We will provide the quantitative explanation in Section \ref{sec:exp}.

\subsection{Algorithms for score estimation with $q_{T-t}(\cdot|x)$}
\label{sec:score}

According to the expectation form of scores shown in Lemma~\ref{lem:reform}, a detailed reverse sampling algorithm can be naturally proposed in Algorithm~\ref{alg:bpsa}.
Specifically, it can be summarized as the following steps: (1) choose proper $T$ and $\tilde{p}_0$ such that $p_T\approx\tilde{p}_0$. This step can be done by either $\tilde{p}_0 = p_\infty$ for large $T$ or performing the ULA for $p_T$ (Algorithm \ref{alg:initialization} in Appendix);  (2) sample from a distribution that approximate $q_{T-t}$ (Step~\ref{alg:inner_MC} of Algorithm~\ref{alg:bpsa}); (3) follow $\tilde{p}_{t}$ with the approximated $q_{T-t}$ samples (Step~\ref{alg:rdMC_update} of Algorithm~\ref{alg:bpsa}); 
(4) repeat until $t\approx T$. After (4), we can also perform Langevin algorithm to fine-tune the steps when the gradient complexity limit is not reached.

The key of implementing Algorithm~\ref{alg:bpsa} is to estimate the scores $\grad \ln p_{T-t}$ via Step~\ref{alg:inner_MC} with samples from $q_{T-t}$.
In what follows, we discuss the implementation that combines the importance weight sampling with the adjusted Langevin algorithm (ULA).



\noindent\textbf{Importance weighted score estimator.}
We first consider importance sampling approach for estimating the score $\grad\ln p_{T-t}$.
From Eq.~\eqref{eq:distribution_q}, we know that the key is to estimate:
{\footnotesize
\begin{align*}
&\grad_{\vx} \ln p_{T-t}(\vx) 
= \mathbb{E}_{\vx_0 \sim q_{T-t}(\cdot|\vx)}\left[\frac{e^{-(T-t)}\vx_0 - \vx}{\left(1-e^{-2(T-t)}\right)}\right]
= \frac{1}{Z_*} \mathbb{E}_{\vx_0 \sim \rho_{T-t}(\cdot|\vx)}\left[\frac{ e^{-(T-t)}\vx_0-\vx }{\left(1-e^{-2(T-t)}\right)} \cdot e^{-f_*(\vx_0)}\right],
\end{align*}}
and
$
Z_* = \mathbb{E}_{\vx_0 \sim \rho_{T-t}(\cdot|\vx)}\left[ \exp(-f_*(\vx_0))\right],
$
where 
$
\rho_{T-t}(\cdot|\vx) \propto \exp\left(\frac{-\left\|\vx - e^{-(T-t)}\vx_0\right\|^2}{2\left(1-e^{-2(T-t)}\right)}\right). 
$  Note that sampling from $\rho_{T-t}$ takes negligible computation resource.
The main challenge is the sample complexity of estimating the two expectations. 

Since $\rho_{T-t}(\vx_0|\vx)$ is Gaussian with variance $\sigma_t^2 = \frac{1-e^{-2(T-t)}}{e^{-2(T-t)}}$, we know that as long as $-\frac{\vx - e^{-(T-t)}\vx_0}{\left(1-e^{-2(T-t)}\right)} \cdot \exp(-f_*(\vx_0))$ is $G$-Lipschitz, the sample complexity scales as $N=\tilde{O}\left({\sigma_t^2} \frac{G^2}{\epsilon^2} \right)$ for the resulting errors of the two mean estimators to be less than $\epsilon$~\citep{wainwright2019high}. However, the sample size required of the importance sampling method to achieve an overall small error is known to scale exponentially with the KL divergence between $\rho_{T-t}$ and $q_{T-t}$~\citep{persi2018importancesampling}, which can depend on the dimension. 
In our current formulation, this is due to the fact that the true denominator $Z_*=\mathbb{E}_{\vx_0 \sim \rho_{T-t}(\cdot|\vx)}[e^{-f_*(\vx_0)}]$ can be as little as $\exp(-d)$. 
As a result, to make the overall score estimation error small, the error tolerance and in turn the sample size required for estimating $Z_*$ can scale exponentially with the dimension.

\noindent\textbf{ULA score estimator.}
An alternative score estimator considers that the mean of the underlying distribution $q^\prime_{T-t}(\cdot|\vx)$ of these samples needs to sufficiently approach $q_{T-t}(\cdot|\vx)$, which can be achieved by closing the gap of KL divergence or Wasserstein distance between $q^\prime_{T-t}(\cdot|\vx)$ and $q_{T-t}(\cdot|\vx)$.
Since the additional quadratic term shown in Eq.~\eqref{eq:distribution_q} helps improve a quadratic tail growth for $q_{T-t}(\cdot|\vx)$, which implies the establishment of the isoperimetric condition.
We expect the convergence can be achieved by performing the ULA on a sampling subproblem whose target distribution is $q_{T-t}(\cdot|\vx)$.
We provide the detailed implementation in Algorithm~\ref{alg:inner}.

\begin{algorithm}[t]
    \caption{ULA inner-loop for the $q_t(\cdot|\vx)$ sampler (Step~\ref{alg:inner_MC} of Algorithm~\ref{alg:bpsa})}
    \label{alg:inner}
    \begin{algorithmic}[1]
            \STATE {\bfseries Input:} Condition $\vx$ and time $t$, Sample size $n$, Initial particles $\{{\vx}^i_0\}_{i=1}^n$, Iters $K$, Step size $\eta$.
            \FOR{$k = 1$ to $K$}
            \FOR{$i = 1$ to $n$}
            \STATE  {\small$\vx_k^i =\vx_{k-1}^{i} - \eta\left(\nabla f_*(\vx_{k-1}^{i})+ \frac{e^{-t}\left(e^{-t}\vx_{k-1}^i-\vx \right)}{1-e^{-2t}}\right) + \sqrt{2\eta}\xi_k$, where $\xi_k\sim\mathcal{N}\left(0,  \mI_d\right)$}
            \ENDFOR
        \ENDFOR
        \STATE {\bfseries Return: $\{{\vx}^i_K\}_{i=1}^n$} .
    \end{algorithmic}
\end{algorithm}

\noindent\textbf{ULA score estimator with importance sampling initialization.} 
Inspired by the previous estimators, we can combine the importance sampling approach with the ULA.
In particular, we first implement the importance sampling method to form a rough score estimator. We then perform ULA at the mean estimator and obtain a refined accurate score estimate. 
Via this combination, we are able to efficiently obtain accurate score estimation by virtue of the ULA algorithm when $t$ is close to $T$. When $t$ is close to $0$, we are able to quickly obtain rough score estimates via the importance sampling approach.
We discover empirically that this combination generally perform well when $t$ interpolates the two regimes (Figure \ref{fig:gmm_mmd} in Section~\ref{sec:exp}). 
 


\section{Analyses and Examples of the rdMC Approach}
\label{sec:exp}

In this section, we analyze the overall complexity of the \textsc{rdMC} via ULA inner loop estimation. 
Since the complexity of the importance sampling estimate is discussed in Section~\ref{sec:score}, we only consider Algorithm~\ref{alg:bpsa} with direct ULA sampling of $q_{T-t}(\cdot|\vx)$ rather than the smart importance sampling initialization to make our analysis clear.

To provide the analysis of the convergence of \textsc{rdMC}, we make the following assumptions.
\begin{enumerate}[label=\textbf{[\Subscript{A}{\arabic*}]}] \vspace{-0.2cm}
    \item \label{ass:lips_score} For all $t\ge 0$, the score $\grad \ln p_{t}$ is $L$-Lipschitz.
    \item \label{ass:second_moment} The second moment of the target distribution $p_*$ is upper bounded, i.e., $\mathbb{E}_{p_*}\left[\left\|\cdot\right\|^{2}\right]=m_2^2$.
    \vspace{-0.2cm}
\end{enumerate}


These assumptions are standard in diffusion analysis to guarantee the convergence to the target distribution~\citep{chen2022sampling}.
Specifically, Assumption \ref{ass:lips_score} governs the smoothness characteristics of the forward process, which ensure the feasibility of numerical discretization methods used for approximating the solution of continuous SDE. 
In addition, Assumption \ref{ass:second_moment} introduces essential constraints on the moments of the target distribution. These constraints effectively prevent an excessive accumulation of probability mass in the tail region, thereby ensuring a more balanced and well-distributed target distribution. 



\subsection{Outer loop complexity}
According to Algorithm~\ref{alg:bpsa}, the overall gradient complexity depends on the number of outer loops $k$, as well as the complexity to achieve accurate score estimations (Line 5 in Algorithm~\ref{alg:bpsa}).
When the score is well-approximated and satisfies $\E_{p_{T-k\eta}}\left[\left\|\vv_k - \grad\ln p_{T-k\eta}\right\|^2\right] \le \epsilon^2$,
the overall error in TV distance, $\TVD(p_*, \tilde{p}_T)$, can be made $\tilde{\mathcal{O}}(\epsilon)$ by choosing a small $\eta$.
Under this condition, the number of outer loops satisfies $k=\Omega(L^2 d\epsilon^{-2})$, which shares the same complexity as that in diffusion analysis~\citep{chen2022sampling,lee2022convergence,lee2023convergence}.
Such a complexity of the score estimation oracles is independent of the log-Sobolev (LSI) constant of the target density $p_*$, which means that the isoperimetric dependence of \textsc{rdMC} is dominated by the subproblem of sampling from $q_{T-t}$. 
Specifically, the following theorem demonstrates the conditions for achieving $\TVD(p_*,\tilde{p}_T)=\tilde{\mathcal{O}}(\epsilon)$.
\begin{theorem}
\label{thm:converge}
For any $\epsilon>0$, assume that $\KL(p_T\Vert \tilde{p}_0)<\epsilon$ for some $T>0$, $\hat p$ as suggested in Algorithm \ref{alg:bpsa}, $\eta = C_1(d+m_2^2)^{-1}\epsilon^2$.
    If the OU process induced by $p_*$ satisfies Assumption \ref{ass:lips_score}, \ref{ass:second_moment}, and $q_{T-k\eta}$ satisfies the log-Sobolev inequality (LSI) with constant $\mu_k$ ($k\eta\leq T$).
    We set
    \begin{equation}
\label{eq:setting}
    \begin{aligned}
     n_k = 64Td\mu_k^{-1}\eta^{-3}\epsilon^{-2}\delta^{-1},\quad \mathcal{E}_k = 2^{-13}\cdot T^{-4} d^{-2}\mu_k^2\eta^8\epsilon^{4}\delta^{4}
    \end{aligned}
\end{equation}
where $n_k$ is the number of samples to estimate the score, and $\mathcal{E}_k$ is the KL error tolerence for the inner loop. With probability at least
$1-\delta$, 
Algorithm \ref{alg:bpsa} converges in TV distance with $\tilde{\mathcal{O}}(\epsilon)$ error.
 \vspace{-0.2cm}
\end{theorem}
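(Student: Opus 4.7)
The plan is to reduce the TV-distance guarantee to a per-step score-MSE bound using the standard Girsanov / chain-rule analysis for discretized reverse diffusions, and then to control that MSE through a bias-variance decomposition of the Monte Carlo score estimator. Concretely, under \ref{ass:lips_score}--\ref{ass:second_moment} the reverse-SDE analysis of \citet{chen2022sampling,lee2023convergence} gives
\[
\TVD(p_*,\tilde p_T)^2 \;\lesssim\; \KL(p_T\|\tilde p_0) \;+\; L^2(d+m_2^2)\,T\eta \;+\; \eta\sum_{k=0}^{T/\eta-1}\EE_{p_{T-k\eta}}\!\bigl[\|\vv_k-\grad\ln p_{T-k\eta}\|^2\bigr].
\]
The first term is $<\epsilon^2$ by hypothesis and the second is $\tilde O(\epsilon^2)$ for the chosen $\eta = C_1(d+m_2^2)^{-1}\epsilon^2$, so it suffices to ensure each per-step score MSE is $\tilde O(\epsilon^2/T)$ with probability at least $1-\delta\eta/T$; a union bound over the $K=T/\eta$ outer steps then yields the overall high-probability guarantee.

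For a fixed outer step $k$, let $\phi_k(\vx_0)=(1-e^{-2(T-k\eta)})^{-1}(e^{-(T-k\eta)}\vx_0-\tilde\vx_{k\eta})$ denote the integrand in Lemma \ref{lem:reform}, let $q'_{T-k\eta}$ be the law of the inner-loop ULA particles after the prescribed iterations, and let $q_{T-k\eta}$ be the exact posterior. Conditional on $\tilde\vx_{k\eta}$, split
\[
\vv_k-\grad\ln p_{T-k\eta}(\tilde\vx_{k\eta}) \;=\; \bigl(\vv_k-\EE_{q'}[\phi_k]\bigr) \;+\; \bigl(\EE_{q'}[\phi_k]-\EE_{q}[\phi_k]\bigr).
\]
Lemma \ref{lem:reform} shows that $q_{T-k\eta}$ inherits an additional quadratic term, hence satisfies LSI with constant $\mu_k$ by assumption; the standard ULA-under-LSI rate of \citet{vempala2019rapid} then produces the bias bound $\KL(q'\|q)\le\mathcal E_k$. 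Since $\phi_k$ is affine and Lipschitz with constant $e^{-(T-k\eta)}/(1-e^{-2(T-k\eta)})$, Talagrand's $T_2$ inequality (implied by LSI) converts this into $\|\EE_{q'}[\phi_k]-\EE_{q}[\phi_k]\|^2 \lesssim \mu_k^{-1}(1-e^{-2(T-k\eta)})^{-2}\mathcal E_k$. For the variance, Poincar\'e (same constant) gives $\Var_q[\phi_k]\lesssim d\mu_k^{-1}(1-e^{-2(T-k\eta)})^{-2}$, and Chebyshev on the empirical mean of the $n_k$ i.i.d.\ inner samples yields concentration $\|\vv_k-\EE_{q'}[\phi_k]\|^2 \lesssim dT(n_k\mu_k\delta\eta)^{-1}(1-e^{-2(T-k\eta)})^{-2}$ with the desired failure probability $\delta\eta/T$.

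Matching both contributions to the per-step budget $\tilde O(\epsilon^2/T)$ and tracking every factor of $\eta$, $T$, $d$, $\mu_k$, and $\delta$ reproduces the prescribed rates in Eq.\ \eqref{eq:setting}; summing the MSEs and plugging into the Girsanov bound of the first paragraph completes the proof. The main obstacle is the singular factor $(1-e^{-2(T-k\eta)})^{-2}$, which diverges as $k\eta\to T$ and is the source of the high-degree polynomial dependence on $\eta$ in Eq.\ \eqref{eq:setting} (e.g.\ $\eta^{-3}$ in $n_k$ and $\eta^{8}$ in $\mathcal E_k$): for the final outer steps, where $1-e^{-2(T-k\eta)}\sim\eta$, both the variance and bias budgets must shrink sharply, and one must verify that the LSI constant $\mu_k$ of $q_{T-k\eta}$ does not degrade concurrently. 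A secondary technicality is that the inner ULA is conditioned on the random outer state $\tilde\vx_{k\eta}$; Assumption \ref{ass:second_moment} keeps $\tilde\vx_{k\eta}$ in a controlled ball with high probability so that the conditional MSE bounds integrate back to the unconditional statement needed in the Girsanov inequality.
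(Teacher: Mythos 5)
Your plan follows essentially the same route as the paper's proof: (i) data-processing/triangle/Pinsker plus Girsanov reduce $\TVD(\tilde p_T, p_*)$ to the initialization KL plus a sum of per-step score MSEs, (ii) each per-step error is split into a discretization piece and a score-estimation piece $\epsilon_{\mathrm{score}}$, (iii) $\epsilon_{\mathrm{score}}$ is bounded via a bias--variance decomposition conditioned on $\tilde\vx_{k\eta}$, with the bias $\|\mathbb{E}_{q'}[\phi_k]-\mathbb{E}_{q}[\phi_k]\|$ controlled by Talagrand's transport inequality (LSI $\Rightarrow T_2$) and the ULA convergence of \citet{vempala2019rapid}, and (iv) the fluctuation $\|\vv_k-\mathbb{E}_{q'}[\phi_k]\|$ controlled by concentration with a union bound over outer steps. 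You correctly identify both the source of the $\eta^{-3}$ and $\eta^{8}$ exponents in Eq.~\eqref{eq:setting}, namely the factor $(1-e^{-2(T-k\eta)})^{-2}\lesssim\eta^{-2}$ combined with the $T/\eta$ union bound, and the need to integrate the conditional bounds back using the moment control from~\ref{ass:second_moment}. The only material difference is the concentration mechanism: you invoke Chebyshev on the empirical mean of the $n_k$ inner samples, whereas the paper uses a Herbst-type sub-Gaussian concentration inequality for LSI measures (its Lemma on LSI concentration) together with a TV swap between $q'^{(n)}$ and $q^{(n)}$. Since the paper nonetheless calibrates $n_k\propto\delta^{-1}$ rather than $\log(1/\delta)$ --- choosing the exponent so large that the sub-Gaussian tail is dominated by the TV-swap term --- both derivations land on exactly the same $n_k$ and $\mathcal E_k$; the paper's route is slightly stronger in the sense that the concentration tail is negligible, but for the stated rate your Chebyshev argument is sufficient and gives identical scalings.
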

Therefore, the key points for the convergence of \textsc{rdMC} can be summarized as
\vspace{-0.2cm}
\begin{itemize}
    \item  The LSI of target distributions of sampling subproblems, i.e., $q_{T-k\eta}$ is maintained.
      \item  The initialization of the reverse process $\tilde{p}_0$ is sufficiently close to $p_T$.
\end{itemize}
\vspace{-0.2cm}
\subsection{Overall computation complexity}
In this section, we consider the overall computation complexity of \textsc{rdMC}.
{Note that the LSI constants of $q_{T-k\eta}$ depend on the properties of $p_*$. 
As a result we consider more specific assumptions that bound the LSI constants for $q_{T-k\eta}$. 
In particular, we demonstrate the benefits of using $q_{T-k\eta}$ for targets $p_*$ with infinite or exponentially large LSI constants.
Compared with ULA, \textsc{rdMC} improves the gradient complexity, due to the improved LSI constant of $q_{T-k\eta}$ over that of $p_*$ in finite $T$.
Even for heavy-tailed $p_*$ that does not satisfy the Poincar\'e inequality, target distributions of sampling subproblems, i.e., $q_{T-k\eta}$, can still preserve the LSI property, which helps \textsc{rdMC} to alleviate the exponential dimension dependence of gradient complexity for achieving TV distance convergence.}

{
Specifically, Section \ref{sec:lsi} consider the case that the LSI constant of $p_*$ depend on the radius $R$ exponentially, which can usually be found in mixture models. Our proposed \textsc{rdMC} can reduce the exponent by a factor. Section \ref{sec:heavy} consider the case that $p_*$ is not LSI, but \textsc{rdMC} can create an LSI subproblem sequence which makes the dimension dependency polynomial.
}






We first provide an estimate for the LSI constant of $q_t$ under general Lipschitzness assumption~\ref{ass:lips_score}.

\begin{lemma}\label{lem:log_sob1} Under \ref{ass:lips_score}, the LSI constant for $q_t$ in the  forward OU process is $\frac{e^{-2t}}{2 \left(1-e^{-2t}\right)}$ 
when 
$0\leq t\leq\frac{1}{2} \ln\left(1+\frac{1}{2L}\right)$.
This estimation indicate that when quadratic term dominate the log-density of $q_t$, the log-Sobolev property is well-guaranteed. 
\end{lemma}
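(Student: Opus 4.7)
The plan is to derive the LSI constant for $q_t$ by establishing strong log-concavity of $q_t$ as a function of $\vx_0$ and then invoking the Bakry--Émery criterion. Recall from Eq.~\eqref{eq:distribution_q} that, up to an additive constant in $\vx_0$,
\[
-\log q_t(\vx_0|\vx) = f_*(\vx_0) + \frac{\|\vx - e^{-t}\vx_0\|^2}{2(1-e^{-2t})}.
\]
I would first compute the Hessian in $\vx_0$ directly. The quadratic penalty contributes exactly $\frac{e^{-2t}}{1-e^{-2t}}\mI$. For the data term, Assumption~\ref{ass:lips_score} at $t=0$ gives that $\nabla \log p_*$ is $L$-Lipschitz, so $\|\nabla^2 \log p_*\|_{\mathrm{op}} \le L$ and hence $\nabla^2 f_* = -\nabla^2 \log p_* \succeq -L\mI$. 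Adding the two pieces yields the key bound
\[
\nabla^2_{\vx_0}\bigl(-\log q_t(\vx_0|\vx)\bigr) \succeq \left(\frac{e^{-2t}}{1-e^{-2t}} - L\right)\mI.
\]

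Next, I would apply the Bakry--Émery theorem: a probability measure on $\mathbb{R}^d$ whose negative log-density has Hessian bounded below by $m\mI$ with $m>0$ satisfies the log-Sobolev inequality with constant $m$. Thus $q_t(\cdot|\vx)$ is LSI with constant at least $\frac{e^{-2t}}{1-e^{-2t}} - L$, uniformly in the conditioning variable $\vx$. It remains to determine the range of $t$ for which this lower bound exceeds the claimed constant $\frac{e^{-2t}}{2(1-e^{-2t})}$. Setting
\[
\frac{e^{-2t}}{1-e^{-2t}} - L \ge \frac{e^{-2t}}{2(1-e^{-2t})}
\]
and rearranging gives $e^{2t} \le 1 + \frac{1}{2L}$, i.e., $t \le \frac{1}{2}\ln\bigl(1 + \frac{1}{2L}\bigr)$, which is exactly the threshold in the statement. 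In this regime, the Gaussian penalty dominates the Hessian of $-\log q_t$, which is precisely the qualitative interpretation offered in the lemma.

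The argument is essentially a one-line Bakry--Émery application once the Hessian is in hand, so I do not expect significant obstacles; the main points to verify carefully are (i) the sign convention translating $L$-Lipschitzness of $\nabla \log p_*$ into the correct one-sided bound $\nabla^2 f_* \succeq -L\mI$, and (ii) that the LSI constant convention matches the Bakry--Émery output, i.e., that ``LSI constant $\mu$'' here refers to the coefficient in $\mathrm{Ent}(g^2) \le \frac{2}{\mu}\,\mathbb{E}\|\nabla g\|^2$ so that strong log-concavity with parameter $m$ gives LSI constant $m$ rather than $1/m$. Both points are routine but worth a sentence of verification in the write-up.
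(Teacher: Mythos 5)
Your argument is correct and matches the paper's proof essentially verbatim: both compute the Hessian of $-\log q_t$, use Assumption~\ref{ass:lips_score} to get $\nabla^2 f_* \succeq -L\mI$, add the Gaussian contribution $\frac{e^{-2t}}{1-e^{-2t}}\mI$, check that the sum stays above $\frac{e^{-2t}}{2(1-e^{-2t})}\mI$ precisely when $e^{2t}\le 1+\frac{1}{2L}$, and then invoke strong log-concavity $\Rightarrow$ LSI (the paper cites Lemma~\ref{lem:strongly_lsi}, which is its Bakry--Émery statement). The only cosmetic difference is that you carry the sharper lower bound $\frac{e^{-2t}}{1-e^{-2t}}-L$ before rounding it down to $\frac{e^{-2t}}{2(1-e^{-2t})}$, whereas the paper writes the inequality $-L+\frac{e^{-2t}}{2(1-e^{-2t})}\ge 0$ up front; both amount to the same threshold on $t$.
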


\subsubsection{Improving the LSI constant dependence for mixture models}\label{sec:lsi}
Apart from Assumption \ref{ass:lips_score}, \ref{ass:second_moment}, we study the case where $p_*$ satisfies the following assumption:
\begin{enumerate}[label=\textbf{[\Subscript{{\text{A}}}{\arabic*}]}]
  \setcounter{enumi}{2}
  \vspace{-0.2cm}
  \item \label{ass:convex_outside_ball} There exists $R>0$, such that  $f_*(\vx)$ is $m$-strongly convex when $\Vert \vx\Vert\geq R$.
  \vspace{-0.2cm}
\end{enumerate}
In this case, the target density $p_*$ admits an LSI constant which scales exponentially with the radius of the region of nonconvexity $R$, i.e., $\frac{m}{2}\exp(-16LR^2)$, as shown in~\citep{ma2019sampling}. 
This implies that if we draw samples from $p_*$ with ULA, the gradient complexity will have exponential dependency with respect to $R^2$.
However, in \textsc{rdMC} with suitable choices of $T=O(\log R)$ and $\tilde{p}_0$, the exponential dependency on $R$ is removed, which is a bottleneck for mixture models.  

In the Proposition~\ref{prop:log_sob_exp} below, we select values of \(T\) and \(\tilde{p}_0\) to achieve the desired level of accuracy. Notably, Lemma \ref{lem:log_sob1} suggest that we can choose a $O(1)$ termination time to make the LSI constant of $q_t$ well-behehaved and $p_T$ is much simpler than $p_0$. {That is to say, \textsc{rdMC} can exhibit lower isoperimetric dependency compared to conventional MCMC techniques.} Thus,
We find that the overall computation complexity of \textsc{rdMC} reduces the dependency on $R$.

\begin{proposition}
    \label{prop:log_sob_exp}
    Assume that the OU process induced by $p_*$ satisfies \ref{ass:lips_score}, \ref{ass:second_moment}, \ref{ass:convex_outside_ball}. 
    We estimate $p_T$ with $\tilde{p}_0$ by  ULA, where the iterations wrt LSI constant is $\Omega\left(m^{-1}\exp(16 LR^2 e^{-2T}-2T)\right)$.
    For any $\epsilon>0$ and $T\leq \frac{1}{2}\ln\left(1+\frac{1}{2L}\right)$, the Algorithm \ref{alg:bpsa} from $\tilde{p}_0$ to target distribution convergence with a probability at least $1-\delta$ and total gradient complexity $\Omega\left(\mathrm{poly}(\epsilon,\delta)\right)$ independent of $R$.

   
\end{proposition}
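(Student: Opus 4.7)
The plan is to glue together three facts. First, the forward OU contraction improves the LSI constant of $p_T$ relative to $p_*$, shifting the exponential $R$-dependence from $R^2$ down to $R^2 e^{-2T}$. Second, Lemma \ref{lem:log_sob1} guarantees a uniform LSI constant for every inner subproblem $q_{T-k\eta}$ throughout the reverse trajectory under the hypothesis $T\leq \tfrac{1}{2}\ln(1+1/(2L))$. Third, Theorem \ref{thm:converge} compiles these LSI constants into a total sample and gradient bound for the reverse phase.

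For the initialization, I would write the OU endpoint as $\vx_T=e^{-T}\vx_0+\sqrt{1-e^{-2T}}\,\vz$ with $\vx_0\sim p_*$ and $\vz\sim\cN(\vzero,\mI)$, so $p_T$ is the Gaussian convolution of the push-forward of $p_*$ under the contraction $\vx\mapsto e^{-T}\vx$. Under \ref{ass:convex_outside_ball} the contracted law is $me^{2T}$-strongly convex outside a ball of effective radius $Re^{-T}$ and retains an $L$-Lipschitz score; applying the LSI estimate of \citep{ma2019sampling} for convex-outside-a-ball densities and using that LSI is preserved (up to constants) under Gaussian convolution yields $\mu_T\gtrsim m\exp(2T-16LR^2 e^{-2T})$ for $p_T$. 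Running ULA on $p_T$ from a fixed reference distribution then produces $\tilde p_0$ with $\KL(p_T\Vert\tilde p_0)<\epsilon$ in $\Omega(m^{-1}\exp(16LR^2 e^{-2T}-2T))$ gradient calls, which matches the iteration count stated in the proposition.

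For the reverse phase, since $T-k\eta\leq T\leq\tfrac{1}{2}\ln(1+1/(2L))$ for every outer index $k$ with $k\eta\leq T$, Lemma \ref{lem:log_sob1} furnishes the LSI constant $\mu_k=\tfrac{e^{-2(T-k\eta)}}{2(1-e^{-2(T-k\eta)})}$ of $q_{T-k\eta}$, which depends only on $L$ and $T$ and is in particular independent of $R$. Substituting $\{\mu_k\}$ into \eqref{eq:setting} fixes the per-step sample size $n_k$ and KL tolerance $\mathcal{E}_k$; an inner ULA (Algorithm \ref{alg:inner}) attains tolerance $\mathcal{E}_k$ in $\tilde O(\mu_k^{-1}\ln\mathcal{E}_k^{-1})$ gradient calls. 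Multiplying by $n_k$ and by the outer count $\tilde O(L^2 d/\epsilon^2)$ prescribed by Theorem \ref{thm:converge}, and invoking its high-probability TV bound, gives a reverse-phase cost that is polynomial in $\epsilon^{-1},\delta^{-1},d,L,m^{-1}$ with the $R$-dependence fully confined to the initialization stage and appearing only through the improved exponent $R^2 e^{-2T}$ rather than $R^2$.

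The main obstacle is the LSI estimate for $p_T$: carefully pushing the convex-outside-a-ball bound of \citep{ma2019sampling} through the OU contraction while tracking how the radius of nonconvexity shrinks, and then verifying that Gaussian convolution does not degrade the constant by more than a factor. A secondary subtlety is bootstrapping warm starts for the inner ULA across outer iterations—either by reusing the previous particle $\tilde{\vx}_{(k-1)\eta}$ or by seeding with the importance-weighted rough estimator of Section \ref{sec:score}—so that the $\ln\mathcal{E}_k^{-1}$ factor does not blow up when $\mathcal{E}_k$ is set as in \eqref{eq:setting}.
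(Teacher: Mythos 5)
Your decomposition and lemma choices match the paper's proof: initialize by ULA on $p_T$ using a contraction-improved LSI constant, then invoke Lemma~\ref{lem:log_sob1} plus Theorem~\ref{thm:converge} for the $R$-independent reverse-phase cost. The structure is right.

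However, the initialization step contains the same gap the paper's own sketch glosses over, and your wording makes it more visible: the claim that the contracted law ``retains an $L$-Lipschitz score'' is false. If $h$ denotes the law of $e^{-T}\rvx_0$, its negative log-density is $y\mapsto f_*(e^{T}y)+\mathrm{const}$, with Hessian $e^{2T}\nabla^2 f_*(e^{T}y)$, so the smoothness inflates to $e^{2T}L$ exactly as the strong convexity inflates to $e^{2T}m$ and the radius shrinks to $e^{-T}R$. Substituting $(L',m',R')=(e^{2T}L,\,e^{2T}m,\,e^{-T}R)$ into the Ma et al.\ estimate $\tfrac{m'}{2}e^{-16L'R'^2}$ yields $\tfrac{e^{2T}m}{2}e^{-16LR^2}$: the exponent $16LR^2$ is \emph{unchanged} because $L'R'^2=LR^2$, and convolving with the Gaussian via Lemma~\ref{lem_conv} cannot repair this. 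So the route ``contract to $h$, apply Ma et al.\ to $h$, convolve'' cannot by itself give the stated exponent $16LR^2e^{-2T}$ (nor the $16LR^2e^{-T}$ written in the paper's proof, which is already inconsistent with the proposition's statement). Assumption~\ref{ass:lips_score} does give an $L$-Lipschitz score for $p_T$ itself rather than for $h$, but then one needs strong convexity of $-\ln p_T$ outside a ball, a property that Gaussian convolution does not cleanly preserve and which neither you nor the paper establish. The improvement is genuine for the Gaussian-mixture example in the text, where the smoothness of $p_T$ stays $O(1)$ while the mode separation contracts by $e^{-T}$, but the general proposition requires a sharper argument that tracks smoothness and radius through the OU flow simultaneously instead of through the $h$-then-convolve factorization.
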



\vspace{-0.3cm}
\noindent\emph{Example: Gaussian Mixture Model.} We consider an example that {\small$$p_*(x)\propto \exp\left(-\frac{1}{2}\Vert x\Vert^2\right)+
\exp\left(-\frac{1}{2}\Vert x-y\Vert^2\right),\quad y\gg1.$$}

\vspace{-0.3cm}
The LSI constant is $\Theta(e^{-C_0\Vert y\Vert^2})$ \citep{ma2019sampling,schlichting2019poincare,menz2014poincare}, corresponding to the complexity of the target distribution.
\citet{tosh2014lower} prove that the lower bound iteration complexity
by MCMC-based algorithm to sample from the Gaussian mixture scales exponentially with the squared distance $\|y\|^2$ between the two modes: $\Omega(\exp(\Vert y\Vert^2/8))$.  Note that \textsc{rdMC} is not a type of conventional MCMC. With importance sampling score estimator, the $\tilde{O}(\epsilon^{-2})$ iteration complexity and $\tilde{O}(\epsilon^{-2})$ samples at each step, the TV distance converges with $\tilde{O}(\epsilon)$ error (Appendix~\ref{sec:app_nota_ass}).

The computation overhead of the outer-loop process does not depend on $y$.  For the inner-loop score estimation we can choose $T = \frac{1}{2}\log \frac{3}{2}$ to make the LSI constant of $q_t$ to be $O(1)$. We can perform ULA to initialize $p_T$, which reduces the barrier between modes significantly. 
Specifically, the LSI constant of $p_T$ is $\Theta(e^{-C_0\Vert e^{-T}y\Vert^2})$, which improves the dependence on $\|y\|^2$ by a $e^{-2T} = \frac{2}{3}$ factor. 
Since this factor is on the exponent, the reduction is exponential.

Figure \ref{fig:gmm_mmd} is a demonstration for this example. We choose different $r$ to represent the change of $R$ in \ref{ass:convex_outside_ball}. We compare the convergence of Langevin Monte Carlo (LMC), Underdamped LMC (ULMC) and \textsc{rdMC} in terms of gradient complexity. As \( r \) increases, we find that LMC fails to converge within a limited time. ULMC, with the introduction of velocity, can alleviate this situation. Notably, our algorithm can still ensure fast mixing for large \( r \). The inner loop is initialized with importance sampling mean estimator. {Hyper-parameters and more results are provided in Appendix~\ref{app:empirical}.}

\subsubsection{Improving the dimension dependence in heavy-tailed target distributions}\label{sec:heavy}
\begin{figure}[t]
    \centering
     \vspace{-.3cm}
    \small
    \begin{tabular}{ccc}
          \includegraphics[width=3.3cm]{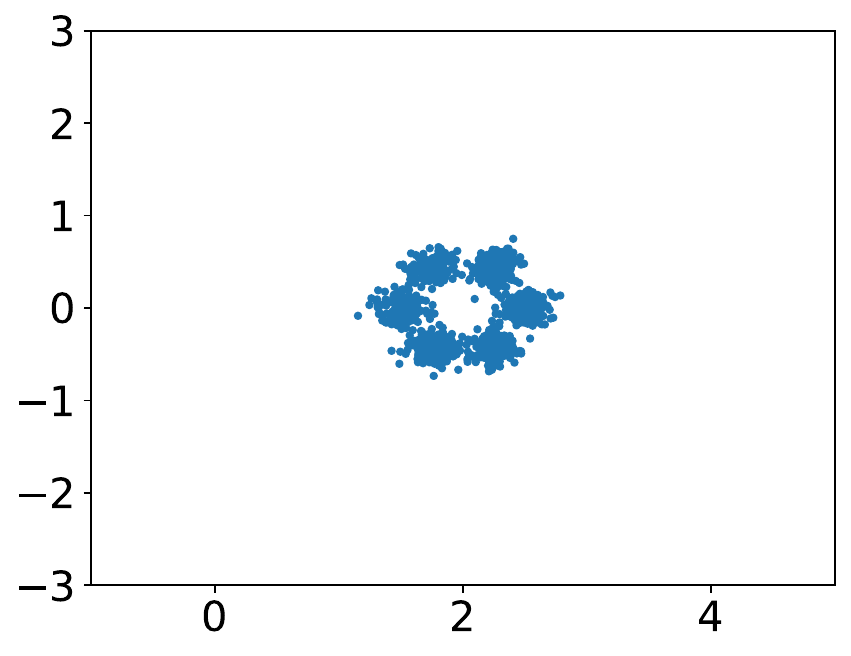} & \includegraphics[width=3.3cm]{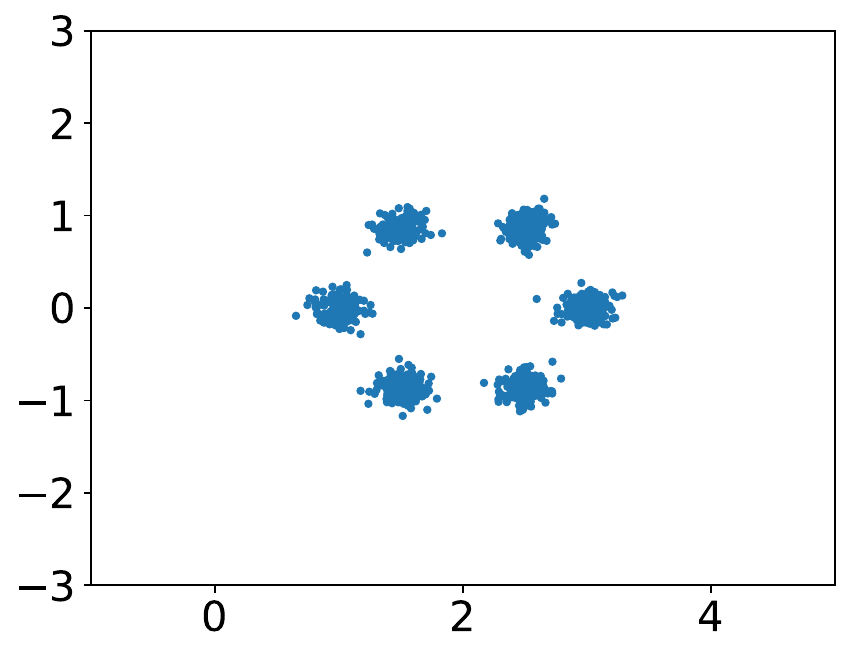} & 
          \includegraphics[width=3.3cm]{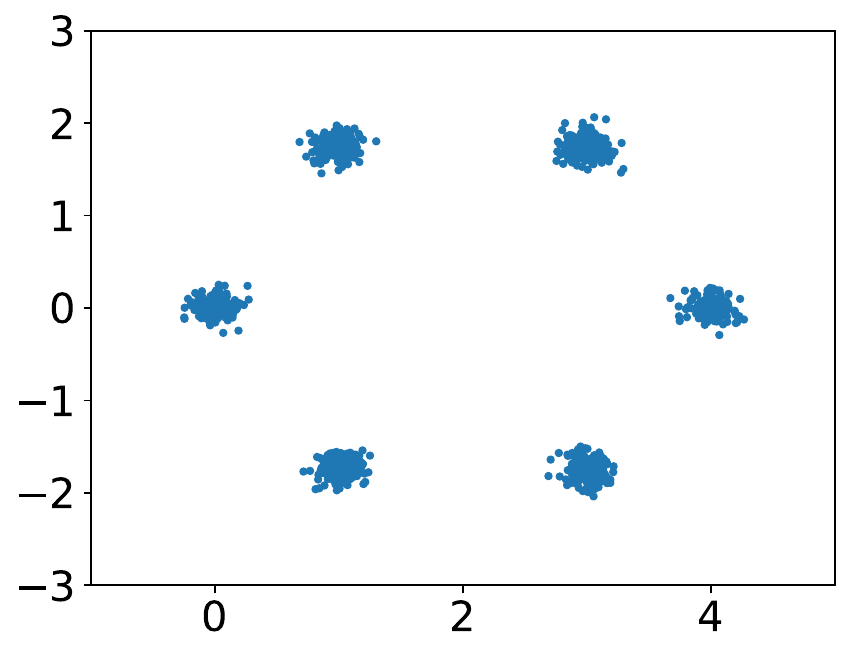} \\
          \includegraphics[width=3.3cm]{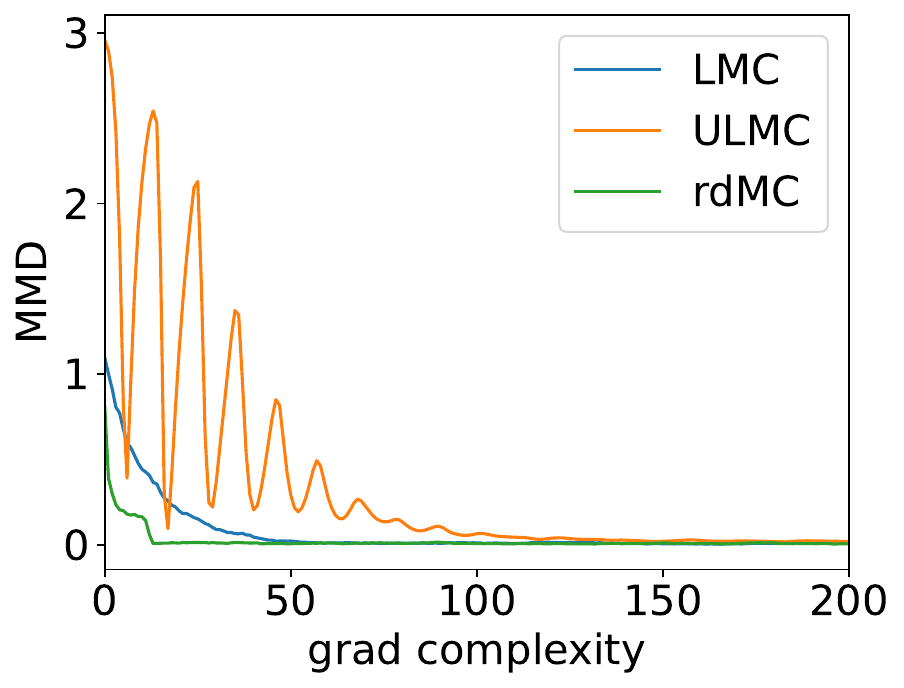} & \includegraphics[width=3.3cm]{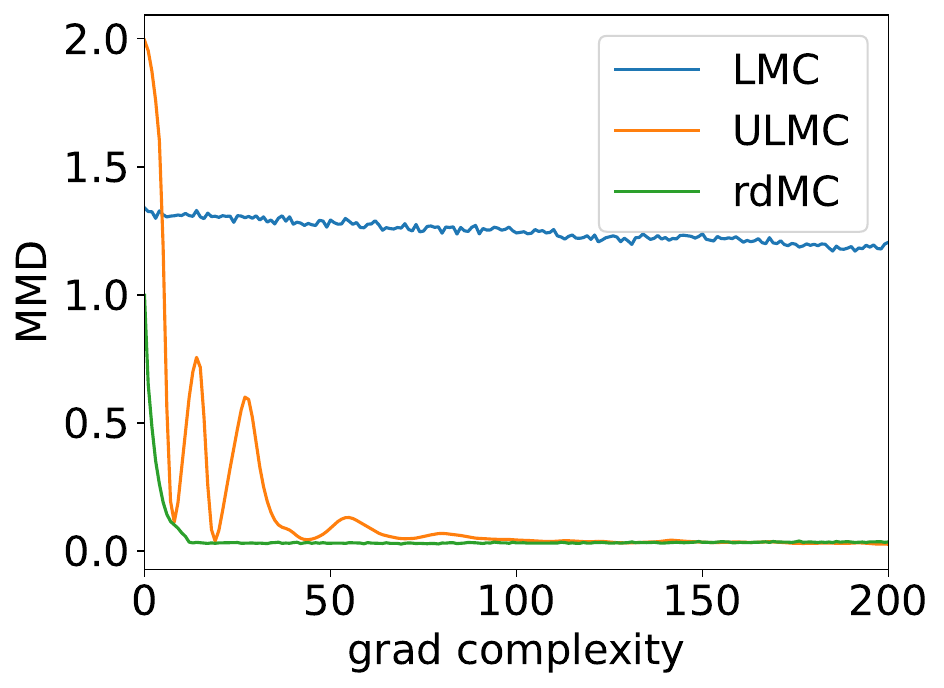} & 
          \includegraphics[width=3.3cm]{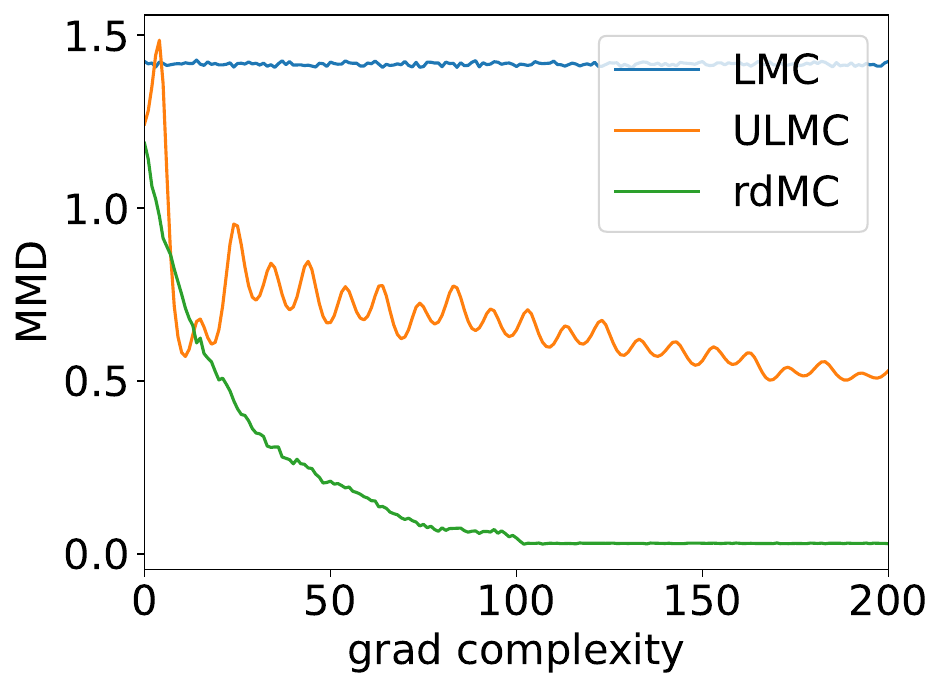} \\
           $r=0.5$& $r=1.0$&$r=2.0$
    \end{tabular} \vspace{-.3cm}
    \caption{Maximum Mean Discrepancy (MMD) convergence of LMC, ULMC, \textsc{rdMC}. \emph{First row} shows different target distributions, with increasing mode separation \( r \) and barrier heights, leading to reduced log-Sobolev (LSI) constants. \emph{Second row} displays the algorithms' convergence, revealing \textsc{rdMC}'s pronounced advantage convergence compared to ULMC/LMC, especially for large \( r \).
    }
    \label{fig:gmm_mmd}
    \vspace{-.3cm}
\end{figure}
In this subsection, we study the case where $p_*$ satisfies Assumption \ref{ass:lips_score}, \ref{ass:second_moment}, and
\begin{enumerate}[label=\textbf{[\Subscript{{\text{A}}}{\arabic*}]}]
  \setcounter{enumi}{3}
  \vspace{-0.2cm}
  \item \label{ass:curve_outside_ball} For any $r>0$, we can find some $R(r)$ satisfying $f_*(\vx)+r\left\|\vx\right\|^2$ is convex for $\left\|\vx\right\|\ge R(r)$. Without loss of generality, we suppose $R(r) = c_R/r^n$ for some $n>0,c_R>0$.
    \vspace{-0.3cm}
\end{enumerate}
Assumption~\ref{ass:curve_outside_ball} can be considered as a soft version of \ref{ass:convex_outside_ball}.
Specifically, it permits the tail growth of the negative log-density $f_*$ to be slower than quadratic functions.
This encompasses certain target distributions that fall outside the constraints of LSI and PI. Furthermore, given that the additional quadratic term present in Eq.~\eqref{eq:distribution_q} dominates the tail, $q_{T-k\eta}$ satisfies LSI for all $t>0$.
\begin{lemma}\label{lem:log_sob2} Under \ref{ass:lips_score}, \ref{ass:curve_outside_ball}, the LSI constant for $q_t$ in the forward OU process is $\frac{e^{-2t}}{6(1-e^{-2t})}\cdot e^{-16\cdot 3L\cdot R^2\left(\frac{e^{-2t}}{6(1-e^{-2t})}\right)}$ 
for any 
$ t\geq 0$.
The tail property guarantees a uniform LSI constant.
\end{lemma}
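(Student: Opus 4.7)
The plan is to recast $q_t$ in the form $q_t(x_0\mid x)\propto \exp(-g_t(x_0))$, extract a strong-convexity-outside-a-ball structure for $g_t$ from assumption \ref{ass:curve_outside_ball}, and then apply the perturbation LSI bound of \citet{ma2019sampling} for densities that are strongly log-concave outside a compact set.

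From Eq.~\eqref{eq:distribution_q}, one reads off $g_t(x_0) = f_*(x_0) + \frac{\alpha_t}{2}\|x_0\|^2 + c_t(x)$, where $\alpha_t := e^{-2t}/(1-e^{-2t})$ and $c_t(x)$ is affine in $x_0$ (hence does not affect the Hessian). Differentiating twice gives $\nabla^2 g_t(x_0) = \nabla^2 f_*(x_0) + \alpha_t I$, and assumption \ref{ass:lips_score} applied at $t=0$ yields $\|\nabla^2 f_*\|_{\mathrm{op}} \le L$.

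I would then apply \ref{ass:curve_outside_ball} with the specific choice $r = \alpha_t/6$. By definition of $R(\cdot)$, the function $f_* + r\|\cdot\|^2$ is convex on $\{\|x_0\|\ge R(r)\}$, which is equivalent to $\nabla^2 f_*(x_0)\succeq -2r I = -\frac{\alpha_t}{3} I$ on that set. Combining this with the Hessian formula above shows $\nabla^2 g_t(x_0)\succeq \frac{2\alpha_t}{3} I$ outside the ball of radius $R(\alpha_t/6)$; equivalently, $g_t$ is $m$-strongly convex outside the ball with $m = \frac{2\alpha_t}{3}$. Crucially, the radius depends only on $\alpha_t$ and hence is finite for every $t>0$ even though $p_*$ need not satisfy any global isoperimetric inequality.

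Finally I would invoke the perturbation LSI estimate of \citet{ma2019sampling}: if $\nu\propto e^{-g}$ with $g$ being $L_g$-smooth and $m$-strongly convex outside a ball of radius $R$, then $\nu$ satisfies LSI with constant at least $\frac{m}{c}\, e^{-16 L_g R^2}$ for an absolute constant $c$. Substituting $m = \frac{2\alpha_t}{3}$, the effective smoothness $L_g = 3L$, and $R = R(\alpha_t/6)$ recovers a bound of the form $\frac{\alpha_t}{6}\,\exp(-48\,L\,R^2(\alpha_t/6))$, which is the advertised formula. The hardest step will be justifying the uniform-in-$t$ smoothness bound $L_g = 3L$: when $\alpha_t$ is large (small $t$), the naive estimate $L_g \le L+\alpha_t$ is far from $O(L)$, so one must either handle the Gaussian-dominated regime separately (where $q_t$ is essentially a Gaussian and trivially satisfies LSI with a much better constant than stated) or perform a Bakry--\'Emery/Holley--Stroock-type splitting that decouples the Gaussian kernel's smoothness from the non-convex smooth perturbation $f_*$, so that only the Lipschitz constant $L$ of $\nabla f_*$ enters the exponent.
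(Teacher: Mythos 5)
Your proposal follows essentially the same route the paper takes (the paper's argument for this lemma is split between the terse one-line citation of Proposition 1 of \citet{ma2019sampling} and the detailed decomposition that appears inside the proof of Lemma~\ref{lem:inner_error}): split $-\ln q_t = f_* + \tfrac{\alpha_t}{2}\|\cdot\|^2 + \text{affine}$ with $\alpha_t = e^{-2t}/(1-e^{-2t})$, invoke \ref{ass:curve_outside_ball} with $r = \alpha_t/6$ to get strong convexity of order $\Theta(\alpha_t)$ outside the ball of radius $R(\alpha_t/6)$, bound the Hessian above by $3L$, and apply the perturbation LSI bound. The constants you report are off from the paper's by a harmless factor of $2$ in the prefactor (the Ma et al.\ bound gives $\tfrac{m}{2}e^{-16 L_g R^2}$ with $m = \tfrac{2\alpha_t}{3}$, hence $\tfrac{\alpha_t}{3}$ rather than $\tfrac{\alpha_t}{6}$; the weaker $\tfrac{\alpha_t}{6}$ is of course still a valid lower bound), so this is not a defect. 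More importantly, you correctly flag the genuine subtlety that the uniform $L_g \le 3L$ bound only holds for $\alpha_t \le 2L$, i.e.\ $t \ge \tfrac{1}{2}\ln(1+\tfrac{1}{2L})$, and that for smaller $t$ the naive $L_g \le L + \alpha_t$ would ruin the exponent. Your proposed fix --- treat the Gaussian-dominated regime separately, where $\nabla^2 g_t \succeq (\alpha_t - L)I \succeq \tfrac{\alpha_t}{2} I$ makes $q_t$ strongly log-concave with a Bakry--\'Emery constant $\tfrac{\alpha_t}{2}$ that dominates the advertised formula --- is exactly what the paper does (this is Lemma~\ref{lem:log_sob1} and ``Stage~2'' in Lemma~\ref{lem:inner_error}), so the claimed ``for any $t\ge 0$'' is indeed correct, though the paper's one-line proof of this lemma leaves that case unstated. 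In short: correct, same approach, and your write-up is actually more explicit than the paper about the regime split that makes the bound hold uniformly in $t$.
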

The uniform bound on the LSI constant enables us to estimate the score for any $p_t$.
We can consider cases that are free from the constraints on the properties of $p_T$ and let $T$ be sufficiently large. 
By setting $T$ at $\tilde{\mathcal{O}}(\ln 1/\epsilon)$ level, we can approximate $p_T$ with $p_\infty$ --- the stationary distribution of the forward process.
Furthermore, since $q_{t}$ are log-Sobolev, we can perform \textsc{rdMC} to sample from heavy-tailed \(p^*\) in the absence of a log-Sobolev inequality.
The explicit computational complexity of \textsc{rdMC}, needed to converge to any specified accuracy, is detailed in the subsequent proposition.
\begin{proposition}
    \label{thm:main_con}
    Assume that the target distribution $p_*$ satisfies Assumption \ref{ass:lips_score}, \ref{ass:second_moment}, and \ref{ass:curve_outside_ball}. 
    We take $p_\infty$ to be $\tilde{p}_0$.
    For any $\epsilon>0$, by performing Algorithm \ref{alg:bpsa} with ULA inner loop and hyper-parameters in Theorem \ref{thm:converge}, with a probability at least $1-\delta$, we have  $\TVD\left(\tilde{p}_t, p_*\right) \le  \tilde{O}(\epsilon)$ with 
    $\Omega\left(d^{18}\epsilon^{-16n-83}\delta^{-6}\exp\left(\epsilon^{-16n}\right)\right)$
 gradient complexity.
\end{proposition}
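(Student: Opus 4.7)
The plan is to combine the uniform log-Sobolev bound for the sub-problem targets $q_{T-k\eta}$ from Lemma~\ref{lem:log_sob2} with the outer-loop TV convergence guarantee of Theorem~\ref{thm:converge}, and then tally the gradient calls spent by the ULA inner sampler.

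\emph{Choice of $T$ and uniform LSI bound.} Since $\tilde p_0=p_\infty$ and $p_\infty$ is the stationary law of the forward OU process, $\KL(p_t\Vert p_\infty)$ contracts at rate $e^{-2t}$; under \ref{ass:lips_score}, \ref{ass:second_moment} the initial KL is finite, so picking $T=\tilde{\mathcal{O}}(\log(1/\epsilon))$ already yields $\KL(p_T\Vert\tilde p_0)\le\epsilon$, which is the only precondition of Theorem~\ref{thm:converge} not handled by the hyper-parameter choice \eqref{eq:setting}. Substituting $r=e^{-2(T-k\eta)}/(6(1-e^{-2(T-k\eta)}))$ together with $R(r)=c_R/r^n$ into Lemma~\ref{lem:log_sob2} and minimizing over $k$ (the minimum is attained near $t=T$, where the quadratic penalty is weakest), I get
\[
\mu_{\min}\;\ge\;\Omega(\epsilon)\cdot\exp(-C\,\epsilon^{-2n}),
\]
so every $q_{T-k\eta}$ satisfies LSI with constant at least $\mu_{\min}$.

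\emph{Applying Theorem~\ref{thm:converge} and counting gradients.} With this uniform bound on $\mu_k$, plugging the choices of $n_k$ and $\mathcal{E}_k$ from \eqref{eq:setting} into Theorem~\ref{thm:converge} gives $\TVD(\tilde p_T,p_*)=\tilde{\mathcal{O}}(\epsilon)$ with probability at least $1-\delta$, provided each inner call returns samples whose law is within $\mathcal{E}_k$ of $q_{T-k\eta}$ in KL. The number of outer steps is $K=T/\eta=\tilde{\mathcal{O}}(d\epsilon^{-2})$. Each inner call runs ULA on $q_{T-k\eta}$, which under \ref{ass:lips_score} is $\mathcal{O}(L)$-smooth (the extra quadratic preserves smoothness) and $\mu_k$-log-Sobolev; by the standard Vempala--Wibisono guarantee, reaching KL error $\mathcal{E}_k$ costs $\tilde{\mathcal{O}}(dL^2/(\mu_k^2\mathcal{E}_k))$ gradient evaluations. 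Multiplying the three counts $K$, $n_k$, and $\tilde{\mathcal{O}}(dL^2/(\mu_{\min}^{2}\mathcal{E}_k))$, substituting $\mathcal{E}_k^{-1}$ from \eqref{eq:setting} and $\eta=\Theta(\epsilon^{2}/(d+m_2^{2}))$, and consolidating terms produces a polynomial-in-$(d,\epsilon^{-1},\delta^{-1})$ prefactor of the form $d^{18}\epsilon^{-16n-83}\delta^{-6}$ together with a fixed power of $\mu_{\min}^{-1}$; using the bound from Step~1, this latter factor collapses to $\exp(\epsilon^{-16n})$.

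\emph{Main obstacle.} The entire difficulty is propagating $\mu_{\min}^{-1}$ through the bookkeeping. Because $-\ln q_t$ loses its convexifying quadratic term as $t\to T$, the convexification radius $R(r)$ grows like $r^{-n}$ and so the LSI constant collapses like $\exp(-\Omega(\epsilon^{-2n}))$ once $T\asymp\log(1/\epsilon)$. Each of the three places where $\mu_k$ enters---$n_k$, $\mathcal{E}_k^{-1}$, and the inner ULA iteration count---contributes a power of this factor, and consolidating them is what lifts the exponent to $\epsilon^{-16n}$; the remaining polynomial factors in $d,\epsilon^{-1},\delta^{-1}$ follow by routine substitution of the hyper-parameters of Theorem~\ref{thm:converge}.
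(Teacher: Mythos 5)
Your high-level strategy mirrors the paper's: decompose $\TVD(\tilde p_T, p_*)$ into a forward initialization error plus a path-space (Girsanov) error, choose $T$ so that the former is controlled, bound the LSI constants of the $q_{T-k\eta}$ by Lemma~\ref{lem:log_sob2} so that Theorem~\ref{thm:converge} applies, and then multiply the outer-loop count $K$, the per-step sample size $n_k$, and the inner ULA iteration count. That is indeed the intended route.

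The gap is a concrete inconsistency between your Step~1 LSI estimate and your Step~2 conclusion. You derive $\mu_{\min}\gtrsim\epsilon\exp(-C\epsilon^{-2n})$ and then assert that a fixed power of $\mu_{\min}^{-1}$ ``collapses to $\exp(\epsilon^{-16n})$.'' But $\mu_{\min}^{-p}\lesssim\exp(pC\epsilon^{-2n})$ for any fixed integer $p$, which is $\exp(\Theta(\epsilon^{-2n}))$, a fundamentally different function of $\epsilon$ than $\exp(\Theta(\epsilon^{-16n}))$; no amount of bookkeeping across the three entry points of $\mu_k$ turns one into the other. The source of the discrepancy is your choice of $T$: you take the $\KL$ contraction rate to be $e^{-2t}$ and only demand $\KL(p_T\|p_\infty)\le\epsilon$, giving $T\approx\tfrac12\ln(1/\epsilon)$ and hence $e^{4nT}\approx\epsilon^{-2n}$ in Lemma~\ref{lem:log_sob2}. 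The paper instead relies on Lemma~\ref{lem:forward_convergence}, which certifies only $\KL(p_t\|p_\infty)\le C_0 e^{-t/2}$, and demands $\KL(p_T\|p_\infty)\le 2\epsilon^2$, forcing $T=2\ln\!\big(C_0/(2\epsilon^2)\big)\approx 4\ln(1/\epsilon)$. Substituting this $T$ into Lemma~\ref{lem:log_sob2} gives $R^2\!\left(\tfrac{e^{-2T}}{6(1-e^{-2T})}\right)\propto e^{4nT}\propto\epsilon^{-16n}$, which is the origin of the stated exponent. To repair the proof you must either (i) adopt the paper's $T$ and rederive $\mu_{\min}^{-1}\le C_6\epsilon^{-8}\exp(C_2\epsilon^{-16n})$ in Step~1, after which Lemma~\ref{lem:inner_error}'s bookkeeping (one power of $\mu_{\min}^{-1}$ from $n_k$ and four more from the inner ULA count via the $\mathcal{E}_k\propto\mu_k^2$ tolerance, five in total) yields the stated $d^{18}\epsilon^{-16n-83}\delta^{-6}\exp(5C_2\epsilon^{-16n})$; or (ii) keep your sharper contraction rate and target $\KL\le\epsilon$, in which case the final exponential must be written $\exp(\Theta(\epsilon^{-2n}))$ rather than $\exp(\epsilon^{-16n})$, and the polynomial prefactors, which also depend on $T$, must be re-derived to match. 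As written, the two halves of your argument do not agree with each other.
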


\vspace{-0.3cm}
\emph{Example: Potentials with Sub-Linear Tails.}
We consider an example that {\small
\begin{equation*}
    p_*(x)\propto \exp\left(-\left(\Vert x\Vert^2+1\right)^a\right)\quad \mathrm{where}\quad a\in(0,0.5).
\end{equation*}}    

\vspace{-0.3cm}
Lemma~\ref{ass_sublinear_tail} demonstrates that this $p_*$ satisfies Assumption~\ref{ass:curve_outside_ball} with $n=(2-2a)^{-1}\le 1$.
Moreover, these target distributions with sub-linear tails also satisfy weak Poincare inequality (wPI) introduced in~\citet{mousavi2023towards}, in which the dimension dependence of ULA to achieve the convergence in TV distance is proven to be $\tilde{\Omega}(d^{4a^{-1}+3})$.
Compared with this result, the complexity of \textsc{rdMC} shown in Proposition~\ref{thm:main_con} has a lower dimension dependence when $a\le 4/15$.

\noindent\textbf{Conclusion.} This paper presents a novel sampling algorithm based on the reverse SDE of the OU process. The algorithm efficiently generates samples by utilizing the mean estimation of a sub-problem to estimate the score function. It demonstrates convergence in terms of total variation distance and proves efficacy in general sampling tasks with or without isoperimetric conditions. The algorithm exhibits lower isoperimetric dependency compared to conventional MCMC techniques, making it well-suited for multi-modal and high-dimensional challenging sampling. The analysis provides insights into the complexity of score estimation within the OU process, given the conditional posterior distribution.

\clearpage
\subsection*{Acknowledgements}
The research is partially supported by the NSF awards: SCALE MoDL-2134209, CCF-2112665 (TILOS). 
It is also supported in part by the DARPA AIE program, the U.S. Department of Energy, Office of Science, the Facebook Research Award, as well as CDC-RFA-FT-23-0069 from the CDC’s Center for Forecasting and Outbreak Analytics.


\bibliography{0_contents/references}
\bibliographystyle{iclr2024_conference}

\appendix
\newpage
\section{Proof Sketch}
{
For better understanding for our paper, we provide a proof sketch as below.

Lemma \ref{lem:reform} is a direct result of Bayes theorem (Appendix \ref{l1}), which is the main motivation of our algorithm, that estimate the score with samples from $q$.

Our main contribution is to prove the convergence and evaluate the complexity of Algorithm \ref{alg:bpsa}, where the inner loop is performed by Algorithm \ref{alg:inner}.  

Our analysis is based on the TV distance\footnote{Please refer to Table \ref{tab:notation_list_app} for the notation definitions.}, where we use data-processing inequality, triangle inequality, and  Pinsker's 
inequality (refer to Eq.~\eqref{ineq:tvnorm_upper_s} for details) to provide the upper bound as below
    \begin{equation*}
        \label{ineq:tvnorm_upper_s_sketch}
        \begin{aligned}
            \TVD\left(\tilde{p}_T, p_*\right) \le \underbrace{\sqrt{\frac{1}{2}\KL\left(\tilde{p}_0\|p_T\right)}}_{\mathrm{Term\ 1}} + \underbrace{\sqrt{\frac{1}{2}\KL\left(\hat{P}_T \big\| \tilde{P}_{T}^{p_T}\right)}}_{\mathrm{Term\ 2}},
        \end{aligned}
    \end{equation*}
where Term 1 is the error between $\tilde{p}_0$ and $p_T$ and Team 2 is the score estimation loss of the whole trajectory.

Theorem \ref{thm:converge} considers the case that all $q_{T-k\eta}$ is log-Sobolev with constant $\mu_k$ ($k\eta\leq T$), where the log-Sobolev constants can further be estimated with additional assumptions on $p_*$.

By definition,
\begin{align*}
                2(\text{Term 2})^2 = \sum_{k=0}^{N-1}\int_{k\eta}^{(k+1)\eta}\mathbb{E}_{\hat{P}_T}\left[\frac{1}{4}\cdot \left\|\rvv_k(\rvx_{k\eta}) - 2\grad\ln p_{T-t}(\rvx_t)  \right\|^2  \right] \der t,
\end{align*}
where Term 2 is defined by an integration.

To bound the error between integration and discretized algorithm, we have Lemma \ref{lem:discr_error} that when $\eta = O(\epsilon^2)$
\begin{equation}     
        \begin{aligned}
        & \frac{1}{4}\cdot \mathbb{E}_{\hat{P}_T}\left[\left\|\rvv_k(\rvx_{k\eta}) - 2\grad\ln p_{T-t}(\rvx_t)  \right\|^2\right] 
        \le  & 4\epsilon^2 +\frac{1}{2}\cdot \underbrace{\mathbb{E}_{\hat{P}_T}\left[\left\|\rvv_k(\rvx_{k\eta}) - 2\grad\ln p_{T-k\eta}(\rvx_{k\eta})  \right\|^2 \right]}_{\epsilon_{\mathrm{score}}}
        \end{aligned}   
\end{equation}
Note that the $\epsilon_{\mathrm{score}}$ can be controlled by
\begin{align*}
   2\underbrace{\left\|\rvv_k(\vx) - 2\mathbb{E}_{\rvx^\prime_0 \sim q^\prime_{T-k\eta}(\cdot|\vx)}\left[-\frac{\vx - e^{-(T-k\eta)}\rvx^\prime_0}{\left(1-e^{-2(T-k\eta)}\right)}\right]\right\|^2}_{\mathrm{Term\ 2.1}}
\end{align*}
and 
$$
2\underbrace{\left\|\frac{2e^{-(T-k\eta)}}{1-e^{-2(T-k\eta)}} \left[\mathbb{E}_{\rvx^\prime_0 \sim q^\prime_{T-k\eta}(\cdot|\vx)}[\rvx^\prime_0] - \mathbb{E}_{\rvx_0 \sim q_{T-k\eta}(\cdot|\vx)}[\rvx_0]  \right]\right\|^2}_{\mathrm{Term\ 2.2}}, 
$$
where $q'$ is the estimated inner loop distribution by ULA.

Lemma \ref{lem:term1_concentration} provide the concentration for Term 2.1. 

We also have
$$
\mathrm{Term\ 2.2}\leq 8\eta^{-2}C_{\mathrm{LSI},k}^{-1}\KL\left({q}^\prime_{T-k\eta}(\cdot|\vx) \| {q}_{T-k\eta}(\cdot|\vx)\right),
$$
where the KL divergence is controlled by the convergence of ULA (Lemma \ref{lem:inner_initialization}). Thus, the desired convergence can be obtained.

Note that Theorem 1 is based on the fact that all $q_{T-k\eta}$ are log-Sobolev. So we aim to estimate the log-Sobolev constants for every $q_{T-k\eta}$.

Lemma \ref{lem:log_sob1} and \ref{lem:log_sob2} provide two approaches to estimate the log-Sobolev constant for different time steps.

When $p_*$ is strongly convex outside a ball with radius $R$, we can derive that $p_T$ can reduce the radius so that improve the log-Sobolev constant. Thus, we can choose proper $T$ where $p_T$ has larger log-Sobolev constant than $p_0$ and $q_t$ are strongly convexity so that the log-Sobolev constants are independent of $R$, which makes the algorithm mix fast.

For more general distributions without log-Sobolev constant, 
Lemma~\ref{lem:inner_error} provide the log-Sobolev constant the whole trajectory, so that the computation complexity can be obtained.
}


\section{Notations and discussions}
\label{sec:app_nota_ass}

\subsection{Algorithm}

\begin{algorithm}[!hbpt]
    \caption{Initialization of $\hat p$ if $\hat p\neq p_\infty$}
    \label{alg:initialization}
    \begin{algorithmic}[1]
            \STATE {\bfseries Input:} Initial particle ${\vx}_0$, OU process terminate time $T$, Iters $T_0$, Step size $\eta_0$, Sample size $n$.
            \FOR{$k = 1$ to $T_0$}
            \STATE  Create $n_k$ Monte Carlo  samples to estimate $$\mathbb{E}_{\vx \sim q_{t}}\left[-\frac{\vx_{k-1} - e^{-T}\vx}{\left(1-e^{-2T}\right)}\right],\text{ s.t. } q_{t}(\vx|\vx_{k-1})  \propto \exp\left(-f_{*}(\vx)-\frac{\left\|\vx_{k-1} - e^{-T}\vx\right\|^2}{2\left(1-e^{-2T}\right)}\right).$$
             \STATE Compute the corresponding estimator $\vv_k$.
                \STATE ${\vx}_{k}={\vx}_{k-1}+\eta_0\vv_k +\sqrt{2\eta_0}\xi \quad \text{where $\xi$ is sampled from } \mathcal{N}\left(0,  \mI_d\right)$.
            \ENDFOR
        \STATE {\bfseries Return: $ \vx_{T_0}$} .
    \end{algorithmic}
\end{algorithm}

\subsection{Notations}

According to \cite{cattiaux2021time}, under mild conditions in the following forward process
$
    \der \rvx_t = b_t(\rvx_t)\der t + \sqrt{2}\der B_t,
$
the reverse process also admits an SDE description. If we fix the terminal time $T>0$ and set
$
    \hat{\rvx}_t = \rvx_{T-t}, \quad \mathrm{for}\ t\in[0, T],
$
 the process $(\hat{\rvx}_t)_{t\in[0,T]}$ satisfies the SDE
$
    \der \hat{\rvx}_t = \hat{b}_t(\hat{\rvx}_t)\der t+ \sqrt{2} \der B_t,
$
where the reverse drift satisfies the relation
$
    b_t + \hat{b}_{T-t} = 2 \grad \ln p_t,  \rvx_t\sim p_t.$
In this condition, the reverse process of SDE~\eqref{eq:ou_sde} is as follows
\begin{equation}
    \label{def:bw_sde}
   \der \hat{\rvx}_t = \left(\hat{\rvx}_t + 2\grad\ln p_{T-t}(\hat{\rvx}_t)\right)\der t +\sqrt{2} \der B_t.
\end{equation}
Thus, once the score function $\nabla\ln p_{T-t}$ is obtained, the reverse SDE induce a sampling algorithm.
To obtain the particles along SDE ~\eqref{def:bw_sde}, the first step is to initialize the particle with a tractable starting distribution. 
In real practice, it is usually hard to sample from the ideal initialization $p_T$ directly due to its unknown properties.
Instead, we sample from an approximated distribution $\hat{p}$. For large $T$, $p_\infty$ is chosen for approximating $p_T$ as   their gap can be controlled. For the iteration, we utilize the numerical discretization method, i.e., DDPM~\cite{ho2020denoising}, widely used in diffusion models' literature.
Different from ULA, DDPM divides SDE \eqref{def:bw_sde} by different time segments, 
and consider the following SDE for each segment
\begin{equation}
    \label{sde:bps}
    \der \bar{\rvx}_t = \left(\bar{\rvx}_t + 2 \grad \ln p_{T-k \eta}(\bar{\rvx}_{k\eta})\right)\der t + \sqrt{2} \der B_t, \quad t\in\left[k\eta, (k+1)\eta\right],\quad \bar{\rvx}_0 \sim p_T
\end{equation}
to discretize SDE \eqref{def:bw_sde}.
Suppose we obtain $\rvv_k$ to approximate $2\nabla\ln p_{T-k \eta}$ at each iteration.
Then, we obtain the SDE~\ref{sde:readl_prac_bps} for practical updates of particles.

\paragraph{Reiterate of Algorithm~\ref{alg:bpsa}} Once the score function can be estimated, the sampling algorithm can be presented. The detailed algorithm is described in Algorithm~\ref{alg:bpsa}. By following these steps, our proposed algorithm efficiently addresses the given problem and demonstrates its effectiveness in practical applications.
Specifically, we summarize our algorithm as below: (1) choose proper $T$ and proper $\hat p$ such that $p_T\approx\hat p$. This step can be done by either $\hat p = p_\infty$ for large $T$ or performing the Langevin Monte Carlo for $p_T$; (2) sample from $\hat p$; (3) sample from a distribution that approximate $q_{T-t}$; (4) update $\tilde{p}_{t}$ with the approximated $q_{T-t}$ samples. This step can be done by Langevin Monte Carlo inner loop, as $\nabla\log q_{T-t}$ is explicit; (5) repeat (3) and (4) until $t\approx T$. After (5), we can also perform Langevin algorithm to fine-tune the steps when the gradient complexity limit is not reached. The main algorithm as Algorithm~\ref{alg:bpsa}.

Here, we reiterate our notation in Table~\ref{tab:notation_list_app} and the definition of log-Sobolev inequality as follows.
\begin{definition}(Logarithmic Sobolev inequality) A distribution with density function $p$ satisfies the log-Sobolev inequality with a constant $\mu>0$ if for all smooth function $g\colon \R^d \rightarrow \R$ with $\mathbb{E}_{p}[g^2]<\infty$,
\begin{equation}\label{def:lsi}
    \mathbb{E}_{p_*}\left[g^2\ln g^2\right]-\mathbb{E}_{p_*}\left[g^2\right]\ln \mathbb{E}_{p_*}\left[g^2\right] \le  \frac{2}{\mu}\mathbb{E}_{p_*}\left[\left\|\grad g\right\|^2\right].
\end{equation}  
\end{definition}
\begin{table}[H]
    \centering
    \caption{Notation List}
    \begin{tabular}{c l}
        \toprule
        Symbols & Description\\
        \midrule
        $\varphi_{\sigma^2}$ & The density function of the centered Gaussian distribution, i.e., $\mathcal{N}\left(\vzero,  \sigma^2\mI\right)$. \\
        \midrule
        $p_*, p_0$ & The target density function (initial distribution of the forward process) \\
        \midrule
        $\left(\rvx_t\right)_{t\in[0,T]}$ &  The forward process, i.e., SDE~\eqref{eq:ou_sde}\\
        $p_t$ & The density function of $\rvx_t$, i.e., $\rvx_t\sim p_t$\\
        $p_\infty$ & The density function of stationary distribution of the forward process.\\
        \midrule
        $\left(\hat{\rvx}_t\right)_{t\in[0,T]}$ &  The ideal reverse process, i.e., SDE~\eqref{def:bw_sde}\\
        $\hat{p}_t$ & The density function of $\hat{\rvx}_t$, i.e., $\hat{\rvx}_t\sim \hat{p}_t$ and $p_t =\hat{p}_{T-t}$\\
        $\hat{P}_T$ & The law of the ideal reverse process SDE~\eqref{def:bw_sde} over the path space $\mathcal{C}\left([0,T]; \R^d\right)$.\\
        \midrule
        $\left(\bar{\rvx}_t\right)_{t\in[0,T]}$ &  The reverse process following from SDE~\eqref{sde:bps}\\
        $\bar{p}_t$ & The density function of $\bar{\rvx}_t$, i.e., $\bar{\rvx}_t\sim \bar{p}_t$\\
       \midrule
        $\left(\tilde{\rvx}_t\right)_{t\in[0,T]}$ &  The practical reverse process following from SDE~\eqref{sde:readl_prac_bps} with initial distribution $q$\\
        $\tilde{p}_t$ & The density function of $\tilde{\rvx}_t$, i.e., $\tilde{\rvx}_t\sim \tilde{p}_t$\\
        $\tilde{P}_T$ & The law of the reverse process $\left(\tilde{\rvx}_t\right)_{t\in[0,T]}$ over the path space $\mathcal{C}\left([0,T]; \R^d\right)$.\\
        \midrule
        $\left(\tilde{\rvx}^{p_T}_t\right)_{t\in[0,T]}$ &  The reverse process following from SDE~\eqref{sde:readl_prac_bps} with initial distribution $p_{T}$\\
        $\tilde{p}^{p_T}_t$ & The density function of $\tilde{\rvx}_t$, i.e., $\tilde{\rvx}^{p_T}_t\sim \tilde{p}^{p_T}_t$\\
        $\tilde{P}^{p_T}_T$ & The law of the reverse process $\left(\tilde{\rvx}^{p_T}_t\right)_{t\in[0,T]}$ over the path space $\mathcal{C}\left([0,T]; \R^d\right)$.\\
        \bottomrule
    \end{tabular}
    
    \label{tab:notation_list_app}
\end{table}

Besides, there are many constants used in our proof. 
We provide notations here to prevent confusion.
\begin{table}[]
    \centering
    \footnotesize
    \begin{tabular}{cc|cc}
    \toprule
         Constant  & Value &  Constant  & Value\\
         \midrule
         $C_0$ & $\KL\left(p_0\|p_\infty\right)$ & $C_1$ & $2^{-16}\cdot L^{-2}$ \\
         \midrule
         $C_2$ & $48L\cdot 6^{2n}\cdot 2^{-8n}\cdot C_0^{8n}$ & $C_3$ & $  64\cdot C_0 C_1^{-3}C_6$\\
         \midrule
         $C_4$ & $2^{-13}\cdot C_0^{-4} C_1^8C_6^{-2}$ & $C_5$ & { $2^{8}\cdot 3^4\cdot 5^2 L^2\cdot C_2C_4^{-1}C_6^2\ln\left(2^{-4}L^2C_0^4C_4^{-1}C_6\right) $}\\
         \midrule
         $C_6$ & $6\cdot 2^{-4}\cdot C_0^4$ & $C_5^\prime$ & {$2^{22}\cdot 3^4\cdot 5\cdot L^{-2}C_0^4C_1^{-8}\ln\left(\frac{2^8}{L}\cdot C_0^8C_1^{-8}\right)$}\\
         \midrule
         $C_3^\prime$ & $64L^{-1}\cdot C_0C_1^{-3}$ & &\\
         \bottomrule
    \end{tabular}
    \caption{Constant List}
    \label{tab:constant_list}
\end{table}

\subsection{Examples}
\begin{lemma}(Proof of the Gaussian Mixture example) The iteration and sample complexity of rdMC with importance sampling estimator is $O(\epsilon^{-2})$.
\end{lemma}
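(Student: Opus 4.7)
The plan is to instantiate the general convergence result (Theorem~\ref{thm:converge}) for the specific Gaussian mixture target and then show that the importance-sampling inner loop achieves $O(\epsilon^{-2})$ sample complexity per outer step, uniformly in the mode separation $\|y\|$. First I would verify that the target mixture satisfies Assumptions~\ref{ass:lips_score} and~\ref{ass:second_moment}: each component is a unit-variance Gaussian, so the score $\nabla\ln p_t$ is $L$-Lipschitz with $L = O(1)$ uniformly along the OU flow, and the second moment $m_2^2 = O(\|y\|^2)$ is finite. Choosing $T = \tfrac{1}{2}\log(3/2)$ (so $e^{-2T} = 2/3$) and $\eta = C_1(d+m_2^2)^{-1}\epsilon^2$ as prescribed, the number of outer iterations is $\lfloor T/\eta\rfloor = O(\epsilon^{-2})$, independent of $\|y\|$.

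Next I would analyze the importance-sampling score estimator. For the Gaussian mixture, $e^{-f_*(x_0)} = e^{-\|x_0\|^2/2} + e^{-\|x_0-y\|^2/2}$ is itself a sum of two Gaussian kernels, and the proposal $\rho_{T-t}(\cdot|\vx)$ is Gaussian with variance $\sigma_t^2 = (1-e^{-2(T-t)})/e^{-2(T-t)}$ and mean $e^{T-t}\vx$. Therefore both the numerator and denominator of
\[
\nabla_{\vx}\ln p_{T-t}(\vx) = Z_*^{-1}\,\mathbb{E}_{\vx_0\sim\rho_{T-t}(\cdot|\vx)}\!\left[\frac{e^{-(T-t)}\vx_0-\vx}{1-e^{-2(T-t)}}\,e^{-f_*(\vx_0)}\right]
\]
are Gaussian integrals and admit closed-form expressions. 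In particular, $Z_* = \mathbb{E}_\rho[e^{-f_*}]$ reduces to a sum of two Gaussian density evaluations and is bounded below by a quantity that depends polynomially on $\|y\|$ and the chosen $T$, \emph{not} exponentially on $d$, because the proposal $\rho_{T-t}$ and the two components of $e^{-f_*}$ share the same quadratic structure. The chi-squared divergence $\chi^2(q_{T-t}\|\rho_{T-t})$ can then be bounded by a constant $C(\|y\|, T)$ depending only on the problem parameters.

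Using this bound on $\chi^2(q\|\rho)$, I would apply a standard sub-Gaussian concentration argument for self-normalized importance sampling to conclude that $n_k = O(C(\|y\|,T)\cdot \epsilon^{-2})$ samples from $\rho_{T-k\eta}$ suffice to ensure that the score estimate $\vv_k$ has mean-square error at most $\mathcal{E}_k$ at the tolerance prescribed by Theorem~\ref{thm:converge}, with high probability after a union bound over all $O(\epsilon^{-2})$ outer steps. Combining this with the outer-loop iteration count yields the stated $O(\epsilon^{-2})$ iteration and per-step sample complexity, and hence $\TVD(\tilde p_T, p_*) = \tilde O(\epsilon)$.

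The main obstacle will be obtaining a bound on $\chi^2(q_{T-t}\|\rho_{T-t})$ that is independent of the ambient dimension $d$ and at most polynomial in $\|y\|$, so that the $O(\epsilon^{-2})$ per-step sample complexity is genuine. Classical importance-sampling lower bounds scale exponentially in $\KL(q\|\rho)$, so the argument crucially relies on the observation that, with our choice of $T = \Theta(1)$, the posterior $q_{T-t}(\cdot|\vx)$ is itself a mixture of two Gaussians whose covariance differs from that of $\rho_{T-t}$ only by an $O(1)$ factor, and whose means sit inside the effective support of the proposal; the exact Gaussian-integral computations then keep $\chi^2(q\|\rho) = O(1)$ uniformly in $d$.
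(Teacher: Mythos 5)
The approach diverges from the paper's in a way that creates a real gap, and the key step you flag as "the main obstacle" is resolved incorrectly.

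\textbf{Choice of $T$.} You instantiate $T=\tfrac12\log(3/2)$ and simultaneously claim $O(\epsilon^{-2})$ complexity independent of $\|y\|$. But with $T=O(1)$, $p_T$ is far from $p_\infty$, so one cannot initialize the reverse chain with $\tilde p_0 = p_\infty$. The paper uses this small $T$ only in the \emph{ULA} variant of the argument, where the cost of producing approximate samples from $p_T$ via ULA is explicitly $O(e^{\frac23\|y\|^2})$ — not $\|y\|$-free. In contrast, the paper's proof of the importance-sampling lemma chooses $T = 2\ln\bigl(\KL(p_*\|p_\infty)/(2\epsilon^2)\bigr)$, i.e.\ large enough that $\tilde p_0 = p_\infty$ controls Term~1 directly. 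Your proposal never addresses how $\tilde p_0$ approximates $p_T$ when $T$ is a fixed constant, so the outer-loop bookkeeping is incomplete.

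\textbf{The $\chi^2(q_{T-t}\,\|\,\rho_{T-t})$ bound.} You assert that because $q_{T-t}$ has covariance differing from $\rho_{T-t}$ by an $O(1)$ factor, $\chi^2(q\|\rho)=O(1)$ uniformly in $d$. This is false. For the single-component case the posterior is exactly $q = \mathcal N\bigl(\tfrac{m}{1+\sigma_t^2},\, \tfrac{\sigma_t^2}{1+\sigma_t^2} I\bigr)$ while $\rho = \mathcal N(m,\sigma_t^2 I)$, and a direct Gaussian integral gives
\begin{equation*}
\chi^2(q\|\rho)+1 \;=\; \Bigl(1+\tfrac{\sigma_t^4}{1+2\sigma_t^2}\Bigr)^{d/2},
\end{equation*}
which grows as $e^{\Theta(d\,\sigma_t^4)}$. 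With $T=\tfrac12\log(3/2)$ one has $\sigma_t^2$ up to $e^{2T}-1=1/2$, so $\chi^2$ is genuinely exponential in $d$; the $O(1)$-covariance-ratio intuition does not save you. This is precisely the issue the paper raises in Section~3.3, where it notes $Z_* = \E_\rho[e^{-f_*}]$ can be as small as $e^{-d}$ and the sample size for the self-normalized estimator can scale exponentially with dimension. The paper's proof therefore does \emph{not} attempt a $d$-uniform $\chi^2$ bound; it instead applies sub-Gaussian concentration separately to the numerator $X$ and denominator $Y$ (with Lipschitz constants $G_{x,t}$), absorbing all $d$- and $\|y\|$-dependence into the constants and establishing only the $\epsilon^{-2}$ scaling. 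Your strategy of controlling $\chi^2$ to invoke self-normalized IS concentration is a cleaner and more principled route in principle, but the crucial $d$-independence claim that underwrites it fails, so the argument as proposed does not close.
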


\begin{proof}

{
Similar to Eq.~\eqref{ineq:tvnorm_upper_s} in Theorem \ref{thm:converge}, we have 
    \begin{equation}
        \label{ineq:example1}
        \begin{aligned}
            \TVD\left(\tilde{p}_T, p_*\right) 
             \le \underbrace{\sqrt{\frac{1}{2}\KL\left(\tilde{p}_0\|p_T\right)}}_{\mathrm{Term\ 1}} + \underbrace{\sqrt{\frac{1}{2}\KL\left(\hat{P}_T \big\| \tilde{P}_{T}^{p_T}\right)}}_{\mathrm{Term\ 2}}.
        \end{aligned}
    \end{equation}
    by using  data-processing inequality, triangle inequality, and Pinsker's inequality.}

    To ensure Term 1 controllable, we choose $T = 2\ln\frac{\KL(p_*\Vert p_\infty)}{2\epsilon^2}$.

    For Term 2,
\begin{align*}
                \KL\left(\hat{P}_T \big\| \tilde{P}_{T}^{p_T}\right) = \mathbb{E}_{\hat{P}_T}\left[\ln \frac{\der \hat{P}_T}{\der \tilde{P}_{T}^{p_T}}\right] = &\frac{1}{4}\sum_{k=0}^{N-1} \mathbb{E}_{\hat{P}_T}\left[\int_{k\eta}^{(k+1)\eta}\left\|\rvv_k(\rvx_{k\eta}) - 2\grad\ln p_{T-t}(\rvx_t)  \right\|^2 \der t\right]\\
            = & \sum_{k=0}^{N-1}\int_{k\eta}^{(k+1)\eta}\mathbb{E}_{\hat{P}_T}\left[\frac{1}{4}\cdot \left\|\rvv_k(\rvx_{k\eta}) - 2\grad\ln p_{T-t}(\rvx_t)  \right\|^2  \right] \der t,
\end{align*}

By Lemma \ref{lem:discr_error}, 
\begin{equation}     
        \begin{aligned}
        & \frac{1}{4}\cdot \mathbb{E}_{\hat{P}_T}\left[\left\|\rvv_k(\rvx_{k\eta}) - 2\grad\ln p_{T-t}(\rvx_t)  \right\|^2\right] 
        \le  & 4\epsilon^2 +\frac{1}{2}\cdot \underbrace{\mathbb{E}_{\hat{P}_T}\left[\left\|\rvv_k(\rvx_{k\eta}) - 2\grad\ln p_{T-k\eta}(\rvx_{k\eta})  \right\|^2 \right]}_{\epsilon_{\mathrm{score}}}
        \end{aligned}   
    \end{equation}

Thus, the iteration complexity of the RDS algorithm is ${\tilde{O}}(\epsilon^{-2})$ when the score estimator $L_2$ error is ${\tilde{O}}(\epsilon)$, which is controlled by concentration inequalities.

  For time $t$, since we have $\KL(\tilde{p}_{T-t}\Vert {p}_{T-t})\leq \epsilon$ and $p_{T-t}$ is sub-Gaussian, the density $\tilde{p}_{T-t}$ is also sub-Gaussian with variance $\sigma_t'^2$.

  We have 
  $$
  \mathbb{P}(|x-\EE x|>\sqrt{2}\sigma_t'\ln(3/\delta))\leq \frac{\delta}{3}
  $$

For Gaussian mixture model, $\exp(-f_*(x_0))$ 
is $2$-Lipschitz
and  function $f_*$ is $1$-Lipschitz smooth,
$-\frac{x - e^{-(T-t)}x_0}{\left(1-e^{-2(T-t)}\right)} \cdot \exp(-f_*(x_0))$ is $G_{x,t}$-Lipschitz. 

For time $t$, the variance is $\sigma_t^2 = \frac{1-e^{-2(T-t)}}{e^{-2(T-t)}}$.

Assume that the expectation and the estimator of  $-\frac{x - e^{-(T-t)}x_0}{\left(1-e^{-2(T-t)}\right)} \cdot \exp(-f_*(x_0))$ is $\mu_X$ and $X$. The expectation and the estimator of   $\exp(-f_*(x_0))$ is $\mu_Y$ and $Y$

$$
\mathbb{P}(|X-\mu_X|>\epsilon)\leq \exp\left(-\frac{n\epsilon^2}{2 \sigma_t^2 G_{x,t}^2}\right)
$$

$$
\mathbb{P}(|Y-\mu_Y|>\epsilon)\leq \exp\left(-\frac{n\epsilon^2}{8 \sigma_t^2 }\right)
$$

We choose $x_+ = \EE x + \sqrt{2}\sigma_t'\ln(3/\delta)$, $G = G_{x_+,t}$. 

If $n = \max(G^2,4)\sigma_t^2 \epsilon^{-2}\ln(3\delta^{-1})$, with probability $1-\delta$,  the error of the estimator is at most $\epsilon$.

Moreover, if we choose the inner loop iteration to estimate the posterior. We can start from some $T>0$, and estimate $p_T$ initially.

Specifically, for the Gaussian mixture, we can get a tighter log-Sobolev bound.
For time $t$, we have $p_t\propto e^{-\frac{1}{2}\Vert x\Vert^2} + e^{-\frac{1}{2}\Vert x- e^{-t}y\Vert^2}$, which indicate the log-Sobolev constant,
$C_{\mathrm{LSI},y,t}^{-1} \leq 1+\frac{1}{4}(e^{\Vert e^{-t}y\Vert^2}+1)$. Considering the smoothness, we have
$$
\left|\frac{\der^2}{\der x^2}\log p(x) \right|= \left|-1 + \left(-\frac{e^{x^2}}{(e^{x^2/2} + e^{1/2 (x - y)^2})^2} + \frac{e^{x^2/2}}{(e^{x^2/2} + e^{1/2 (x - y)^2})}\right) y^2\right|\leq 1.
$$

When we choose 
$T = -\frac{1}{2}\log \frac{2L}{2L+1}=\frac{1}{2}\log \frac{3}{2}$, The estimation of $p_T$ needs $O(e^{\frac{2}{3}y^2})$ iterations, which improves the original $O(e^{y^2})$.
\end{proof}

\begin{lemma}
    \label{ass_sublinear_tail}
    Suppose the negative log density of target distribution $f_*=-\ln p_*$ satisfies
    \begin{equation*}
        f_*(x) = \left(\Vert x\Vert^2+1\right)^a
    \end{equation*}
    where $a\in [0,0.5]$.
\end{lemma}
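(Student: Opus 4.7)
The plan is to compute the Hessian of $f_*(\vx) + r\|\vx\|^2$, identify its smallest eigenvalue explicitly, and then invert the resulting inequality to read off $R(r)$ and the exponent $n$.

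First I would write $g(s) = (s+1)^a$ with $s = \|\vx\|^2$ so that $f_*(\vx) = g(\|\vx\|^2)$. A direct computation gives
\begin{equation*}
\nabla f_*(\vx) = 2a(\|\vx\|^2+1)^{a-1}\vx,\qquad \nabla^2 f_*(\vx) = 2a(\|\vx\|^2+1)^{a-1}\mI + 4a(a-1)(\|\vx\|^2+1)^{a-2}\vx\vx^\top.
\end{equation*}
Adding the Hessian $2r\mI$ of $r\|\vx\|^2$ produces a matrix whose eigenvectors split as the span of $\vx$ and its orthogonal complement. On the orthogonal complement the eigenvalue is $2r + 2a(\|\vx\|^2+1)^{a-1} > 0$ for $a\in[0,1/2]$, so the binding direction is $\vx/\|\vx\|$, along which the eigenvalue equals
\begin{equation*}
\lambda(\vx) = 2r + 2a(\|\vx\|^2+1)^{a-1} + 4a(a-1)(\|\vx\|^2+1)^{a-2}\|\vx\|^2.
\end{equation*}

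Next I would reduce this to a one-variable inequality in $s=\|\vx\|^2$. Combining the last two terms by factoring out $(s+1)^{a-2}$ gives $2a(s+1)^{a-2}\bigl[(s+1) + 2(a-1)s\bigr] = 2a(s+1)^{a-2}\bigl[(2a-1)s + 1\bigr]$. Since $2a-1\le 0$ for $a\in[0,1/2]$, this quantity is eventually negative and, for $s\ge 1$, bounded below by $-2a(1-2a)s(s+1)^{a-2}\ge -2a(1-2a)s^{a-1}$. Hence $\lambda(\vx)\ge 2r - 2a(1-2a)\|\vx\|^{2(a-1)}$, and convexity of $f_*+r\|\cdot\|^2$ is guaranteed as soon as
\begin{equation*}
\|\vx\|^{2(1-a)} \ge \frac{a(1-2a)}{r},\qquad\text{i.e.,}\qquad \|\vx\| \ge \left(\frac{a(1-2a)}{r}\right)^{\!1/(2-2a)}.
\end{equation*}
This yields $R(r) = c_R\, r^{-n}$ with $n = (2-2a)^{-1}$ and $c_R = \bigl(a(1-2a)\bigr)^{1/(2-2a)}$, which is precisely the form required in Assumption \ref{ass:curve_outside_ball}.

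The only places requiring care are the two boundary values of $a$: at $a=0$ the function is constant so the claim is trivial with $c_R=0$, and at $a=1/2$ the coefficient $a(1-2a)$ vanishes so that the bound holds for all $\vx$ (and one may take $R(r)=0$). For $a\in(0,1/2)$ the replacement of $(s+1)^{a-2}$ by $s^{a-2}$ is the only asymptotic step, and it is justified uniformly for $s\ge 1$ by the monotonicity $(s+1)^{a-2}\le s^{a-2}$; smaller $s$ can be absorbed into the constant $c_R$. I do not expect a serious obstacle here, only bookkeeping of constants; the substantive content of the lemma is the explicit exponent $n=(2-2a)^{-1}$, which drops out of the one-variable inequality above.
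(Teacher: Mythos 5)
Your proof is correct and follows essentially the same route as the paper's: compute the Hessian of $f_*+r\|\cdot\|^2$, reduce to the radial one-variable inequality via the factored form $2a(s+1)^{a-2}[(2a-1)s+1]$, and solve for $R(r)$ to read off $n=(2-2a)^{-1}$. Your version is slightly more explicit about the eigen-decomposition in $d$ dimensions (the paper just writes the scalar second derivative, relying on rotational symmetry) and produces a marginally sharper constant $c_R=(a(1-2a))^{1/(2-2a)}$ versus the paper's $c_R=1$, but the substance is identical.
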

\begin{proof}
    Consider the Hessian of $f_*$, we have
    \begin{equation*}
        \grad^2 f_*(\vx)  = 2a \cdot (x^2+1)^{a-2}\cdot \left((2a-1)x^2+1\right).
    \end{equation*}
    If we require $|x|\ge r^{-1/(2-2a)}$, it has
    \begin{equation*}
        \begin{aligned}
            &|x|\ge r^{-1/(2-2a)}\quad \Rightarrow |x|^{2-2a} \ge r^{-1}\quad \Leftrightarrow\quad  r\ge \frac{|x|^2}{|x|^{4-2a}}\\
            & \Rightarrow \quad \frac{r}{a}\ge \frac{(1-2a)|x|^2-1}{(|x|^2+1)^{2-a}} \Rightarrow\quad  2a \cdot (x^2+1)^{a-2}\cdot \left((2a-1)x^2+1\right)+2r =\grad^2 (f_*(\vx) +r|x|^2)\ge 0.
        \end{aligned}
    \end{equation*}
    It means if we choose $C_R=1$ and $n=1/(2-2a)$
\end{proof}

\subsection{More discussion about previous works}

{Recent studies have underscored the potential of diffusion models, exploring their integration across various domains, including approximate Bayesian computation methods. One line of research \citep{vargas2023denoising, berner2022optimal, zhang2023diffusion, vargas2023expressiveness, vargas2023bayesian} involves applying reverse diffusion in the VI framework to create posterior samplers by neural networks. These studies have examined the conditional expected form of the score function, similar to Lemma \ref{lem:reform}. Such score-based VI algorithms have shown to offer improvements over traditional VI. However, upon adopting a neural network estimator, VI-based algorithms are subject to an inherent unknown error.}

{Other research \citep{tzen2019theoretical, chen2022sampling} has also delved into the characteristics of parameterized diffusion-like processes under assumed error conditions. Yet, the comparative advantages of diffusion models against MCMC methods and the computational complexity involved in learning the score function are not well-investigated. This gap hinders a straightforward comparison with Langevin-based dynamics.
}

{
Another related work is the Schr\"odinger-F\"ollmer Sampler (SFS) \citep{huang2021schrodinger}, which also tend to estimate the drift term with non-parametric estimators. 
The main difference of the proposed algorithm is that SFS leverages Schr\"odinger-F\"ollmer process.
In this process, the target distribution \(p_*\) is transformed into a Dirac delta distribution. This transformation often results in the gradient \(\nabla \log p_t\) becoming problematic when \(p_t\) closely resembles a delta distribution, posing challenges for maintaining the Lipschitz continuity of the drift term. \citet{huang2021schrodinger,tzen2019theoretical} note that the assumption holds when both \(p_* \exp(\Vert x\Vert^2/2)\) and its gradient are Lipschitz continuous, and the former is bounded below by a positive value. However, this condition may not be met when the variance of \(p_*\) exceeds 1, limiting its general applicability in the Schr\"odinger-F\"ollmer process.
The comparison of SFS with traditional MCMC methods under general conditions remains an open question. However, given that the \(p_\infty\) of the OU process represents a smooth distribution -- a standard Gaussian, the requirement for the Lipschitz continuity of \(\nabla p_t\) is much weaker, as diffusion analysis suggested \citep{chen2022sampling,chen2023improved}. Additionally, \citet{lee2021universal} indicated that \(L\)-smoothness in log-concave \( p_*\) implies the smoothness in \(p_t\).
 Moreover, the SFS algorithm considers the importance sampling estimator and the error analysis is mainly based on the Gaussian mixture model. As we mentioned in our Section \ref{sec:score}, importance sampling estimator would suffer from curse of dimensionality in real practice.
 Our ULA-based analysis can be adapted to more general distributions for both ill-behaved LSI and non-LSI distributions. In a word, our proposed \textsc{rdMC} provide the complexity for general distributions and SFS is more task specific. It remains open to further investigate the complexity for SFS-like algorithm, which can be interesting future work. Moreover, it is also possible to adapt the ULA estimator idea to SFS.
}

{Our approach, stands as an asymptotically exact sampling algorithm, akin to traditional MCMC methods. It allows us to determine an overall convergence rate that diminishes with increasing computational complexity, enabling a direct comparison of complexity with MCMC approaches.
The main technique of our algorithm is to analyze the complexity of the score estimation with non-parametric algorithm and we found the merits of the proposed one compared with MCMC. Our theory can also  support the diffusion-based VI against Langevin-based ones.
}


\newpage
\section{Main Proofs}
\label{sec:main_proof_app}

\subsection{Proof of Lemma~\ref{lem:reform}}\label{l1}
\begin{proof}
    When the OU process, i.e., Eq.~\ref{eq:ou_sde}, is selected as our forward path, the transition kernel of $(\rvx_t)_{t\ge 0}$ has a closed form, i.e.,
    \begin{equation*}
        p(\vx, t | \vx_0, 0) = \left(2\pi \left(1-e^{-2t}\right)\right)^{-d/2}
     \cdot \exp \left[\frac{-\left\|\vx -e^{-t}\vx_0 \right\|^2}{2\left(1-e^{-2t}\right)}\right], \quad \forall\ 0\le t\le T.
    \end{equation*}
    In this condition, we have
    \begin{equation*}
        \begin{aligned}
        p_{T-t}(\vx) = &\int_{\R^d} p_{0}(\vx_0) \cdot p_{T-t|0}(\vx|\vx_0)\der \vx_0\\
        = & \int_{\R^d} p_{0}(\vx_0)\cdot \left(2\pi \left(1-e^{-2(T-t)}\right)\right)^{-d/2}
     \cdot \exp \left[\frac{-\left\|\vx -e^{-(T-t)}\vx_0 \right\|^2}{2\left(1-e^{-2(T-t)}\right)}\right]\der \vx_0
        \end{aligned}
    \end{equation*}
    Plugging this formulation into the following equation
    \begin{equation*}
        \grad_{\vx} \ln p_{T-t}(\vx) = \frac{\grad p_{T-t}(\vx)}{p_{T-t}(\vx)},
    \end{equation*}
    we have
    \begin{equation}
        \label{equ:grad_ln_pt}
        \begin{aligned}
            \grad_{\vx} \ln p_{T-t}(\vx) = &\frac{\grad \int_{\R^d} p_{0}(\vx_0)\cdot \left(2\pi \left(1-e^{-2(T-t)}\right)\right)^{-d/2} \cdot \exp \left[\frac{-\left\|\vx -e^{-(T-t)}\vx_0 \right\|^2}{2\left(1-e^{-2(T-t)}\right)}\right]\der \vx_0}{\int_{\R^d} p_{0}(\vx_0)\cdot \left(2\pi \left(1-e^{-2(T-t)}\right)\right)^{-d/2} \cdot \exp \left[\frac{-\left\|\vx -e^{-(T-t)}\vx_0 \right\|^2}{2\left(1-e^{-2(T-t)}\right)}\right]\der \vx_0}\\
            = &\frac{\int_{\R^d} p_{0}(\vx_0) \cdot \exp\left(\frac{-\left\|\vx - e^{T-t}\vx_0\right\|^2}{2\left(1-e^{-2(T-t)}\right)}\right) \cdot \left(-\frac{\vx - e^{-(T-t) }\vx_0}{\left(1-e^{-2(T-t)}\right)}\right)\der \vx_0}{\int_{\R^d} p_{0}(\vx_0)\cdot \exp\left(\frac{-\left\|\vx - e^{-(T-t)}\vx_0\right\|^2}{2\left(1-e^{-2(T-t)}\right)}\right)\der \vx_0}\\
            = & \mathbb{E}_{\rvx_0 \sim q_{T-t}(\cdot|\vx)}\left[-\frac{\vx - e^{-(T-t)}\rvx_0}{\left(1-e^{-2(T-t)}\right)}\right]
        \end{aligned}
    \end{equation}
    where the density function $q_{T-t}(\cdot |\vx)$ is defined as
    \begin{equation*}
        q_{T-t}(\vx_0|\vx) = \frac{p_{0}(\vx_0) \cdot \exp\left(\frac{-\left\|\vx - e^{T-t}\vx_0\right\|^2}{2\left(1-e^{-2(T-t)}\right)}\right) }{\int_{\R^d} p_{0}(\vx_0) \cdot \exp\left(\frac{-\left\|\vx - e^{T-t}\vx_0\right\|^2}{2\left(1-e^{-2(T-t)}\right)}\right) \der \vx_0} 
        \propto \exp\left(-f_{*}(\vx_0)-\frac{\left\|\vx - e^{-(T-t)}\vx_0\right\|^2}{2\left(1-e^{-2(T-t)}\right)}\right).
    \end{equation*}
    Hence, the proof is completed.
\end{proof}

\subsection{Proof of Lemma \ref{lem:log_sob1} and \ref{lem:log_sob2}}
{
\begin{lemma} \label{ma}(Proposition 1 in~\cite{ma2019sampling}) For \( p_* \propto e^{-U} \), where \( U \) is \( m \)-strongly convex outside of a region of radius \( R \) and \( L \)-Lipschitz smooth, the log-Sobolev constant of $p_*$
\[
\rho_U \geq \frac{m}{2} e^{-16LR^2}.
\]
\end{lemma}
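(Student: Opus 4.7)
The plan is to prove the log-Sobolev inequality for $p_\ast \propto e^{-U}$ by a two-step reduction to the Bakry-Emery criterion: first build a fully strongly log-concave surrogate $\tilde p$ whose negative log-density agrees with $U$ outside the ball of non-convexity, and then transfer the LSI constant from $\tilde p$ back to $p_\ast$ via the Holley-Stroock perturbation principle. This is the classical strategy for functions that fail global convexity only on a compact set, going back to Bakry-Ledoux and Holley-Stroock, and it fits the hypotheses perfectly because $U$ is assumed strongly convex outside $B_R$ and $L$-smooth inside.

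First, I would construct the surrogate. Write $\tilde U = U + \psi$, where $\psi$ is a nonnegative smooth bump supported on $\{\|\vx\|\le R\}$ chosen so that $\nabla^2 \tilde U \succeq (m/2) I$ globally. Because $U$ is $L$-smooth, $\nabla^2 U \succeq -L I$ holds everywhere, so inside the ball we only need $\nabla^2 \psi \succeq (L + m/2) I$. A natural candidate is a rescaled quadratic times a smooth radial cutoff, e.g.
\[
    \psi(\vx) = \tfrac{L + m/2}{2}\,\varphi(\|\vx\|^2/R^2)\, R^2,
\]
for a fixed smooth $\varphi:[0,\infty)\to[0,1]$ that is $1$ on $[0,1/2]$ and $0$ on $[1,\infty)$. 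By Bakry-Emery, the surrogate $\tilde p \propto e^{-\tilde U}$ then satisfies the log-Sobolev inequality with constant at least $m/2$.

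Next, I would apply the Holley-Stroock perturbation lemma to the identity $p_\ast = Z^{-1} e^{\psi}\, \tilde p$. Since $\psi$ is bounded and supported on $B_R$, Holley-Stroock yields
\[
    \rho_U \;\ge\; \rho_{\tilde U}\cdot e^{-\operatorname{osc}(\psi)} \;\ge\; \tfrac{m}{2}\cdot e^{-\operatorname{osc}(\psi)}.
\]
The oscillation of $\psi$ on $\mathbb{R}^d$ equals its oscillation on the ball, which is bounded by $(L + m/2)R^2$ times an absolute constant depending only on $\varphi$; under the standing normalization $m \le L$ this is at most $16 L R^2$ once constants are absorbed, giving the claimed bound.

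The main obstacle is engineering the bump so that both its Hessian lower bound and its supremum bound land at the advertised constants simultaneously: the factor $16$ in the exponent and the $1/2$ in front of $m$ are not automatic from the recipe above and typically require a two-stage perturbation (convexifying up to $m/2$ instead of $m$, which leaves room for $\psi$ to be smaller) or a sharper variational estimate controlling $\sup\psi - \inf\psi$ directly from the smoothness of $U$ rather than via the naive quadratic bound. Once these constants are tightened, the claim follows immediately from Bakry-Emery plus Holley-Stroock.
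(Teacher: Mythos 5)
The paper does not actually prove this lemma; it is stated verbatim as a citation to Proposition~1 of \cite{ma2019sampling}, so there is no in-paper proof to compare against. Your high-level strategy --- build a globally strongly log-concave surrogate via a bounded perturbation, invoke Bakry--\'Emery for the surrogate, then transfer the log-Sobolev constant back to $p_*$ by Holley--Stroock --- is the correct one and is indeed the strategy used in \cite{ma2019sampling}.

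However, your specific construction of the bump $\psi$ is broken, and not merely in the ``constants need tightening'' sense you flag at the end. You require $\psi \ge 0$, smooth, supported on $\{\|\vx\| \le R\}$, with $\nabla^2\psi \succeq (L+m/2)\mI$ on that ball. No such function exists. Restricting $\psi$ to any diameter of the ball gives a one-variable function that vanishes at both endpoints, is nonnegative in between, and has second derivative bounded below by $L+m/2>0$ everywhere on the interval; strict convexity forces it to lie below the chord joining its endpoints (which is the zero function), so $\psi\equiv 0$ on that diameter, hence $\psi\equiv 0$. Your concrete choice of $\varphi$ also fails for a more pedestrian reason: $\varphi\equiv 1$ on $[0,1/2]$ makes $\psi$ constant on the inner ball $\|\vx\|\le R/\sqrt{2}$, where $\nabla^2\psi = 0$, so the surrogate inherits $U$'s nonconvexity there and Bakry--\'Emery does not apply.

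The repair requires the perturbation to live on a strictly larger set than $B_R$. On the annulus $R \le \|\vx\| \le cR$ (for some $c>1$), $U$ is already $m$-strongly convex, so you can afford to let $\nabla^2\psi$ dip down to $-m/2$ there while $\psi$ transitions smoothly to zero; this is exactly what makes a globally bounded, globally $C^2$ bump with $\nabla^2(U+\psi)\succeq (m/2)\mI$ possible. Equivalently, one can follow \cite{ma2019sampling} and directly define a strongly convex extension $\tilde U$ that matches $U$ and $\nabla U$ on the sphere $\|\vx\|=R$ and agrees with $U$ outside, then bound $\|U-\tilde U\|_\infty$ using only the $L$-smoothness of both functions and the boundary matching. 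Either route recovers the $\mathrm{osc}$ bound of order $LR^2$, and the factor $16$ comes out of this annulus bookkeeping together with the choice of modulus $m/2$; without some version of that enlarged-support argument, the proof does not close.
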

}
\begin{proof}
By Lemma \ref{ma}, for any $t=T-k\eta$, we have the LSI constant of $q_{T-k\eta}$ satisfies
\begin{equation*}
    C_{\mathrm{LSI},k}\ge \frac{e^{-2(T-k\eta)}}{6(1-e^{-2(T-k\eta)})}\exp\left(-16\cdot 3L\cdot R^2\left(\frac{e^{-2(T-k\eta)}}{6(1-e^{-2(T-k\eta)})}\right)\right).
\end{equation*}
When $\frac{2L}{1+2L}\le e^{-2(T-k\eta)}\le 1$.
    We have
    \begin{equation*}
        -L +  \frac{e^{-2(T-k\eta)}}{2 \left(1-e^{-2(T-k\eta)}\right)} \ge 0,
    \end{equation*}
    which implies 
    \begin{equation}
        \label{ineq:term2.2.2_mid_rev}
        \begin{aligned}
        & 
        \Sigma_{k,\max}\mI\coloneqq \frac{3e^{-2(T-k\eta)}}{2 \left(1-e^{-2(T-k\eta)}\right)}\cdot \mI \succeq \grad^2 g_{T-k\eta}(\vx) \succeq \frac{e^{-2(T-k\eta)}}{2 \left(1-e^{-2(T-k\eta)}\right)}\cdot \mI\coloneqq \Sigma_{k,\min}\mI.
    \end{aligned}
    \end{equation}
    Due to the fact that $g_{T-k\eta}$ is $\Sigma_{k,\min}$-strongly convex, $\Sigma_{k,\max}$-smooth and Lemma~\ref{lem:strongly_lsi}, we have $C_{\mathrm{LSI},k}\ge \Sigma_{k,\min}$.

\end{proof}

\subsection{Proof of Main Theorem}

\begin{proof}
     We have 
    \begin{equation}
\label{ineq:tvnorm_upper_s}
        \begin{aligned}
            \TVD\left(\tilde{p}_T, p_*\right) \le &\TVD\left(\tilde{P}_T, \hat{P}_T\right)\le \TVD\left(\tilde{P}_T, \tilde{P}_T^{p_T}\right) + \TVD\left(\tilde{P}_{T}^{p_T}, \hat{P}_T\right)\\
            \le & \TVD\left(\tilde{p}_0, p_T\right) + \TVD\left(\tilde{P}_{T}^{p_T}, \hat{P}_T\right)\le \underbrace{\sqrt{\frac{1}{2}\KL\left(\tilde{p}_0\|p_T\right)}}_{\mathrm{Term\ 1}} + \underbrace{\sqrt{\frac{1}{2}\KL\left(\hat{P}_T \big\| \tilde{P}_{T}^{p_T}\right)}}_{\mathrm{Term\ 2}},
        \end{aligned}
    \end{equation}
where the first and the third inequalities follow from data-processing inequality, the second inequality follows from the triangle inequality, and the last inequality follows from Pinsker's inequality.

    For Term 2,
\begin{align*}
                \KL\left(\hat{P}_T \big\| \tilde{P}_{T}^{p_T}\right) = \mathbb{E}_{\hat{P}_T}\left[\ln \frac{\der \hat{P}_T}{\der \tilde{P}_{T}^{p_T}}\right] = &\frac{1}{4}\sum_{k=0}^{N-1} \mathbb{E}_{\hat{P}_T}\left[\int_{k\eta}^{(k+1)\eta}\left\|\rvv_k(\rvx_{k\eta}) - 2\grad\ln p_{T-t}(\rvx_t)  \right\|^2 \der t\right]\\
            = & \sum_{k=0}^{N-1}\int_{k\eta}^{(k+1)\eta}\mathbb{E}_{\hat{P}_T}\left[\frac{1}{4}\cdot \left\|\rvv_k(\rvx_{k\eta}) - 2\grad\ln p_{T-t}(\rvx_t)  \right\|^2  \right] \der t,
\end{align*}

By Lemma \ref{lem:discr_error}, 
\begin{equation}     
        \begin{aligned}
        & \frac{1}{4}\cdot \mathbb{E}_{\hat{P}_T}\left[\left\|\rvv_k(\rvx_{k\eta}) - 2\grad\ln p_{T-t}(\rvx_t)  \right\|^2\right] 
        \le  & 4\epsilon^2 +\frac{1}{2}\cdot \underbrace{\mathbb{E}_{\hat{P}_T}\left[\left\|\rvv_k(\rvx_{k\eta}) - 2\grad\ln p_{T-k\eta}(\rvx_{k\eta})  \right\|^2 \right]}_{\epsilon_{\mathrm{score}}}
        \end{aligned}   
    \end{equation}

   According to Eq.~\ref{eq:distribution_q}, for each term of the summation, we have
    \begin{equation*}
    \begin{aligned}
        &\mathbb{E}_{\hat{P}_T}\left[\left\|\rvv_k(\rvx_{k\eta}) - 2\grad\ln p_{T-k\eta}(\rvx_{k\eta})  \right\|^2 \right] \\
        = & \mathbb{E}_{\hat{P}_T}\left[\left\|\rvv_k(\rvx_{k\eta}) -  2\mathbb{E}_{\rvx_0 \sim q_{T-k\eta}(\cdot|\rvx_{k\eta})}\left[-\frac{\rvx_{k\eta} - e^{-(T-k\eta)}\rvx_0}{\left(1-e^{-2(T-k\eta)}\right)}\right]\right\|^2 \right].
    \end{aligned}
    \end{equation*}
    For each $\rvx_{k\eta} = \vx$, we have
    \begin{equation}
        \label{ineq:core_term2.2}
        \begin{aligned}
        &\left\|\rvv_k(\vx) -  2\mathbb{E}_{\rvx_0 \sim q_{T-k\eta}(\cdot|\vx)}\left[-\frac{\vx - e^{-(T-k\eta)}\rvx_0}{\left(1-e^{-2(T-k\eta)}\right)}\right]\right\|^2 \\
        = & \left\|\rvv_k(\vx) - 2\mathbb{E}_{\rvx^\prime_0 \sim q^\prime_{T-k\eta}(\cdot|\vx)}\left[-\frac{\vx - e^{-(T-k\eta)}\rvx^\prime_0}{\left(1-e^{-2(T-k\eta)}\right)}\right]\right.\\
        & +\left. \frac{2e^{-(T-k\eta)}}{1-e^{-2(T-k\eta)}} \left[\mathbb{E}_{\rvx^\prime_0 \sim q^\prime_{T-k\eta}(\cdot|\vx)}[\rvx^\prime_0] - \mathbb{E}_{\rvx_0 \sim q_{T-k\eta}(\cdot|\vx)}[\rvx_0]  \right] \right\|^2\\
        \le & 2\underbrace{\left\|\rvv_k(\vx) - 2\mathbb{E}_{\rvx^\prime_0 \sim q^\prime_{T-k\eta}(\cdot|\vx)}\left[-\frac{\vx - e^{-(T-k\eta)}\rvx^\prime_0}{\left(1-e^{-2(T-k\eta)}\right)}\right]\right\|^2}_{\mathrm{Term\ 2.1}}\\
        &+ 2\underbrace{\left\|\frac{2e^{-(T-k\eta)}}{1-e^{-2(T-k\eta)}} \left[\mathbb{E}_{\rvx^\prime_0 \sim q^\prime_{T-k\eta}(\cdot|\vx)}[\rvx^\prime_0] - \mathbb{E}_{\rvx_0 \sim q_{T-k\eta}(\cdot|\vx)}[\rvx_0]  \right]\right\|^2}_{\mathrm{Term\ 2.2}},
        \end{aligned}
    \end{equation}
    where we denote $q^\prime_{T-k\eta}(\cdot|\vx)$ denote the underlying distribution of output particles of the auxiliary sampling task.

    For $\mathrm{Term\ 2.2}$, we denote an optimal coupling between $q_{T-k\eta}(\cdot|\vx)$ and $q^\prime_{T-k\eta}(\cdot|\vx)$ to be 
    \begin{equation*}
        \gamma \in \Gamma_{\mathrm{opt}}(q_{T-k\eta}(\cdot|\vx), q^\prime_{T-k\eta}(\cdot|\vx)).
    \end{equation*}
    Hence, we have
    \begin{equation}
        \label{ineq:exp_term2_app}
        \begin{aligned}
            & \left\|\frac{2e^{-(T-k\eta)}}{1-e^{-2(T-k\eta)}} \left[\mathbb{E}_{ {\rvx}^\prime_0 \sim {q}^\prime_{T-k\eta}(\cdot|\vx)}[{\rvx}^\prime_0] - \mathbb{E}_{{\rvx}_0 \sim {q}_{T-k\eta}(\cdot|\vx)}[{\rvx}_0]  \right]\right\|^2\\
            = & \left\|\mathbb{E}_{({\rvx}_0, {\rvx}_0^\prime)\sim \gamma}\left[\frac{2e^{-(T-k\eta)}}{1-e^{-2(T-k\eta)}}\cdot \left({\rvx}_0 - {\rvx}_0^\prime\right)\right]\right\|^2 \le  \frac{4e^{-2(T-k\eta)}}{(1-e^{-2(T-k\eta)})^2}\cdot \mathbb{E}_{({\rvx}_0, {\rvx}_0^\prime)\sim \gamma}\left[\left\|{\rvx}_0 - {\rvx}_0^\prime\right\|^2\right]\\
            =& \frac{4e^{-2(T-k\eta)}}{(1-e^{-2(T-k\eta)})^2} W_2^2 \left({q}_{T-k\eta}(\cdot|\vx), {q}^\prime_{T-k\eta}(\cdot|\vx)\right)\\
            \le & \frac{4e^{-2(T-k\eta)}}{\left(1-e^{-2(T-k\eta)}\right)^2} \cdot \frac{2}{{C}_{\mathrm{LSI},k}}\KL\left({q}^\prime_{T-k\eta}(\cdot|\vx) \| {q}_{T-k\eta}(\cdot|\vx)\right)\\
            \le & 8\eta^{-2}C_{\mathrm{LSI},k}^{-1}\KL\left({q}^\prime_{T-k\eta}(\cdot|\vx) \| {q}_{T-k\eta}(\cdot|\vx)\right),
        \end{aligned}
    \end{equation}
    where the first inequality follows from Jensen's inequality, the second inequality follows from the Talagrand inequality and the last inequality follows from
    \begin{equation*}
        e^{-2(T-k\eta)}\le e^{-2\eta}\le 1-\eta\ \Rightarrow\ \frac{e^{-2(T-k\eta)}}{(1-e^{-2(T-k\eta)})^2}\le \eta^{-2},
    \end{equation*}
    when $\eta\le 1/2$. 

By Lemma \ref{lem:term1_concentration} and  \ref{lem:inner_initialization}, the desired convergence can be obtained.

\end{proof}

\newpage

\subsection{Proof of the main Propositions}

\begin{proof}
    By Eq.~\eqref{ineq:tvnorm_upper_s}, we have the upper bound for 
    \begin{equation}
        \label{ineq:tvnorm_upper_s}
        \begin{aligned}
             \underbrace{\sqrt{\frac{1}{2}\KL\left(\tilde{p}_0\|p_T\right)}}_{\mathrm{Term\ 1}} 
            + \underbrace{\sqrt{\frac{1}{2}\KL\left(\hat{P}_T \big\| \tilde{P}_{T}^{p_T}\right)}}_{\mathrm{Term\ 2}}.
        \end{aligned}
    \end{equation}

We aim to upper bound these two terms in our analysis.
    
    \paragraph{Errors from the forward process}
    
    For Term 1 of Eq.~\ref{ineq:tvnorm_upper_s}, we can either choose $\KL\left(\hat{p}\|p_T\right)$ or choose large $T$ with $\hat{p}=p_\infty$. 
    
    If we choose $\hat{p}=p_\infty$,
    we have
    \begin{equation*}
            \KL\left(\tilde{p}_0\|p_\infty\right) =  \KL\left(p_T\|p_\infty\right) \le C_0 \exp\left(-\frac{T}{2}\right),
    \end{equation*}
    where the inequality follows from Lemma~\ref{lem:forward_convergence}.
    By requiring 
    \begin{equation*}
        C_0\cdot \exp\left(-\frac{T}{2}\right)\le 2\epsilon^2 \Longleftrightarrow T\ge 2\ln\frac{C_0}{2\epsilon^2},
    \end{equation*}
    we have $\mathrm{Term\ 1}\le \epsilon$.
    To simplify the proof, we choose $T$ as its lower bound.

    If we choose $\hat{p}$, then the iteration complexity depend on the log-Sobolev constant of $p_T$. In \citep{ma2019sampling}, it is demonstrated that any distribution satisfying Assumptions~\ref{ass:lips_score} and \ref{ass:convex_outside_ball} has a log-Sobolev constant of $\frac{m}{2}\exp(-16LR^2)$, which scales exponentially with the radius $R$. 

    Considering the $p_T$, we have
    $$
    X_T = e^{-T} X_0 + \sqrt{1-e^{-2T}} \varepsilon.
    $$

    Assume that the density of $e^{-T} X_0$ is $h$,
    $$
    \log h(e^{-T}x) + \log |e^{-T} I| = \log p_0(x)
    $$
    $$
    \log h(e^{-T}x) = \log p_0(x) + dT.
    $$

    Assume that $y=e^{-T}x$ and Assumption \ref{ass:convex_outside_ball} holds, 
    $$
    -\nabla^2 h(y) = -e^{2T} \nabla^2 p_0(e^{T}y)\geq e^{2T} m I.
    $$

    We have outside a ball with $e^{-T}R$, the negative log-density is $e^{2T} m$ strongly convex. By Lemma \ref{lem_conv}, the final log-Sobolev constant is
    $$
    \frac{1}{\frac{2}{m\exp(-16LR^2e^{-T}+2T)}+\frac{1}{1-e^{-2T}}} = O ({m\exp(-16LR^2e^{-T}+2T)}).
    $$

    \paragraph{Errors from the backward process}
    Without loss of generality, we consider the Assumption \ref{ass:curve_outside_ball} case, where $t$ has been split to two intervals. The Assumption \ref{ass:convex_outside_ball} case can be recognized as the first interval of Assumption \ref{ass:curve_outside_ball}.
    For Term 2, we first consider the proof when Novikov’s condition holds for simplification. 
    A more rigorous analysis without Novikov’s condition can be easily extended with the tricks shown in~\citep{chen2022sampling}.
    Considering Corollary~\ref{cor:girsanov_gap}, we have
    \begin{equation}
        \label{ineq:term2_app}
        \begin{aligned}
            \KL\left(\hat{P}_T \big\| \tilde{P}_{T}^{p_T}\right) = \mathbb{E}_{\hat{P}_T}\left[\ln \frac{\der \hat{P}_T}{\der \tilde{P}_{T}^{p_T}}\right] = &\frac{1}{4}\sum_{k=0}^{N-1} \mathbb{E}_{\hat{P}_T}\left[\int_{k\eta}^{(k+1)\eta}\left\|\rvv_k(\rvx_{k\eta}) - 2\grad\ln p_{T-t}(\rvx_t)  \right\|^2 \der t\right]\\
            = & \sum_{k=0}^{N-1}\int_{k\eta}^{(k+1)\eta}\mathbb{E}_{\hat{P}_T}\left[\frac{1}{4}\cdot \left\|\rvv_k(\rvx_{k\eta}) - 2\grad\ln p_{T-t}(\rvx_t)  \right\|^2  \right] \der t,
        \end{aligned}
    \end{equation}
    where $N=\lfloor T/\eta \rfloor$.  
    
    We also have
    \begin{equation}     
        \begin{aligned}
        & \frac{1}{4}\cdot \mathbb{E}_{\hat{P}_T}\left[\left\|\rvv_k(\rvx_{k\eta}) - 2\grad\ln p_{T-t}(\rvx_t)  \right\|^2\right] 
        \le  & 4\epsilon^2 +\frac{1}{2}\cdot \underbrace{\mathbb{E}_{\hat{P}_T}\left[\left\|\rvv_k(\rvx_{k\eta}) - 2\grad\ln p_{T-k\eta}(\rvx_{k\eta})  \right\|^2 \right]}_{\epsilon_{\mathrm{score}}}
        \end{aligned}   
    \end{equation}
    following  
    Lemma~\ref{lem:discr_error} by choosing the step size of the backward path satisfying
    \begin{equation}
        \label{eq:eta_choice}
        \eta\le C_1\left(d+m_2^2\right)^{-1}\epsilon^2.
    \end{equation}   
    To simplify the proof, we choose $\eta$ as its upper bound.
    Plugging Eq.~\ref{ineq:upb_dis_each} into Eq.~\ref{ineq:term2_app}, we have
    \begin{equation*}
        \KL\left(\hat{P}_T \big\| \tilde{P}_{T}^{p_T}\right) \le 8\epsilon^2 \ln \frac{C_0}{2\epsilon^2} + \frac{1}{2}\cdot \sum_{k=0}^{N-1} \eta \cdot \mathbb{E}_{\hat{P}_T}\left[\left\|\rvv_k(\rvx_{k\eta}) - 2\grad\ln p_{T-k\eta}(\rvx_{k\eta})  \right\|^2 \right].
    \end{equation*}
    Besides, due to Lemma~\ref{lem:inner_error}, we have
    \begin{equation*}
        \sum_{k=0}^{N-1} \eta \cdot \mathbb{E}_{\hat{P}_T}\left[\left\|\rvv_k(\rvx_{k\eta}) - 2\grad\ln p_{T-k\eta}(\rvx_{k\eta})  \right\|^2 \right]\le 20\epsilon^2\ln\frac{C_0}{2\epsilon^2}
    \end{equation*}
    with a probability at least $1-\epsilon$ by requiring an 
    \begin{equation*}
        \mathcal{O}\left(\max\left(C_3C_5, C_3^\prime C_5^\prime\right)\cdot C_1^{-1}C_0 \cdot (d+m_2^2)^{18}\epsilon^{-16n-88}\exp\left(5C_2\epsilon^{-16n}\right)\right)
    \end{equation*}
    gradient complexity.
    Hence, we have 
    \begin{equation*}
        \mathrm{Term\ 2}\le \sqrt{\frac{1}{2}\cdot \left(4\epsilon^2 + 5\epsilon^2\right)\cdot T} \le 3\epsilon \sqrt{\ln \left(\frac{C}{2\epsilon^2}\right)},
    \end{equation*}
    which implies
    \begin{equation}
        \TVD\left(\tilde{p}_t, p_*\right) \le \epsilon + 3\epsilon \sqrt{\ln \left(\frac{C}{2\epsilon^2}\right)}  = \tilde{O}(\epsilon).
    \end{equation}
    Hence, the proof is completed.

\end{proof}

\section{Important Lemmas}

\begin{lemma}(Errors from the discretization)
\label{lem:discr_error}
    With Algorithm~\ref{alg:bpsa} and notation list~\ref{tab:notation_list_app}, if we choose the step size of outer loop satisfying
    \begin{equation*}
        \eta\le C_1\left(d+m_2^2\right)^{-1}\epsilon^2,
    \end{equation*}
    then for $t\in[k\eta, (k+1)\eta]$ we have
    \begin{equation*}
        \mathbb{E}_{\hat{P}_T}\left[\left\|\grad \ln \frac{p_{T-k\eta}(\rvx_{k\eta})}{p_{T-t}(\rvx_{k\eta})}\right\|^2\right]+L^2 \cdot \mathbb{E}_{\hat{P}_T}\left[\left\|\rvx_{k\eta}-\rvx_t\right\|^2\right] \le \epsilon^2.
    \end{equation*}

    \begin{equation*}     
        \begin{aligned}
        & \frac{1}{4}\cdot \mathbb{E}_{\hat{P}_T}\left[\left\|\rvv_k(\rvx_{k\eta}) - 2\grad\ln p_{T-t}(\rvx_t)  \right\|^2\right] 
        \le  & 4\epsilon^2 +\frac{1}{2}\cdot \underbrace{\mathbb{E}_{\hat{P}_T}\left[\left\|\rvv_k(\rvx_{k\eta}) - 2\grad\ln p_{T-k\eta}(\rvx_{k\eta})  \right\|^2 \right]}_{\epsilon_{\mathrm{score}}}. 
        \end{aligned}   
    \end{equation*}
\end{lemma}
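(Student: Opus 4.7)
\textbf{Proof plan for Lemma \ref{lem:discr_error}.}

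The plan is to establish the first (``core'') estimate and then derive the second inequality as a purely algebraic consequence. The first estimate splits into two independent tasks: a spatial displacement bound on $\mathbb{E}_{\hat{P}_T}[\|\rvx_{k\eta}-\rvx_t\|^2]$ along the ideal reverse SDE, and a ``time-drift'' bound on the score $\|\nabla\ln p_{T-k\eta}(\vx)-\nabla\ln p_{T-t}(\vx)\|^2$ at a frozen spatial point. With the choice $\eta \le C_1(d+m_2^2)^{-1}\epsilon^2$ and $C_1 = 2^{-16}L^{-2}$ from Table~\ref{tab:constant_list}, I expect both contributions to be at most $\epsilon^2/2$ for $t\in[k\eta,(k+1)\eta]$.

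For the displacement, I would write the ideal reverse SDE~\eqref{def:bw_sde} in integral form, $\rvx_t-\rvx_{k\eta}=\int_{k\eta}^{t}\bigl(\rvx_s+2\nabla\ln p_{T-s}(\rvx_s)\bigr)\der s+\sqrt{2}(B_t-B_{k\eta})$, square, apply Cauchy--Schwarz on the drift integral, and exploit that under $\hat{P}_T$ the marginal of $\rvx_s$ equals $p_{T-s}$. The Brownian term contributes $2d(t-k\eta)\le 2d\eta$; the drift integral is controlled using the OU second moment bound $\mathbb{E}_{p_{T-s}}[\|\rvx\|^2]\le m_2^2+d$ together with the Stein-type identity $\mathbb{E}_{p_{T-s}}[\|\nabla\ln p_{T-s}\|^2]\le Ld$ (which follows from integration by parts since the score is $L$-Lipschitz by \ref{ass:lips_score}). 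This yields $\mathbb{E}_{\hat{P}_T}[\|\rvx_{k\eta}-\rvx_t\|^2]\lesssim \eta(d+m_2^2)$ up to absolute constants multiplied by $L$, after which $L^2$ times this quantity is at most $\epsilon^2/2$ by the step-size choice.

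For the score time-drift, I would differentiate the log-density along the OU forward flow using the Fokker--Planck equation, giving a pointwise expression for $\partial_s\nabla\ln p_s$ in terms of $\nabla\ln p_s$, its derivatives, and the identity map. Integrating from $T-t$ to $T-k\eta$ and taking the $L^2(\hat{P}_T)$ norm, I expect the same type of moment bounds ($\mathbb{E}_{p_{T-s}}[\|\nabla\ln p_{T-s}\|^2]\le Ld$ and $\mathbb{E}_{p_{T-s}}[\|\rvx\|^2]\le d+m_2^2$), together with the Lipschitz smoothness of the score, to bound the result by $C L^2\eta(d+m_2^2)$. The step-size choice then caps this by $\epsilon^2/2$, and summing with the displacement piece gives the first statement.

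For the second statement, I would apply Young's inequality $\|a\|^2\le 2\|a-b\|^2+2\|b\|^2$ with $a=\rvv_k(\rvx_{k\eta})-2\nabla\ln p_{T-t}(\rvx_t)$ and $b=\rvv_k(\rvx_{k\eta})-2\nabla\ln p_{T-k\eta}(\rvx_{k\eta})$, then expand $\tfrac{1}{4}\|a-b\|^2=\|\nabla\ln p_{T-k\eta}(\rvx_{k\eta})-\nabla\ln p_{T-t}(\rvx_t)\|^2$ into a time-only and a space-only piece via a second triangle inequality and insert the $L$-Lipschitz bound $\|\nabla\ln p_{T-t}(\rvx_{k\eta})-\nabla\ln p_{T-t}(\rvx_t)\|^2\le L^2\|\rvx_{k\eta}-\rvx_t\|^2$. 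Taking expectations and using the first statement, the $a-b$ contribution is at most $8\epsilon^2$, which multiplied by $\tfrac{1}{2}$ yields precisely the claimed $4\epsilon^2$.

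The main obstacle I expect is the pointwise time-derivative estimate for $\nabla\ln p_t$ under OU with only \ref{ass:lips_score} and \ref{ass:second_moment}: one must avoid pulling in higher derivatives of the score beyond what the Lipschitz assumption provides. The cleanest route is the standard trick of relating $p_t-p_s$ in terms of a short OU run applied to $p_s$ and using the heat-kernel smoothing so that time differences convert into $\eta$ times controllable moment quantities, all the while tracking the constant $2^{-16}L^{-2}$ carefully so that the final bound matches the declared $C_1$.
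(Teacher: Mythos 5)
Your overall decomposition matches the paper's: first establish the combined time-drift plus displacement bound, then derive the second inequality by Young's inequality and the triangle-plus-Lipschitz splitting of $\nabla\ln p_{T-k\eta}(\rvx_{k\eta})-\nabla\ln p_{T-t}(\rvx_t)$ into a time-only and a space-only piece. For the second inequality your algebra is precisely what the paper does (their Eq.~\eqref{ineq:upb_dis_each}), so that part is fine.

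Where you diverge is in how you bound the two quantities in the first inequality, and you should be careful with your primary route for the time-drift term. The paper handles $\|\nabla\ln(p_{T-k\eta}(\vx)/p_{T-t}(\vx))\|$ by writing $p_{T-k\eta}$ as the OU transition applied to $p_{T-t}$, changing variables so that it becomes a Gaussian convolution $p'_{T-t}\ast\varphi_{1-e^{-2(t-k\eta)}}$ with $p'_{T-t}(\vz)=e^{(t-k\eta)d}p_{T-t}(e^{(t-k\eta)}\vz)$, and then splitting the score difference into a \emph{rescaling} error (controlled directly by \ref{ass:lips_score}) and a \emph{convolution-smoothing} error (controlled by a standard heat-kernel smoothing lemma, their Lemma~\ref{ineq:convolution_ineq}). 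Your primary proposal — differentiating $\nabla\ln p_s$ in $s$ via Fokker--Planck and then integrating — runs into exactly the obstacle you flag: $\partial_s\nabla\ln p_s$ involves $\nabla\Delta\ln p_s$ and quadratic terms in $\nabla^2\ln p_s$, so it pulls in third derivatives of $\ln p_s$ that \ref{ass:lips_score} does not control. You correctly name the alternative (OU smoothing / heat kernel) in your last paragraph, and that is the route that actually works here; but as written, the plan leaves the main technical step unresolved in its primary form.

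For the displacement, you integrate the \emph{reverse} SDE and use the score bound $\mathbb{E}_{p_{T-s}}[\|\nabla\ln p_{T-s}\|^2]\le Ld$. This works, but it is a heavier route than the paper's. The paper instead bounds $\mathbb{E}[\|\rvx_t-\rvx_{k\eta}\|^2]$ along the \emph{forward} OU process (Lemma~\ref{lem:movement_bound}), where the drift is just $-\vx$, and invokes time-reversal so that the same bound applies under $\hat P_T$; this needs only the second-moment assumption, not the score bound, for this piece. Both routes give the same order $\eta(d+m_2^2)$, so your choice is not wrong, just less economical. Taken together: the second inequality and the overall architecture are right; the displacement bound is a valid alternative; the time-drift bound needs to be carried out by the convolution/scaling route you mention in passing rather than by the Fokker--Planck route you propose first.
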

\begin{proof}
    According to the choice of $t$, we have $T-k\eta \ge T-t$.
    With the transition kernel of the forward process, we have the following connection
    \begin{equation}
        \label{ineq:trans_ineq}
        \begin{aligned}
        p_{T-k\eta}(\vx) = &\int p_{T-t}(\vy) \cdot \mathbb{P}\left(\vx, T-k\eta | \vy, T-t\right) \der \vy\\
        = & \int p_{T-t}(\vy) \left[2 \pi \left(1-e^{-2(t-k\eta)}\right)\right]^{-\frac{d}{2}}\cdot \exp\left[\frac{-\left\|\vx - e^{-(t-k\eta)}\vy\right\|^2}{2 \left(1-e^{-2(t-k\eta)}\right)}\right] \der \vy \\
        = & \int e^{(t-k\eta) d} p_{T-t}\left(e^{(t-k\eta)} \vz\right) \left[2 \pi \left(1-e^{-2(t-k\eta)}\right)\right]^{-\frac{d}{2}}\cdot\exp\left[-\frac{\left\|\vx-\vz\right\|^2}{2 \left(1-e^{-2(t-k\eta)}\right)}\right] \der \vz,
        \end{aligned}
    \end{equation}
    where the last equation follows from setting $\vz = e^{-(t-k\eta)}\vy$.
    We should note that
    \begin{equation*}
        p^\prime_{T-t}(\vz)\coloneqq e^{(t-k\eta)d}p_{T-t}(e^{(t-k\eta)}\vz)
    \end{equation*}
    is also a density function.
    For each element $\rvx_{k\eta}=\vx$, we have
    \begin{equation*}
        \begin{aligned}
        \left\|\grad\ln\frac{p_{T-t}(\vx)}{p_{T-k\eta}(\vx)}\right\|^2 = &\left\|\grad \ln \frac{p_{T-t}(\vx)}{e^{(t-k\eta)d}p_{T-t}\left(e^{(t-k\eta)}\vx\right)} + \grad \ln \frac{e^{(t-k\eta)d}p_{T-t}\left(e^{(t-k\eta)}\vx\right)}{p_{T-k\eta}(\vx)}\right\|^2\\
        \le & 2\left\|\grad \ln \frac{p_{T-t}(\vx)}{e^{(t-k\eta)d}p_{T-t}\left(e^{(t-k\eta)}\vx\right)}\right\|^2 +2\left\| \grad \ln \frac{e^{(t-k\eta)d}p_{T-t}\left(e^{(t-k\eta)}\vx\right)}{\left(p^\prime_{T-t} \ast \varphi_{\left(1-e^{-2(t-k\eta)}\right)}\right)(\vx)}\right\|^2,
        \end{aligned}
    \end{equation*}
    where the inequality follows from the triangle inequality and Eq.~\ref{ineq:trans_ineq}.
    For the first term, we have
    \begin{equation}
        \label{ineq:upb_1.1}
        \begin{aligned}
        &\left\|\grad \ln \frac{p_{T-t}(\vx)}{e^{(t-k\eta)d}p_{T-t}\left(e^{(t-k\eta)}\vx\right)}\right\| = \left\|\grad \ln p_{T-t}(\vx) - e^{(t-k\eta)} \cdot \grad\ln p_{T-t}\left(e^{(t-k\eta)}\vx\right)\right\|\\
        \le & \left\|\grad \ln p_{T-t}(\vx) - e^{(t-k\eta)} \cdot \grad \ln p_{T-t}(\vx)\right\| + e^{(t-k\eta)}\cdot \left\|\grad \ln p_{T-t}(\vx) - \grad \ln p_{T-t}\left(e^{(t-k\eta)} \vx\right)\right\|\\
        \le &\left(e^{(t-k\eta)} - 1\right)\left\|\grad \ln p_{T-t}(\vx)\right\| + e^{(t-k\eta)}\cdot \left(e^{(t-k\eta)} - 1\right) L \left\|\vx\right\|.
        \end{aligned}
    \end{equation}
    For the second term, the score $-\grad\ln p^\prime_{T-t}$ is $\left(e^{2(t-k\eta)}L\right)$-smooth.
    Therefore, with Lemma~\ref{ineq:convolution_ineq} and the requirement 
    \begin{equation*}
        2\cdot e^{2(t-k\eta)}\cdot \left(1-e^{-2(t-k\eta)}\right)\le \frac{1}{L}\ \Leftarrow 
        \left\{
            \begin{aligned}
                & 4(t-k\eta)\le \frac{1}{2L}\\
                & t-k\eta \le \frac{1}{2}
            \end{aligned}
        \right.
        \ \Leftarrow \eta\le\min\left\{\frac{1}{8L},\frac{1}{2}\right\},
    \end{equation*}
    we have
    \begin{equation}
        \label{ineq:upb_1.2}
        \begin{aligned}
        & \left\| \grad \ln p^\prime_{T-t}(\vx) - \grad \ln \left(p^\prime_{T-t} \ast \varphi_{\left(1-e^{-2(t-k\eta)}\right)}\right)(\vx)\right\|\\
        \le & 6 e^{2(t-k\eta)}L  \sqrt{\left(1-e^{-2(t-k\eta)}\right)}d^{1/2} + 2e^{3(t-k\eta)}L \left(1-e^{-2(t-k\eta)}\right)\left\|\grad\ln p_{T-t}\left(e^{(t-k\eta)}\vx\right)\right\|\\
        \le & 6 e^{2(t-k\eta)}L  \sqrt{\left(1-e^{-2(t-k\eta)}\right)}d^{1/2} \\
        & + 2L e^{(t-k\eta)}\cdot  \left(e^{2(t-k\eta)}-1\right)\left\|\grad\ln p_{T-t}\left(e^{(t-k\eta)}\vx\right)-\grad\ln p_{T-t}(\vx) + \grad\ln p_{T-t}(\vx)\right\|\\
        \le & 6 e^{2(t-k\eta)}L \sqrt{\left(1-e^{-2(t-k\eta)}\right)}d^{1/2} + 2 L^2 e^{(t-k\eta)}\cdot \left(e^{2(t-k\eta)}-1\right) \left(e^{(t-k\eta)}-1\right)\left\|\vx\right\|\\
        & + 2Le^{(t-k\eta)}\cdot \left(e^{2(t-k\eta)}-1\right)\left\| \grad\ln p_{T-t}(\vx)\right\|.
        \end{aligned}
    \end{equation}
    Due to the range $\eta\le 1/2$, we have the following inequalities
    \begin{equation*}
        e^{2(t-k\eta)}\le e^{2\eta} \le 1+4\eta,\quad 1-e^{-2(t-k\eta)}\le 2(t-k\eta)\le 2\eta\quad \mathrm{and}\quad e^{(t-k\eta)}\le e^{\eta}\le 1+2\eta.
    \end{equation*}    
    Thus, Eq.~\ref{ineq:upb_1.1} and Eq.~\ref{ineq:upb_1.2} can be reformulated as
    \begin{equation}
        \label{ineq:upb_1.1_simp}
        \begin{aligned}
        &\left\|\grad \ln \frac{p_{T-t}(\vx)}{e^{(t-k\eta)d}p_{T-t}\left(e^{(t-k\eta)}\vx\right)}\right\|
        \le 2 \eta\left\|\grad \ln p_{T-t}(\vx)\right\| + 4\eta L \left\|\vx\right\|\\
        \Rightarrow &\left\|\grad \ln \frac{p_{T-t}(\vx)}{e^{(t-k\eta)d}p_{T-t}\left(e^{(t-k\eta)}\vx\right)}\right\|^2
        \le 8 \eta^2\left\|\grad \ln p_{T-t}(\vx)\right\|^2 + 32\eta^2 L^2 \left\|\vx\right\|^2
        \end{aligned}
    \end{equation}
    and 
    \begin{equation*}
        \begin{aligned}
        & \left\| \grad \ln p^\prime_{T-t}(\vx) - \grad \ln \left(p^\prime_{T-t} \ast \varphi_{\left(1-e^{-2(t-k\eta)}\right)}\right)(\vx)\right\|\\
        \le & 6\left(4\eta+1\right) L \sqrt{2\eta d}+ 2L^2 \cdot (2\eta+1) \cdot 4\eta\cdot 2\eta \cdot \left\|\vx\right\|+ 2L\cdot (2\eta+1)\cdot 4\eta\cdot\left\| \grad\ln p_{T-t}(\vx)\right\|\\
        \le &18L \sqrt{2\eta d} + 32L^2 \eta^2\cdot  \left\|\vx\right\| + 16L \eta\cdot  \left\| \grad\ln p_{T-t}(\vx)\right\|,
        \end{aligned}
    \end{equation*}
    which is equivalent to
    \begin{equation}
        \label{ineq:upb_temp1.2_simp}
        \begin{aligned}
        & \left\| \grad \ln p^\prime_{T-t}(\vx) - \grad \ln \left(p^\prime_{T-t} \ast \varphi_{\left(1-e^{-2(t-k\eta)}\right)}\right)(\vx)\right\|^2\\
        \le & 3\cdot\left( 2^{11}\cdot L^2  \eta d + 2^{10}\cdot L^4 \eta^4 \left\|\vx\right\|^2 + 2^8\cdot L^2 \eta^2 \left\|\grad \ln p_{T-t}(\vx)\right\|^2\right)\\
        \le & 2^{13}\cdot L^2\eta d + 2^{12}\cdot L^4 \eta^4 \left\|\vx\right\|^2 + 2^{10}\cdot L^2\eta^2 \left\|\grad\ln p_{T-t}(\vx)\right\|^2.
        \end{aligned}
    \end{equation}
    Without loss of generality, we suppose $L\ge 1$, combining Eq.~\ref{ineq:upb_1.1_simp} and Eq.~\ref{ineq:upb_temp1.2_simp}, we have the following bound
    \begin{equation}
        \label{ineq:upb_term1}
        \begin{aligned}
        \mathbb{E}_{\hat{P}_T}\left[\left\|\grad\ln\frac{p_{T-t}(\rvx_{k\eta})}{p_{T-k\eta}(\rvx_{k\eta})}\right\|^2\right]  \le & 2^{14}\cdot L\eta d + 2^8\cdot L^2 \eta^2 \mathbb{E}_{\hat{P}}\left[\left\|\rvx_{k\eta}\right\|^2\right] + 2^{12}\cdot L^2\eta^2 \mathbb{E}_{\hat{P}}\left[\left\| \grad\ln p_{T-t}(\rvx_{k\eta})\right\|^2\right] \\
        \le & 2^{14}\cdot L\eta d + 2^8\cdot L^2 \eta^2 \mathbb{E}_{\hat{P}}\left[\left\|\rvx_{k\eta}\right\|^2\right] + 2^{13}\cdot  L^2\eta^2 \mathbb{E}_{\hat{P}}\left[\left\| \grad\ln p_{T-t}(\rvx_{t})\right\|^2\right]\\
        & + 2^{13}\cdot L^4\eta^2 \mathbb{E}_{\hat{P}}\left[\left\|\rvx_{k\eta}-\rvx_t\right\|^2\right].
        \end{aligned}
    \end{equation}
    Besides, we have
    \begin{equation}
        \label{ineq:upb_dis_each_mid}
        \begin{aligned}
        &  4 \left[\mathbb{E}_{\hat{P}_T}\left[\left\|\grad \ln \frac{p_{T-k\eta}(\rvx_{k\eta})}{p_{T-t}(\rvx_{k\eta})}\right\|^2\right]+ L^2 \mathbb{E}_{\hat{P}_T}\left[\left\|\rvx_{k\eta}-\rvx_t\right\|^2\right]\right]\\
        \le &4 \left[ 2^{14}\cdot L\eta d + 2^{8}\cdot L^2 \eta^2 \mathbb{E}_{\hat{P}_T}\left[\left\|\rvx_{k\eta}\right\|^2\right] + 2^{13}\cdot  L^2\eta^2\mathbb{E}_{\hat{P}_T}\left[\left\| \grad\ln p_{T-t}(\rvx_{t})\right\|^2\right]\right.\\
        &\left.+ \left(2^{13}\cdot L^2\eta^2+1\right)L^2 \mathbb{E}_{\hat{P}_T}\left[\left\|\rvx_{k\eta}-\rvx_t\right\|^2\right]\right]\\
        \le & 2^{16}\cdot L\eta d + 2^{10}\cdot L^2\eta^2(d+m_2^2) + 2^{15}\cdot L^3\eta^2 d + 2^8\cdot L^2\left(2(m_2^2+d)\eta^2 + 4d\eta\right),
        \end{aligned}
    \end{equation}
    where the last inequality with Lemma~\ref{lem:moment_bound} and Lemma~\ref{lem:movement_bound}.
    To diminish the discretization error, we require the step size of backward sampling, i.e., $\eta$ satisfies
    \begin{equation*}
        \left\{
        \begin{aligned}
            & 2^{16}\cdot L\eta d \le \epsilon^2\\
            & 2^{10}\cdot L^2\eta^2(d+m_2^2) \le \epsilon^2\\
            & 2^{15}\cdot L^3\eta^2 d \le \epsilon^2\\ 
            & 2^8\cdot L^2\left(2(m_2^2+d)\eta^2 + 4d\eta\right) \le \epsilon^2
        \end{aligned}
        \right. \quad \Leftarrow\quad \left\{
        \begin{aligned}
            & \eta\le 2^{-16}\cdot L^{-1}d^{-1}\epsilon^2 \\
            & \eta\le 2^{-5}\cdot L^{-1}\left(d+m_2^2\right)^{-0.5}\epsilon\\
            & \eta\le 2^{-7.5}\cdot L^{-1.5}d^{-0.5}\epsilon\\
            & \eta \le 2^{-5}L^{-0.5}\left(d+m_2^2\right)^{-0.5}\epsilon\\
            & \eta \le 2^{-10}L^{-2}d^{-1}\epsilon^2
        \end{aligned}
        \right.
    \end{equation*}
    Specifically, if we choose 
    \begin{equation*}
        \eta\le 2^{-16}\cdot L^{-2}\left(d+m_2^2\right)^{-1}\epsilon^2 = C_1 (d+m_2^2)^{-1}\epsilon^2,
    \end{equation*}
    we have
    \begin{equation}
        \label{ineq:term2_mid}
        \mathbb{E}_{\hat{P}_T}\left[\left\|\grad \ln \frac{p_{T-k\eta}(\rvx_{k\eta})}{p_{T-t}(\rvx_{k\eta})}\right\|^2\right]+ L^2 \mathbb{E}_{\hat{P}_T}\left[\left\|\rvx_{k\eta}-\rvx_t\right\|^2\right] \le  \epsilon^2.
    \end{equation}
    Hence, the proof is completed.

Thus, for $t\in[k\eta, (k+1)\eta]$, it has
    \begin{equation}     
        \label{ineq:upb_dis_each}
        \begin{aligned}
        & \frac{1}{4}\cdot \mathbb{E}_{\hat{P}_T}\left[\left\|\rvv_k(\rvx_{k\eta}) - 2\grad\ln p_{T-t}(\rvx_t)  \right\|^2\right] \\
        \le & 2 \mathbb{E}_{\hat{P}_T}\left[\left\|\grad\ln p_{T-k\eta}(\rvx_{k\eta}) - \grad\ln p_{T-t}(\rvx_t) \right\|^2 \right]+ \frac{1}{2}\cdot \mathbb{E}_{\hat{P}_T}\left[\left\|\rvv_k(\rvx_{k\eta}) - 2\grad\ln p_{T-k\eta}(\rvx_{k\eta})  \right\|^2 \right] \\
        \le &  4  \mathbb{E}_{\hat{P}_T}\left[\left\|\grad \ln \frac{p_{T-k\eta}(\rvx_{k\eta})}{p_{T-t}(\rvx_{k\eta})}\right\|^2\right] + 4 L^2 \cdot \mathbb{E}_{\hat{P}_T}\left[\left\|\rvx_{k\eta}-\rvx_t\right\|^2\right]\\
        & +\frac{1}{2}\cdot \mathbb{E}_{\hat{P}_T}\left[\left\|\rvv_k(\rvx_{k\eta}) - 2\grad\ln p_{T-k\eta}(\rvx_{k\eta})  \right\|^2 \right]\\
        \le  & 4\epsilon^2 +\frac{1}{2}\cdot \underbrace{\mathbb{E}_{\hat{P}_T}\left[\left\|\rvv_k(\rvx_{k\eta}) - 2\grad\ln p_{T-k\eta}(\rvx_{k\eta})  \right\|^2 \right]}_{\epsilon_{\mathrm{score}}}. 
        \end{aligned}   
    \end{equation}
    where the second inequality follows from Assumption~\ref{ass:lips_score}.
    
\end{proof}

\begin{lemma}
    \label{lem:term1_concentration}
    For each inner loop, we denote $q_z(\cdot|\vx)$ and $q(\cdot|\vx)$ to be the underlying distribution of output particles and the target distribution, respectively, where $q$ satisfies LSI with constant $\mu$.
    When we set the step size of outer loops to be $\eta$,
    by requiring 
    \begin{equation*}
         n\ge 64Td\mu^{-1}\eta^{-3}\epsilon^{-2}\delta^{-1} \quad  \mathrm{and}\quad \KL\left(q_z\|q\right)\le 2^{-13}\cdot T^{-4} d^{-2}\mu^2\eta^8\epsilon^{4}\delta^{4},
    \end{equation*}
    we have
    \begin{equation*}
        \begin{aligned}
            \mathbb{P}_{\left\{\rvx_0^{(i)}\right\}_{i=1}^{n} \sim q^{(n)}_z(\cdot|\vx)}\left[\left\|\frac{1}{n}\sum_{i=1}^{n}\rvv_i(\vx) - \frac{1}{n}\mathbb{E}\left[\sum_{i=1}^{n}\rvv_i(\vx)\right]\right\| \ge 2\epsilon\right]\le  \exp\left(- \frac{1}{\delta/(2\lfloor T / \eta \rfloor)}\right)+\frac{\delta}{2\lfloor T / \eta \rfloor},
        \end{aligned}
    \end{equation*}
    where
    \begin{equation*}
        \begin{aligned}
            \rvv_i(\vx) \coloneqq -2\cdot \frac{\vx-e^{-(T-k\eta)}\rvx_0^{(i)}}{1-e^{-2(T-k\eta)}}\quad  i\in\left\{1,\ldots, n\right\}.
    \end{aligned}
    \end{equation*}
\end{lemma}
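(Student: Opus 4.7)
The plan is to treat this as a standard iid mean-concentration problem for the vector statistic $\bar{\rvv}_n \coloneqq \frac{1}{n}\sum_i \rvv_i(\vx)$, with the twist that moments under the sampler's output $q_z$ are only accessible through the assumed KL-closeness to the ideal posterior $q$. First I would unpack the definition: each $\rvv_i = \alpha_k \rvx_0^{(i)} + \vb$ is an affine function of the sample, with multiplicative factor $|\alpha_k| = 2 e^{-(T-k\eta)}/(1-e^{-2(T-k\eta)}) \leq 2/\eta$, using $T-k\eta\geq \eta$ and $1-e^{-2\eta}\geq \eta$ for $\eta\leq 1/2$. Hence $\rvv_i - \mathbb{E}_{q_z}\rvv_i$ is a rescaling of the centered sample with scalar factor at most $2\eta^{-1}$.

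Next I would control the variance of a single summand under $q_z$. Since $q$ satisfies LSI with constant $\mu$, a Herbst-type argument gives the second-moment bound $\Var_q(\rvx_0)\leq d/\mu$ together with sub-Gaussianity with proxy $1/\mu$. I then transfer these to $q_z$: Talagrand's $T_2$ inequality applied to $q$ gives $W_2^2(q_z,q)\leq 2\KL(q_z\|q)/\mu \leq 2\mathcal{E}/\mu$, and a coupling-plus-triangle argument in $L^2$ upgrades the $q$-variance bound to $\Var_{q_z}(\rvx_0) = O(d/\mu)$ as long as $\mathcal{E}\ll d/\mu$ (which the hypothesis on $\mathcal{E}$ guarantees). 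Combined with the Lipschitz factor $2/\eta$ from the affine form, this gives $\Var_{q_z}(\rvv_i)\lesssim d/(\eta^2\mu)$. A vector Chebyshev inequality then yields
\begin{equation*}
    \mathbb{P}\!\left[\|\bar{\rvv}_n - \mathbb{E}_{q_z}\bar{\rvv}_n\|\geq 2\epsilon\right] \leq \frac{\Tr(\Cov_{q_z}(\rvv_i))}{4 n \epsilon^2} \lesssim \frac{d}{n\eta^2\mu\epsilon^2};
\end{equation*}
equating the right-hand side with $\delta_0 \coloneqq \delta/(2\lfloor T/\eta\rfloor)$ reproduces the sample-size requirement $n \geq 64 T d \mu^{-1}\eta^{-3}\epsilon^{-2}\delta^{-1}$ up to the numerical constant.

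To recover the extra $\exp(-1/\delta_0)$ tail term in the claim, I would truncate samples to $\{\|\rvx_0\|\leq R_0\}$ with $R_0$ chosen from the sub-Gaussian tail of $q_z$ so the probability of any single sample escaping is at most $\delta_0/n$; a union bound leaves a total truncation-failure probability $\leq \delta_0$ (the additive $\delta/(2\lfloor T/\eta\rfloor)$ in the statement), and on the complementary event a vector Hoeffding/Azuma bound for bounded iid increments supplies the exponential $\exp(-1/\delta_0)$ term. The main obstacle will be the $q \to q_z$ transfer: KL-closeness alone does not give matching second moments or tails, so I must chain Talagrand's inequality with an $L^2$ triangle step to lift the variance and sub-Gaussian bounds through $W_2$. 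This is exactly what forces the small tolerance $\mathcal{E}\leq 2^{-13}\, T^{-4} d^{-2}\mu^2\eta^8\epsilon^{4}\delta^{4}$: its polynomial smallness absorbs the amplification by the factor $\alpha_k\lesssim 1/\eta$ and by $T/\eta$-many outer steps, and keeps the residual bias $\|\mathbb{E}_{q_z}\rvv - \mathbb{E}_q\rvv\|\lesssim \sqrt{\mathcal{E}/\mu}/\eta$ well below the statistical tolerance $\epsilon$.
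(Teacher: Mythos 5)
Your core argument --- controlling $\Var_{q_z}(\rvx_0)$ via Talagrand's $T_2$ inequality for $q$ plus a coupling/triangle step, rescaling by the factor $|\alpha_k|\lesssim\eta^{-1}$, and then applying a vector Chebyshev/Markov bound to $\|\bar{\rvv}_n-\mathbb{E}\bar{\rvv}_n\|^2$ --- is valid and is genuinely different from the paper's. You use the KL hypothesis only weakly, through $W_2^2(q_z,q)\le 2\KL(q_z\|q)/\mu$, to control second moments under $q_z$; with the stated $n$ this already gives $\mathbb{P}[\cdots\ge 2\epsilon]\lesssim\delta_0$ where $\delta_0=\delta/(2\lfloor T/\eta\rfloor)$, which is stronger than the claimed $\exp(-1/\delta_0)+\delta_0$. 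The paper instead transfers the deviation event from $q_z^{(n)}$ to $q^{(n)}$ in total variation, $\mathbb{P}_{q_z^{(n)}}[A]\le\mathbb{P}_{q^{(n)}}[A]+n\cdot\TVD(q_z,q)$, pays for the factor $n$ with the strong KL tolerance via Pinsker (that is where the $T^{-4}d^{-2}\eta^8\epsilon^4\delta^4$ exponents come from), and then obtains the exponential term by applying the LSI concentration for the $1$-Lipschitz function $\|\cdot\|$ to the $n$-fold convolution $q^{\ast n}$, whose LSI constant is $\mu/n$ (Lemmas~\ref{lem_conv}, \ref{lem:lsi_concentration}). So the two summands in the stated bound literally index the paper's two error sources; your Chebyshev route is shorter and uses less of the hypothesis, at the price of a less structured tail.

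One concrete gap in the extra step you append: you propose to truncate at a radius $R_0$ ``chosen from the sub-Gaussian tail of $q_z$,'' justified by ``lifting $\ldots$ sub-Gaussian bounds through $W_2$.'' This does not work. Wasserstein-2 (or KL) closeness to $q$ controls second moments but not tails: $q_z$ can place a small amount of mass arbitrarily far out while remaining $W_2$-close to $q$, so no sub-Gaussian tail bound for $q_z$ follows from $W_2^2(q_z,q)\lesssim\mathcal{E}/\mu$. If you want a per-sample truncation bound under $q_z$, the correct mechanism is the one the paper uses anyway: move the tail event to $q$ via $\TVD(q_z,q)$ (Pinsker from the KL hypothesis) and then use the LSI/sub-Gaussian tail of $q$. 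As written, the truncation-plus-Hoeffding add-on is both unjustified and unnecessary, since your Chebyshev bound alone already establishes the lemma.
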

\begin{proof}
    For each inner loop, we abbreviate the target distribution as $\tilde{q}$, the initial distribution as $q_0$. 
    Then the iteration of the inner loop is presented as
    \begin{equation*}
    \rvx_{z+1} = \rvx_z + \tau \grad\ln q(\rvx_z|\vx) + \sqrt{2\tau}\mathcal{N}(\vzero, \mI).
    \end{equation*}
    We suppose the underlying distribution of the $z$-th iteration to be $q_z(\cdot|\vx)$.
    Hence, we expect the following inequality 
    \begin{equation*}
        \mathbb{P}\left[\left\|\rvv(\vx) - \int q_{z}(\vx_0)(-2)\cdot \frac{\vx - e^{T-k\eta}\vx_0}{1-e^{-2(T-k\eta)}}\right\|^2 \le \epsilon^2\right]\ge 1-\delta
    \end{equation*}
    is established, where $\rvv(\vx) = 1/n \sum_{i=1}^{n}\rvv_i(\vx)$.  
    In this condition, we have
    \begin{equation}
    \label{ineq:concentration_init}
        \begin{aligned}
        & \mathbb{P}_{\left\{\rvx_0^{(i)}\right\}_{i=1}^{n} \sim q_z^{(n)}(\cdot|\vx)}\left[\left\|\frac{1}{n}\sum_{i=1}^{n}\rvv_i(\vx) - \frac{1}{n}\mathbb{E}\left[\sum_{i=1}^{n}\rvv_i(\vx)\right]\right\| \ge \mathbb{E}\left\| \frac{1}{n} \sum_{i=1}^{n} \rvv_i(\vx) - \mathbb{E} \rvv_1(\vx) \right\| + \epsilon\right]\\
        = &\mathbb{P}_{\left\{\rvx_0^{(i)}\right\}_{i=1}^{n} \sim q_z^{(n)}(\cdot|\vx)}\left[\left\|\sum_{i=1}^{n}\rvx_0^{(i)} - \mathbb{E}\left[\sum_{i=1}^{n}\rvx_0^{(i)}\right]\right\| \ge \mathbb{E}\left\| \sum_{i=1}^{n} \rvx_i - \mathbb{E}\left[\sum_{i=1}^{n}\rvx_0^{(i)}\right]\right\|+ \frac{1-e^{-2(T-k\eta)}}{2e^{-(T-k\eta)}} \cdot n \epsilon\right].
        \end{aligned}
    \end{equation}
    To simplify the notations, we set
    \begin{equation*}
        \begin{aligned}
            &\vb_z\coloneqq \mathbb{E}_{q_z(\cdot|\vx)}[\rvx_0], \quad v_z \coloneqq \mathbb{E}_{\left\{\rvx_0^{(i)}\right\}_{i=1}^{n} \sim q_z^{n}(\cdot|\vx)}\left\| \sum_{i=1}^{n} \rvx_i - \mathbb{E}\left[\sum_{i=1}^{n}\rvx_0^{(i)}\right]\right\|,\\
            &\vb \coloneqq \mathbb{E}_{q(\cdot|\vx)}[\rvx_0], \quad \mathrm{and}\quad v \coloneqq \mathbb{E}_{\left\{\rvx_0^{(i)}\right\}_{i=1}^{n} \sim q^{n}(\cdot|\vx)}\left\| \sum_{i=1}^{n} \rvx_i - \mathbb{E}\left[\sum_{i=1}^{n}\rvx_0^{(i)}\right]\right\|.
        \end{aligned}
    \end{equation*}
    Then, we have
    \begin{equation}
        \label{ineq:concentration_mid}
        \begin{aligned}
            &\mathbb{P}_{\left\{\rvx_0^{(i)}\right\}_{i=1}^{n} \sim q_z^{(n)}(\cdot|\vx)}\left[\left\|\sum_{i=1}^{n}\rvx_0^{(i)} - n \vb_z\right\| \ge v_z+ \frac{1-e^{-2(T-k\eta)}}{2e^{-(T-k\eta)}} \cdot n \epsilon\right]\\
            \le &\mathbb{P}_{\left\{\rvx_0^{(i)}\right\}_{i=1}^{n} \sim q^{(n)}(\cdot|\vx)}\left[\left\|\sum_{i=1}^{n}\rvx_0^{(i)} - n \vb_z\right\| \ge v_z+ \frac{1-e^{-2(T-k\eta)}}{2e^{-(T-k\eta)}} \cdot n \epsilon\right] + \TVD(q^{(n)}(\cdot|\vx), q_z^{(n)}(\cdot|\vx))\\
            \le & \mathbb{P}_{\left\{\rvx_0^{(i)}\right\}_{i=1}^{N_k} \sim q^{(n)}(\cdot|\vx)}\left[\left\|\sum_{i=1}^{n}\rvx_0^{(i)} - n \vb_z\right\| \ge v_z+ \frac{1-e^{-2(T-k\eta)}}{2e^{-(T-k\eta)}} \cdot n \epsilon\right] + n\cdot \TVD(q(\cdot|\vx), q_z(\cdot|\vx)).
        \end{aligned}
    \end{equation}
    Consider the first term, we have the following relation
    \begin{equation}
        \label{ineq:concentration_term2.2.1_cen}
        \begin{aligned}
        & \left\|\sum_{i=1}^{n}\rvx_0^{(i)} - n \vb_z\right\| \ge v_z+ \frac{1-e^{-(T-k\eta)}}{2e^{-2(T-k\eta)}} \cdot n \epsilon \\
        \Rightarrow\ & \left\|\sum_{i=1}^{n}\rvx_0^{(i)} - n\vb\right\|\ge \left\|\sum_{i=1}^{n}\rvx_0^{(i)} - n \vb_z\right\| -n \left\|\vb-\vb_z\right\|\\
        &\ge v_z+ \frac{1-e^{-2(T-k\eta)}}{2e^{-(T-k\eta)}} \cdot n \epsilon  - n \left\|\vb-\vb_z\right\|\\
        &= v+ \frac{1-e^{-2(T-k\eta)}}{2e^{-(T-k\eta)}} \cdot n \epsilon  - n \left\|\vb-\vb_z\right\|+(v_z-v) \\
        & \ge  v+ \frac{1-e^{-2(T-k\eta)}}{2e^{-(T-k\eta)}} \cdot n \epsilon - n\cdot W_2(q(\cdot|\vx),q_z(\cdot|\vx)) - \sqrt{n d\mu^{-1}},
        \end{aligned}
    \end{equation}
    where the last inequality follows from
    \begin{equation}
        \label{ineq:inner_mean_gap}
        \begin{aligned}
            \left\|\vb-\vb_z\right\| = &\left\|\int \left(q(\vx_0|\vx)- q_z(\vx_0|\vx)\right)\vx_0 \der\vx_0\right\| = \left\|\int \left(\vx_0-\vx_z\right) \gamma(\vx_0,\vx_z) \der(\vx_0,\vx_z)\right\|\\
            \le & \left(\int \gamma(\vx_0,\vx_z)\der(\vx_0,\vx_z)\right)^{1/2}\cdot \left(\int \gamma(\vx_0,\vx_z)\cdot \left\|\vx_0-\vx_z\right\|^2 \der (\vx_0,\vx_z)\right)^{1/2}\\
            \le &W_2(q(\cdot|\vx),q_z(\cdot|\vx)),
        \end{aligned}
    \end{equation}
    and
    \begin{equation*}
        \begin{aligned}
            v = & n\cdot \mathbb{E}_{\left\{\rvx_0^{(i)}\right\}_{i=1}^{n} \sim q^{(n)}(\cdot|\vx)}\left\| \frac{1}{n}\sum_{i=1}^{n} \rvx_0^{(i)} - \vb\right\| \le n\cdot \sqrt{\mathrm{var}\left(\frac{1}{n}\sum_{i=1}^{n} \rvx_0^{(i)}\right)}\\
            = & \sqrt{n \mathrm{var}\left(\rvx_0^{(1)}\right)} \le \sqrt{n d \mu^{-1}}
        \end{aligned}
    \end{equation*}
    deduced by Lemma~\ref{lem:lsi_var_bound}.
    By requiring 
    \begin{equation}
        \label{ineq:lem_concen_choice_1}
        W_2(q(\cdot|\vx),q_z(\cdot|\vx))\le \frac{1-e^{-2(T-k\eta)}}{8{e^{-(T-k\eta)}}}\cdot \epsilon \quad \mathrm{and}\quad n\ge \frac{64e^{-2(T-k\eta)}d}{\left(1-e^{-2(T-k\eta)}\right)^2\mu \epsilon^2} 
    \end{equation}
    in Eq.~\ref{ineq:concentration_term2.2.1_cen}, we have
    \begin{equation}
        \label{ineq:concentration_mid2}
        \begin{aligned}
            &\mathbb{P}_{\left\{\rvx_0^{(i)}\right\}_{i=1}^{n} \sim q^{(n)}(\cdot|\vx)}\left[\left\|\sum_{i=1}^{n}\rvx_0^{(i)} - n \vb_z\right\| \ge v_z+ \frac{1-e^{-2(T-k\eta)}}{2e^{-(T-k\eta)}} \cdot n \epsilon\right] \\
            \le &\mathbb{P}_{\left\{\rvx_0^{(i)}\right\}_{i=1}^{n} \sim q^{(n)}(\cdot|\vx)}\left[\left\|\sum_{i=1}^{n}\rvx_0^{(i)} - n \vb\right\| \ge v+ \frac{1-e^{-2(T-k\eta)}}{4e^{-(T-k\eta)}} \cdot n \epsilon\right]
        \end{aligned}
    \end{equation}
    According to Lemma~\ref{lem_conv}, the LSI constant of
    \begin{equation*}
        \sum_{i=1}^{n}\rvx_0^{(i)} \sim \underbrace{q \ast q \cdots \ast q}_{n}
    \end{equation*}
    is  $\mu/n$. 
    Besides, considering the function $F\left( \vx \right) = \left\| \vx \right\| \colon \R^{d}\rightarrow \R$ is $1$-Lipschitz because
    \begin{equation*}
        \left\|F\right\|_{\mathrm{Lip}}=\sup_{\vx \not= \vy} \frac{\left|F(\vx)-F(\vy)\right|}{\left\|\vx - \vy\right\|}=\sup_{\vx\not = \vy}\frac{\left|\left\| \vx \right\| - \left\| \vy \right\|\right|}{\left\| \left(\vx - \vy \right)\right\|} = 1,
    \end{equation*}
    we have 
    \begin{equation}
        \label{ineq:upb_of_inner}
        \begin{split}
            &\mathbb{P}_{\left\{\rvx_0^{(i)}\right\}_{i=1}^{n} \sim q^{(n)}(\cdot|\vx)}\left[\left\|\sum_{i=1}^{n}\rvx_0^{(i)} - n \vb\right\| \ge v+ \frac{1-e^{-2(T-k\eta)}}{4e^{-(T-k\eta)}} \cdot n \epsilon\right]\\
        \le &\text{exp} \left\{ - \frac{(1 - e^{- (T - k \eta)})^2 n \epsilon^2 \mu}{32 e^{- 2 (T - k \eta)}} \right\}
        \end{split}
    \end{equation}
    due to Lemma~\ref{lem:lsi_concentration}.
    Plugging Eq.~\ref{ineq:upb_of_inner} and Eq.~\ref{ineq:concentration_mid2} into Eq.~\ref{ineq:concentration_mid} and Eq.~\ref{ineq:concentration_init}, we have
    \begin{equation}
        \label{ineq:concentration_fin0}
        \begin{aligned}
            &\mathbb{P}_{\left\{\rvx_0^{(i)}\right\}_{i=1}^{n} \sim q_z^{(n)}(\cdot|\vx)}\left[\left\|\frac{1}{n}\sum_{i=1}^{n}\rvv_i(\vx) - \frac{1}{n}\mathbb{E}\left[\sum_{i=1}^{n}\rvv_i(\vx)\right]\right\| \ge \mathbb{E}\left\| \frac{1}{n} \sum_{i=1}^{n} \rvv_i(\vx) - \mathbb{E} \rvv_1(\vx) \right\| + \epsilon\right]\\
            \le & \exp \left\{ - \frac{(1 - e^{- (T - k \eta)})^2 n \epsilon^2 \mu}{32 e^{- 2 (T - k \eta)}} \right\} + n\cdot \TVD(q(\cdot|\vx), q_z(\cdot|\vx)).
        \end{aligned}
    \end{equation}
    Besides, we have
    \begin{equation*}
        \begin{aligned}
            &\mathbb{E}_{\left\{\rvx_0^{(i)}\right\}_{i=1}^{n} \sim q_z^{n}(\cdot|\vx)}\left\| \frac{1}{n} \sum_{i=1}^{n} \rvv_i(\vx) - \mathbb{E} \rvv_1(\vx) \right\| \le \sqrt{\mathrm{var} \left(\frac{1}{n} \sum_{i=1}^{n} \rvv_i\right)}\\ 
            = &\sqrt{\frac{\mathrm{var}(\rvv_1)}{n}} = \frac{2 e^{- (T - k \eta)}}{1 - e^{-2 (T - k \eta)}} \sqrt{\frac{\mathrm{var}(\rvx_0)}{n}}.
        \end{aligned}
    \end{equation*}
    Suppose the optimal coupling of $q_z(\cdot|\vx)$ and $q(\cdot|\vx)$ is $\gamma_z \in \Gamma_{\mathrm{opt}}\left(q_z(\cdot|\vx), q(\cdot|\vx)\right)$, then
    we have
    \begin{equation*}
        \begin{aligned}
            \mathrm{var}_{q_z(\cdot|\vx)}\left(\rvx_0\right) = & \int q_z(\vx_z|\vx)\left\|\vx_z - \vb_z\right\|^2 \der \vx_z = \int \left\|\vx_z -\vb_z\right\|^2 \der \gamma(\vx_z, \vx_0)\\
            = &  \int \left\|\vx_z-\vx_0 + \vx_0-\vb+\vb -\vb_z\right\|^2 \der \gamma(\vx_z, \vx_0)\\
            \le &  3\int \left\|\vx_z-\vx_0\right\|^2 \der \gamma(\vx_z, \vx_0) + 3\int \left\|\vx_0-\vb\right\|^2 \der \gamma(\vx_z, \vx_0) + 3 \left\|\vb-\vb_z\right\|^2\\
            \le & 6W^2_2(\tilde{q},q_z) + 3d\mu^{-1}
        \end{aligned}
    \end{equation*}
    where the last inequality follows from Eq.~\ref{ineq:inner_mean_gap} and Lemma~\ref{lem:lsi_var_bound}.
    By requiring $W^2_2(\tilde{q},q_z)\le \frac{d}{6\mu}$, we have
    \begin{equation*}
        \frac{2 e^{- (T - k \eta)}}{1 - e^{-2 (T - k \eta)}} \sqrt{\frac{\mathrm{var}_{q_z}(\rvx_0)}{n}}\le \frac{4}{\sqrt{n}}\cdot \frac{e^{-(T-k\eta)}}{1-e^{-2(T-k\eta)}}\cdot \sqrt{\frac{d}{\mu}}.
    \end{equation*}
   Combining this result with Eq.~\ref{ineq:concentration_fin0}, we have
    \begin{equation*}
        \begin{aligned}
            & \mathbb{P}_{\left\{\rvx_0^{(i)}\right\}_{i=1}^{n} \sim q_z^{n}(\cdot|\vx)}\left[\left\|\frac{1}{n}\sum_{i=1}^{n}\rvv_i(\vx) - \frac{1}{n}\mathbb{E}\left[\sum_{i=1}^{N_k}\rvv_i(\vx)\right]\right\| \ge \frac{4}{\sqrt{n}}\cdot \frac{e^{-(T-k\eta)}}{1-e^{-2(T-k\eta)}}\cdot \sqrt{\frac{d}{\mu}} + \epsilon\right]\\
            \le & \exp \left\{ - \frac{(1 - e^{- (T - k \eta)})^2 n \epsilon^2 \mu}{32 e^{- 2 (T - k \eta)}} \right\} + n\cdot \TVD(q(\cdot|\vx), q_z(\cdot|\vx)).
        \end{aligned}
    \end{equation*}
    By requiring
    \begin{equation}
        \label{ineq:lem_concen_choice_2}
        \begin{aligned}
            \frac{4}{\sqrt{n}}\cdot \frac{e^{-(T-k\eta)}}{1-e^{-2(T-k\eta)}}\cdot \sqrt{\frac{d}{\mu}}\le\epsilon \quad &\Rightarrow\quad n\ge \frac{16e^{-2(T-k\eta)}d}{\left(1-e^{-2(T-k\eta)}\right)^2 \mu\epsilon^2 },\\
            - \frac{(1 - e^{- (T - k \eta)})^2 n \epsilon^2 \mu}{32 e^{- 2 (T - k \eta)}}  \le -\lfloor T / \eta \rfloor \cdot \frac{2}{\delta} \quad &\Rightarrow\quad n \ge  \lfloor T / \eta \rfloor\cdot \frac{64e^{-2(T-k\eta)}}{(1-e^{-(T-k\eta)})^2\mu \epsilon^2 \delta}\\
            &\mathrm{and}\quad \TVD(\tilde{q}, q_z) \le \frac{\delta}{2 n \cdot \lfloor T / \eta \rfloor}.
        \end{aligned}
    \end{equation}
    we have
    \begin{equation*}
        \begin{aligned}
            \mathbb{P}_{\left\{\rvx_0^{(i)}\right\}_{i=1}^{n} \sim q_z^{(n)}(\cdot|\vx)}\left[\left\|\frac{1}{n}\sum_{i=1}^{n}\rvv_i(\vx) - \frac{1}{n}\mathbb{E}\left[\sum_{i=1}^{n}\rvv_i(\vx)\right]\right\| \ge 2\epsilon\right]\le  \exp\left(- \frac{1}{\delta/(2\lfloor T / \eta \rfloor)}\right)+\frac{\delta}{2\lfloor T / \eta \rfloor}.
        \end{aligned}
    \end{equation*}
    Combining the choice of $n$ and the gap between $q(\cdot|\vx)$ and $q_z(\cdot|\vx)$ in Eq.~\ref{ineq:lem_concen_choice_1} and Eq.~\ref{ineq:lem_concen_choice_2}, we have
    \begin{equation}
        \label{ineq:concen_req}
        \left\{
            \begin{aligned}
                n\ge &\frac{64e^{-2(T-k\eta)}d}{\left(1-e^{-2(T-k\eta)}\right)^2 \mu \epsilon^2}\\
                n\ge & \lfloor T / \eta \rfloor\cdot \frac{64e^{-2(T-k\eta)}}{(1-e^{-(T-k\eta)})^2\mu \epsilon^2 \delta}\\
                n \ge & \frac{16e^{-2(T-k\eta)}d}{\left(1-e^{-2(T-k\eta)}\right)^2\mu \epsilon^2}
            \end{aligned}
        \right.\quad \mathrm{and}\quad \left\{
            \begin{aligned}
                W_2(q(\cdot|\vx),q_z(\cdot|\vx))\le &\frac{1-e^{-2(T-k\eta)}}{8{e^{-(T-k\eta)}}}\cdot \epsilon\\
                W^2_2(q(\cdot|\vx),q_z(\cdot|\vx))\le & d/(6\mu)\\
                 \TVD(q(\cdot|\vx), q_z(\cdot|\vx)) \le &\frac{\delta}{2 n \cdot \lfloor T / \eta \rfloor}
            \end{aligned}
        \right. .
    \end{equation}
    Without loss of generality, we suppose $\eta\le 1/2$, due to the range of $e^{-2(T-k\eta)}$ as follows
    \begin{equation*}
        e^{-2(T-k\eta)}\le e^{-2\eta}\le 1-\eta\ \Rightarrow\ \frac{e^{-2(T-k\eta)}}{(1-e^{-2(T-k\eta)})^2}\le \eta^{-2},
    \end{equation*}
    we obtain the sufficient condition for achieving Eq.~\ref{ineq:concen_req} is
    \begin{equation*}
        \begin{aligned}
            & n\ge 64Td\mu^{-1}\eta^{-3}\epsilon^{-2}\delta^{-1} \ge\max\left\{64d\mu^{-1}\eta^{-2}\epsilon^{-2},  64T\eta^{-3}\mu^{-1}\epsilon^{-2}\delta^{-1},16d\mu^{-1}\eta^{-2}\epsilon^{-2}\right\}\\
            \mathrm{and}\quad &\TVD(\tilde{q}, q_z)\le \frac{1}{2}\cdot\delta \eta n^{-1}T^{-1} \Leftarrow \KL\left(q_z\|\tilde{q}\right)\le \frac{1}{2}\cdot \delta^{2}\eta^{2}n^{-2}T^{-2}\le 2^{-13}\cdot T^{-4} d^{-2}\mu^2\eta^8\epsilon^{4}\delta^{4} .
        \end{aligned}
    \end{equation*}
    Hence, the proof is completed.
\end{proof}

\begin{lemma}
    \label{lem:inner_initialization}
    With Algorithm~\ref{alg:bpsa} and notation list~\ref{tab:notation_list_app}, if we choose the initial distribution of the $k$-th inner loop to be 
    \begin{equation*}
        q^\prime_{T-k\eta}(\vx_0|\vx)\propto \exp\left(-\frac{\left\|\vx - e^{-(T-k\eta)}\vx_0\right\|^2}{2\left(1-e^{-2(T-k\eta)}\right)}\right),
    \end{equation*}
    then suppose the the LSI constant of $q_{T-k\eta}$ is $C_{\mathrm{LSI},k}$,
    their KL divergence can be upper bounded as
    \begin{equation*}
        \KL\left(q^\prime_{T-k\eta}(\cdot|\vx)\| q_{T-k\eta}(\cdot|\vx)\right)\le \frac{L^2}{2C_{\mathrm{LSI},k}}\cdot e^{2(T-k\eta)}\left(d+\|\vx\|^2\right).
    \end{equation*}
\end{lemma}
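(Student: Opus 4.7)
The bound is a standard consequence of the log-Sobolev inequality for $q_{T-k\eta}$, plus a direct computation that the log-ratio of $q'$ to $q$ is exactly $f_*$. The plan has three steps. First, I invoke the LSI-to-KL (Otto--Villani) inequality: since $q_{T-k\eta}(\cdot|\vx)$ satisfies LSI with constant $C_{\mathrm{LSI},k}$, for any absolutely continuous $q'$
\[
\KL\bigl(q'_{T-k\eta}(\cdot|\vx)\,\|\,q_{T-k\eta}(\cdot|\vx)\bigr)
\le \frac{1}{2C_{\mathrm{LSI},k}}\,\mathbb{E}_{q'_{T-k\eta}(\cdot|\vx)}\Bigl[\bigl\|\nabla_{\vx_0}\log(q'_{T-k\eta}/q_{T-k\eta})\bigr\|^2\Bigr].
\]

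Second, I compute the log-ratio explicitly. Writing $a = e^{-(T-k\eta)}$ and $b = 1-a^2$, the quadratic terms in $\vx_0$ cancel:
\[
\log\frac{q'_{T-k\eta}(\vx_0|\vx)}{q_{T-k\eta}(\vx_0|\vx)} = f_*(\vx_0) + \text{const},
\]
so $\nabla_{\vx_0}\log(q'/q) = \nabla f_*(\vx_0)$. Assumption~\ref{ass:lips_score} at $t=0$ gives that $\nabla f_* = -\nabla\ln p_*$ is $L$-Lipschitz, and after centering coordinates so that the minimizer of $f_*$ sits at the origin (or equivalently absorbing $\nabla f_*(0)$ into the definition of $\vx$), we have $\|\nabla f_*(\vx_0)\|^2 \le L^2\|\vx_0\|^2$.

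Third, I evaluate the second moment of $\vx_0$ under $q'$. Completing the square in the exponent shows that $q'_{T-k\eta}(\cdot|\vx) = \mathcal{N}(\vx/a,\, (b/a^2)\,\mI)$, hence
\[
\mathbb{E}_{q'}\bigl[\|\vx_0\|^2\bigr] = \frac{\|\vx\|^2}{a^2} + d\,\frac{b}{a^2} = e^{2(T-k\eta)}\|\vx\|^2 + d\bigl(e^{2(T-k\eta)}-1\bigr)\le e^{2(T-k\eta)}\bigl(\|\vx\|^2 + d\bigr).
\]
Chaining the three steps yields the claimed bound. The computation is routine once Step~1 is in place; the only subtle point is the Otto--Villani-type relative-Fisher-information bound, which requires recalling that an LSI with constant $C_{\mathrm{LSI},k}$ for $q$ implies $\KL(\nu\|q)\le \frac{1}{2C_{\mathrm{LSI},k}}\,I(\nu\|q)$ for any $\nu$. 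I also flag that the pointwise bound $\|\nabla f_*(\vx_0)\|\le L\|\vx_0\|$ is where the ``WLOG'' centering of $p_*$ enters; without it one picks up an extra additive $\|\nabla f_*(0)\|^2$ term that is harmless but would clutter the statement.
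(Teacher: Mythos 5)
Your proposal is correct and follows essentially the same route as the paper's proof: invoke the LSI-to-relative-Fisher-information bound, observe that $\nabla_{\vx_0}\log(q'/q) = \nabla f_*(\vx_0)$, apply the $L$-Lipschitz bound $\|\nabla f_*(\vx_0)\|\le L\|\vx_0\|$ (with the same implicit WLOG centering), and then bound $\mathbb{E}_{q'}[\|\vx_0\|^2]$. The one small simplification you make is computing the variance of the Gaussian $q'_{T-k\eta}(\cdot|\vx)$ directly from its closed form, whereas the paper reaches the same value $d(e^{2(T-k\eta)}-1)$ by routing through strong log-concavity $\Rightarrow$ LSI $\Rightarrow$ Poincar\'e (Lemmas~\ref{lem:strongly_lsi} and~\ref{lem:lsi_var_bound}); your version is cleaner but otherwise identical.
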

\begin{proof}
    According to the fact that the LSI constant of $q_{T-k\eta}$ is $C_{\mathrm{LSI},k}$, then
    we have
    \begin{equation*}
        \begin{aligned}
            &\KL\left(q^\prime_{T-k\eta}(\cdot|\vx)\| q_{T-k\eta}(\cdot|\vx)\right)\le (2C_{\mathrm{LSI},k})^{-1}\cdot \int q_{T-k\eta}^\prime(\vx_0|\vx) \left\|\grad f(\vx_0)\right\|^2\der\vx_0\\
            \le & L^2 (2C_{\mathrm{LSI},k})^{-1}\cdot \mathbb{E}_{\rvx_0\sim q^\prime_{T-k\eta}(\cdot|\vx)}\left[\left\|\rvx_0\right\|^2\right] = L^2 (2C_{\mathrm{LSI},k})^{-1}\cdot \left(\mathrm{var}(\rvx_0) + \left\|\mathbb{E}\left[\rvx_0\right]\right\|^2\right).
        \end{aligned}
    \end{equation*}
    Because $q^\prime_{T-k\eta}(\cdot|\vx)$ is a high-dimensional Gaussian, its mean value satisfies $\mathbb{E}[\rvx_0]=e^{(T-k\eta)}\vx$.
    Besides, we have
    \begin{equation*}
        -\grad^2 \ln q^\prime_{T-k\eta}(\vx_0) = \frac{e^{-2(T-k\eta)}}{1-e^{-2(T-k\eta)}}\cdot \mI.
    \end{equation*}
    According to Lemma~\ref{lem:strongly_lsi}, $q^\prime_{T-k\eta}(\cdot|\vx)$ satisfies the log-Sobolev inequality (and the Poincar\'e inequality).
    Follows from Lemma~\ref{lem:lsi_var_bound}, we have
    \begin{equation*}
        \mathrm{var}_{\rvx_0\sim q^\prime_{T-k\eta}(\cdot|\vx)}\left[\rvx_0\right] \le d\cdot (1-e^{-2(T-k\eta)})\cdot e^{2(T-k\eta)}\le de^{2(T-k\eta)}. 
    \end{equation*}
    Hence, we have
    \begin{equation*}
        \begin{aligned}
            \KL\left(q^\prime_{T-k\eta}(\cdot|\vx)\| q_{T-k\eta}(\cdot|\vx)\right)\le \frac{L^2}{2C_{\mathrm{LSI},k}}\cdot e^{2(T-k\eta)}\left(d+\|\vx\|^2\right)
        \end{aligned}
    \end{equation*}
    and the proof is completed.
\end{proof}

\begin{lemma}(Errors from the inner loop sampling task)
    \label{lem:inner_error}
    Suppose Assumption~\ref{ass:lips_score},\ref{ass:second_moment},\ref{ass:curve_outside_ball} hold. With Algorithm~\ref{alg:bpsa} notation list~\ref{tab:notation_list_app} and suitable $\eta =C_1\left(d+m_2^2\right)^{-1}\epsilon^2$,
    there is 
    \begin{equation*}
        \sum_{k=0}^{N-1} \eta \cdot \mathbb{E}_{\hat{P}_T}\left[\left\|\rvv_k(\rvx_{k\eta}) - 2\grad\ln p_{T-k\eta}(\rvx_{k\eta})  \right\|^2 \right]\le 20\epsilon^2\ln\frac{C_0}{2\epsilon^2},
    \end{equation*}
    with a probability at least $1-\delta$. The gradient complexity of achieving this result is
    \begin{equation*}
        \max\left(C_3C_5, C_3^\prime C_5^\prime\right)\cdot C_1^{-1}C_0 \cdot (d+m_2^2)^{18}\epsilon^{-16n-83}\exp\left(5C_2\epsilon^{-16n}\right)\delta^{-6}
    \end{equation*}
    where constants $C_i$ and $C_i'$ are independent with $d$ and $\epsilon$.
\end{lemma}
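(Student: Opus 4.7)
The plan is to decompose the per-step score error $\mathcal{E}^{\mathrm{sc}}_k \coloneqq \mathbb{E}_{\hat{P}_T}[\|\rvv_k(\rvx_{k\eta}) - 2\grad\ln p_{T-k\eta}(\rvx_{k\eta})\|^2]$ exactly as in the main-theorem derivation, by writing $\mathcal{E}^{\mathrm{sc}}_k \le 2\,(\mathrm{Term\ 2.1}) + 2\,(\mathrm{Term\ 2.2})$ from Eq.~\eqref{ineq:core_term2.2}, where Term 2.1 is the concentration of the Monte Carlo mean around its expectation under the inner-loop law $q^\prime_{T-k\eta}(\cdot|\vx)$ and Term 2.2 is the mean-gap between $q^\prime_{T-k\eta}$ and the target $q_{T-k\eta}$. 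Term 2.1 is handled by Lemma~\ref{lem:term1_concentration} provided $n_k = 64Td\mu_k^{-1}\eta^{-3}\epsilon^{-2}\delta^{-1}$ and the inner KL accuracy is at most $\mathcal{E}_k = 2^{-13}\,T^{-4}d^{-2}\mu_k^2\eta^8\epsilon^4\delta^4$. Term 2.2 is at most $8\eta^{-2}\mu_k^{-1}\mathcal{E}_k$ by Talagrand's $T_2$ inequality, and plugging in $\mathcal{E}_k$ collapses this to $O(\epsilon^2)$ as well.

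To secure $\KL(q^\prime_{T-k\eta}\,\|\,q_{T-k\eta}) \le \mathcal{E}_k$ via ULA, I will take the Gaussian $q^\prime_{T-k\eta}(\cdot|\vx)\propto \exp(-\|\vx - e^{-(T-k\eta)}\vx_0\|^2/(2(1-e^{-2(T-k\eta)})))$ as the initialization; Lemma~\ref{lem:inner_initialization} bounds the starting KL by $\tfrac{L^2}{2\mu_k}e^{2(T-k\eta)}(d+\|\vx\|^2)$, which after averaging over $\vx=\rvx_{k\eta}$ and using Lemma~\ref{lem:moment_bound} is polynomial in $d$, $m_2^2$ and $\mu_k^{-1}$. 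Since $q_{T-k\eta}$ is $(L+\sigma_k^{-2})$-smooth and $\mu_k$-log-Sobolev, the standard ULA-in-KL contraction produces $\mathcal{E}_k$ accuracy in $K_k = \tilde{O}(\mu_k^{-1}(L+\sigma_k^{-2})^2 d\,\mathcal{E}_k^{-1})$ iterations; one outer step therefore costs $n_k K_k$ gradient evaluations.

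The LSI constants $\mu_k$ are supplied uniformly by Lemma~\ref{lem:log_sob2} under \ref{ass:lips_score} and \ref{ass:curve_outside_ball}. Because the radius function in \ref{ass:curve_outside_ball} is $R(r)=c_R r^{-n}$, setting $r = \tfrac{e^{-2t_k}}{6(1-e^{-2t_k})}$ with $t_k = T-k\eta$ and solving the resulting implicit inequality gives $\mu_k^{-1} \le \exp(C_2\epsilon^{-16n})$ in the worst regime (the first few outer steps where $t_k$ is small), while $\mu_k = \Omega(1)$ for $t_k$ of order one. A union bound across the $N=T/\eta$ inner calls, each invoked with failure probability $\delta/N$ inside Lemma~\ref{lem:term1_concentration}, yields overall success probability at least $1-\delta$; since $N\eta = T = 2\ln(C_0/(2\epsilon^2))$ and $\mathcal{E}^{\mathrm{sc}}_k \le 10\epsilon^2$ per step, the total is $\sum_k \eta\,\mathcal{E}^{\mathrm{sc}}_k \le 20\epsilon^2\ln(C_0/(2\epsilon^2))$ as claimed.

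The main obstacle is the bookkeeping of the gradient count. The factor $\exp(C_2\epsilon^{-16n})$ enters through $\mu_k^{-1}$ in both $n_k$ and $K_k$, and through $\mathcal{E}_k^{-1}$ which contains an additional $\mu_k^{-2}$; compounded at the ``bad'' step this yields the advertised $\exp(5C_2\epsilon^{-16n})$ multiplier. Combining with $\eta = C_1(d+m_2^2)^{-1}\epsilon^2$, the dimension prefactor $d$ appearing through $n_k$, $K_k$, $\mathcal{E}_k^{-1}$ and the initial KL, and the $\delta$ dependence from Lemma~\ref{lem:term1_concentration} together with the union bound, a careful accounting produces the stated $(d+m_2^2)^{18}\epsilon^{-16n-83}\delta^{-6}$ polynomial prefactor and the constants $C_3C_5$, $C_3^\prime C_5^\prime$ already tabulated in Table~\ref{tab:constant_list}.
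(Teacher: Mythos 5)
Your proposal follows the paper's proof structure: the Term~2.1/Term~2.2 decomposition from Eq.~\eqref{ineq:core_term2.2}, Lemma~\ref{lem:term1_concentration} for the Monte Carlo concentration, the Talagrand/LSI chain for the mean gap, and Lemmas~\ref{lem:inner_initialization} and~\ref{lem:conv_kl_lsi} for the inner ULA loop. So the approach is essentially the same. There is, however, a directional error in your identification of the bad regime, and it matters for making the argument precise. You write that the worst LSI constant occurs ``in the first few outer steps where $t_k$ is small.'' Since $t_k = T - k\eta$, the first few outer steps (small $k$) have $t_k$ close to $T$, which is \emph{large}. And large $t_k$ is exactly where $\mu_k$ degrades: from Lemma~\ref{lem:log_sob2}, with $r_k = \tfrac{e^{-2t_k}}{6(1-e^{-2t_k})}$, one has $\mu_k = r_k\exp(-48L\,R^2(r_k))$, and as $t_k\to T$ we get $r_k\to 0$, hence $R(r_k)=c_R r_k^{-n}\to\infty$ and $\mu_k\to 0$ doubly exponentially. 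Conversely, for small $t_k$ the quadratic regularizer in $q_{t_k}$ dominates ($q_{t_k}$ is strongly log-concave by Eq.~\eqref{ineq:term2.2.2_mid_rev}), which is the good regime. Your conclusion (``first few outer steps'') is right, but your stated reason (``$t_k$ small'') contradicts it, so as written the argument would lead you to analyze the wrong end of the reverse trajectory.

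Relatedly, you hand-wave the accounting that the lemma's explicit gradient-complexity bound requires. The paper splits the range of $k$ at $N_1$ (the boundary $e^{-2t_k}=\tfrac{2L}{1+2L}$) precisely because Stage~1 needs the radius argument of Lemma~\ref{lem:log_sob2} (giving the $\exp(C_2\epsilon^{-16n})$ factors via Eq.~\eqref{ineq:lsi_q_k}) while Stage~2 uses the strong-convexity bound from Lemma~\ref{lem:log_sob1}, with $\mu_k^{-1}$ now $O(L^{-1})$. That split is what produces the two explicit terms $C_3C_5$ and $C_3'C_5'$ and the $\max(\cdot,\cdot)$ in the final count. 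Two further bookkeeping points: your worst-case bound $\mu_k^{-1}\le\exp(C_2\epsilon^{-16n})$ drops the polynomial prefactor $C_6\epsilon^{-8}$ present in Eq.~\eqref{ineq:lsi_q_k}, and your ULA iteration formula writes $\mu_k^{-1}$ where Lemma~\ref{lem:conv_kl_lsi} (with $\tau\propto\mu_k$ and $Z_k\ge\frac{1}{\mu_k\tau}\log(\cdot)$) yields $\mu_k^{-2}$; these feed into the exponents $(d+m_2^2)^{18}$, $\epsilon^{-16n-83}$, $\delta^{-6}$ and $\exp(5C_2\epsilon^{-16n})$ that the lemma asserts. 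None of this changes the overall strategy, but the statement being proved is a quantitative gradient-complexity bound, so the two-stage split and these power counts have to be carried out explicitly rather than asserted.
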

\begin{proof}
 
    To upper bound it more precisely, we divide the backward process into two stages.

    \paragraph{Stage 1: when $e^{-2(T-k\eta)}\le 2L/(1+2L)$.} It implies the iteration $k$ satisfies
    \begin{equation}
        \label{def:range_k_stage_1}
        k\le \frac{1}{2\eta}\left(2T - \ln \frac{1+2L}{2L}\right)\coloneqq N_1.
    \end{equation}
    In this condition, we set 
    \begin{equation}
        \label{ineq:cons_q}
        q_{T-k\eta}(\vx_0|\vx) \propto \exp(-g_{T-k\eta}(\vx_0|\vx))\coloneqq \exp\left(-f_{*}(\vx_0)-\frac{\left\|\vx - e^{-(T-k\eta)}\vx_0\right\|^2}{2\left(1-e^{-2(T-k\eta)}\right)}\right).
    \end{equation}
    Hence, we can reformulate $g_{T-k\eta}(\vx_0|\vx)$ as
    \begin{equation*}
        \begin{aligned}
        g_{T-k\eta}(\vx_0|\vx) = &\underbrace{f_*(\vx_0) + \frac{e^{-2(T-k\eta)}}{3(1-e^{-2(T-k\eta)})}\left\|\vx_0\right\|^2}_{\mathrm{part\ 1}}\\
        &+ \underbrace{\frac{e^{-2(T-k\eta)}}{6(1-e^{-2(T-k\eta)})}\left\|\vx_0\right\|^2-\frac{e^{-(T-k\eta)}}{(1-e^{-2(T-k\eta)})}\vx_0^\top \vx + \frac{\left\|\vx\right\|^2}{2(1-e^{-2(T-k\eta)})}}_{\mathrm{part\ 2}}.
        \end{aligned}
    \end{equation*}
    According to Assumption~\ref{ass:curve_outside_ball}, we know part 1 and part 2 are both strongly convex outside the ball with radius $R(e^{-2(T-k\eta)}/(6(1-e^{-2(T-k\eta)})))$.
    With Lemma~\ref{lem:con_pre_outb}, the function $g_{T-k\eta}(\cdot|\vx)$ is $e^{-2(T-k\eta)}/(3(1-e^{-2(T-k\eta)}))$-strongly convex outside the ball.
    Besides, the gradient Lipschitz constant of $g_{T-k\eta}(\cdot |\vx)$ can be upper bounded as
    \begin{equation*}
        \grad^2 g_{T-k\eta}(\vx_0|\vx) \preceq \grad^2 f_*(\vx_0) + \frac{e^{-2(T-k\eta)}}{1-e^{-2(T-k\eta)}}\cdot \mI\preceq \left(L+\frac{e^{-2(T-k\eta)}}{1-e^{-2(T-k\eta)}}\right)\cdot \mI \preceq 3L\mI
    \end{equation*}
    where the last inequality follows from the choice of $e^{-2(T-k\eta)}$ in the stage.

     When the total time satisfies $\exp(-T/2)=2\epsilon^2/C_0$, we have
    \begin{equation*}
        \begin{aligned}
            C^{-1}_{\mathrm{LSI},k}\le &6(1-e^{-2(T-k\eta)})e^{2(T-k\eta)}\cdot \exp\left(48L\cdot R^2\left(\frac{e^{-2(T-k\eta)}}{6(1-e^{-2(T-k\eta)})}\right)\right)\\
            \le & 6\exp\left(2(T-k\eta)+48L\cdot \left(6(1-e^{-2(T-k\eta)})e^{2(T-k\eta)}\right)^{2n}\right)\\
            \le & 6\exp\left(2(T-k\eta)+48L\cdot 6^{2n}\cdot e^{4n(T-k\eta)}\right).
        \end{aligned}
    \end{equation*}
    The second inequality follows from Assumption~\ref{ass:curve_outside_ball}, and the last inequality follows from the setting $n, L\ge 1$ without loss of generality.

    \begin{equation}
        \label{ineq:lsi_q_k}
        C^{-1}_{\mathrm{LSI},k}\le 6\cdot 2^{-4}\cdot C_0^4 \epsilon^{-8}\exp\left(48L\cdot 6^{2n}\cdot 2^{-8n}\cdot C_0^{8n}\cdot \epsilon^{-16n}\right)=C_6 \epsilon^{-8}\exp\left(C_2\cdot \epsilon^{-16n} \right)
    \end{equation}
    
    For $\mathrm{Term\ 1}$, due to Lemma~\ref{lem:term1_concentration}, if we set the step size of outer loop to be
    \begin{equation*}
        \eta= C_1\left(d+m_2^2\right)^{-1}\epsilon^2,
    \end{equation*}
    the sample number of each iteration $k$ satisfies
    \begin{equation*}
        \begin{aligned}
            n_k = & C_3\cdot\left(d+m_2^2\right)^4 \epsilon^{-18}\exp\left(C_2\cdot \epsilon^{-16n}\right)\delta^{-1} =  64\cdot C_0 C_1^{-3}C_6\left(d+m_2^2\right)^4 \epsilon^{-18}\exp\left(C_2\cdot \epsilon^{-16n}\right)\delta^{-1}\\
            \ge &  64\cdot 2\ln\frac{C_0}{2\epsilon^2} \cdot d \cdot \left(C_1 (d+m_2^2)^{-1}\epsilon^{2}\right)^{-3} \cdot \epsilon^{-2}\cdot  C_6\epsilon^{-8}\exp\left(C_2\cdot \epsilon^{-16n}\right)\cdot \delta  \ge 64Td\eta^{-3}\epsilon^{-2}\delta^{-1}C_{\mathrm{LSI},k}^{-1}
        \end{aligned}
    \end{equation*}
    and the accuracy of the inner loop meets
    \begin{equation}
        \label{ineq:kl_gap_req_term1}
        \begin{aligned}
            &\KL\left(q^\prime_{T-k\eta}(\cdot|\vx)\|q_{T-k\eta}(\cdot | \vx)\right)\le C_4  \left(d+m_2^2\right)^{-10}\epsilon^{44} \exp\left(-2C_2\cdot \epsilon^{-16n}\right)\delta^{4}\\
            =& 2^{-13}\cdot C_0^{-4}\cdot C_1^8 \left(d+m_2^2\right)^{-10}\epsilon^{28}\delta^{4}\cdot C_6^{-2}\epsilon^{16} \exp\left(-2C_2\cdot \epsilon^{-16n}\right)\\
            \le & 2^{-13}\cdot \left(2\ln\frac{C_0}{2\epsilon^2}\right)^{-4} \cdot d^{-2}\cdot \epsilon^{4}\cdot \delta^{4}\cdot C_1^8 \left(d+m_2^2\right)^{-8}\epsilon^{16}\cdot C^2_{\mathrm{LSI},k}
            \le 2^{-13}\cdot T^{-4} d^{-2} \epsilon^4 \delta^{4} \eta^8 C^2_{\mathrm{LSI},k}
        \end{aligned}
    \end{equation}
    when $\epsilon^2 \le 1/2$.
    In this condition, we have
    \begin{equation}
        \label{ineq:stage1_term1_upb}
        \begin{aligned}
            &\mathbb{P}\left[\left\|\rvv_k(\vx) - 2\mathbb{E}_{\rvx^\prime_0 \sim q^\prime_{T-k\eta}(\cdot|\vx)}\left[-\frac{\vx - e^{-(T-k\eta)}\rvx_0}{\left(1-e^{-2(T-k\eta)}\right)}\right]\right\|^2\le 4\epsilon^2\right]\\
            \ge & 1- \exp\left(- \frac{1}{\delta/(2\lfloor T / \eta \rfloor)}\right)-\frac{\delta}{2\lfloor T / \eta \rfloor} \ge 1-\frac{\delta}{\lfloor T / \eta \rfloor},
        \end{aligned}
    \end{equation}
    where $q^\prime_{T-k\eta}(\cdot|\vx)$ denotes the underlying distribution of output particles of the $k$-th inner loop.
    To achieve
    \begin{equation*}
        \KL\left(q^\prime_{T-k\eta}(\cdot|\vx_{k\eta}) \| q_{T-k\eta}(\cdot|\vx_{k\eta})\right) \le C_4  \left(d+m_2^2\right)^{-10}\epsilon^{44} \exp\left(-2C_2\cdot \epsilon^{-16n}\right)\delta^{4}\coloneqq \delta_{\mathrm{KL}},
    \end{equation*}
    Lemma~\ref{lem:conv_kl_lsi} requires the step size to satisfy
    \begin{equation*}
        \begin{aligned}
            \tau_k \le & 2^{-4}\cdot 3^{-2}\cdot L^{-2}C_4C_6^{-1}\left(d+m_2^2\right)^{-11}\epsilon^{52}\exp\left(-3C_2 \epsilon^{-16n}\right)\delta^{4}\\
            \le &C_6^{-1}\epsilon^{8}\exp\left(-C_2 \epsilon^{-16n}\right)\cdot \frac{1}{4(3L)^2}\cdot \frac{1}{4d}\cdot C_4  \left(d+m_2^2\right)^{-10}\epsilon^{44} \exp\left(-2C_2\cdot \epsilon^{-16n}\right)\delta^{4}\\
            \le &\frac{C_{\mathrm{LSI},k}}{4\|\grad^2 g_{T-k\eta}(\cdot|\vx)\|_2^2} \cdot \frac{1}{4d}\cdot \delta_{\mathrm{KL}}
        \end{aligned}
    \end{equation*}
    and the iteration number $Z_k$ meets
    \begin{equation*}
        Z_k\ge \frac{1}{C_{\mathrm{LSI},k}\tau_k}\cdot \ln\frac{2\KL\left(q^\prime_{T-k\eta,0}(\cdot|\vx)\|q_{T-k\eta}(\cdot|\vx)\right)}{\delta_{\mathrm{KL}}}
    \end{equation*}
    where $q^\prime_{T-k\eta,0}(\cdot|\vx)$ denotes the initial distribution of the $k$-th inner loop.
    By choosing $\tau_k$ to be its upper bound and the initial distribution of $k$-th inner loop to be
    \begin{equation*}
        q^\prime_{T-k\eta,0}(\vx_0|\vx)\propto \exp\left(-\frac{\left\|\vx - e^{-(T-k\eta)}\vx_0\right\|^2}{2\left(1-e^{-2(T-k\eta)}\right)}\right),
    \end{equation*}
    we have
    \begin{equation*}
        \begin{aligned}
            \KL\left(q^\prime_{T-k\eta,0}(\cdot|\vx)\|q_{T-k\eta}(\cdot|\vx)\right)\le &\frac{L^2}{2C_{\mathrm{LSI},k}}\cdot e^{2(T-k\eta)}\left(d+\|\vx\|^2\right)\\
            \le &  2^{-1}L^2C_6\epsilon^{-8}\exp\left(C_2\cdot \epsilon^{-16n}\right)\cdot e^{2(T-k\eta)}\left(d+\|\vx\|^2\right)
        \end{aligned}
    \end{equation*}
    with Lemma~\ref{lem:inner_initialization} and Eq.~\ref{ineq:lsi_q_k}.
    It implies the iteration number $Z_k$ of inner loops to be required as
    \begin{equation*}
        \begin{aligned}
            Z_k\ge &C_5\cdot (d+m_2^2)^{12}\epsilon^{-16n-61} \exp\left(4C_2 \epsilon^{-16n}\right)\cdot \left(d+\|\vx\|^2\right)\delta^{-5}\\
            = & 2^{8}\cdot 3^4\cdot 5^2 L^2\cdot C_2C_4^{-1}C_6^2\ln\left(2^{-4}L^2C_0^4C_4^{-1}C_6\right) \cdot (d+m_2^2)^{12}\epsilon^{-16n-61} \exp\left(4C_2 \epsilon^{-16n}\right)\cdot \left(d+\|\vx\|^2\right)\delta^{-5}\\
            = & C_6 \epsilon^{-8}\exp\left(C_2\cdot \epsilon^{-16n} \right)\cdot \left(2^4\cdot 3^2 L^2C_4^{-1}C_6\cdot \left(d+m_2^2\right)^{11}\epsilon^{-52}\exp\left(3C_2\epsilon^{-16n}\right)\delta^{-4} \right)\\
            & \cdot \left(\ln \left(2^{-4}L^2C_0^4C_4^{-1}C_6\right)+3C_2\epsilon^{-16n}+60\ln\frac{1}{\epsilon} + 4\ln\frac{1}{\delta} + 10\ln(d+m_2^2)+\ln(d+\|\vx\|^2) \right)\\
            \ge & \frac{1}{C_{\mathrm{LSI},k}\tau_k}\cdot \ln\frac{2\KL\left(q^\prime_{T-k\eta,0}(\cdot|\vx)\|q_{T-k\eta}(\cdot|\vx)\right)}{\delta_{\mathrm{KL}}}.
        \end{aligned}
    \end{equation*}
    when $\ln(1/\epsilon)\ge 2$ and $\ln d\ge 2$ without loss of generality.

A sufficient condition to obtain $\mathrm{Term\ 2}\le \epsilon^2$ is to make the following inequality establish
    \begin{equation*}
        \KL\left({q}^\prime_{T-k\eta}(\cdot|\vx) \| {q}_{T-k\eta}(\cdot|\vx)\right) \le 2^{-3}\cdot \eta^2 \epsilon^2\cdot C_6^{-1}\epsilon^{8}\exp\left(-C_2\epsilon^{-16n}\right)
    \end{equation*}
    which will be dominated by Eq.~\ref{ineq:kl_gap_req_term1} in almost cases obviously.

    Hence, combining Eq.~\ref{ineq:core_term2.2}, Eq.~\ref{ineq:stage1_term1_upb} and Eq.~\ref{ineq:exp_term2_app}, there is
    \begin{equation}
        \label{ineq:error_accu_stage1}
        \begin{aligned}
            &\sum_{k=0}^{N_1} \eta \cdot \mathbb{E}_{\hat{P}_T}\left[\left\|\rvv_k(\rvx_{k\eta}) - 2\grad\ln p_{T-k\eta}(\rvx_{k\eta})  \right\|^2 \right]\le 10 N_1 \eta \cdot \epsilon^2
        \end{aligned}
    \end{equation}
    with a probability at least $1-N_1\cdot \delta /(\lfloor T / \eta \rfloor)$ which is obtained by uniformed bound.
    We require the gradient complexity in this stage will be
    \begin{equation}
        \label{ineq:cost_stage1}
        \begin{aligned}
            \mathrm{cost} = \sum_{k=0}^{N_1}n_k \mathbb{E}_{\hat{P}_T}(Z_k)=\sum_{k=0}^{N_1} & C_3\cdot\left(d+m_2^2\right)^4 \epsilon^{-18}\exp\left(C_2\cdot \epsilon^{-16n}\right)\delta^{-1} \\
            & \cdot \mathbb{E}_{\hat{P}_T}\left[C_5\cdot (d+m_2^2)^{12}\epsilon^{-16n-61} \exp\left(4C_2 \epsilon^{-16n}\right)\cdot \left(d+\|\rvx_{k\eta}\|^2\right)\delta^{-5}\right]\\
            \le & N_1\cdot C_3C_5 (d+m_2^2)^{17}\epsilon^{-16n-79}\exp\left(5C_2\epsilon^{-16n}\right)\delta^{-6}
        \end{aligned}  
    \end{equation}
    where the last inequality follows from Lemma~\ref{lem:moment_bound}.
    
    \paragraph{Stage 2: when $\frac{2L}{1+2L}\le e^{-2(T-k\eta)}\le 1$.}
    We have the LSI constant for this stage.
    It is a constant level LSI constant, which mean we should choose the sample and the iteration number similar to Stage 1. 
    Therefore, for $\mathrm{Term\ 1}$, by requiring
    \begin{equation*}
        \begin{aligned}
            n_k = & \frac{64}{L}\cdot C_0C_1^{-3}\cdot \left(d+m_2^2\right)^4\epsilon^{-10}\delta^{-1}\\ \ge &64\cdot 2\ln\frac{C_0}{2\epsilon^2} \cdot d \cdot \left(C_1 (d+m_2^2)^{-1}\epsilon^{2}\right)^{-3} \cdot \epsilon^{-2}\delta^{-1}\cdot L^{-1} \ge 64Td\eta^{-3}\epsilon^{-2}\delta^{-1}C_{\mathrm{LSI},k}^{-1}
        \end{aligned}
    \end{equation*}
    and 
    \begin{equation}
        \label{ineq:kl_gap_req_term2}
        \begin{aligned}
            &\KL\left(q^\prime_{T-k\eta}(\cdot|\vx)\|q_{T-k\eta}(\cdot | \vx)\right)\le 2^{-13}L^2\cdot C_0^{-4}C_1^8\cdot (d+m_2^2)^{-10}\epsilon^{28}\delta^{4}\\
            \le & 2^{-13}\cdot \left(2\ln\frac{C_0}{2\epsilon^2}\right)^{-4} \cdot d^{-2}\cdot \epsilon^{4}\delta^{4}\cdot C_1^8 \left(d+m_2^2\right)^{-8}\epsilon^{16}\cdot C^2_{\mathrm{LSI},k}
            \le 2^{-13}\cdot T^{-4} d^{-2} \epsilon^4\delta^{4} \eta^8 C^2_{\mathrm{LSI},k}.
        \end{aligned}
    \end{equation}
    In this condition, we have
    \begin{equation}
        \label{ineq:stage2_term1_upb}
        \begin{aligned}
            &\mathbb{P}\left[\left\|\rvv_k(\vx) - 2\mathbb{E}_{\rvx^\prime_0 \sim q^\prime_{T-k\eta}(\cdot|\vx)}\left[-\frac{\vx - e^{-(T-k\eta)}\rvx_0}{\left(1-e^{-2(T-k\eta)}\right)}\right]\right\|^2\le 4\epsilon^2\right]\\
            \ge & 1- \exp\left(- \frac{1}{\delta/(2\lfloor T / \eta \rfloor)}\right)-\frac{\delta}{2\lfloor T / \eta \rfloor} \ge 1-\frac{\delta}{\lfloor T / \eta \rfloor},
        \end{aligned}
    \end{equation}
    where $q^\prime_{T-k\eta}(\cdot|\vx)$ denotes the underlying distribution of output particles of the $k$-th inner loop.
    To achieve
    \begin{equation*}
        \KL\left(q^\prime_{T-k\eta}(\cdot|\vx_{k\eta}) \| q_{T-k\eta}(\cdot|\vx_{k\eta})\right) \le 2^{-13}L^2\cdot C_0^{-4}C_1^8\cdot (d+m_2^2)^{-10}\epsilon^{28}\delta^4\coloneqq \delta_{\mathrm{KL}},
    \end{equation*}
    Lemma~\ref{lem:conv_kl_lsi} requires the step size to satisfy
    \begin{equation*}
        \begin{aligned}
            \tau_k \le & 2^{-17}\cdot 3^{-1}\Sigma_{k,\max}^{-1}\cdot L^{2}C_0^{-4}C_1^{8}\left(d+m_2^2\right)^{-11}\epsilon^{28}\delta^4\\
            \le & \frac{\Sigma_{k,\min}}{\Sigma_{k,\max}}\cdot \frac{1}{4\Sigma_{k,\max}}\cdot \frac{1}{4d}\cdot 2^{-13}L^2\cdot C_0^{-4}C_1^8\cdot (d+m_2^2)^{-10}\epsilon^{28}\delta^{4}\\
            \le &\frac{C_{\mathrm{LSI},k}}{4\|\grad^2 g_{T-k\eta}(\cdot|\vx)\|_2^2} \cdot \frac{1}{4d}\cdot \delta_{\mathrm{KL}}
        \end{aligned}
    \end{equation*}
    and the iteration number $Z_k$ meets
    \begin{equation*}
        Z_k\ge \frac{1}{C_{\mathrm{LSI},k}\tau_k}\cdot \ln\frac{2\KL\left(q^\prime_{T-k\eta,0}(\cdot|\vx)\|q_{T-k\eta}(\cdot|\vx)\right)}{\delta_{\mathrm{KL}}}
    \end{equation*}
    where $q^\prime_{T-k\eta,0}(\cdot|\vx)$ denotes the initial distribution of the $k$-th inner loop.
    By choosing $\tau_k$ to be its upper bound and the initial distribution of $k$-th inner loop to be
    \begin{equation*}
        q^\prime_{T-k\eta,0}(\vx_0|\vx)\propto \exp\left(-\frac{\left\|\vx - e^{-(T-k\eta)}\vx_0\right\|^2}{2\left(1-e^{-2(T-k\eta)}\right)}\right),
    \end{equation*}
    we have
    \begin{equation*}
        \begin{aligned}
            \KL\left(q^\prime_{T-k\eta,0}(\cdot|\vx)\|q_{T-k\eta}(\cdot|\vx)\right)\le &\frac{L^2}{2C_{\mathrm{LSI},k}}\cdot e^{2(T-k\eta)}\left(d+\|\vx\|^2\right)\le   \frac{L}{2} \cdot e^{2(T-k\eta)}\left(d+\|\vx\|^2\right)
        \end{aligned}
    \end{equation*}
    with Lemma~\ref{lem:inner_initialization} and Eq.~\ref{ineq:term2.2.2_mid_rev}.
    It implies the iteration number $Z_k$ of inner loops to be required as
    \begin{equation*}
        \begin{aligned}
            Z_k\ge &C^\prime_5\cdot (d+m_2^2)^{12}\epsilon^{-29}\cdot \left(d+\|\vx\|^2\right)\delta^{-5}\\
            = & 2^{22}\cdot 3^4\cdot 5\cdot L^{-2}C_0^4C_1^{-8}\ln\left(\frac{2^8}{L}\cdot C_0^8C_1^{-8}\right) \cdot (d+m_2^2)^{12}\epsilon^{-29}\delta^{-5} \left(d+\|\vx\|^2\right)\\
            = & \frac{1}{\Sigma_{k,\min}}
            \cdot \left(2^{-17}\cdot 3^{-1}\Sigma_{k,\max}^{-1}\cdot L^{2}C_0^{-4}C_1^{8}\left(d+m_2^2\right)^{-11}\epsilon^{28}\delta^4\right)^{-1} \\
            & \cdot \left(\ln \left(\frac{2^8}{L}\cdot C_0^8C_1^{-8}\right)+36\ln\frac{1}{\epsilon} + 4\ln\frac{1}{\delta} +10\ln(d+m_2^2)+\ln(d+\|\vx\|^2) \right)\\
            \ge & \frac{1}{C_{\mathrm{LSI},k}\tau_k}\cdot \ln\frac{2\KL\left(q^\prime_{T-k\eta,0}(\cdot|\vx)\|q_{T-k\eta}(\cdot|\vx)\right)}{\delta_{\mathrm{KL}}}.
        \end{aligned}
    \end{equation*}
    when $\ln(1/\epsilon)\ge 2$ and $\ln d\ge 2$ without loss of generality.

    For $\mathrm{Term\ 2}$, we denote an optimal coupling between $q_{T-k\eta}(\cdot|\vx)$ and $q^\prime_{T-k\eta}(\cdot|\vx)$ to be 
    \begin{equation*}
        \gamma \in \Gamma_{\mathrm{opt}}(q_{T-k\eta}(\cdot|\vx), q^\prime_{T-k\eta}(\cdot|\vx)).
    \end{equation*}
    Hence, we have
    \begin{equation}
        \label{ineq:exp_term2_app_2}
        \begin{aligned}
            & \left\|\frac{2e^{-(T-k\eta)}}{1-e^{-2(T-k\eta)}} \left[\mathbb{E}_{ {\rvx}^\prime_0 \sim {q}^\prime_{T-k\eta}(\cdot|\vx)}[{\rvx}^\prime_0] - \mathbb{E}_{{\rvx}_0 \sim {q}_{T-k\eta}(\cdot|\vx)}[{\rvx}_0]  \right]\right\|^2\\
            = & \left\|\mathbb{E}_{({\rvx}_0, {\rvx}_0^\prime)\sim \gamma}\left[\frac{2e^{-(T-k\eta)}}{1-e^{-2(T-k\eta)}}\cdot \left({\rvx}_0 - {\rvx}_0^\prime\right)\right]\right\|^2 \le  \frac{4e^{-2(T-k\eta)}}{(1-e^{-2(T-k\eta)})^2}\cdot \mathbb{E}_{({\rvx}_0, {\rvx}_0^\prime)\sim \gamma}\left[\left\|{\rvx}_0 - {\rvx}_0^\prime\right\|^2\right]\\
            =& \frac{4e^{-2(T-k\eta)}}{(1-e^{-2(T-k\eta)})^2} W_2^2 \left({q}_{T-k\eta}(\cdot|\vx), {q}^\prime_{T-k\eta}(\cdot|\vx)\right)\\
            \le & \frac{4e^{-2(T-k\eta)}}{\left(1-e^{-2(T-k\eta)}\right)^2} \cdot \frac{2}{{C}_{\mathrm{LSI},k}}\KL\left({q}^\prime_{T-k\eta}(\cdot|\vx) \| {q}_{T-k\eta}(\cdot|\vx)\right)\\
            \le & 8\eta^{-2}C_{\mathrm{LSI},k}^{-1}\KL\left({q}^\prime_{T-k\eta}(\cdot|\vx) \| {q}_{T-k\eta}(\cdot|\vx)\right),
        \end{aligned}
    \end{equation}
    where the first inequality follows from Jensen's inequality, the second inequality follows from the Talagrand inequality and the last inequality follows from
    \begin{equation*}
        e^{-2(T-k\eta)}\le e^{-2\eta}\le 1-\eta\ \Rightarrow\ \frac{e^{-2(T-k\eta)}}{(1-e^{-2(T-k\eta)})^2}\le \eta^{-2},
    \end{equation*}
    when $\eta\le 1/2$. Therefore, a sufficient condition to obtain $\mathrm{Term\ 2}\le \epsilon^2$ is to make the following inequality establish
    \begin{equation*}
        \KL\left({q}^\prime_{T-k\eta}(\cdot|\vx) \| {q}_{T-k\eta}(\cdot|\vx)\right) \le 2^{-3}\cdot \eta^2 \epsilon^2\cdot L^{-1}
    \end{equation*}
    which will be dominated by Eq.~\ref{ineq:kl_gap_req_term2} in almost cases obviously.
    
    Hence, combining Eq.~\ref{ineq:core_term2.2}, Eq.~\ref{ineq:stage2_term1_upb} and Eq.~\ref{ineq:exp_term2_app_2}, there is
    \begin{equation}
        \label{ineq:error_accu_stage1}
        \begin{aligned}
            &\sum_{k=N_1+1}^{\lfloor T/\eta \rfloor} \eta \cdot \mathbb{E}_{\hat{P}_T}\left[\left\|\rvv_k(\rvx_{k\eta}) - 2\grad\ln p_{T-k\eta}(\rvx_{k\eta})  \right\|^2 \right]\le 10(\lfloor T/\eta \rfloor-N_1) \eta \cdot \epsilon^2
        \end{aligned}
    \end{equation}
    with a probability at least $1-\left(\lfloor T / \eta \rfloor - N_1\right)\cdot \delta /(\lfloor T / \eta \rfloor)$ which is obtained by uniformed bound.
    We require the gradient complexity in this stage will be
    \begin{equation}
        \label{ineq:cost_stage2}
        \begin{aligned}
            \mathrm{cost} = \sum_{k=N_1+1}^{\lfloor T/\eta \rfloor}n_k \mathbb{E}_{\hat{P}_T}(Z_k)=\sum_{k=N_1+1}^{\lfloor T/\eta \rfloor} & \frac{64}{L}\cdot C_0C_1^{-3}\cdot \left(d+m_2^2\right)^4\epsilon^{-10}\delta^{-1} \\
            & \cdot \mathbb{E}_{\hat{P}_T}\left[C^\prime_5\cdot (d+m_2^2)^{12}\epsilon^{-29} \cdot \left(d+\|\rvx_{k\eta}\|^2\right)\delta^{-5}\right]\\
            \le & (\lfloor T/\eta \rfloor-N_1)\cdot  C^\prime_3C^\prime_5 (d+m_2^2)^{17}\epsilon^{-39}\delta^{-6}
        \end{aligned}  
    \end{equation}
    where the last inequality follows from Lemma~\ref{lem:moment_bound}.
    Combining Eq.~\ref{ineq:cost_stage1} and Eq.~\ref{ineq:cost_stage2}, we know the total gradient complexity will be less than
    \begin{equation*}
        \max\left(C_3C_5, C_3^\prime C_5^\prime\right)\cdot C_1^{-1}C_0 \cdot (d+m_2^2)^{18}\epsilon^{-16n-83}\exp\left(5C_2\epsilon^{-16n}\right)\delta^{-6}.
    \end{equation*}
    Hence the proof is completed.
\end{proof}


\section{Auxiliary Lemmas}
\begin{lemma}(Lemma 11 of \cite{vempala2019rapid}) Assume $\nu = \exp(-f)$
is $L$-smooth. Then
    $\EE_\nu \Vert \nabla f\Vert^2 \leq d L$.\label{lem:lem11_vempala2019rapid}
    \label{lem:lem11_vempala2019rapid}
\end{lemma}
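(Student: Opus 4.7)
The plan is to prove this classical identity-plus-bound by integration by parts, which reduces the expected squared gradient norm to an expected Laplacian, and then to apply smoothness. This is the standard argument for the bound originating in Vempala--Wibisono (2019), and I would reproduce it in essentially three lines.

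First I would exploit the defining identity $\nabla \nu = -\nu \, \nabla f$, which gives
\begin{equation*}
\|\nabla f\|^2 \, \nu = \nabla f \cdot (\nabla f \, \nu) = -\nabla f \cdot \nabla \nu.
\end{equation*}
Integrating over $\R^d$ and applying integration by parts componentwise yields
\begin{equation*}
\EE_\nu \|\nabla f\|^2 = -\int \nabla f \cdot \nabla \nu \, d\vx = \int (\Delta f) \, \nu \, d\vx = \EE_\nu [\Delta f],
\end{equation*}
provided the boundary terms at infinity vanish. Under the standing regularity (finite second moments of $p_*$ and the Lipschitzness of $\nabla f$), $\nu$ and $\nabla f$ decay fast enough that the boundary contributions are zero; this is the only place where I would need to be mildly careful, and it is the ``hard part'' only in the sense that it requires a quick justification rather than any real obstacle.

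The final step is to use $L$-smoothness: $\nabla^2 f(\vx) \preceq L \, \mI$ implies $\Delta f(\vx) = \Tr(\nabla^2 f(\vx)) \leq d L$ pointwise. Taking expectations gives $\EE_\nu [\Delta f] \leq dL$, which combined with the identity above yields $\EE_\nu \|\nabla f\|^2 \leq dL$, completing the proof. No further lemmas from the paper are needed.
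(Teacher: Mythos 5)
Your proof is correct and is precisely the integration-by-parts argument used in Lemma 11 of Vempala–Wibisono (2019), which the paper cites without reproducing: the identity $\nabla \nu = -\nu\,\nabla f$ converts $\EE_\nu\|\nabla f\|^2$ into $\EE_\nu[\Delta f]$, and $L$-smoothness bounds $\Delta f = \Tr(\nabla^2 f) \leq dL$ pointwise. Since the paper simply invokes this result as an auxiliary lemma and offers no independent proof, there is nothing to compare against beyond the original source, and your argument matches it.
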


\begin{lemma}(Girsanov’s theorem, Theorem 5.22 in~\cite{le2016brownian}) 
    \label{lem:Girsanov}
    Let $P_T$ and $Q_T$ be two probability measures on path space $\mathcal{C}\left([0,T],\R^d\right)$.
    Suppose under ${P}_T$, the process $({\tilde{\rvx}}_t)_{t\in[0,T]}$ follows 
    \begin{equation*}
        \der {\tilde{\rvx}}_t = \tilde{b}_t\der t +\sigma_t \der \tilde{B}_t.
    \end{equation*}
    Under $Q_T$, the process $({\hat{\rvx}}_t)_{t\in[0,T]}$ follows 
    \begin{equation*}
        \der {\hat{\rvx}}_t = \hat{b}_t\der t +\sigma_t \der \hat{B}_t\quad \mathrm{and}\quad \hat{\rvx}_0 = \tilde{\rvx}_0.
    \end{equation*}
    We assume that for each $t\ge 0$, $\sigma_t \in \R^{d\times d}$ is a non-singular diffusion matrix. 
    Then, provided that Novikov’s condition holds
    \begin{equation*}
        \mathbb{E}_{Q_T}\left[\exp\left(\frac{1}{2} \int_0^T \left\|\sigma_t^{-1}\left(\tilde{b}_t -\hat{b}_t\right)\right\|^2\right)\right]<\infty,
    \end{equation*}
    we have
    \begin{equation*}
        \frac{\der {P}_T}{\der {Q}_T} = \exp\left(\int_0^T \sigma_t^{-1}\left(\tilde{b}_t - \hat{b}_t\right)\der B_t -\frac{1}{2}\int_0^T\left\|\sigma_t^{-1}\left(\tilde{b}_t-\hat{b}_t\right)\right\|^2\der t \right).
    \end{equation*}
\end{lemma}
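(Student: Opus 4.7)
The plan is to identify the stated exponential as the Radon--Nikodym derivative by constructing a candidate measure via a Dol\'eans--Dade exponential and then pinning down its law through weak uniqueness of the SDE. Working on the probability space carrying $Q_T$ and its driving Brownian motion $B$, I would first set $u_t \coloneqq \sigma_t^{-1}(\tilde b_t - \hat b_t)$, which is well defined because $\sigma_t$ is non-singular, and form
\begin{equation*}
Z_t = \exp\Bigl(\int_0^t u_s^{\top} dB_s - \tfrac{1}{2}\int_0^t \|u_s\|^2 ds\Bigr).
\end{equation*}
A direct application of It\^o's formula shows that $Z$ is a nonnegative local martingale under $Q_T$.

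Second, I would upgrade $Z$ to a genuine martingale by invoking Novikov's criterion, which is exactly the hypothesis in the lemma statement. The standard route localises $Z$ by a sequence of stopping times $\tau_n \uparrow T$, observes that each $Z^{\tau_n}$ is a uniformly integrable martingale, and then uses Fatou's lemma together with the finiteness of $\mathbb{E}_{Q_T}[\exp(\tfrac12\int_0^T \|u_s\|^2 ds)]$ to pass to the limit. The output is $\mathbb{E}_{Q_T}[Z_T] = 1$, so the set function $R_T$ defined by $dR_T/dQ_T = Z_T$ is a probability measure on the path space $\mathcal{C}([0,T],\mathbb{R}^d)$.

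Third, I would apply the classical Girsanov change-of-measure statement (in its basic Brownian-motion form) to conclude that under $R_T$ the shifted process $W_t \coloneqq B_t - \int_0^t u_s ds$ is a standard Brownian motion. Substituting back into the SDE satisfied by $\hat{\rvx}$ under $Q_T$ yields
\begin{equation*}
d\hat{\rvx}_t = \hat b_t dt + \sigma_t dB_t = (\hat b_t + \sigma_t u_t) dt + \sigma_t dW_t = \tilde b_t dt + \sigma_t dW_t,
\end{equation*}
so under $R_T$ the process $\hat{\rvx}$ obeys exactly the same SDE that $\tilde{\rvx}$ does under $P_T$, with the matching initial condition $\hat{\rvx}_0 = \tilde{\rvx}_0$. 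Weak uniqueness for this SDE then forces $R_T = P_T$ as measures on $\mathcal{C}([0,T],\mathbb{R}^d)$, and the stated formula $dP_T/dQ_T = Z_T$ drops out.

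The main obstacle will be the passage from local martingale to true martingale, namely the rigorous verification of Novikov's criterion, which in full generality requires the truncation-plus-Fatou argument rather than a one-line calculation. A secondary technical point is justifying weak uniqueness of the target SDE in order to identify $R_T$ with $P_T$; in the applications appearing in this paper the coefficients $\tilde b, \hat b$ arise from reverse-time or forward-time diffusions with sufficiently regular drift so that uniqueness is routine, but in an abstract statement one must at least assume enough regularity on $\tilde b, \hat b, \sigma_t$ for a weak solution to be unique in law.
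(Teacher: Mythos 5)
The paper does not prove this lemma; it is cited verbatim as Theorem 5.22 of Le Gall's textbook, so there is no internal proof to compare against. Your proposal is the standard and correct proof of Girsanov's theorem: construct the Dol\'eans--Dade exponential $Z_t$, verify it is a local martingale by It\^o's formula, upgrade to a true martingale via Novikov's criterion, apply the Brownian-motion form of Girsanov to identify $W_t = B_t - \int_0^t u_s\,ds$ as a Brownian motion under the tilted measure $R_T$, observe that $\hat{\rvx}$ under $R_T$ then solves the $P_T$-SDE with the same initial law, and conclude $R_T = P_T$ (after pushing both forward to the path space $\mathcal{C}([0,T],\R^d)$) by weak uniqueness. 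This is exactly the argument one finds in Le Gall and comparable references. Your observation that weak uniqueness in law of the target SDE is silently assumed is correct and worth making explicit: the lemma statement as written only imposes Novikov's condition and non-singularity of $\sigma_t$, but without a uniqueness-in-law hypothesis on the drift--diffusion pair $(\tilde b, \sigma)$ one cannot conclude $R_T = P_T$; in the paper's application the drifts come from an OU process and its time-reversal, where the required regularity is satisfied, so the gap is benign in context but is a real omission in the abstract formulation.
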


\begin{corollary}
    \label{cor:girsanov_gap}
    Plugging following settings
    \begin{equation*}
        {P}_T\coloneqq \tilde{P}_T^{p_T},\ {Q}_T\coloneqq \hat{P}_T,\ \tilde{b}_t\coloneqq \rvx_t + \rvv_k({\rvx}_{k\eta}),\ \hat{b}_t\coloneqq \rvx_t + 2\sigma^2\grad\ln p_{T-t}(\rvx_t), \sigma_t =\sqrt{2}\sigma,\ \mathrm{and}\ t\in[k\eta, (k+1)\eta],
    \end{equation*}
    into Lemma~\ref{lem:Girsanov} and assuming Novikov’s condition holds, then we have
    \begin{equation*}
        \KL\left(\hat{P}_T \big\| \tilde{P}_{T}^{p_T}\right) = \mathbb{E}_{\hat{P}_T}\left[\ln \frac{\der \hat{P}_T}{\der \tilde{P}_{T}^{p_T}}\right] = \frac{1}{4}\sum_{k=0}^{N-1} \mathbb{E}_{\hat{P}_T}\left[\int_{k\eta}^{(k+1)\eta}\left\|\rvv_k(\rvx_{k\eta}) - 2\grad\ln p_{T-t}(\rvx_t)  \right\|^2 \der t\right].
    \end{equation*}
\end{corollary}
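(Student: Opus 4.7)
The statement is a direct specialization of Girsanov's theorem, so the plan is to plug the designated drifts into Lemma~\ref{lem:Girsanov}, take expectation under $\hat{P}_T$ to kill the Itô integral, and read off the quadratic term. First I would verify the setup: on each block $[k\eta,(k+1)\eta]$, under $\tilde{P}_T^{p_T}$ the canonical path follows $\der\rvx_t=(\rvx_t+\rvv_k(\rvx_{k\eta}))\der t+\sqrt{2}\,\der\tilde B_t$, and under $\hat{P}_T$ it follows $\der\rvx_t=(\rvx_t+2\grad\ln p_{T-t}(\rvx_t))\der t+\sqrt{2}\,\der\hat B_t$, both started from $p_T$ by construction of $\tilde{P}_T^{p_T}$. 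The diffusion coefficient $\sigma_t=\sqrt{2}$ is common to both measures and nonsingular, so the hypotheses of Lemma~\ref{lem:Girsanov} are met once Novikov's condition is assumed.

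Next I would apply the lemma to obtain
\begin{equation*}
\ln\frac{\der\tilde{P}_T^{p_T}}{\der\hat{P}_T}=\int_0^T\sigma_t^{-1}(\tilde b_t-\hat b_t)\,\der B_t-\frac{1}{2}\int_0^T\bigl\|\sigma_t^{-1}(\tilde b_t-\hat b_t)\bigr\|^2\der t,
\end{equation*}
where $B_t$ is Brownian motion under $\hat{P}_T$. Then I would use the identity
$\KL(\hat{P}_T\|\tilde{P}_T^{p_T})=-\mathbb{E}_{\hat{P}_T}\!\left[\ln\frac{\der\tilde{P}_T^{p_T}}{\der\hat{P}_T}\right]$
together with the fact that, under Novikov's condition, the Itô integral is a true $\hat{P}_T$-martingale with mean zero. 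This leaves only the quadratic variation piece, and substituting $\sigma_t^{-1}=1/\sqrt{2}$ converts $\|\sigma_t^{-1}(\tilde b_t-\hat b_t)\|^2$ to $\tfrac{1}{2}\|\tilde b_t-\hat b_t\|^2$, producing the overall prefactor $1/4$.

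Finally I would split $[0,T]=\bigcup_{k=0}^{N-1}[k\eta,(k+1)\eta]$ and, on each block, observe that the common linear piece $\rvx_t$ in both drifts cancels, so $\tilde b_t-\hat b_t=\rvv_k(\rvx_{k\eta})-2\grad\ln p_{T-t}(\rvx_t)$. Summing the block contributions yields exactly the claimed formula. The only genuinely delicate step is the vanishing of the stochastic integral, which is why the statement explicitly carries Novikov's condition as a hypothesis; the rest is substitution and bookkeeping. (If one wanted to remove the Novikov assumption, one could proceed by the standard localization / approximation argument used in~\citep{chen2022sampling} to exchange integration and expectation, but the corollary as stated sidesteps that issue.)
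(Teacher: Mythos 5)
Your proposal is correct and takes the one natural route: apply Girsanov with the stated drifts and common diffusion $\sqrt{2}$, negate the log-RN derivative to get the KL under $\hat{P}_T$, kill the It\^o integral using Novikov, absorb $\sigma_t^{-2}=1/2$ into the coefficient to get the $1/4$, and split $[0,T]$ into step-size blocks where the affine piece $\rvx_t$ cancels. The paper states this result as an unproved corollary of Lemma~\ref{lem:Girsanov}, so there is no alternative argument to compare against; your sign bookkeeping (the negation of $\ln\frac{\der\tilde{P}_T^{p_T}}{\der\hat{P}_T}$ turning the $-\tfrac12$ into $+\tfrac12$) and the martingale-mean-zero step are the only substantive points, and both are handled correctly.
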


\begin{lemma}(Lemma C.11 in~\cite{lee2022convergence})
\label{ineq:convolution_ineq}
    Suppose that $p(\vx)\propto e^{-f(\vx)}$ is a probability density function on $\R^d$, where $f(\vx)$ is $L$-smooth, and let $\varphi_{\sigma^2}(\vx)$ be the density function of $\mathcal{N}(\vzero, \sigma^2\mI_d)$. 
    Then for $L\le \frac{1}{2\sigma^2}$, it has
    \begin{equation*}
        \left\|\grad \ln \frac{p(\vx)}{\left(p\ast \varphi_{\sigma^2}\right)(\vx)}\right\|\le 6L\sigma d^{1/2} + 2L\sigma^2 \left\|\grad f(\vx)\right\|.
    \end{equation*}
\end{lemma}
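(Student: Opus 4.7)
\textbf{Proof plan for Lemma~\ref{ineq:convolution_ineq}.}

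The plan is to reformulate both log-gradients as expectations under a common posterior distribution, apply $L$-smoothness to pass from gradient differences to displacements, and then bound moments of the posterior using strong log-concavity.

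First, I would introduce the change-of-variables $\vz = \vx-\vy$ and the posterior
\[
  q(\vz \mid \vx) \propto p(\vx - \vz)\,\varphi_{\sigma^2}(\vz) \propto \exp\bigl(-f(\vx-\vz) - \tfrac{1}{2\sigma^2}\|\vz\|^2\bigr).
\]
Using $\nabla_{\vx} p(\vx-\vz) = -p(\vx-\vz)\,\nabla f(\vx-\vz)$ and differentiating under the integral sign in $(p*\varphi_{\sigma^2})(\vx)=\int p(\vx-\vz)\varphi_{\sigma^2}(\vz)\,d\vz$, one gets $\nabla\ln(p*\varphi_{\sigma^2})(\vx) = -\mathbb{E}_{\vz\sim q(\cdot|\vx)}[\nabla f(\vx-\vz)]$. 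Combined with $\nabla\ln p(\vx)=-\nabla f(\vx)$, this yields the key identity
\[
  \nabla\ln\frac{p(\vx)}{(p*\varphi_{\sigma^2})(\vx)} \;=\; \mathbb{E}_{\vz\sim q(\cdot|\vx)}\bigl[\nabla f(\vx-\vz) - \nabla f(\vx)\bigr].
\]
By Jensen plus $L$-smoothness of $f$, the norm is bounded by $L\,\mathbb{E}_q[\|\vz\|]$, so the whole problem reduces to estimating $M := \mathbb{E}_q[\|\vz\|]$.

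Next, I would exploit the structure of $q$. Its negative log-density has Hessian $\nabla^2 f(\vx-\vz) + \sigma^{-2}I \succeq (\sigma^{-2} - L)I \succeq (2\sigma^2)^{-1}I$ under the hypothesis $L \le 1/(2\sigma^2)$; so $q$ is $(2\sigma^2)^{-1}$-strongly log-concave, and the Brascamp--Lieb inequality gives $\mathrm{Tr}\,\mathrm{Cov}_q(\vz) \le 2d\sigma^2$. For the mean, I would use the first-order identity $\mathbb{E}_q[\nabla_\vz\ln q(\vz|\vx)] = 0$, which rearranges to $\mathbb{E}_q[\vz] = \sigma^2\,\mathbb{E}_q[\nabla f(\vx-\vz)]$; applying $L$-smoothness once more gives $\|\mathbb{E}_q[\vz]\| \le \sigma^2\|\nabla f(\vx)\| + \sigma^2 L M$.

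Finally, I would close the self-referential loop. Combining $M \le \|\mathbb{E}_q[\vz]\| + \sqrt{\mathrm{Tr}\,\mathrm{Cov}_q(\vz)}$ with the two bounds above yields
\[
  (1-\sigma^2 L)\,M \;\le\; \sigma^2 \|\nabla f(\vx)\| + \sigma\sqrt{2d},
\]
and the hypothesis $\sigma^2 L \le 1/2$ gives $M \le 2\sigma^2\|\nabla f(\vx)\| + 2\sigma\sqrt{2d}$. Multiplying by $L$ produces the desired bound (with constants $2$ and $2\sqrt{2} \le 6$). The main obstacle, as one can see, is precisely this self-referential step: the displacement moment $M$ and the posterior mean both depend on $\nabla f$ evaluated along the random translate $\vx - \vz$, and cleanly decoupling them requires the smallness assumption $L\sigma^2 \le 1/2$ so that the implicit bound can be inverted with bounded constants.
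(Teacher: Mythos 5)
The paper does not prove this lemma in-house; it is cited verbatim as Lemma C.11 of \cite{lee2022convergence}, so there is no internal proof to compare against. Your argument is nonetheless a correct, self-contained proof: the identity $\nabla\ln\frac{p(\vx)}{(p*\varphi_{\sigma^2})(\vx)}=\mathbb{E}_{\vz\sim q(\cdot|\vx)}\left[\nabla f(\vx-\vz)-\nabla f(\vx)\right]$ is right, the hypothesis $L\le(2\sigma^2)^{-1}$ does make $q$ $(2\sigma^2)^{-1}$-strongly log-concave, the stationarity identity $\mathbb{E}_q[\nabla_\vz\ln q]=0$ correctly yields $\mathbb{E}_q[\vz]=\sigma^2\,\mathbb{E}_q[\nabla f(\vx-\vz)]$, and the self-referential inequality $(1-\sigma^2 L)M\le\sigma^2\|\nabla f(\vx)\|+\sigma\sqrt{2d}$ closes cleanly under $\sigma^2L\le 1/2$. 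Your resulting coefficient $2\sqrt{2}\le 6$ on the $L\sigma d^{1/2}$ term is in fact slightly sharper than the cited bound, and the $2L\sigma^2$ coefficient matches exactly.
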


\begin{lemma}(Lemma 9 in~\cite{chen2022sampling})
    \label{lem:moment_bound}
    Suppose that Assumption~\ref{ass:lips_score} and \ref{ass:second_moment} hold. 
    Let $\left(\rvx\right)_{t\in[0,T]}$ denote the forward process~\ref{eq:ou_sde}.
    \begin{enumerate}
        \item (moment bound) For all $t\ge 0$,
        \begin{equation*}
            \mathbb{E}\left[\left\|\rvx_t\right\|^2\right]\le d\vee m_2^2.
        \end{equation*}
        \item (score function bound) For all $t\ge 0$,
        \begin{equation*}
            \mathbb{E}\left[\left\|\grad \ln p_t(\rvx_t)\right\|^2\right]\le Ld.
        \end{equation*}
    \end{enumerate}
\end{lemma}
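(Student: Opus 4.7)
The plan is to handle the two bullets separately and exploit the explicit Gaussian structure of the OU process in~\eqref{eq:ou_sde}. Both parts should be short; no delicate estimates are expected.

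For the moment bound, I would first observe that the OU SDE admits the closed-form solution $\rvx_t = e^{-t}\rvx_0 + \sqrt{1-e^{-2t}}\,\xi$ with $\xi \sim \mathcal{N}(\vzero, \mI_d)$ independent of $\rvx_0 \sim p_*$. Taking the expected squared norm, the cross term vanishes by independence and zero mean of $\xi$, yielding
\begin{equation*}
  \mathbb{E}\left[\|\rvx_t\|^2\right] = e^{-2t}\,\mathbb{E}\left[\|\rvx_0\|^2\right] + (1-e^{-2t})\,d = e^{-2t} m_2^2 + (1-e^{-2t})\, d,
\end{equation*}
which is a convex combination of $m_2^2$ and $d$, hence bounded by $d \vee m_2^2$. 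Alternatively, one can apply It\^o's lemma to $\|\rvx_t\|^2$ to get the ODE $\tfrac{d}{dt}\mathbb{E}[\|\rvx_t\|^2] = -2\mathbb{E}[\|\rvx_t\|^2] + 2d$ and solve explicitly; this route avoids invoking the explicit transition kernel. Either way the bound is immediate.

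For the score function bound, the argument is a one-liner once Lemma~\ref{lem:lem11_vempala2019rapid} is available. By Assumption~\ref{ass:lips_score}, $\grad \ln p_t$ is $L$-Lipschitz, so writing $p_t = e^{-f_t}$ the potential $f_t$ is $L$-smooth. Applying Lemma~\ref{lem:lem11_vempala2019rapid} to $\nu = p_t$ gives
\begin{equation*}
  \mathbb{E}\left[\|\grad \ln p_t(\rvx_t)\|^2\right] = \mathbb{E}_{p_t}\left[\|\grad f_t\|^2\right] \le d L,
\end{equation*}
as claimed.

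There is no real obstacle here; the lemma is a routine consequence of the OU closed form and a standard smoothness-to-Fisher-information bound. The only mild subtlety worth flagging is that the moment bound relies crucially on $\rvx_0$ and $\xi$ being independent (so the cross term dies), which is built into the construction of the OU process, and that the score bound requires Assumption~\ref{ass:lips_score} to apply uniformly in $t$, which is assumed for all $t \ge 0$ in the statement.
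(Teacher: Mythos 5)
Your proof is correct. The paper itself does not prove this lemma — it is cited verbatim as Lemma~9 of \cite{chen2022sampling} — so there is no in-paper argument to compare against; however, your argument is precisely the standard one used in that source: for the moment bound you read off the explicit OU transition $\rvx_t = e^{-t}\rvx_0 + \sqrt{1-e^{-2t}}\,\xi$, use independence to kill the cross term, and observe the result is a convex combination of $m_2^2$ and $d$; for the score bound you invoke the smoothness-to-Fisher-information estimate, which the paper records as Lemma~\ref{lem:lem11_vempala2019rapid}, applied to $\nu = p_t$ with $f_t = -\ln p_t$ being $L$-smooth by Assumption~\ref{ass:lips_score}. Both steps are exactly as you say, and your alternative It\^o route for part~1 is equally valid. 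Nothing is missing.
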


\begin{lemma}(Variant of Lemma 10 in~\cite{chen2022sampling})
    \label{lem:movement_bound}
    Suppose that Assumption~\ref{ass:second_moment} holds. 
    Let $\left(\rvx\right)_{t\in[0,T]}$ denote the forward process~\ref{eq:ou_sde}. 
    For $0\le s< t$, if $t-s\le 1$, then
    \begin{equation*}
        \mathbb{E}\left[\left\|\rvx_t - \rvx_s\right\|^2\right] \le 2 \left(m_2^2 +d\right)\cdot \left(t-s\right)^2 + 4 d \cdot \left(t-s\right)
    \end{equation*}
\end{lemma}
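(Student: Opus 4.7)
The plan is to derive the bound directly from the SDE form of the forward process (Eq.~\eqref{eq:ou_sde}) rather than its closed-form OU solution, since the integral representation gives the target constants cleanly. Writing the SDE in integrated form on $[s,t]$ yields
\begin{equation*}
\rvx_t - \rvx_s \;=\; -\int_s^t \rvx_u\,\der u \;+\; \sqrt{2}\,(B_t - B_s),
\end{equation*}
so by the elementary inequality $\|a+b\|^2 \le 2\|a\|^2 + 2\|b\|^2$,
\begin{equation*}
\|\rvx_t - \rvx_s\|^2 \;\le\; 2\Bigl\|\int_s^t \rvx_u\,\der u\Bigr\|^2 \;+\; 4\,\|B_t - B_s\|^2.
\end{equation*}

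Next I would bound each term in expectation. For the drift integral, Jensen (or Cauchy--Schwarz) applied coordinatewise gives $\bigl\|\int_s^t \rvx_u\,\der u\bigr\|^2 \le (t-s)\int_s^t \|\rvx_u\|^2\,\der u$, and then Fubini together with the moment bound from Lemma~\ref{lem:moment_bound} (which ensures $\mathbb{E}[\|\rvx_u\|^2] \le d \vee m_2^2 \le d + m_2^2$ uniformly in $u$) yields
\begin{equation*}
\mathbb{E}\Bigl[\Bigl\|\int_s^t \rvx_u\,\der u\Bigr\|^2\Bigr] \;\le\; (t-s)^2 (d + m_2^2).
\end{equation*}
For the Brownian increment term, $B_t - B_s \sim \mathcal{N}(\vzero, (t-s)\mI_d)$, so $\mathbb{E}[\|B_t - B_s\|^2] = d(t-s)$. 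Combining the two estimates gives the claimed bound $2(t-s)^2(m_2^2+d) + 4d(t-s)$. Note that the hypothesis $t-s \le 1$ is not strictly needed for the inequality itself, but it ensures the $(t-s)$ term dominates the $(t-s)^2$ term in the regime of interest for the discretization analysis.

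The argument is essentially routine; the only subtlety is making sure the moment bound $\mathbb{E}[\|\rvx_u\|^2] \le d \vee m_2^2$ is available uniformly over $u \in [s,t]$, which is exactly what Lemma~\ref{lem:moment_bound} provides under Assumption~\ref{ass:second_moment}. I would not expect any serious obstacle here: the matching of constants between the derived bound and the stated bound is automatic once one uses $d \vee m_2^2 \le d + m_2^2$, and no appeal to Lipschitzness of the score or to Itô calculus beyond the isometry for the Brownian term is required.
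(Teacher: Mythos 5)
Your proof is correct and follows essentially the same route as the paper's: integrate the SDE over $[s,t]$, split via $\|a+b\|^2\le 2\|a\|^2+2\|b\|^2$, bound the drift integral with Cauchy--Schwarz/Jensen plus the uniform moment bound from Lemma~\ref{lem:moment_bound}, and compute the Brownian increment by Itô isometry. Your side remark that $t-s\le 1$ is not actually used in the derivation is also accurate; the paper carries it over from the original lemma statement but does not invoke it.
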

\begin{proof}
    According to the forward process, we have
    \begin{equation*}
        \begin{aligned}
            \mathbb{E}\left[\left\|\rvx_t - \rvx_s\right\|^2\right] = & \mathbb{E}\left[\left\|\int_s^t - \rvx_r \der r + \sqrt{2}\left(B_t - B_s\right)\right\|^2\right]\le  \mathbb{E}\left[2\left\|\int_s^t \rvx_r \der r\right\|^2 + 4\left\|B_t - B_s\right\|^2\right]\\
            \le & 2\mathbb{E}\left[\left(\int_s^t \left\|\rvx_r\right\| \der r\right)^2\right] + 4 d\cdot (t-s) \le 2\int_s^t \mathbb{E}\left[\left\|\rvx_r\right\|^2\right]\der r \cdot (t-s)+ 4 d\cdot (t-s) \\
            \le & 2 \left(m_2^2 +d\right)\cdot \left(t-s\right)^2 + 4 d \cdot \left(t-s\right),
        \end{aligned}
    \end{equation*}
    where the third inequality follows from Holder's inequality and the last one follows from Lemma~\ref{lem:moment_bound}.
    Hence, the proof is completed.
\end{proof}

\begin{lemma}(Corollary 3.1 in~\cite{chafai2004entropies})
\label{lem_conv}
If $\nu, \tilde{\nu}$ satisfy LSI with constants $\alpha, \tilde{\alpha} > 0$, respectively, then $\nu * \tilde{\nu}$ satisfies LSI with constant $(\frac{1}{\alpha} + \frac{1}{\tilde{\alpha}})^{-1}$. 
\end{lemma}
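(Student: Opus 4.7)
The plan is to deduce the convolution bound from a \emph{sharp} tensorization of LSI on the product space $\R^d\times\R^d$, then pull it back along the addition map $(x,y)\mapsto x+y$. The target constant $(\alpha^{-1}+\tilde\alpha^{-1})^{-1}$ is the additive (harmonic-style) combination of the reciprocal constants, so the key is to keep the two gradient directions separate on the right-hand side instead of collapsing them into a single $\min(\alpha,\tilde\alpha)$.

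First I would establish the anisotropic tensorization inequality
\begin{equation*}
\mathrm{Ent}_{\nu\otimes\tilde\nu}(f^2) \;\le\; \frac{2}{\alpha}\,\EE_{\nu\otimes\tilde\nu}\!\bigl[\|\grad_x f\|^2\bigr] + \frac{2}{\tilde\alpha}\,\EE_{\nu\otimes\tilde\nu}\!\bigl[\|\grad_y f\|^2\bigr]
\end{equation*}
for smooth $f\colon\R^d\times\R^d\to\R$, via the usual additivity of entropy on product measures,
\begin{equation*}
\mathrm{Ent}_{\nu\otimes\tilde\nu}(f^2) = \EE_{\nu}\!\bigl[\mathrm{Ent}_{\tilde\nu}(f(x,\cdot)^2)\bigr] + \mathrm{Ent}_{\nu}\!\bigl(\EE_{\tilde\nu}[f(\cdot,Y)^2]\bigr).
\end{equation*}
The hypothesis LSI$(\tilde\alpha)$ bounds the first term fibrewise by $\tfrac{2}{\tilde\alpha}\EE[\|\grad_y f\|^2]$. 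For the second, I set $h(x):=\EE_{\tilde\nu}[f(x,Y)^2]$ and apply LSI$(\alpha)$ to the function $\sqrt{h}$. Cauchy--Schwarz in the form $|\grad_x h|^2 = \bigl|2\,\EE_{\tilde\nu}[f\,\grad_x f]\bigr|^2 \le 4h\cdot \EE_{\tilde\nu}[\|\grad_x f\|^2]$ then yields $\|\grad_x\sqrt{h}\|^2 = |\grad_x h|^2/(4h)\le \EE_{\tilde\nu}[\|\grad_x f\|^2]$ pointwise in $x$, producing the $\tfrac{2}{\alpha}\EE[\|\grad_x f\|^2]$ half of the bound after integrating against $\nu$.

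Next I would specialize the tensorized inequality to $f(x,y)=g(x+y)$ for an arbitrary smooth $g\colon\R^d\to\R$. Since $\grad_x f=\grad_y f=(\grad g)(x+y)$, and since by the definition of convolution $\mathrm{Ent}_{\nu\otimes\tilde\nu}(g(X+Y)^2)=\mathrm{Ent}_{\nu*\tilde\nu}(g^2)$ together with $\EE_{\nu\otimes\tilde\nu}[\|\grad g(X+Y)\|^2]=\EE_{\nu*\tilde\nu}[\|\grad g\|^2]$, the tensorization collapses to
\begin{equation*}
\mathrm{Ent}_{\nu*\tilde\nu}(g^2) \;\le\; \Bigl(\tfrac{2}{\alpha}+\tfrac{2}{\tilde\alpha}\Bigr)\EE_{\nu*\tilde\nu}\!\bigl[\|\grad g\|^2\bigr],
\end{equation*}
which is exactly the LSI in the convention of \eqref{def:lsi} with constant $(\alpha^{-1}+\tilde\alpha^{-1})^{-1}$. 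The main obstacle is the Cauchy--Schwarz ``pass-through'' step: a naive tensorization of LSI would only deliver $\min(\alpha,\tilde\alpha)$, so this inequality is the one place where the independence of $X$ and $Y$ is genuinely exploited to split gradients by coordinate. A standard truncation/mollification argument on $g$ (and then $f$) handles integrability and smoothness, so no real analytic difficulty remains beyond that step.
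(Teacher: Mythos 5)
Your proof is correct. The paper does not actually prove this lemma—it merely cites Chaf\"ai (2004), Corollary 3.1—and your argument (anisotropic tensorization of LSI on the product measure, obtained by entropy additivity plus the Cauchy--Schwarz pass-through $\|\grad_x \sqrt{h}\|^2 \le \EE_{\tilde\nu}\bigl[\|\grad_x f\|^2\bigr]$, followed by restriction to $f(x,y)=g(x+y)$ under the addition map) is the standard self-contained route to the sharp $(\alpha^{-1}+\tilde\alpha^{-1})^{-1}$ constant, correctly identifying that collapsing the two gradient directions early would only yield $\min(\alpha,\tilde\alpha)/2$.
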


\begin{lemma}
    \label{lem:markov_inequ}
    Let $\rvx$ be a real random variable. 
    If there exist constants $C,A<\infty$ such that $\mathbb{E}\left[e^{\lambda \rvx}\right] \le Ce^{A\lambda^2}$ for all $\lambda>0$ then
    \begin{equation*}
        \mathbb{P}\left\{\rvx\ge t\right\}\le C\exp\left(-\frac{t^2}{4A}\right)
    \end{equation*}
\end{lemma}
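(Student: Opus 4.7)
The plan is to apply the standard Chernoff-bound technique: convert the tail-probability statement into a moment-generating-function statement via Markov's inequality, then optimize over the free parameter $\lambda > 0$. Concretely, for any $\lambda > 0$, since $e^{\lambda \cdot}$ is strictly increasing, the event $\{\rvx \ge t\}$ coincides with $\{e^{\lambda \rvx} \ge e^{\lambda t}\}$, and classical Markov's inequality applied to the nonnegative random variable $e^{\lambda \rvx}$ yields $\mathbb{P}\{\rvx \ge t\} \le e^{-\lambda t}\, \mathbb{E}[e^{\lambda \rvx}]$. Plugging in the hypothesized sub-Gaussian-type bound $\mathbb{E}[e^{\lambda \rvx}] \le C e^{A\lambda^2}$, we obtain
\begin{equation*}
\mathbb{P}\{\rvx \ge t\} \le C \exp\bigl(A\lambda^2 - \lambda t\bigr),
\end{equation*}
uniformly in $\lambda > 0$.

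Next, I would minimize the right-hand side over $\lambda > 0$. The exponent $A\lambda^2 - \lambda t$ is a convex quadratic in $\lambda$ with derivative $2A\lambda - t$, so the unconstrained minimizer is $\lambda^* = t/(2A)$, which is positive as long as $t > 0$ (the case $t \le 0$ is trivial since the bound $C\exp(-t^2/(4A)) \ge C \ge \mathbb{P}\{\rvx \ge t\}$ only requires $C \ge 1$, or one handles it separately as the statement is of interest only for $t > 0$). Substituting $\lambda^*$ back gives $A\lambda^{*2} - \lambda^* t = t^2/(4A) - t^2/(2A) = -t^2/(4A)$, which delivers the claimed bound $\mathbb{P}\{\rvx \ge t\} \le C \exp(-t^2/(4A))$.

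There is really no main obstacle here; this is the textbook Chernoff / sub-Gaussian tail argument, and both steps (Markov and quadratic optimization) are elementary. The only subtlety worth a line of comment is that the hypothesis is stated only for $\lambda > 0$ (one-sided), which is exactly what is needed since the optimizer $\lambda^* = t/(2A)$ is positive for positive $t$; no two-sided MGF control is required. If I wanted to be fully rigorous I would also note that the statement is vacuous/trivial for $t \le 0$ when $C \ge 1$, so only $t > 0$ needs the computation above.
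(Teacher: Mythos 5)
Your proof is correct and follows precisely the same Chernoff-bound argument as the paper: exponentiate, apply Markov's inequality to $e^{\lambda \rvx}$, invoke the hypothesized MGF bound, and optimize the quadratic exponent at $\lambda^* = t/(2A)$. The extra remarks about the one-sided hypothesis and the trivial $t \le 0$ case are sound but not present (or needed) in the paper's version.
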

\begin{proof}
    According to the non-decreasing property of exponential function $e^{\lambda \vx}$, we have
    \begin{equation*}
        \mathbb{P}\left\{\rvx\ge t\right\} = \mathbb{P}\left\{e^{\lambda \rvx}\ge e^{\lambda t}\right\} \le \frac{\mathbb{E}\left[e^{\lambda \rvx}\right]}{e^{\lambda t}}\le C\exp\left(A\lambda^2 -\lambda t\right),
    \end{equation*}
    The first inequality follows from Markov inequality and the second follows from the given conditions.
    By minimizing the RHS, i.e., choosing $\lambda = t/(2A)$, the proof is completed.
\end{proof}

\begin{lemma}
    \label{lem:lsi_concentration}
    If $\nu$ satisfies a log-Sobolev inequality with log-Sobolev constant $\mu$ then every $1$-Lipschitz function $f$ is integrable with respect to $\nu$ and satisfies the concentration inequality
    \begin{equation*}
        \nu\left\{f\ge \mathbb{E}_\nu [f] +t\right\}\le \exp\left(-\frac{\mu t^2}{2}\right).
    \end{equation*}
\end{lemma}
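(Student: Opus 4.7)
The plan is to apply the classical Herbst argument: feed $g = e^{\lambda f / 2}$ into the log-Sobolev inequality to obtain a differential inequality for the moment generating function of $f$, then invoke Lemma~\ref{lem:markov_inequ}. By a standard truncation/mollification step we may assume without loss of generality that $f$ is smooth, bounded, and $1$-Lipschitz, and then pass to the limit at the end to handle general $1$-Lipschitz $f$; this also justifies integrability of $f$ against $\nu$.

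First I would set $H(\lambda) := \mathbb{E}_\nu[e^{\lambda f}]$ for $\lambda > 0$ and plug $g = e^{\lambda f/2}$ into \eqref{def:lsi}. Since $\nabla g = (\lambda/2) e^{\lambda f / 2}\, \nabla f$ and $\|\nabla f\| \le 1$, the right-hand side of the LSI is bounded by $\tfrac{\lambda^2}{2\mu}\, H(\lambda)$, while the left-hand side equals $\lambda H'(\lambda) - H(\lambda) \ln H(\lambda)$. This produces the Herbst inequality
\begin{equation*}
\lambda H'(\lambda) - H(\lambda) \ln H(\lambda) \;\le\; \frac{\lambda^2}{2\mu}\, H(\lambda).
\end{equation*}
Dividing by $\lambda^2 H(\lambda)$ and recognizing the left side as the derivative of $K(\lambda) := \lambda^{-1} \ln H(\lambda)$ gives $K'(\lambda) \le 1/(2\mu)$. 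Since $\lim_{\lambda \downarrow 0} K(\lambda) = \mathbb{E}_\nu[f]$, integrating from $0$ to $\lambda$ yields
\begin{equation*}
\mathbb{E}_\nu\!\left[e^{\lambda (f - \mathbb{E}_\nu f)}\right] \;\le\; \exp\!\left(\frac{\lambda^2}{2\mu}\right),
\end{equation*}
so the centered variable $f - \mathbb{E}_\nu f$ has sub-Gaussian MGF with parameters $C = 1$, $A = 1/(2\mu)$.

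Next I would apply Lemma~\ref{lem:markov_inequ} to $X = f - \mathbb{E}_\nu f$ with these constants; it gives directly $\nu\{f - \mathbb{E}_\nu f \ge t\} \le \exp(-\mu t^2/2)$, which is the claim. Finally I would remove the smoothness/boundedness assumption on $f$ by mollifying with a Gaussian of vanishing variance and truncating on large balls: both operations preserve the $1$-Lipschitz property, the approximants converge pointwise $\nu$-a.e., and Fatou's lemma together with the uniform tail bound just proved secures both integrability of $f$ and the concentration inequality in the limit. I expect the main subtle point to be handling this last measure-theoretic approximation carefully (ensuring $\mathbb{E}_\nu[f]$ of the approximants converges to $\mathbb{E}_\nu[f]$), but no step requires additional ideas beyond what is already in the LSI.
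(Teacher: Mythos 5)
Your proof is correct and follows essentially the same route as the paper: both apply the Herbst argument by plugging $g = e^{\lambda f/2}$ into the LSI, derive the differential inequality for $\lambda^{-1}\ln\mathbb{E}_\nu[e^{\lambda f}]$, integrate from $\lambda = 0$ using the limiting value $\mathbb{E}_\nu[f]$, and then invoke Lemma~\ref{lem:markov_inequ} with $C=1$, $A=1/(2\mu)$ to obtain $\exp(-\mu t^2/2)$. The truncation/smoothing reduction to bounded smooth $f$ is likewise present in the paper's proof, and your attention to the measure-theoretic passage to the limit is a reasonable addition though the paper treats it as routine.
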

\begin{proof}
    According to Lemma~\ref{lem:markov_inequ}, it suffices to prove that for any $1$-Lipschitz function $f$ with expectation $\mathbb{E}_\nu [f] = 0$,
    \begin{equation*}
        \mathbb{E}\left[e^{\lambda f}\right]\le e^{\lambda^2/(2\mu)}.
    \end{equation*}
    To prove this, it suffices, by a routine truncation and smoothing argument, to prove it for bounded, smooth, compactly supported functions $f$ such that $\|\grad f\|\le 1$. 
    Assume that $f$ is such a function.
    Then for every $\lambda\ge 0$ the log-Sobolev inequality implies
    \begin{equation*}
        \mathrm{Ent}_\nu \left(e^{\lambda f}\right) \le \frac{2}{\mu}\mathbb{E}_\nu \left[\left\|\grad e^{\lambda f/2}\right\|^2\right],
    \end{equation*}
    which is written as
    \begin{equation*}
        \mathbb{E}_\nu\left[\lambda f e^{\lambda f}\right] - \mathbb{E}_\nu\left[e^{\lambda f}\right]\log \mathbb{E}\left[e^{\lambda f}\right]\le \frac{\lambda^2}{2\mu}\mathbb{E}_\nu\left[\left\|\grad f\right\|^2 e^{\lambda f}\right].
    \end{equation*}
    With the notation $\varphi(\lambda)=\mathbb{E}\left[e^{\lambda f}\right]$ and $\psi(\lambda)=\log \varphi (\lambda)$, the above inequality can be reformulated as
    \begin{equation*}
        \begin{aligned}
            \lambda \varphi^\prime(\lambda)\le & \varphi(\lambda) \log \varphi(\lambda)+ \frac{\lambda^2}{2\mu}\mathbb{E}_\nu\left[\left\|\grad f\right\|^2 e^{\lambda f}\right]\\
            \le & \varphi(\lambda) \log \varphi(\lambda)+ \frac{\lambda^2}{2\mu}\varphi(\lambda),
        \end{aligned}
    \end{equation*}
    where the last step follows from the fact $\left\|\grad f\right\|\le 1$.
    Dividing both sides by $\lambda^2 \varphi(\lambda)$ gives
    \begin{equation*}
         \big(\frac{\log(\varphi(\lambda))}{\lambda}\big)^{\prime} \le \frac{1}{2 \mu}.
    \end{equation*}
    Denoting that the limiting value $\frac{\log(\varphi(\lambda))}{\lambda} \mid_{\lambda = 0} = \lim_{\lambda \to 0^{+}} \frac{\log(\varphi(\lambda))}{\lambda} = \mathbb{E}_{\nu} [f] = 0$, we have
    \begin{equation*}
        \frac{\log(\varphi(\lambda))}{\lambda} = \int_{0}^{\lambda} \big(\frac{\log(\varphi(t))}{t}\big)^{\prime} dt \le \frac{\lambda}{2 \mu},
    \end{equation*}
    which implies that 
    \begin{equation*}
        \psi(\lambda)\le \frac{\lambda^2}{2\mu} \Longrightarrow \varphi(\lambda)\le \exp\left(\frac{\lambda^2}{2\mu}\right)
    \end{equation*}
    Then the proof can be completed by a trivial argument of Lemma~\ref{lem:markov_inequ}.
\end{proof}

\begin{lemma}(Theorem 1 in~\cite{vempala2019rapid})
    \label{lem:conv_kl_lsi}
    Suppose $p_*$  satisfies LSI with constant $\mu>0$ and is $L$-smooth. 
    For any $\rvx_0\sim p_0$ with $\KL(p_0\|p_\infty)<\infty$, the iterates $\rvx_k\sim p_k$ of ULA with step size $0< \tau \le\frac{\mu}{4L^2} $ satisfy
    \begin{equation*}
        \KL\left(p_t\|p_\infty\right)\le e^{-\mu \tau k}\KL\left(p_0\|p_\infty\right)+\frac{8\tau dL^2}{\mu}.
    \end{equation*}
    Thus, for any $\delta>0$, to achieve $\KL\left(p_t\|p_\infty\right)\le \delta$, it suffices  to run ULA with step size
    \begin{equation*}
        0<\tau  \le\frac{\mu}{4L^2}\min\left\{1, \frac{\delta}{4d}\right\}
    \end{equation*}
    for 
    \begin{equation*}
        k\ge \frac{1}{\mu \tau}\log \frac{2\KL\left(p_0\|p_\infty\right)}{\delta}.
    \end{equation*}
\end{lemma}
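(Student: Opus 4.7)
}
The plan is to follow the continuous-time interpolation approach. Introduce the stochastic process
\begin{equation*}
\der \rvx_t \;=\; -\grad f(\rvx_{k\tau})\,\der t + \sqrt{2}\,\der B_t,\qquad t\in[k\tau,(k+1)\tau],
\end{equation*}
where $p_* \propto e^{-f}$. By construction, the marginal of $\rvx_{k\tau}$ matches the $k$-th ULA iterate, so it suffices to control $\KL(p_t\|p_*)$ along this interpolation. Let $p_{t\mid k\tau}(x\mid y)$ be the conditional density of $\rvx_t$ given $\rvx_{k\tau}=y$, and let $p_t(x)$ be the time-$t$ marginal. The Fokker--Planck equation for the interpolation reads
\begin{equation*}
\partial_t p_t(x) \;=\; \grad\!\cdot\!\Bigl(p_t(x)\,\EE\bigl[\grad f(\rvx_{k\tau})\,\big|\,\rvx_t=x\bigr]\Bigr) + \Delta p_t(x),
\end{equation*}
which I will use to differentiate the KL functional.

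Differentiating $\KL(p_t\|p_*)=\int p_t\log(p_t/p_*)\,\der x$ and integrating by parts in the usual way, I obtain
\begin{equation*}
\frac{\der}{\der t}\KL(p_t\|p_*) \;=\; -\EE_{p_t}\!\left[\bigl\langle \grad\log\tfrac{p_t}{p_*},\ \grad\log\tfrac{p_t}{p_*} + \grad f(x) - \EE[\grad f(\rvx_{k\tau})\mid \rvx_t=x]\bigr\rangle\right].
\end{equation*}
Splitting off the ``good'' Fisher-information part and applying Young's inequality $\langle a,b\rangle \le \tfrac14\|a\|^2 + \|b\|^2$ to the cross term gives
\begin{equation*}
\frac{\der}{\der t}\KL(p_t\|p_*) \;\le\; -\tfrac{3}{4}\,I(p_t\|p_*) \;+\; \EE\bigl[\|\grad f(\rvx_t) - \grad f(\rvx_{k\tau})\|^2\bigr],
\end{equation*}
after using the tower property to drop the conditional expectation inside the last norm. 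This is the step where I expect to spend the most care, because the precise constant $3/4$ (and the resulting requirement $\tau\le \mu/(4L^2)$) depends on how the Young's inequality split is balanced.

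For the discretization term, $L$-smoothness gives $\|\grad f(\rvx_t)-\grad f(\rvx_{k\tau})\|^2 \le L^2\|\rvx_t-\rvx_{k\tau}\|^2$. Using $\rvx_t-\rvx_{k\tau}=-(t-k\tau)\grad f(\rvx_{k\tau})+\sqrt{2}(B_t-B_{k\tau})$, one bounds
\begin{equation*}
\EE\|\rvx_t-\rvx_{k\tau}\|^2 \;\le\; 2\tau^2\,\EE\|\grad f(\rvx_{k\tau})\|^2 + 2d(t-k\tau),
\end{equation*}
and the gradient moment bound $\EE_{p_{k\tau}}\|\grad f\|^2 \le 2LI(p_{k\tau}\|p_*) + 2Ld$ (a standard consequence of $L$-smoothness, cf.\ Lemma~\ref{lem:lem11_vempala2019rapid} in the limit) lets me absorb the Fisher-information piece back into the $-\tfrac34 I$ term once $\tau\le \mu/(4L^2)$. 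The residue is $O(\tau d L^2)$, yielding the differential inequality
\begin{equation*}
\frac{\der}{\der t}\KL(p_t\|p_*) \;\le\; -\tfrac{1}{2}\,I(p_t\|p_*) + 4\tau dL^2.
\end{equation*}

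Finally, I apply the log-Sobolev inequality $I(p_t\|p_*)\ge 2\mu\,\KL(p_t\|p_*)$ to convert this into $\frac{\der}{\der t}\KL \le -\mu\,\KL + 4\tau dL^2$, and Gr\"onwall's lemma over $[0,k\tau]$ yields
\begin{equation*}
\KL(p_{k\tau}\|p_*) \;\le\; e^{-\mu\tau k}\KL(p_0\|p_*) + \frac{4\tau dL^2}{\mu}(1-e^{-\mu\tau k}) \;\le\; e^{-\mu\tau k}\KL(p_0\|p_*)+\frac{8\tau dL^2}{\mu},
\end{equation*}
which is exactly the stated bound (with slack absorbed into the constant $8$). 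The iteration-complexity corollary then follows by balancing the two terms: take $\tau\le \mu\delta/(32 dL^2)$ to make the bias $\le \delta/2$, and $k\ge (\mu\tau)^{-1}\log(2\KL(p_0\|p_*)/\delta)$ to make the contraction term $\le\delta/2$. The main obstacle is the bookkeeping that pins down the step-size restriction $\tau\le \mu/(4L^2)$; everything else is differentiation plus Gr\"onwall.
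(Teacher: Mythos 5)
The paper cites this lemma directly from Vempala and Wibisono (2019) without reproducing a proof, so the relevant comparison is to the argument in that source. Your sketch is correct and follows exactly the interpolation approach used there: construct the piecewise-linear-drift interpolation whose time-$k\tau$ marginal coincides with the ULA iterate, differentiate $\KL(p_t\|p_*)$ via the associated Fokker--Planck equation, split the resulting expression into a Fisher-information decrease plus a discretization error via Young's inequality, bound the error using $L$-smoothness and a gradient second-moment lemma, invoke LSI to close the differential inequality, and finish with Gr\"onwall. One small slip: the auxiliary gradient-moment bound you invoke, $\EE_{p_{k\tau}}\|\grad f\|^2 \le 2L\,I(p_{k\tau}\|p_*) + 2Ld$, is not dimensionally consistent (the factor of $L$ in front of $I$ should not be there); the bound actually used in the source is $\EE_{\rho}\|\grad f\|^2 \le I(\rho\|p_*) + 2dL$, which one obtains by integrating by parts against $\Delta f$ rather than by invoking the stationary-measure Lemma~\ref{lem:lem11_vempala2019rapid} ``in the limit.'' With that substitution, the absorption of the Fisher-information contribution under $\tau \le \mu/(4L^2)$ and the resulting Gr\"onwall bound go through as you describe, and the iteration-complexity corollary follows from balancing the two terms exactly as you state.
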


\begin{lemma}(Variant of Lemma 10 in~\cite{cheng2018convergence})
    \label{lem:strongly_lsi}
    Suppose $-\ln p_\infty$ is $m$-strongly convex function, for any distribution with density function $p$, we have
    \begin{equation*}
        \KL\left(p\|p_\infty\right)\le \frac{1}{2m}\int p(\vx)\left\|\grad\ln \frac{p(\vx)}{p_*(\vx)}\right\|^2\der\vx.
    \end{equation*}
    By choosing $p(\vx)=g^2(\vx)p_*(\vx)/\mathbb{E}_{p_*}\left[g^2(\rvx)\right]$ for the test function $g\colon \R^d\rightarrow \R$ and  $\mathbb{E}_{p_*}\left[g^2(\rvx)\right]<\infty$, we have
    \begin{equation*}
        \mathbb{E}_{p_*}\left[g^2\ln g^2\right] - \mathbb{E}_{p_*}\left[g^2\right]\ln \mathbb{E}_{p_*}\left[g^2\right]\le \frac{2}{m} \mathbb{E}_{p_*}\left[\left\|\grad g\right\|^2\right],
    \end{equation*}
    which implies $p_*$ satisfies $m$-log-Sobolev inequality.
\end{lemma}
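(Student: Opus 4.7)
The plan is to prove the first inequality---the bound on $\KL$ by the relative Fisher information---via the standard entropy dissipation argument along the overdamped Langevin semigroup with invariant measure $p_*$, and then obtain the second inequality (the classical Sobolev form of LSI) by a direct substitution of a test-function density. First I would consider the Langevin diffusion $\der \rvx_t = \grad\ln p_*(\rvx_t)\der t + \sqrt{2}\der B_t$ initialized at $\rvx_0\sim p$, and let $p_t$ denote the law of $\rvx_t$. The Fokker-Planck equation $\partial_t p_t = \grad\cdot(p_t\grad\ln(p_t/p_*))$ together with a standard integration by parts yields the de Bruijn identity $\frac{\der}{\der t}\KL(p_t\|p_*) = -I(p_t\|p_*)$, where $I(p_t\|p_*)\coloneqq\int p_t\|\grad\ln(p_t/p_*)\|^2\der\vx$ is the relative Fisher information.

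The core step is to establish exponential decay of the Fisher information under the strong log-concavity assumption $-\grad^2\ln p_* \succeq m\mI$. Here I would invoke the Bakry-\'Emery $\Gamma_2$-calculus: a Bochner-type identity combined with the hypothesis gives $\frac{\der}{\der t}I(p_t\|p_*)\le -2m\,I(p_t\|p_*)$, so $I(p_t\|p_*)\le e^{-2mt}I(p\|p_*)$. Since ergodicity of the Langevin diffusion forces $\KL(p_t\|p_*)\to 0$, integrating the de Bruijn identity from $0$ to $\infty$ yields
\[
\KL(p\|p_*) \;=\; \int_0^\infty I(p_t\|p_*)\,\der t \;\le\; \frac{1}{2m}\,I(p\|p_*),
\]
which is the first inequality of the lemma.

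For the second inequality, I would substitute $p(\vx) = g^2(\vx)p_*(\vx)/Z$ with $Z = \mathbb{E}_{p_*}[g^2]$ into the first inequality. The KL divergence evaluates to $\KL(p\|p_*) = \frac{1}{Z}\bigl(\mathbb{E}_{p_*}[g^2\ln g^2] - \mathbb{E}_{p_*}[g^2]\ln\mathbb{E}_{p_*}[g^2]\bigr)$, since the $\ln Z$ terms can be absorbed into the second expectation. The log-ratio derivative is $\grad\ln(p/p_*) = 2\grad g/g$, so the Fisher information becomes $I(p\|p_*) = \frac{4}{Z}\mathbb{E}_{p_*}[\|\grad g\|^2]$. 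Plugging both into the first inequality and clearing the common factor $Z$ produces exactly the stated LSI in Sobolev form.

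The main obstacle is the Bakry-\'Emery commutation inequality that drives the exponential decay of Fisher information; it requires the $\Gamma_2$-calculus and careful integration by parts which are only fully rigorous for smooth, sufficiently integrable $p$. To handle a general $p$ with $\KL(p\|p_*) < \infty$ and a general test function $g$ with $\mathbb{E}_{p_*}[g^2]<\infty$, I would rely on a standard mollification-and-truncation argument (approximating $p$ by smooth, compactly supported densities bounded away from zero, or equivalently approximating $g$ by bounded smooth functions) together with lower semicontinuity of both sides under these approximations. These technicalities are classical and follow the treatment in the reference \cite{cheng2018convergence}, which is why the result is only labelled as a variant of a known lemma.
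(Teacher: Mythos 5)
Your proof is correct, and it is worth noting that the paper itself does not actually prove this lemma: it is stated as a ``variant of Lemma~10 in Cheng and Bartlett (2018)'' and the proof is delegated to that reference. Your argument via the Langevin semigroup is the standard Bakry--\'Emery route from strong log-concavity to the log-Sobolev inequality in entropy--Fisher form: the de Bruijn identity $\frac{\der}{\der t}\KL(p_t\|p_*)=-I(p_t\|p_*)$, the $\Gamma_2$-criterion under $-\grad^2\ln p_*\succeq m\mI$ giving $I(p_t\|p_*)\le e^{-2mt}I(p\|p_*)$, and integration in $t$. Your substitution step is also exactly the intended computation: with $p=g^2p_*/Z$ one has $\KL(p\|p_*)=Z^{-1}\bigl(\E_{p_*}[g^2\ln g^2]-Z\ln Z\bigr)$ and $\grad\ln(p/p_*)=2\grad g/g$, hence $I(p\|p_*)=4Z^{-1}\E_{p_*}[\|\grad g\|^2]$, and cancelling $Z$ recovers the Sobolev form of the LSI.

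Two small remarks. First, the justification ``ergodicity forces $\KL(p_t\|p_*)\to 0$'' is slightly loose: ergodicity alone gives weak convergence of $p_t$ to $p_*$, not entropy convergence. A cleaner way to close the integration is to note that $t\mapsto\KL(p_t\|p_*)$ is nonincreasing and hence has a limit $c\ge 0$, so integrating de Bruijn gives $\KL(p\|p_*)-c\le\frac{1}{2m}I(p\|p_*)$, and then to establish $c=0$ by first reducing (via exactly the mollification/truncation you propose anyway) to smooth, compactly supported, strictly positive initial data, for which the hypercontractive heat flow gives pointwise convergence $p_t\to p_*$ and dominated convergence of the entropy. Second, the lemma statement itself has a notational slip---$p_\infty$ appears in the hypothesis but $p_*$ in the Fisher integral and in the conclusion; your treatment of them as the same reference measure is the intended reading.
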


\begin{lemma}
    \label{lem:forward_convergence}
    Using the notation in Table.~\ref{tab:notation_list_app}, for each $t\in[0,\infty)$, the underlying distribution $p_t$ of the forward process satisfies
    \begin{equation*}
        \KL\left(p_t\|p_\infty\right)\le 4(dL+m_2^2)\cdot \exp\left(-\frac{t}{2}\right)
    \end{equation*}
\end{lemma}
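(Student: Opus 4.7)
\textbf{Proof proposal for Lemma \ref{lem:forward_convergence}.}
The plan is to exploit the fact that the stationary distribution $p_\infty = \mathcal{N}(\vzero, \mI_d)$ of the OU process is $1$-strongly log-concave, hence satisfies a log-Sobolev inequality with constant $\mu_\infty = 1$ (Lemma~\ref{lem:strongly_lsi}). The convergence of $p_t$ to $p_\infty$ in KL divergence along the OU flow is then a standard consequence of the de Bruijn identity combined with LSI of the target.

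First I would differentiate $\KL(p_t \| p_\infty)$ along the Fokker--Planck equation associated with SDE~\eqref{eq:ou_sde}. A direct computation (de Bruijn identity for OU) gives
\begin{equation*}
    \frac{\der}{\der t}\KL\left(p_t \| p_\infty\right) = -\mathbb{E}_{p_t}\left[\left\|\grad\ln\frac{p_t}{p_\infty}\right\|^2\right] = -I\left(p_t \| p_\infty\right).
\end{equation*}
Applying LSI for $p_\infty$ with constant $1$ yields $I(p_t \| p_\infty) \ge 2\,\KL(p_t \| p_\infty)$, so Gr\"onwall's inequality gives $\KL(p_t \| p_\infty) \le e^{-2t}\KL(p_0 \| p_\infty)$, which in particular is bounded by $e^{-t/2}\KL(p_0 \| p_\infty)$.

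Next I would control $\KL(p_0 \| p_\infty)$ using LSI for $p_\infty$ one more time (applied to $p_0 = p_*$):
\begin{equation*}
    \KL(p_0 \| p_\infty) \le \frac{1}{2}\,I(p_0 \| p_\infty) = \frac{1}{2}\mathbb{E}_{p_0}\left[\left\|\grad\ln p_0(\vx) + \vx\right\|^2\right] \le \mathbb{E}_{p_0}\left[\left\|\grad\ln p_0\right\|^2\right] + \mathbb{E}_{p_0}\left[\|\vx\|^2\right].
\end{equation*}
Assumption \ref{ass:lips_score} at $t=0$ says $-\ln p_0 = f_*$ is $L$-smooth, so by Lemma~\ref{lem:lem11_vempala2019rapid} the first term is bounded by $dL$; Assumption \ref{ass:second_moment} bounds the second by $m_2^2$. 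Combining, $\KL(p_0 \| p_\infty) \le dL + m_2^2$, and therefore
\begin{equation*}
    \KL(p_t \| p_\infty) \le (dL + m_2^2)\,e^{-2t} \le 4(dL + m_2^2)\,e^{-t/2},
\end{equation*}
the claimed bound (with slack).

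The main technical obstacle is the justification of the de Bruijn identity -- exchanging differentiation and integration, and handling boundary terms at infinity. For the OU process with Gaussian stationary measure and a target $p_*$ satisfying \ref{ass:lips_score}, \ref{ass:second_moment}, all iterates $p_t$ inherit sufficient smoothness and decay (via the explicit Mehler-type convolution $p_t = (e^{-t}\cdot)_\# p_0 \ast \varphi_{1-e^{-2t}}$) so these steps can be rigorously justified by standard parabolic regularity; alternatively one can first prove the inequality for a mollified/truncated $p_0$ and pass to the limit. A secondary point is ensuring the LSI-based Fisher information bound applies at $t=0$, which follows once we observe $p_0 \propto e^{-f_*}$ and $f_*$ is $L$-smooth by Assumption~\ref{ass:lips_score}.
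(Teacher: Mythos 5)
Your argument is correct and follows the same route as the paper: differentiate $\KL(p_t\|p_\infty)$ along the OU Fokker--Planck flow to obtain the relative Fisher information, invoke a log-Sobolev inequality for the Gaussian stationary measure $p_\infty$, and apply Gr\"onwall. Two points are worth flagging. First, your LSI step is the sharp one: with constant $\mu=1$ you get $\KL(p_t\|p_\infty)\le\tfrac{1}{2}I(p_t\|p_\infty)$ and hence decay $e^{-2t}$; the paper instead records $\KL\le 2I$, which is off by a factor of four relative to the LSI it cites and only gives $e^{-t/2}$. Your relaxation $e^{-2t}\le e^{-t/2}$ reconciles the two with the stated bound. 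Second, and more substantively, the paper's proof terminates at $\KL(p_t\|p_\infty)\le C_0\,e^{-t/2}$ with $C_0=\KL(p_0\|p_\infty)$ and never establishes the claimed prefactor $4(dL+m_2^2)$. Your second application of LSI at $t=0$, giving $\KL(p_0\|p_\infty)\le\tfrac{1}{2}\,\E_{p_0}\left\|\grad\ln p_0+\vx\right\|^2\le\E_{p_0}\left\|\grad f_*\right\|^2+\E_{p_0}\left\|\vx\right\|^2\le dL+m_2^2$ via Lemma~\ref{lem:lem11_vempala2019rapid} (using $L$-smoothness of $f_*$ from \ref{ass:lips_score} at $t=0$) and \ref{ass:second_moment}, is exactly the missing step, so your write-up actually closes a small gap in the paper's proof of this lemma.
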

\begin{proof}
    Consider the Fokker–Planck equation of the forward process, i.e.,
    \begin{equation*}
        \der \rvx_t = -\rvx_t \der t + \sqrt{2}\der B_t,\quad \rvx_0\sim p_0 \propto e^{-f_*},
    \end{equation*}
    we have
    \begin{equation*}
        \partial_t p_t(\vx) = \nabla\cdot\left(p_t(\vx) \vx \right)+ \Delta p_t(\vx) = \nabla \cdot \left(p_t(\vx)\grad \ln \frac{p_t(\vx)}{e^{-\frac{1}{2}\|\vx\|^2}}\right).
    \end{equation*}
    It implies that the stationary distribution is 
    \begin{equation}
        \label{def:forward_stat}
        p_\infty \propto \exp\left(-\frac{1}{2}\cdot \left\|\vx\right\|^2\right).
    \end{equation}
    Then, we consider the KL convergence of $(\rvx_t)_{t\ge 0}$, and have
    \begin{equation}
        \label{ineq:kl_decreasing}
        \begin{aligned}
        \frac{\der \KL(p_t\|p_\infty)}{\der t} = & \frac{\der}{\der t}\int p_t(\vx)\ln\frac{p_t(\vx)}{p_\infty(\vx)}\der\vx = \int \partial_t p_t(\vx) \ln \frac{p_t(\vx)}{p_\infty(\vx)}\der \vx\\
        = & \int \nabla\cdot \left(p_t(\vx)\grad \ln \frac{p_t(\vx)}{p_\infty(\vx)}\right)\cdot \ln \frac{p_t(\vx)}{p_\infty(\vx)}\der\vx\\
        = & - \int p_t(\vx) \left\|\grad \ln \frac{p_t(\vx)}{p_\infty(\vx)}\right\|^2\der\vx.
        \end{aligned}
    \end{equation}
    According to Proposition 5.5.1 of~\cite{bakry2014analysis}, if $p_\infty$ is a centered Gaussian measure on $\R^d$ with covariance matrix $\Sigma$, for every smooth function $f$ on $\R^d$, we have
    \begin{equation*}
        \E_{p_\infty}\left[f^2\log f^2\right] - \E_{p_\infty}\left[f^2\right]\log\E_{p_\infty}\left[f^2\right] \le 2\E_{p_\infty}\left[\Sigma \nabla f\cdot \nabla f\right]
    \end{equation*}
    For the forward stationary distribution~Eq.~\ref{def:forward_stat}, we have $\Sigma = \mI$. 
    Hence, by choosing $f^2(\vx) = p_t(\vx)/p_\infty(\vx)$, we have
    \begin{equation*}
        \KL\left(p_t\|p_\infty\right)\le 2 \int p_t(\vx)\left\|\grad \ln \frac{p_t(\vx)}{p_\infty(\vx)}\right\|^2 \der\vx
    \end{equation*}
    Plugging this inequality into Eq.~\ref{ineq:kl_decreasing}, we have
    \begin{equation*}
        \frac{\der \KL(p_t\|p_\infty)}{\der t}= - \int p_t(\vx) \left\|\grad \ln \frac{p_t(\vx)}{p_\infty(\vx)}\right\|^2\der\vx \le -\frac{1}{2}\KL(p_t\|p_\infty).
    \end{equation*}
    Integrating implies the desired bound,i.e.,
    \begin{equation*}
        \KL(p_t\|p_\infty) \le \exp\left(-\frac{t}{2}\right)\KL(p_0\|p_\infty) = C_0\exp\left(-\frac{t}{2}\right).
    \end{equation*}
\end{proof}

\begin{lemma}
    \label{lem:con_pre_outb}
    Suppose $f_1 \colon \R^d\rightarrow\R$ and $f_2\colon \R^d\rightarrow\R$ is $\mu$-strongly convex for $\|\vx\|\ge R$.
    That means $v_1(\vx)\coloneqq f_1(\vx)-\mu/2\cdot \left\|\vx\right\|^2$ (and $v_2(\vx)\coloneqq f_2(\vx)-\mu/2\cdot \left\|\vx\right\|^2$) is convex on $\Omega=\R^d\setminus B(\vzero, R)$.
    Specifically, we require that $\vx \in \Omega$, any convex combination of $\vx=\sum_{i=1}^k \lambda_i \vx_i$ with $\vx_1,\ldots,\vx_k \in \Omega$ satisfies
    \begin{equation*}
        v_1(\vx)\le \sum_{i=1}^k \lambda_k v_1(\vx_i).
    \end{equation*}
    Then, we have $f_1+f_2$ is $2\mu$-strongly convex for $\|\vx\|\ge R$.
\end{lemma}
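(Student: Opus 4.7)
The plan is to directly add the two restricted convexity inequalities. By the definition of $2\mu$-strong convexity outside $B(\vzero,R)$, it suffices to show that the function $w \coloneqq (f_1+f_2) - \mu\|\cdot\|^2 = v_1 + v_2$ inherits the same ``convex combinations stay in $\Omega$'' property that the hypothesis ascribes to $v_1$ and $v_2$ individually.

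More concretely, I would take an arbitrary $\vx \in \Omega$ together with any representation $\vx = \sum_{i=1}^k \lambda_i \vx_i$ where $\lambda_i \geq 0$, $\sum_i \lambda_i = 1$, and each $\vx_i \in \Omega$. The hypothesis on $v_1$ yields $v_1(\vx) \le \sum_i \lambda_i v_1(\vx_i)$, and the analogous hypothesis on $v_2$ yields $v_2(\vx) \le \sum_i \lambda_i v_2(\vx_i)$. Summing the two inequalities gives $(v_1+v_2)(\vx) \le \sum_i \lambda_i (v_1+v_2)(\vx_i)$, which is exactly the desired restricted convexity of $w$ on $\Omega$. Translating this back, $f_1+f_2 - \mu\|\cdot\|^2 = w$ is convex on $\Omega$, so $f_1+f_2$ is $2\mu$-strongly convex whenever $\|\vx\| \geq R$.

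There is essentially no technical obstacle: strong convexity outside a ball is defined precisely through this convex-combination condition, and the condition is stable under addition because the convex combination $\sum_i \lambda_i \vx_i$ and each participating point $\vx_i$ are required to lie in $\Omega$ simultaneously in both invocations of the hypothesis. The only place where one might worry is whether the same $\{\vx_i\}$ may be used for both $v_1$ and $v_2$; but since the hypothesis is universal over all admissible representations, any fixed representation we choose is admissible for both functions. Hence the argument is a one-line addition once the notation is set up, and the remainder of the write-up is just unpacking the definitions.
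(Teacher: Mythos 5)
Your proposal is correct and takes essentially the same route as the paper: both set $w = (f_1+f_2) - \mu\|\cdot\|^2 = v_1 + v_2$, apply the restricted-convexity hypothesis to $v_1$ and $v_2$ over the same admissible representation $\vx = \sum_i \lambda_i \vx_i$ with all $\vx_i \in \Omega$, and add the two inequalities. The added remark that one representation serves both invocations of the hypothesis is implicit in the paper's argument but a useful clarification.
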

\begin{proof}
    We define  $v(\vx)=f(\vx)-\mu\left\|\vx\right\|^2$.
    Hence, by considering $\vx\in\Omega$ and its convex combination of $\vx = \sum_{i=1}^k \lambda_i \vx_i$ with $\vx_1,\ldots, \vx_k\in\Omega$, we have
    \begin{equation*}
        \begin{aligned}
            v(\vx) = v_1(\vx)+v_2(\vx)\le \sum_{i=1}^k\lambda_i v_1(\vx_i) + \sum_{i=1}^k\lambda_i v_2(\vx_i)=\sum_{i=1}^k \lambda_i v(\vx_i).
        \end{aligned}
    \end{equation*}
    Hence, the proof is completed.
\end{proof}

\begin{lemma}
    \label{lem:lsi_var_bound}
    Suppose $q$ is a distribution which satisfies LSI with constant $\mu$, then its variance satisfies
    \begin{equation*}
        \int q(\vx) \left\|\vx - \mathbb{E}_{\tilde{q}}\left[\rvx\right]\right\|^2 \der \vx  \le \frac{d}{\mu}.
    \end{equation*}
\end{lemma}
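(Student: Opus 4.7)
The plan is to deduce the claim from the well-known chain: LSI implies Poincaré inequality with the same constant, and Poincaré applied to linear functions yields the component-wise variance bound $1/\mu$, which sums to $d/\mu$.

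First I would recall the linearization argument that extracts Poincaré from LSI. Apply the defining inequality
\[
\mathbb{E}_{q}[g^2\ln g^2]-\mathbb{E}_{q}[g^2]\ln \mathbb{E}_{q}[g^2] \le \frac{2}{\mu}\mathbb{E}_{q}\left[\|\nabla g\|^2\right]
\]
to the test function $g = 1 + \epsilon f$ for a smooth $f$ with $\mathbb{E}_q[f]=0$, and expand both sides in powers of $\epsilon$. The left-hand side equals $2\epsilon^2 \mathrm{Var}_q(f) + O(\epsilon^3)$, while the right-hand side equals $\frac{2\epsilon^2}{\mu}\mathbb{E}_q[\|\nabla f\|^2] + O(\epsilon^3)$. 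Dividing by $\epsilon^2$ and sending $\epsilon \to 0$ gives the Poincaré inequality
\[
\mathrm{Var}_q(f) \le \frac{1}{\mu}\mathbb{E}_q\left[\|\nabla f\|^2\right].
\]

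Next I would apply this Poincaré inequality coordinate by coordinate. For the coordinate function $f_i(\vx) = x_i - \mathbb{E}_q[x_i]$, one has $\|\nabla f_i\|^2 = 1$, so $\mathrm{Var}_q(x_i) \le 1/\mu$. Summing over $i = 1,\dots,d$ gives
\[
\int q(\vx)\,\|\vx - \mathbb{E}_q[\rvx]\|^2\,d\vx = \sum_{i=1}^{d}\mathrm{Var}_q(x_i) \le \frac{d}{\mu},
\]
which is the desired bound.

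There is no real obstacle here; the only slight subtlety is justifying that the linear functions $f_i$ (which need not have finite moments a priori) may be used in Poincaré. This is handled by a standard truncation/smoothing approximation: apply Poincaré to $f_i^{(N)} := \chi_N(x_i - \mathbb{E}_q[x_i])$ for smooth compactly-supported cutoffs $\chi_N$, then pass to the limit via monotone convergence on both sides (the gradient bound is uniform). Since LSI already implies the sub-Gaussian concentration used in Lemma~\ref{lem:lsi_concentration}, all moments of $q$ are finite and the limit is well-defined, completing the proof.
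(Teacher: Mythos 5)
Your proof is correct and follows essentially the same route as the paper: deduce Poincaré from LSI by linearizing the test function, apply it coordinate-wise to the linear functions $g_i(\vx) = \langle\vx,\ve_i\rangle$ whose gradients have unit norm, and sum over $i = 1,\dots,d$. The extra remark about truncation/approximation for the unbounded test functions is a reasonable rigor-check that the paper leaves implicit.
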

\begin{proof}
{
    It is known that LSI implies Poincar\'e inequality with the same constant \citep{rothaus1981diffusion,villani2021topics,vempala2019rapid}, which can be derived by taking $\rho\rightarrow(1+\eta g)\nu$ in Eq.~\eqref{def:lsi}.
    Thus, for $\mu$-LSI distribution $q$, we have
    \begin{equation*}
        \mathrm{var}_{q}\left(g(\rvx)\right)\le \frac{1}{\mu}\mathbb{E}_{q}\left[\left\|\grad g(\rvx)\right\|^2\right].
    \end{equation*}
    for all smooth
    function $g\colon \R^d \rightarrow \R$.
    }
    
    In this condition, we suppose $\vb=\mathbb{E}_{q}[\rvx]$, and have the following equation
    \begin{equation*}
        \begin{aligned}
            &\int q(\vx) \left\|\vx - \mathbb{E}_{q}\left[\rvx\right]\right\|^2 \der \vx = \int q(\vx) \left\|\vx - \vb\right\|^2 \der \vx\\
            =& \int \sum_{i=1}^d q(\vx) \left(\vx_{i}-\vb_i\right)^2 \der \vx =\sum_{i=1}^d \int q(\vx) \left( \left<\vx, \ve_i\right> - \left<\vb, \ve_i\right> \right)^2 \der\vx\\
            =& \sum_{i=1}^d \int q(\vx)\left(\left<
            \vx,\ve_i\right> - \mathbb{E}_{q}\left[\left<\rvx, \ve_i\right>\right]\right)^2 \der \vx  =\sum_{i=1}^d \mathrm{var}_{q}\left(g_i(\rvx)\right)
        \end{aligned}
    \end{equation*}
    where $g_i(\vx)$ is defined as $g_i(\vx) \coloneqq \left<\vx, \ve_i\right>$
    and $\ve_i$ is a one-hot vector ( the $i$-th element of $\ve_i$ is $1$ others are $0$).
    
    Combining this equation and Poincar\'e inequality, for each $i$, we have
    \begin{equation*}
        \mathrm{var}_{q}\left(g_i(\rvx)\right) \le \frac{1}{\mu} \mathbb{E}_{q}\left[\left\|\ve_i\right\|^2\right]=\frac{1}{\mu}.
    \end{equation*}
    {
By combining the equation and inequality above, we have
$$
\int q(\vx) \left\|\vx - \mathbb{E}_{q}\left[\rvx\right]\right\|^2 \der \vx =\sum_{i=1}^d \mathrm{var}_{q}\left(g_i(\rvx)\right)\leq \frac{d}{\mu}
$$
}

    Hence, the proof is completed.
\end{proof}

\newpage
\section{Empirical Results}
\label{app:empirical}

\subsection{Experiment settings and more empirical results}

We choose $1,000$ particles in the experiments and use MMD (with RBF kernel) as the metric. We choose $T \in \{-\ln 0.99,-\ln 0.95,-\ln 0.9,-\ln 0.8, -\ln 0.7\}$. We use $10$, $50$, or $100$ iterations to approximate $\hat{p}$ chosen by the corresponding problem. The inner loop is  initialized with importance sampling mean estimator by $100$ particles. The inner iteration and inner loop sample-size are chosen from $\{1,5,10,100\}$. The outer learning rate is chosen from $\{T/20,T/10,T/5\}$. When the algorithm converges, we further perform LMC until the limit of gradient complexity. Note that the gradient complexity is evaluated by the product of outer loop and inner loop.

\begin{figure}[H]
    \centering
    \begin{tabular}{ccc}
          \includegraphics[width=4.1cm]{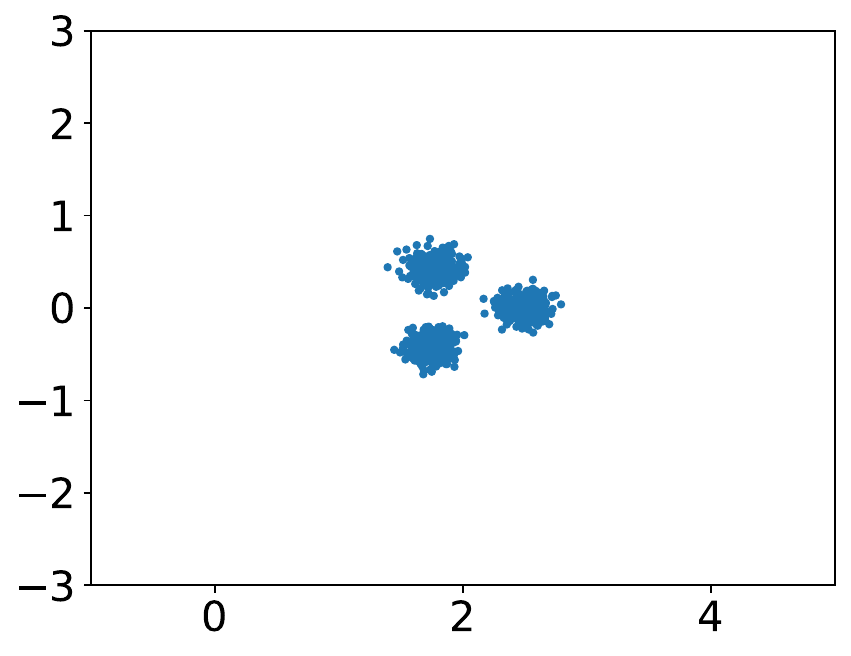} & \includegraphics[width=4.1cm]{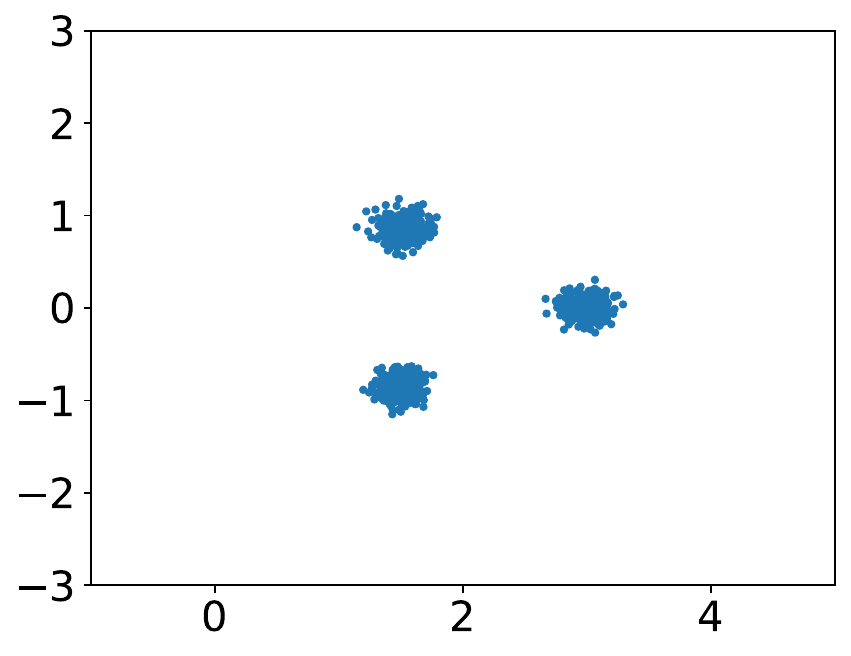} & 
          \includegraphics[width=4.1cm]{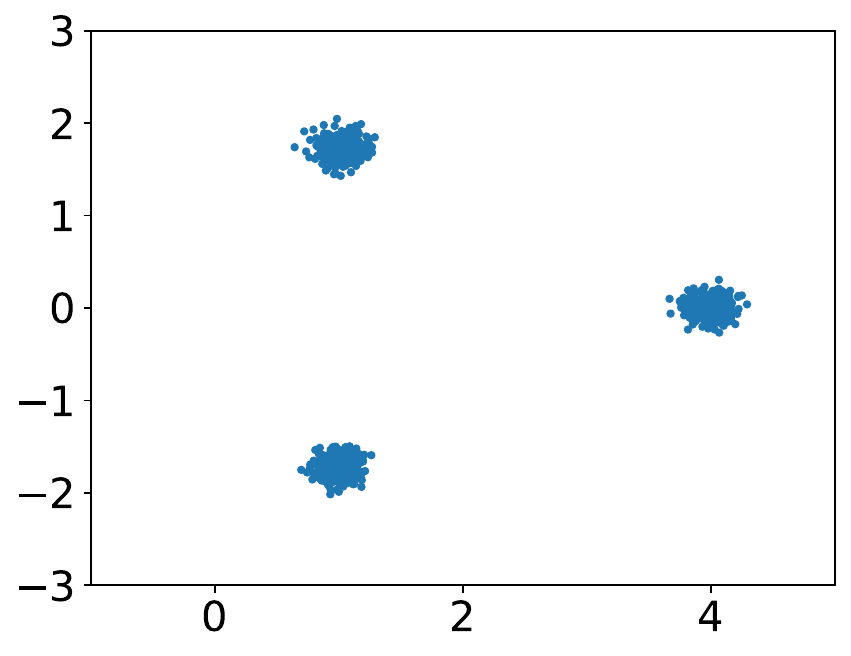} \\
          \includegraphics[width=4.1cm]{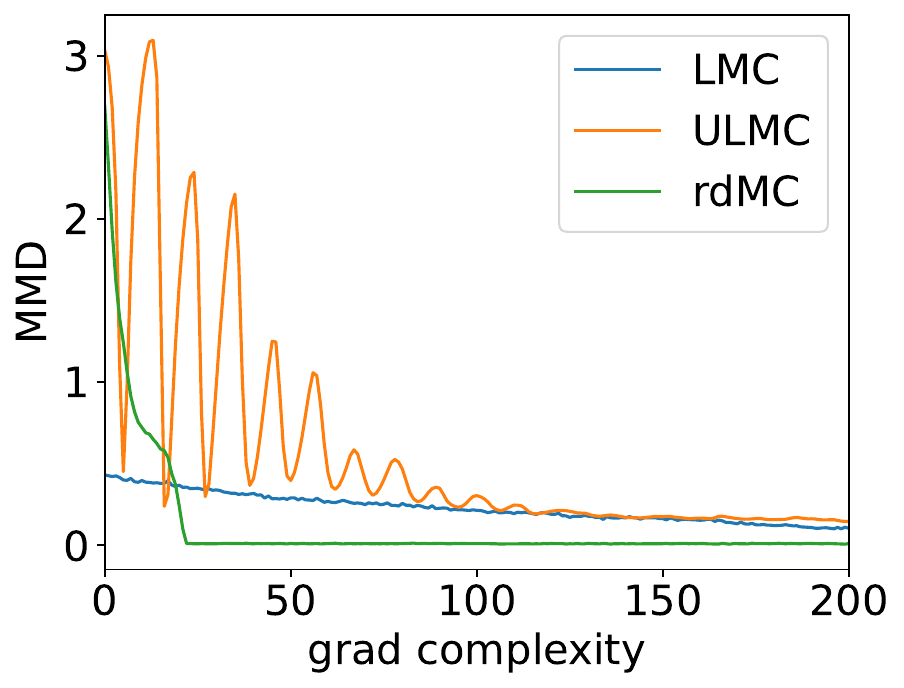} & \includegraphics[width=4.1cm]{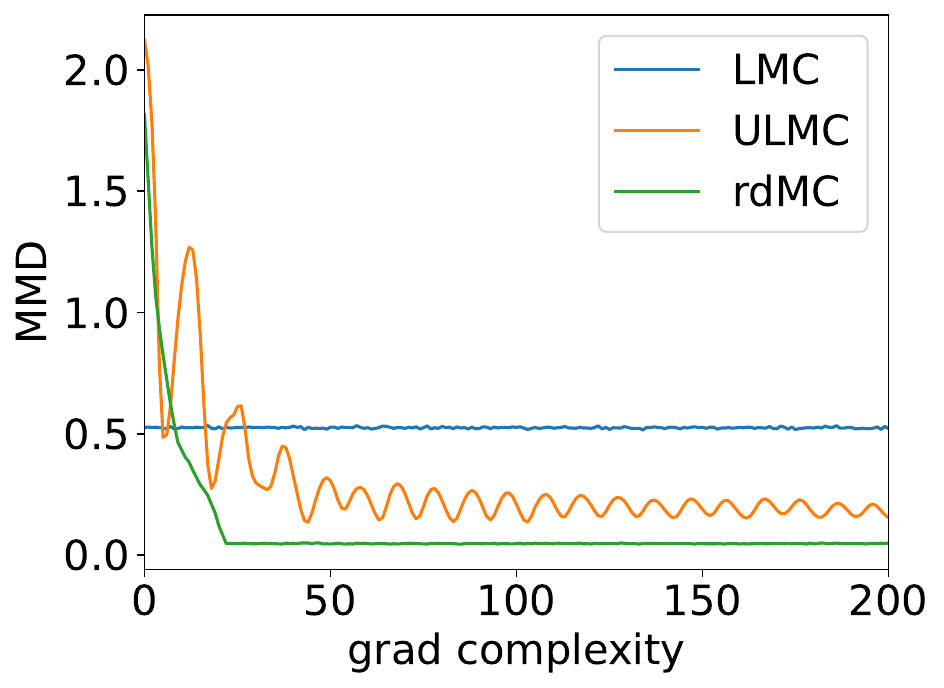} & 
          \includegraphics[width=4.1cm]{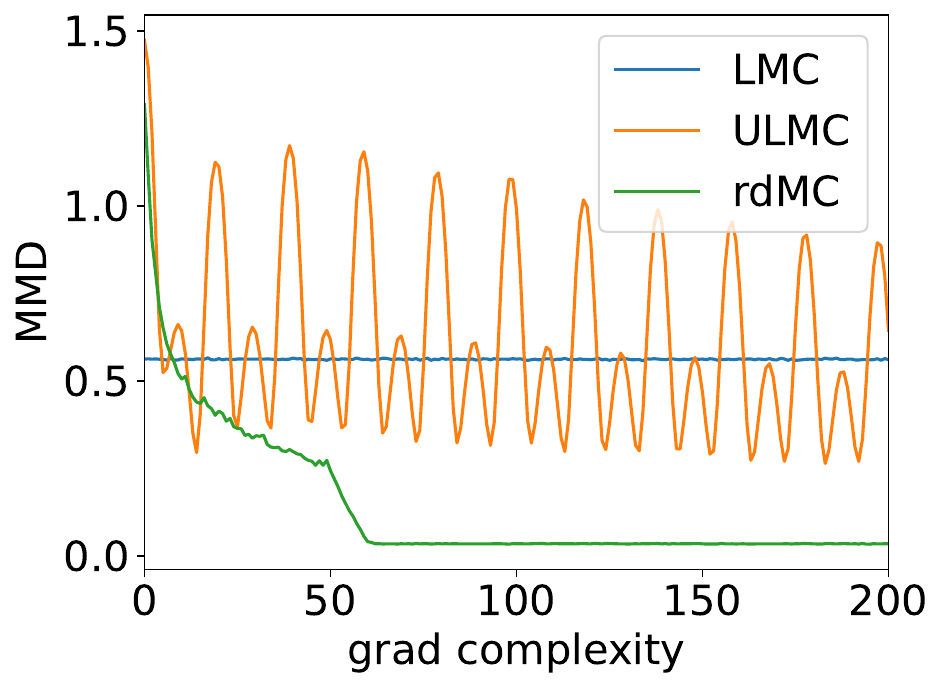} \\
          $r=0.5$& $r=1.0$&$r=2.0$
    \end{tabular}
    \caption{Maximum Mean Discrepancy (MMD) convergence of LMC, ULMC, \textsc{rdMC}.  }
    \label{fig:gmm3_mmd}
\end{figure}

\begin{figure}[H]
    \centering
    \begin{tabular}{ccc}
          \includegraphics[width=4.1cm]{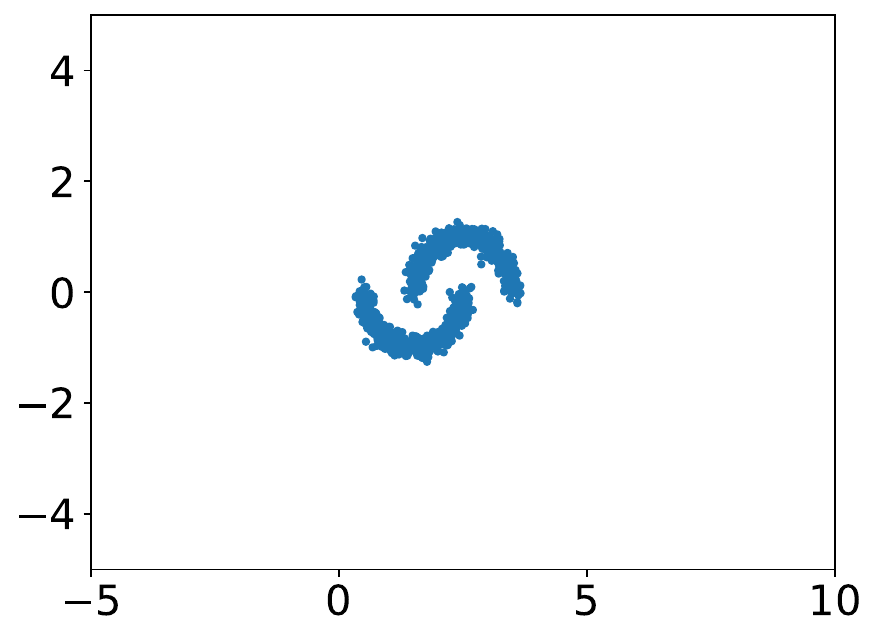} & \includegraphics[width=4.1cm]{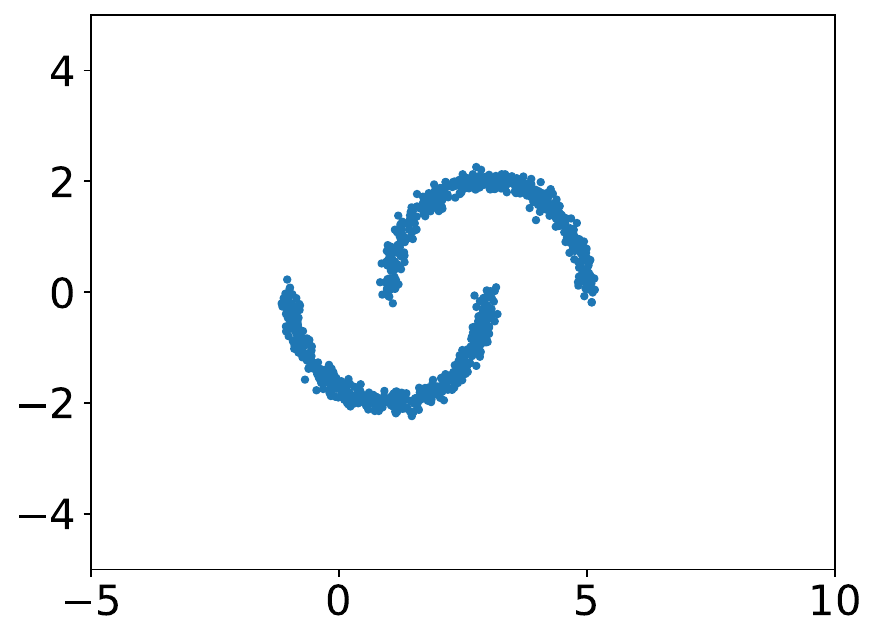} & 
          \includegraphics[width=4.1cm]{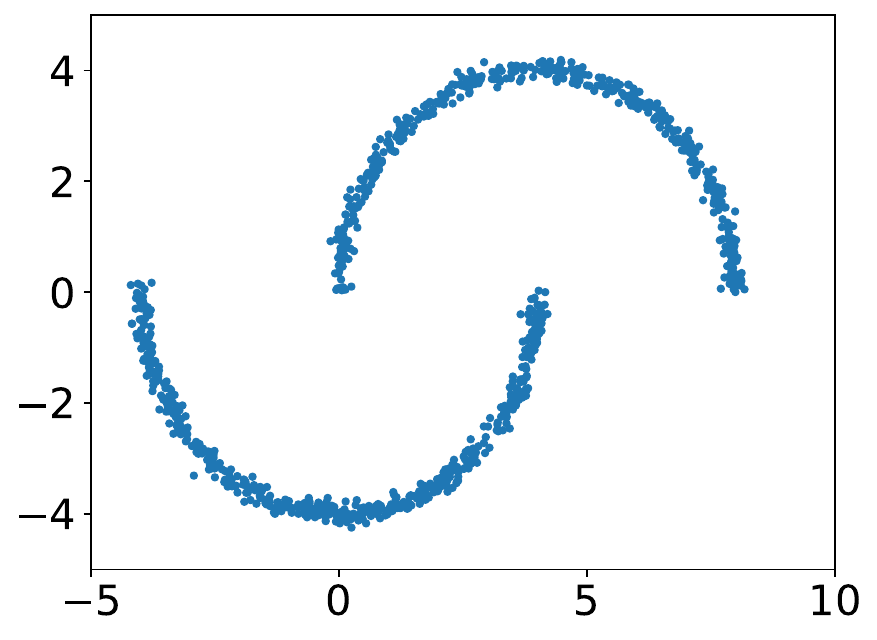} \\
          \includegraphics[width=4.1cm]{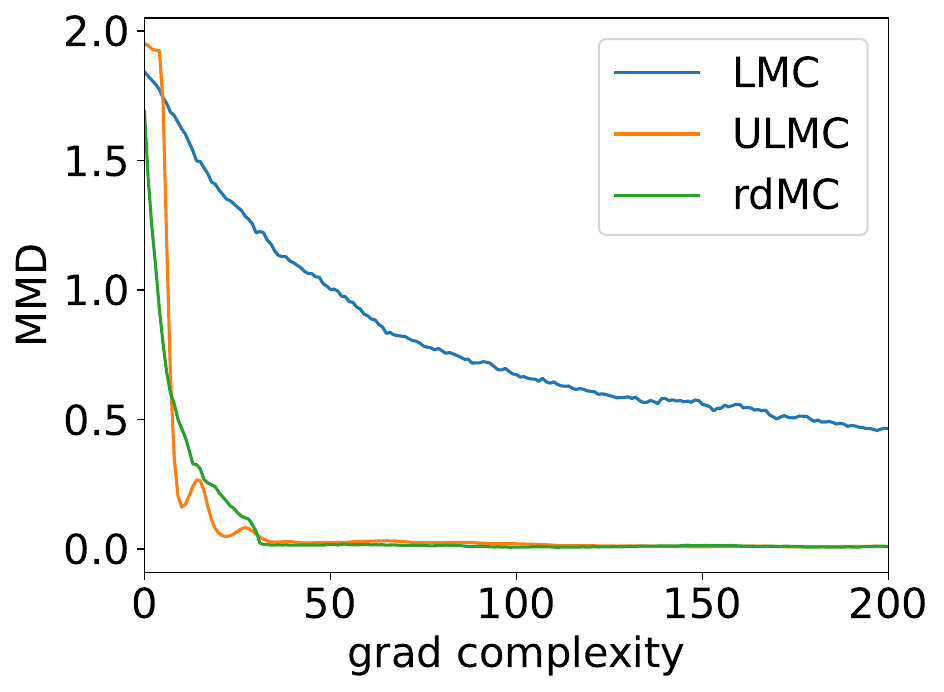} & \includegraphics[width=4.1cm]{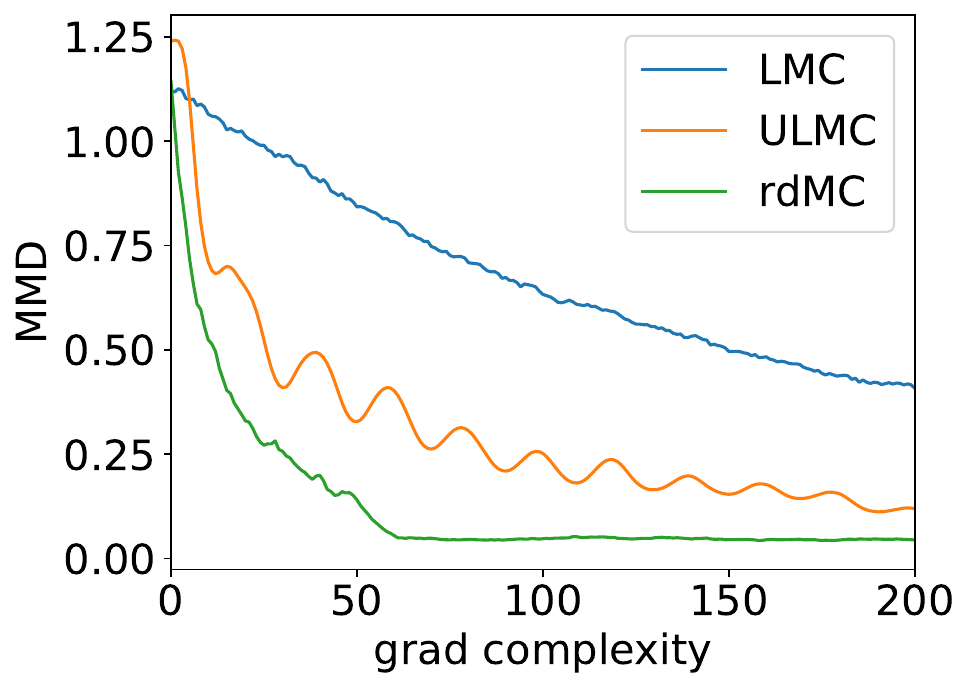} & 
          \includegraphics[width=4.1cm]{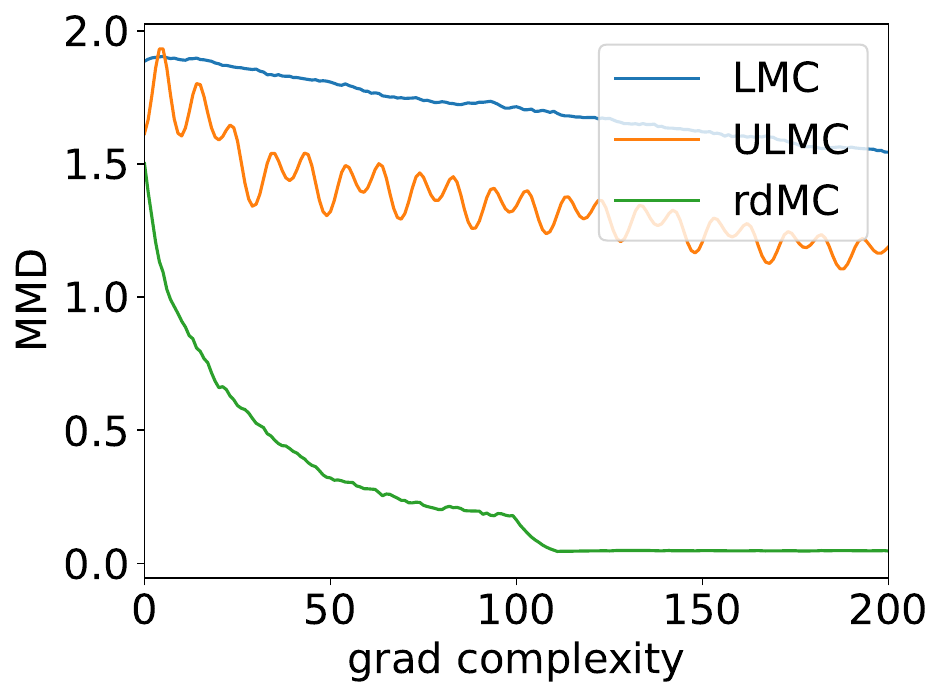} \\
          $r=1.0$& $r=2.0$&$r=4.0$
    \end{tabular}
    \caption{Maximum Mean Discrepancy (MMD) convergence of LMC, ULMC, \textsc{rdMC}.  }
    \label{fig:gmm3_mmd}
\end{figure}

\subsection{More Investigation on Ill-behaved Gaussian Case}

Figure \ref{fig:gaussian} demonstrate the differences between Langevin dynamics and the OU process in terms of their trajectories. The former utilizes the gradient information of the target distribution $\nabla\ln p_*$, to facilitate optimization. However, it converges slowly in directions with small gradients. On the other hand, the OU process constructs paths with equal velocities in each direction, thereby avoiding the influence of gradient vanishing directions. Consequently, leveraging the reverse process of the OU process is advantageous for addressing the issue of uneven gradient sampling across different directions.  

\begin{figure}[H]
    \centering
    \begin{tabular}{cccc}
          \includegraphics[width=3.15cm]{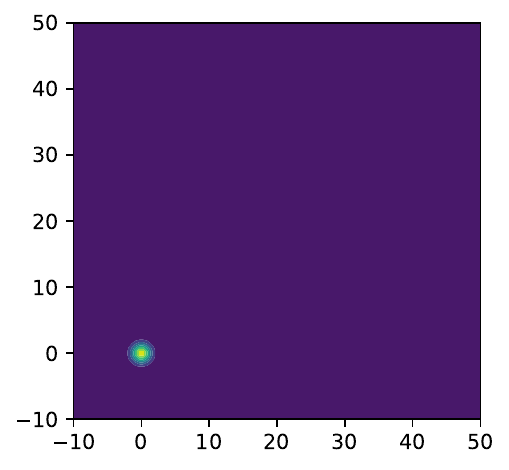} & \includegraphics[width=3.15cm]{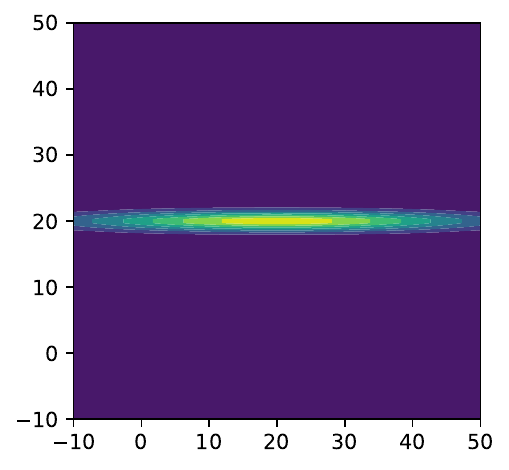} & 
          \includegraphics[width=3.15cm]{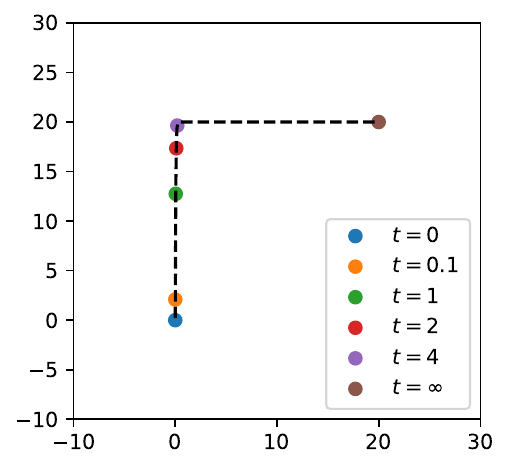} &
          \includegraphics[width=3.15cm]{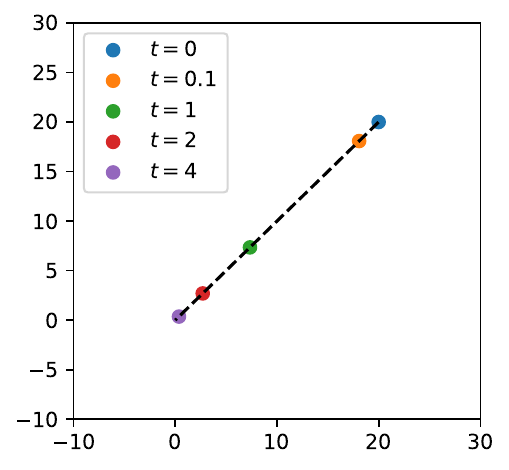} \\
          $\mathcal{N}(0,\mI_d)$& $p_*$&Langevin $\mu_t$&OU process $\mu_t$
    \end{tabular}
    \caption{ Langevin dynamics vs (reverse) OU process. We consider the sampling path between the $2$-dimensional normal distributions $\mathcal{N}(0,\mI_2)$ and $\mathcal{N}((20,20),\operatorname{diag}(400,1))$.
    The mean $\mu_t$ of Langevin dynamics show varying convergence speeds in different directions, while the OU process demonstrates more uniform changes. }
    \label{fig:gaussian}
\end{figure}

In order to further demonstrate the effectiveness of our algorithm, we conducted additional experiments comparing the Langevin dynamics with our proposed method in our sample scenarios. To better highlight the impacts of different components, we chose the $2$-dimensional ill-conditioned Gaussian distribution $\mathcal{N}\left(
(20,20),\left(\begin{array}{cc}
   400  & 0 \\
   0  & 1
\end{array}\right)
\right)$ (shown in Figure \ref{fig:illustration_}) as the target distribution to showcase this aspect. In this setting, we obtained an oracle sampler for $q_t$, enabling us to conduct a more precise experimental analysis of the sample complexity.

\begin{figure}[H]
    \centering
    \footnotesize
    \begin{tabular}{cc}
          \includegraphics[width=5cm]{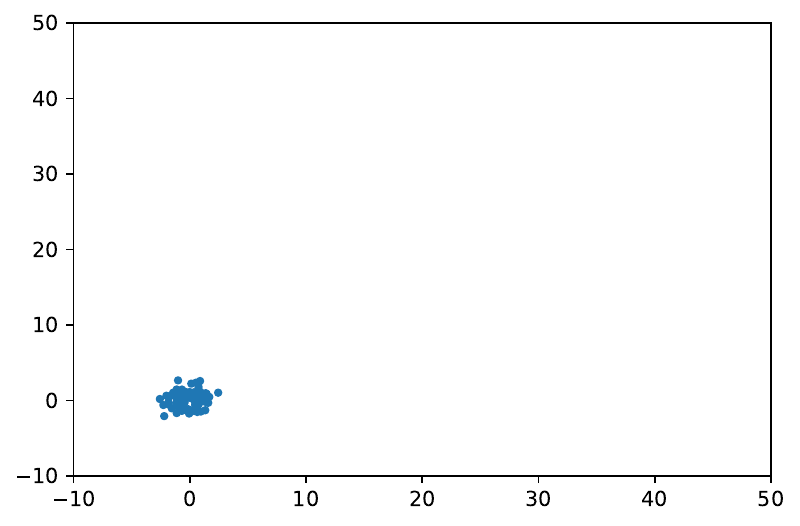} & \includegraphics[width=5cm]{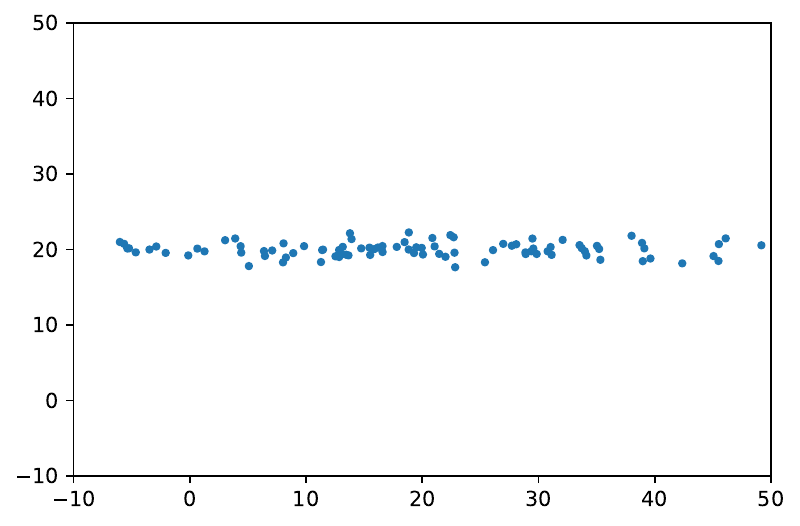} \\
          $\mathcal{N}(0,I)$ & $\mathcal{N}((20,20),\operatorname{diag}(400,1))$\\
          Initial distribution&Target distribution
    \end{tabular}
    \caption{Initial and Target distribution sample illustration.}
    \label{fig:illustration_}
\end{figure}

Figure \ref{fig:illustration_ula} illustrates the distributions of samples generated by the LMC algorithm at iterations 10, 100, 1000, and 10000. It can be observed that these distributions deviate from the target distribution.

\begin{figure}[H]
    \centering
    \begin{tabular}{cc}
          \includegraphics[width=5cm]{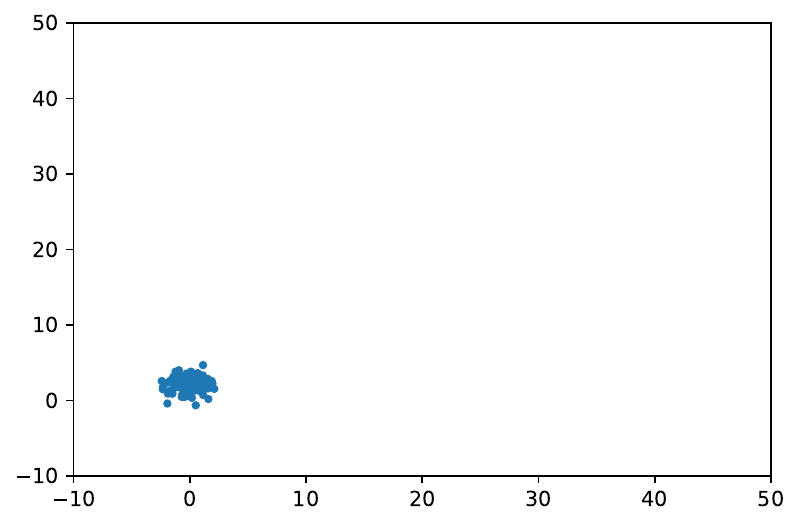} & \includegraphics[width=5cm]{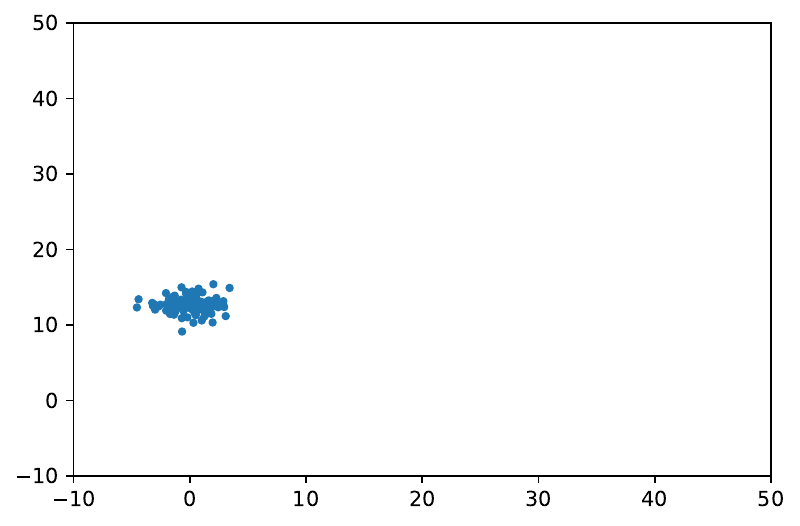} \\ 
          $T=10$&$T=100$\\
          \includegraphics[width=5cm]{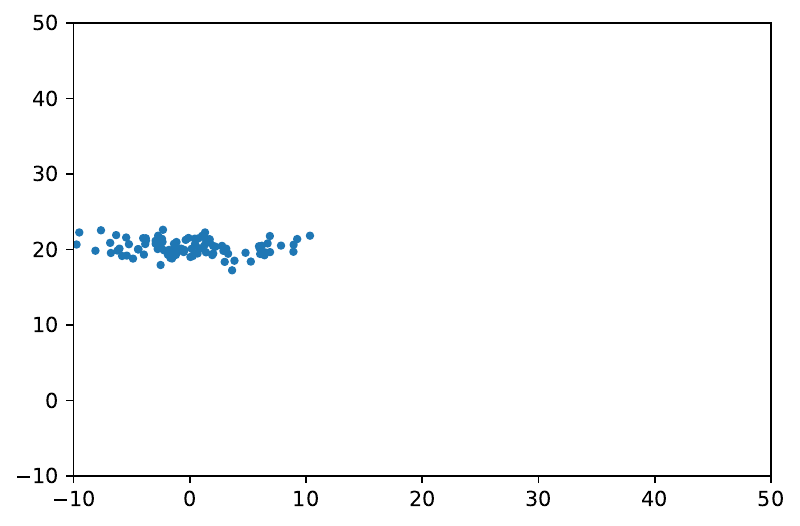} & 
          \includegraphics[width=5cm]{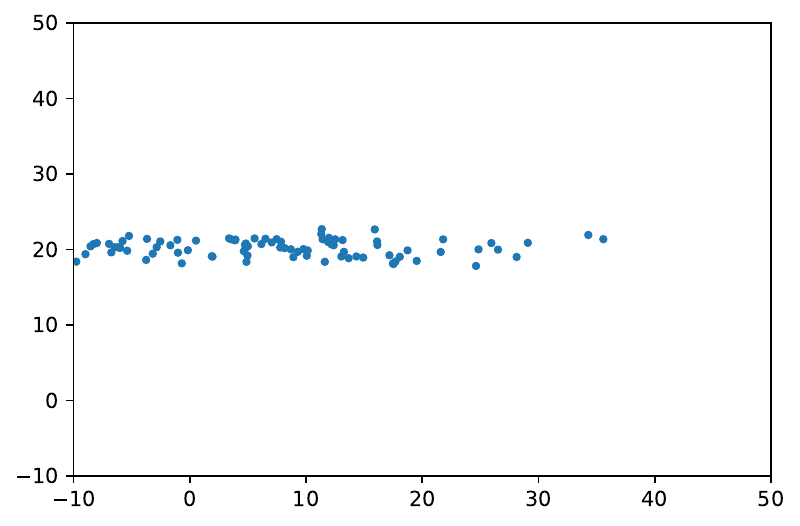} \\
          $T=1000$&$T=10000$\\
    \end{tabular}
    \caption{Illustration of LMC with different iterations.}
    \label{fig:illustration_ula}
\end{figure}

Figure \ref{fig:illustration_rds_opt} demonstrates the performance of the \textsc{rdMC} algorithm in the scenario of infinite samples, revealing its significantly superior convergence compared to the LMC algorithm.

\begin{figure}[H]
    \centering
    \begin{tabular}{cc}
          \includegraphics[width=5cm]{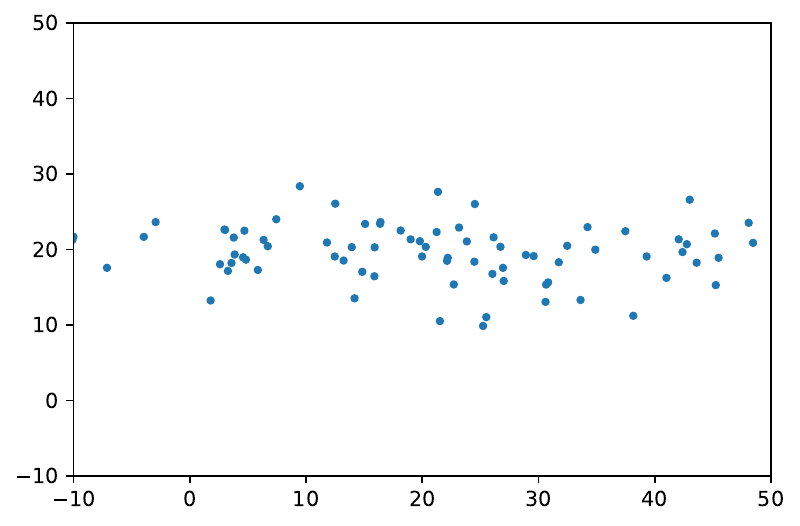} & \includegraphics[width=5cm]{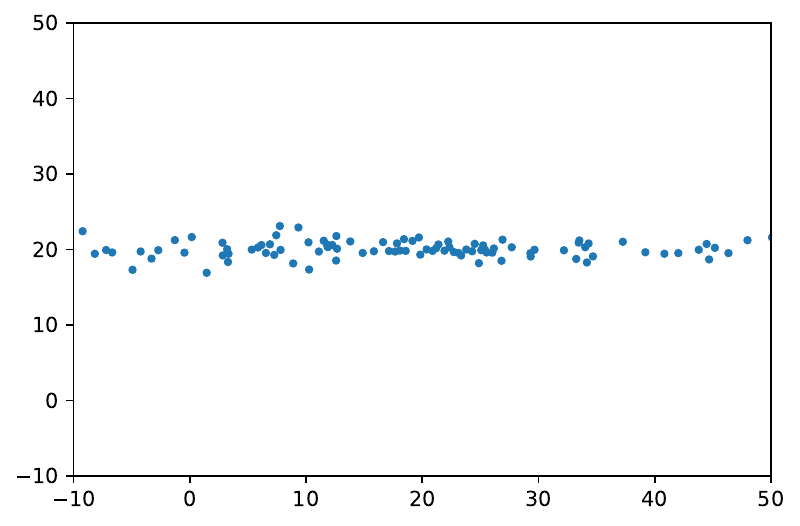} \\ 
          $T/(k\eta)=10$&$T/(k\eta)=100$\\
          \includegraphics[width=5cm]{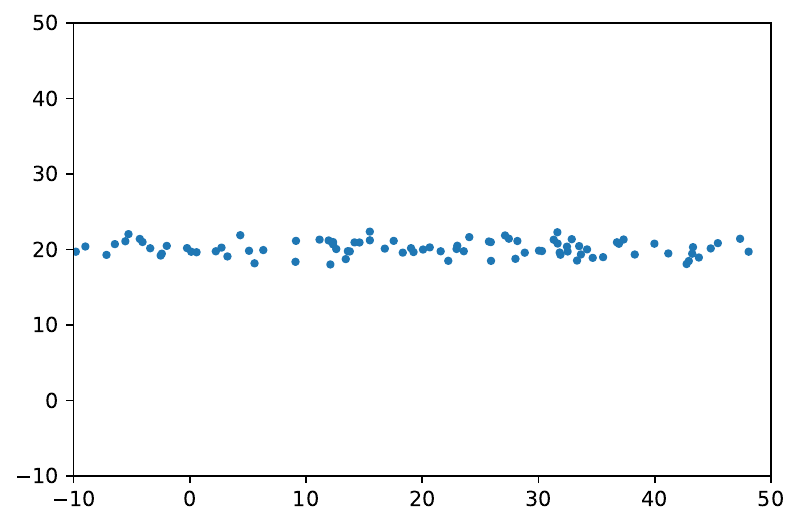} & 
          \includegraphics[width=5cm]{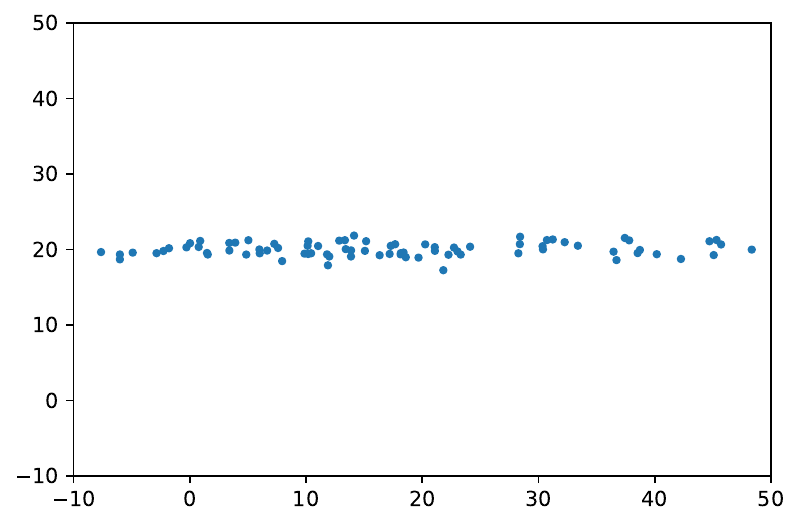} \\
          $T/(k\eta)=1000$&$T/(k\eta)=10000$\\
    \end{tabular}
    \caption{Illustration of \textsc{rdMC} (oracle sampler, infinite samples) with different iterations.}
    \label{fig:illustration_rds_opt}
\end{figure}

Figure \ref{fig:illustration_rds_t100} showcases the convergence of the \textsc{rdMC} algorithm in estimating the score under different sample sizes. It can be observed that our algorithm is not sensitive to the sample quantity, demonstrating its robustness in practical applications.

\begin{figure}[H]
    \centering
    \vspace{-10pt}
    \begin{tabular}{cc}
          \includegraphics[width=5cm]{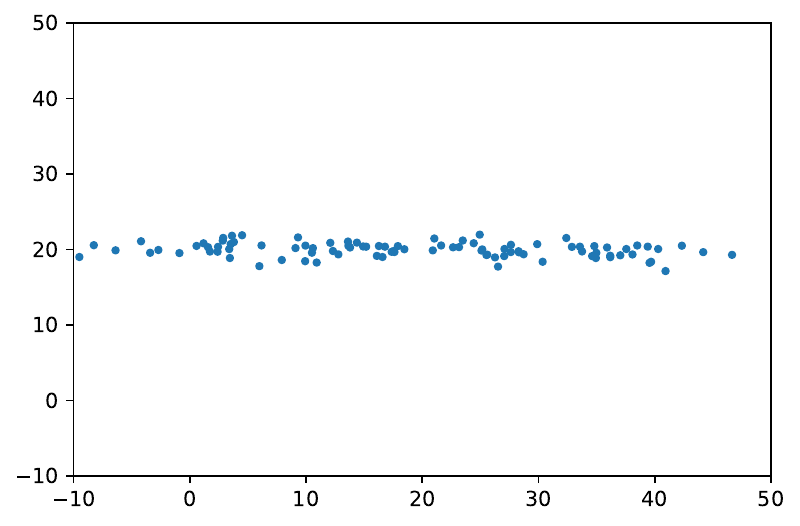} & \includegraphics[width=5cm]{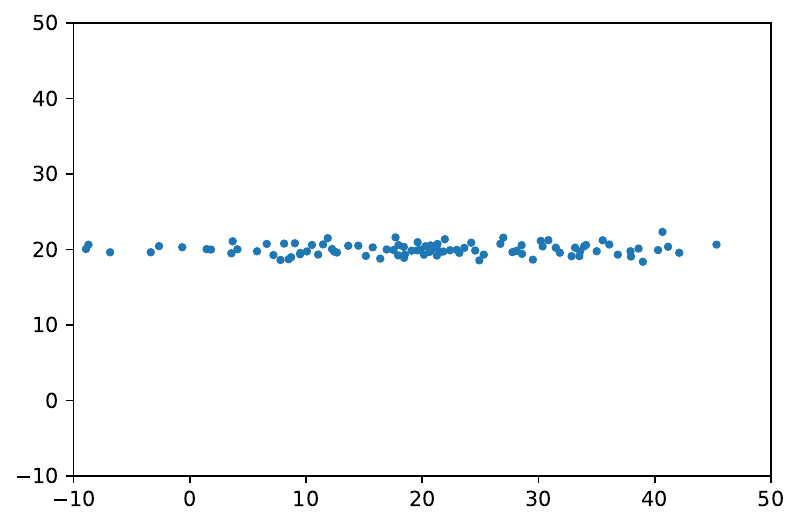} \\ 
          sample size $=1$&sample size $=10$\\
          \includegraphics[width=5cm]{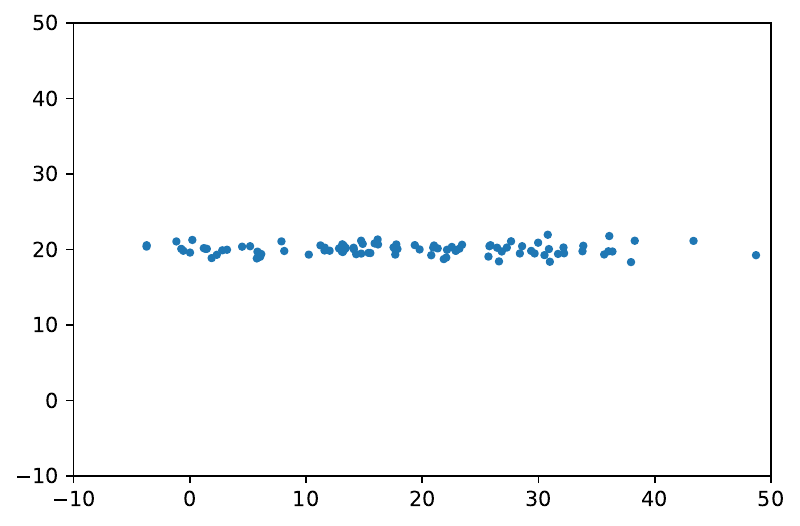} & 
          \includegraphics[width=5cm]{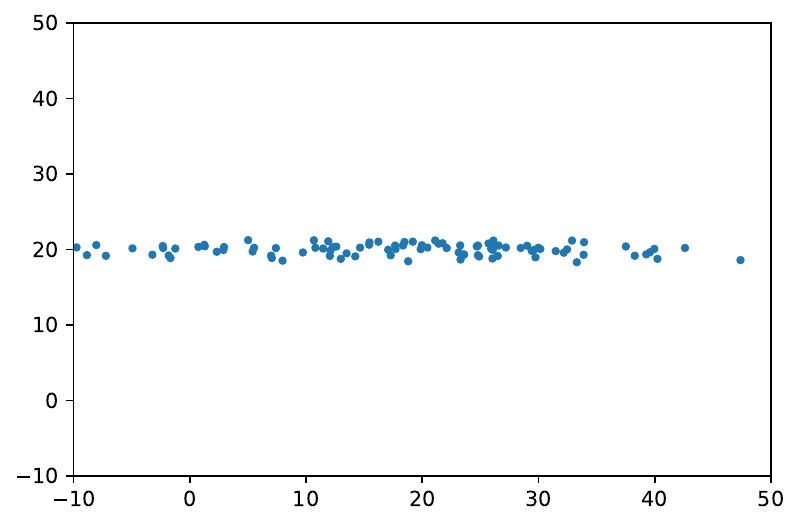} \\
          sample size $=100$&sample size $=1000$\\
    \end{tabular}
    \caption{Illustration of \textsc{rdMC} (oracle sampler, finite samples, $\frac{T}{k\eta} = 100$).}
    \label{fig:illustration_rds_t100}
    \vspace{-10pt}
\end{figure}

Figure \ref{fig:illustration_rds_inner} illustrates the convergence behavior of the \textsc{rdMC} algorithm with an inexact solver. It can be observed that even when employing LMC as the solver for the inner loop, the final convergence of our algorithm surpasses that of the original LMC. This is attributed to the insensitivity of the algorithm to the precision of the inner loop when $t$ is large. Additionally, when $t$ is small, the log-Sobolev constant of the inner problem is relatively large, simplifying the problem as a whole and guiding the samples towards the target distribution through the diffusion path.

\begin{figure}[H]
    \centering
    \begin{tabular}{cc}
          \includegraphics[width=5cm]{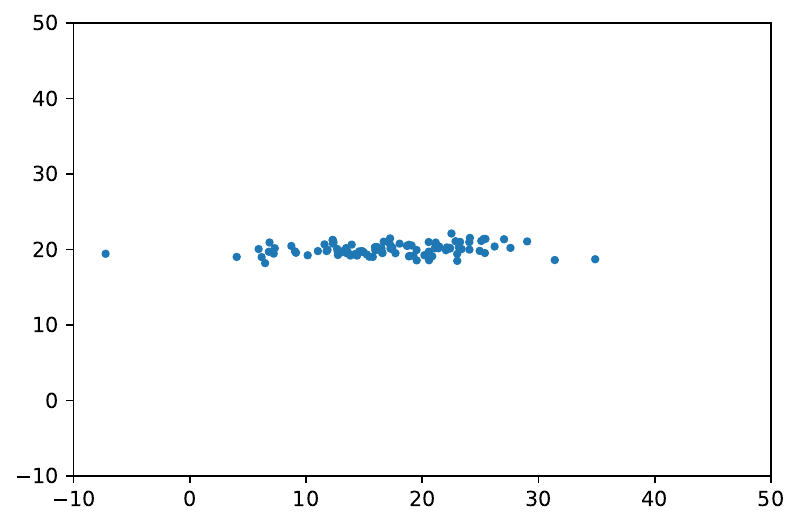} & \includegraphics[width=5cm]{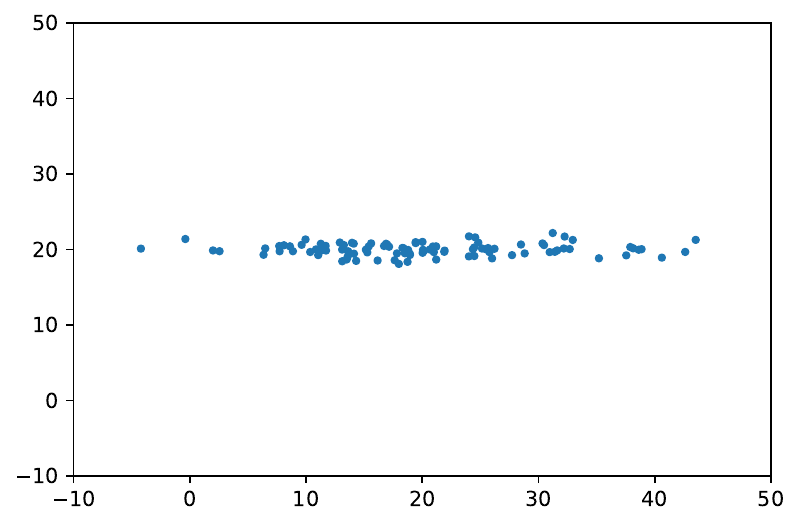} \\ 
          inner loop $=50$&inner loop $=100$\\
          \includegraphics[width=5cm]{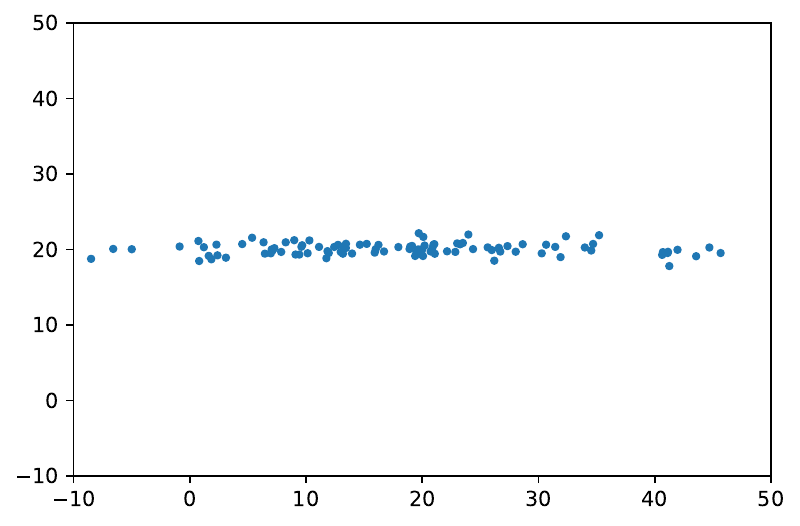} & 
          \includegraphics[width=5cm]{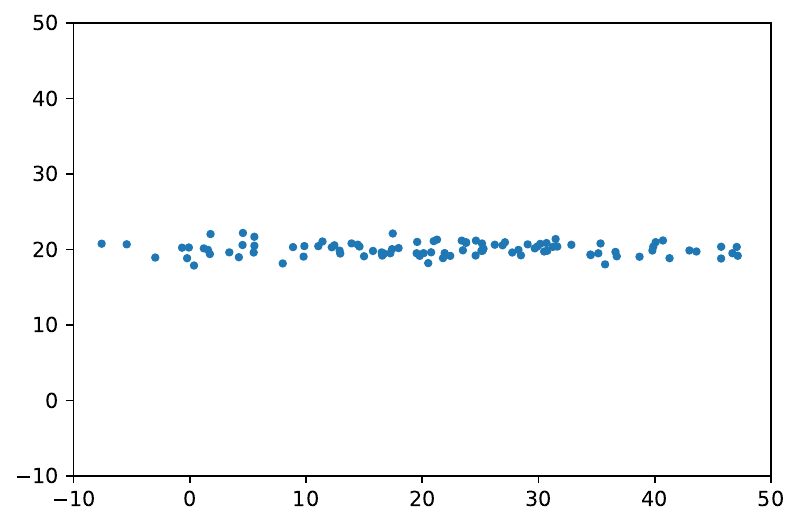} \\
          inner loop $=500$&inner loop $=1000$\\
    \end{tabular}
    \caption{Illustration of \textsc{rdMC} (inexact sampler, finite samples, $\frac{T}{k\eta} = 100$).}
    \label{fig:illustration_rds_inner}
\end{figure}

\vspace{-10pt}
{
\subsection{Sampling from high-dimensional distributions}

To validate the dimensional dependency of \textsc{rdMC} algorithm, we consider to extend our Gaussian mixture model experiments.

In the log-Sobolev distribution context (see Section \ref{sec:lsi}), both the Langevin-based algorithm and our proposed method exhibit polynomial dependence on dimensionality. However, the log-Sobolev constant grows exponentially with radius $R$, which is the primary influencing factor. This leads to behavior akin to the 2-D example presented earlier. A significant limitation of the Langevin-based algorithm is its inability to converge within finite time for large $R$ values, in contrast to the robustness of our algorithm. In higher-dimensional scenarios (e.g., $r=2$ for $d=50$ and $d=100$), we observe a notable decrease in rdMC performance after approximately 100 computations. This decline may stem from the kernel-based computation of MMD, which tends to be less sensitive in higher dimensions. For these large $r$ cases, LMC and ULMC fails to converge in finite time.
Overall, Figure \ref{fig:illustration_high_dim} exhibits trends consistent with Figure \ref{fig:gmm_mmd}, corroborating our theoretical findings.

\begin{figure}[H]
    \centering
    \begin{tabular}{cccc}
          \includegraphics[width=3cm]{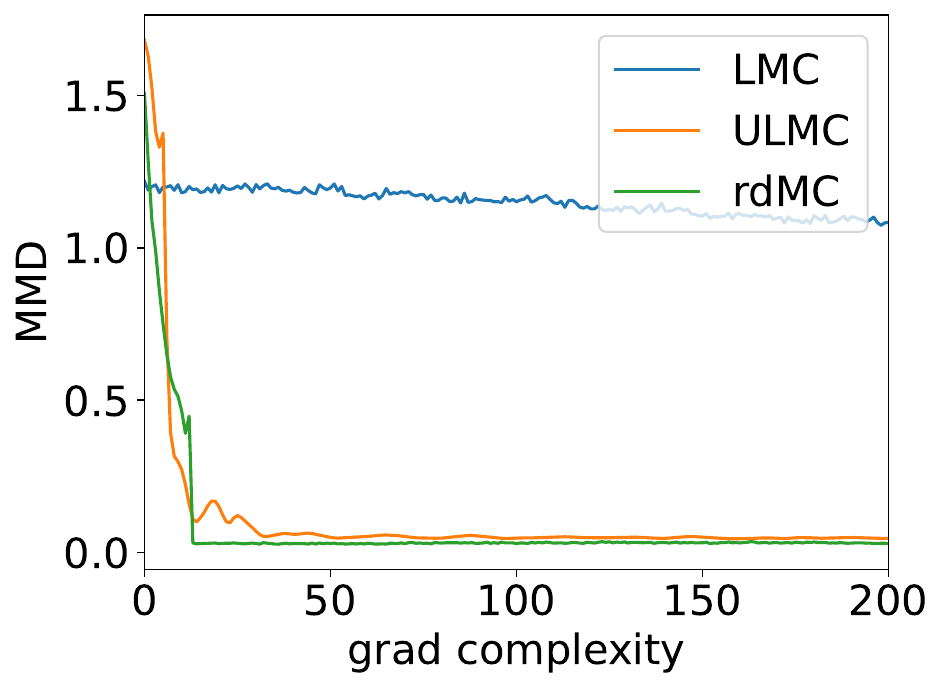}&
          \includegraphics[width=3cm]{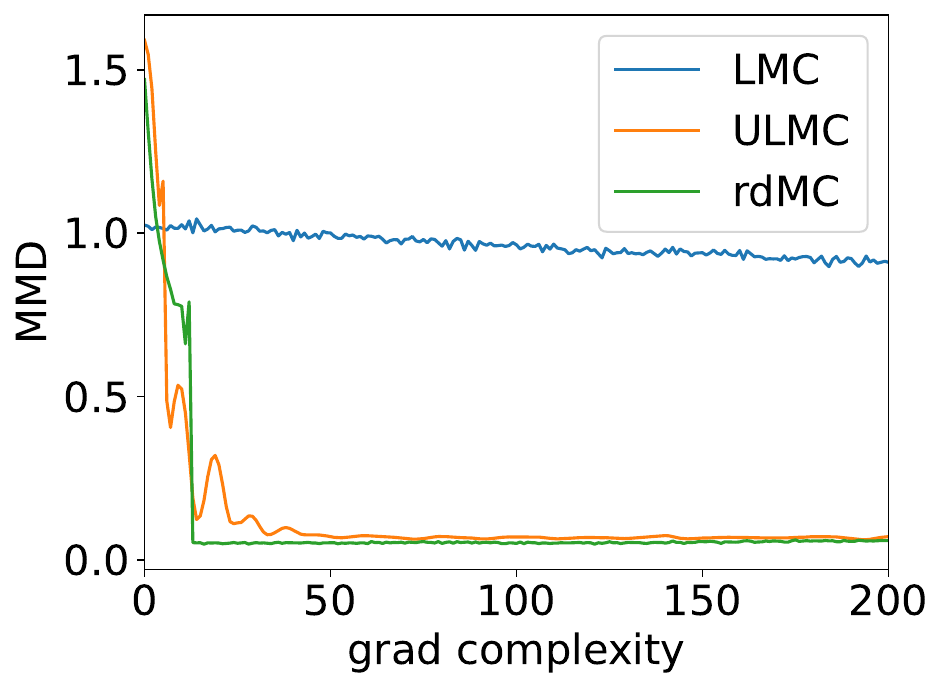}&
          \includegraphics[width=3cm]{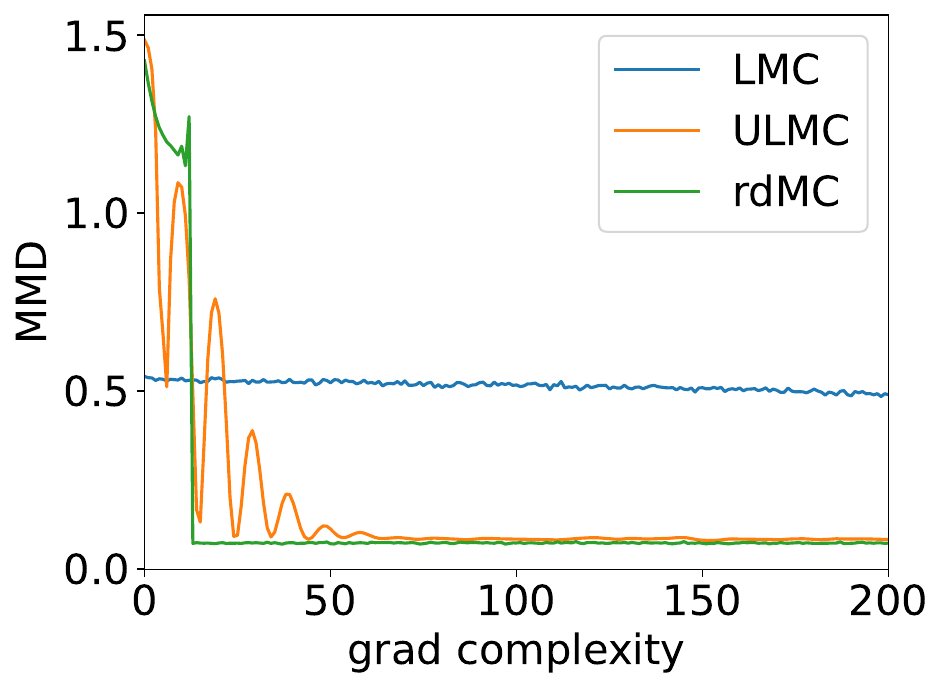}&
          \includegraphics[width=3cm]{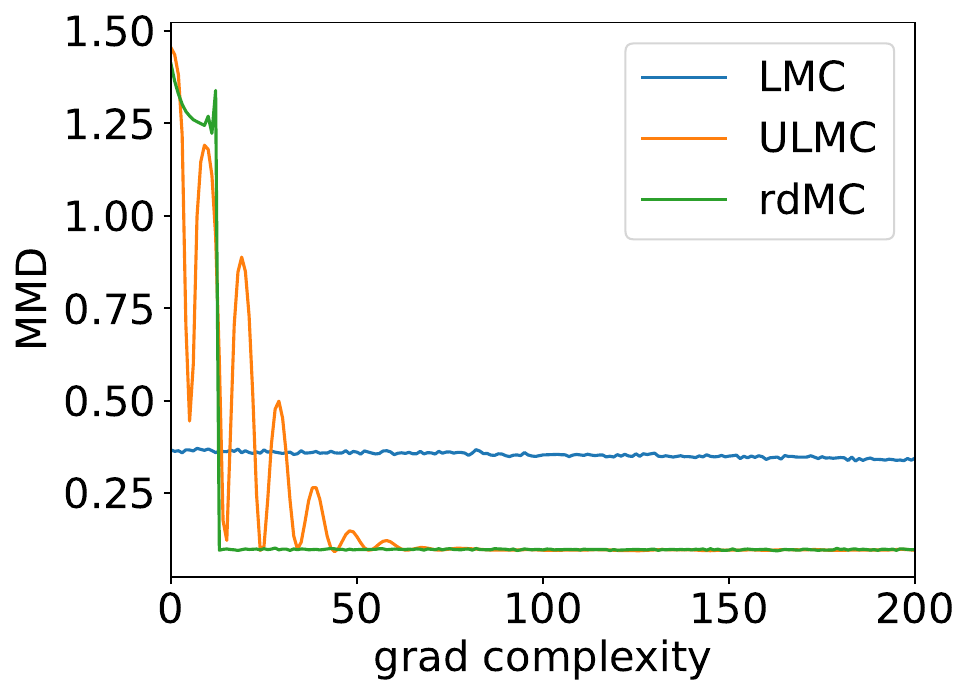}\\
          \includegraphics[width=3cm]{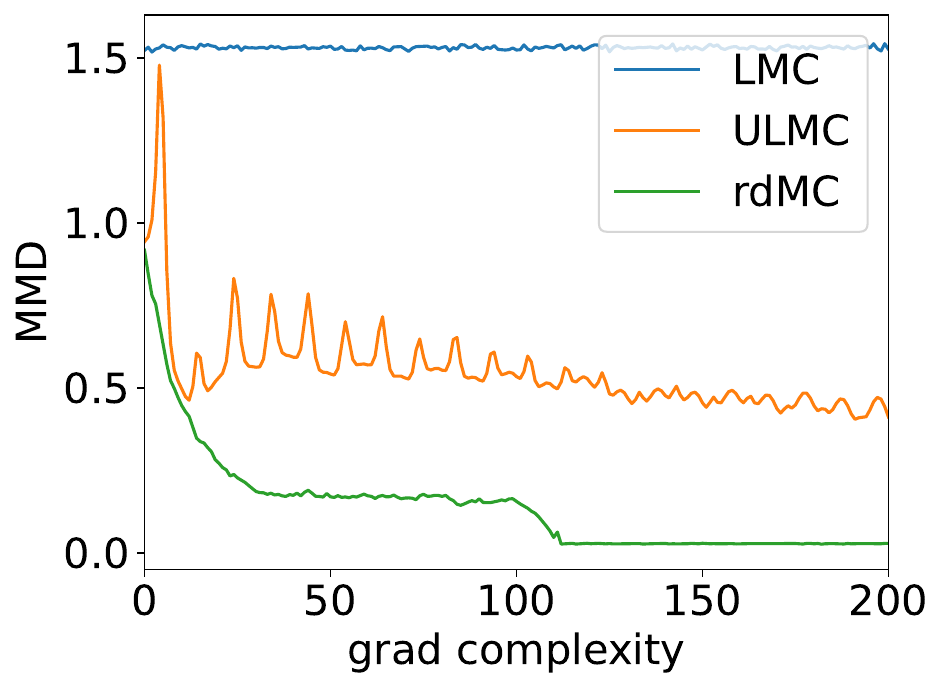}&
          \includegraphics[width=3cm]{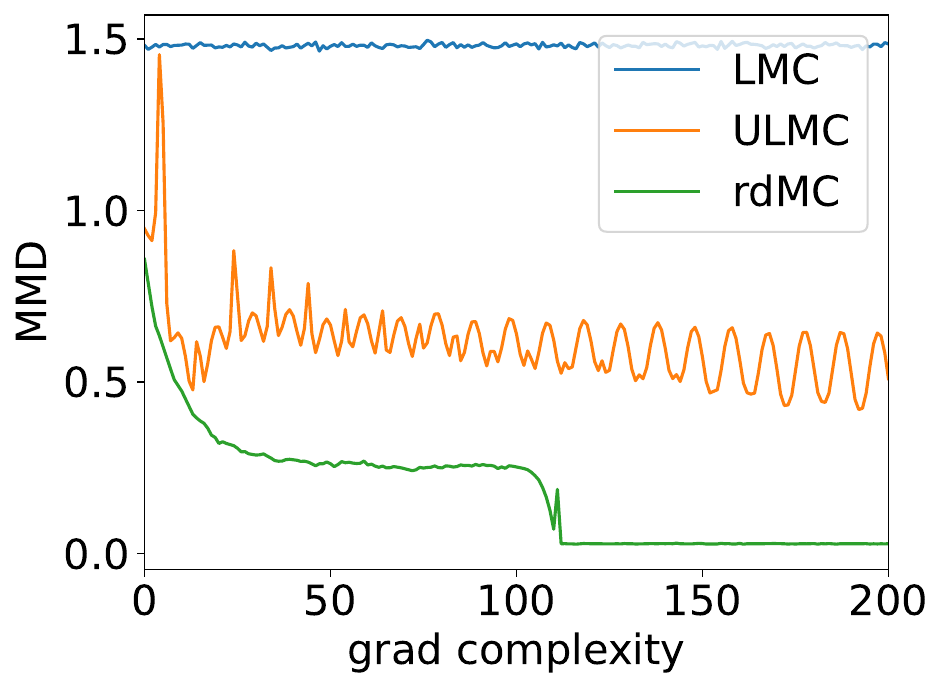}&
          \includegraphics[width=3cm]{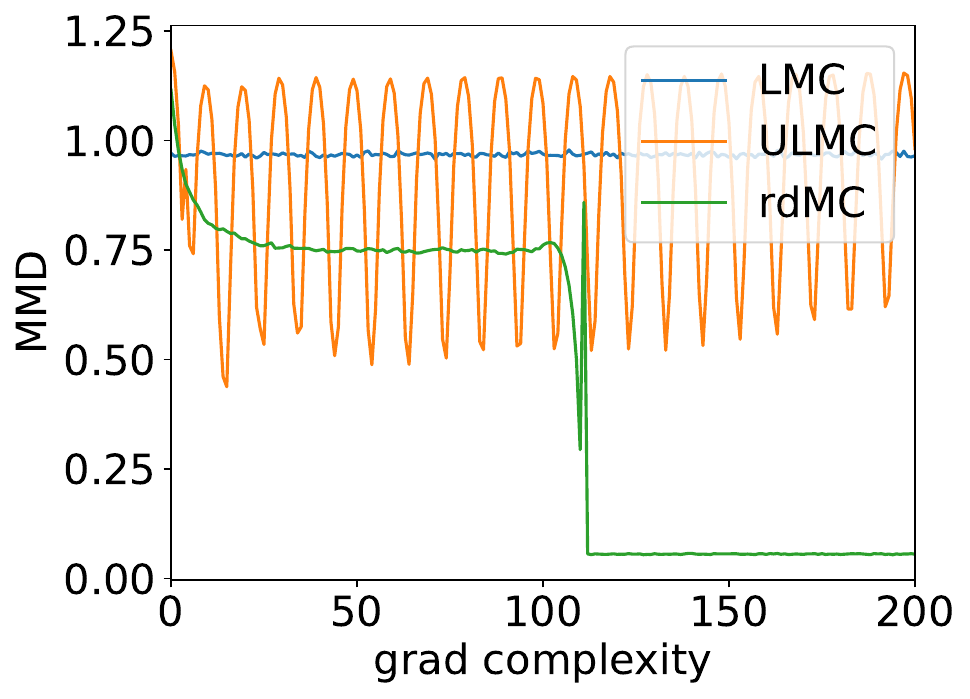}&
          \includegraphics[width=3cm]{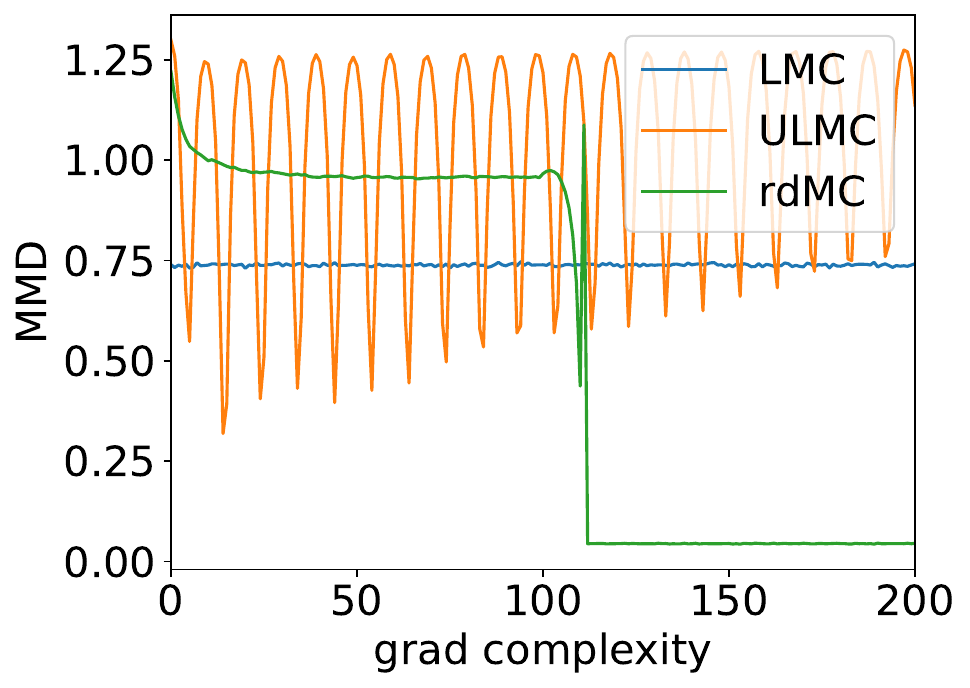}\\
          $d=5$&$d=10$&$d=50$&$d=100$
    \end{tabular}
    \caption{Illustration of \textsc{rdMC} and MCMC for high-dimensional Gaussian Mixture model (6 modes).
    The first row shows the case of $r=1$. The second row shows the case of $r=2$.
    }
    \label{fig:illustration_high_dim}
\end{figure}

For heavy-tail distribution (refer to Section \ref{sec:heavy}), the complexity of both \textsc{rdMC} and MCMC are quite high, so it is not feasible to sample from them in limited time. Nonetheless, 
as our algorithm only has the polynomial dependency of $d$, but the Langevin-based algorithms have exponential dependency with respect to $d$, we may have more advantages. For example, we can use the $n_{\text{th}}$ moment demonstrate the similarity of different distributions.
To illustrate the phenomenon, we consider the extreme case -- Cauchy distribution\footnote{The mean of Cauchy distribution does not exist. We use the case for better heavy-tail demonstration.}. According to Figure \ref{fig:illustration_high_dim_heavy}, 
\textsc{rdMC} has better approximation for the true distribution and the approximation gap increases with respect to dimension.

\begin{figure}[H]
    \centering
    \begin{tabular}{ccc}
          \includegraphics[width=3cm]{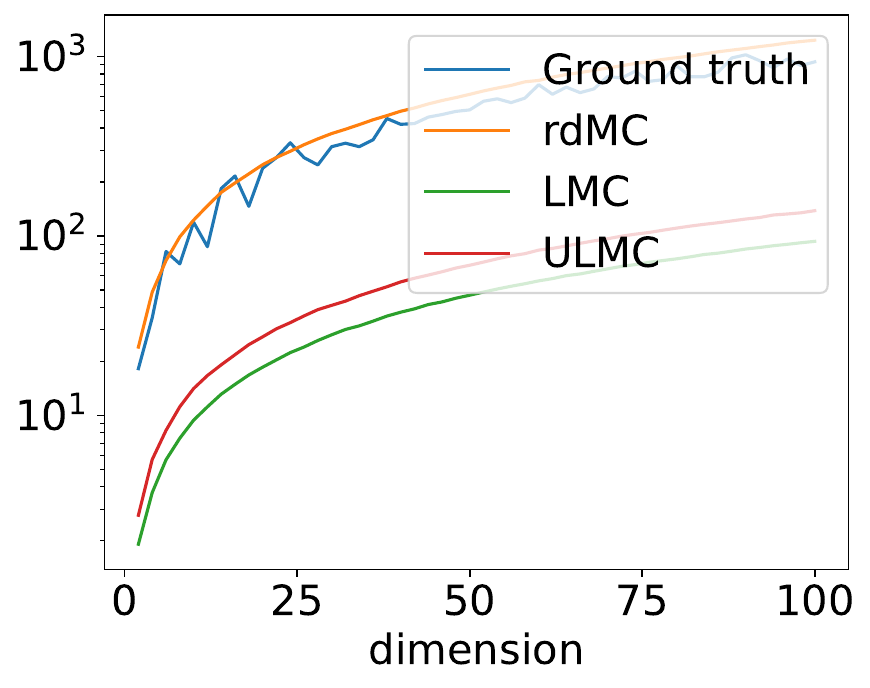}&
          \includegraphics[width=3cm]{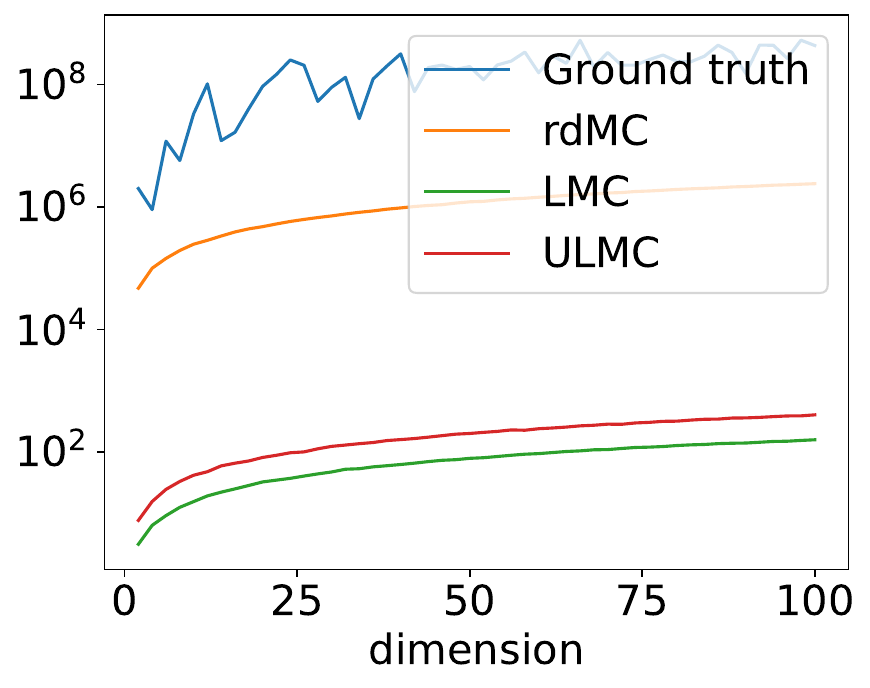}&
          \includegraphics[width=3cm]{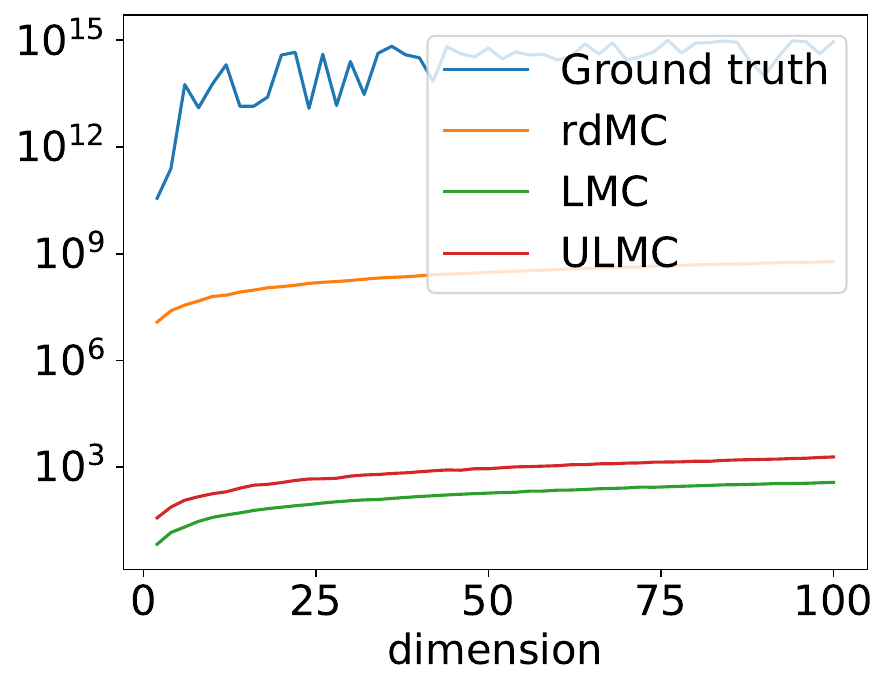}\\
          $1_{\text{st}}$ moment&$2_{\text{nd}}$ moment&$3_{\text{rd}}$ moment
    \end{tabular}
    \caption{Moment estimators for Cauchy distribution when $n=1000$. It reflects the closeness between different distributions. For high dimensional moments, we computes the sum across all dimensions (trace) for proper plotting. The algorithm are evaluated with 1K gradient complexity.
    }
    \label{fig:illustration_high_dim_heavy}
\end{figure}

\vspace{-10pt}
\subsection{Discussion on the choice of $\tilde{p}_0$}\label{ablation}

Note that different $\tilde{p}_0$ may have impacts on the algorithm. In this subsection, we discuss the impact of choice of $\tilde{p}_0$ and try to make a clear demonstration for  the influence in real practice.

Practically, selecting $T$ determines the initial distribution for reverse diffusion. While any $T$ choice can achieve convergence, the computational complexity varies with different $T$ values. According to Figure \ref{fig:pT}, we can notice that with the increase of $T$, the modes of $p_T$ tend to merge to a single mode, and the speed is exponentially fast (Lemma \ref{lem:forward_convergence}). Even with $T=-\ln 0.95 $, the dis-connectivity of modes can be alleviated significantly. Thus, the choice of $T$ is not sensitive when choosing $T$ from $-\ln 0.95$ to $-\ln 0.7$. 

\begin{figure}[H]
    \centering
    \begin{tabular}{cccc}
          \includegraphics[width=3cm]{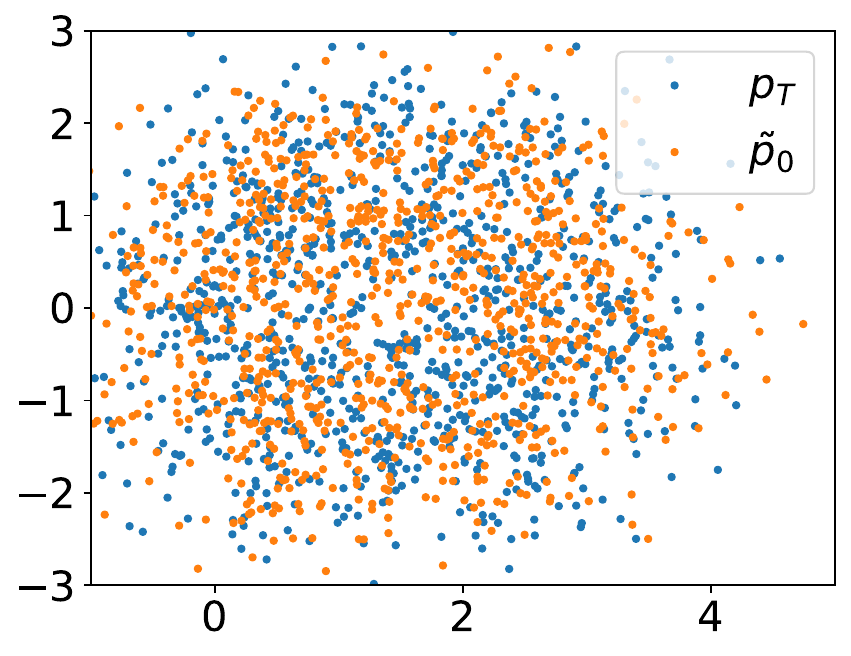} &
          \includegraphics[width=3cm]{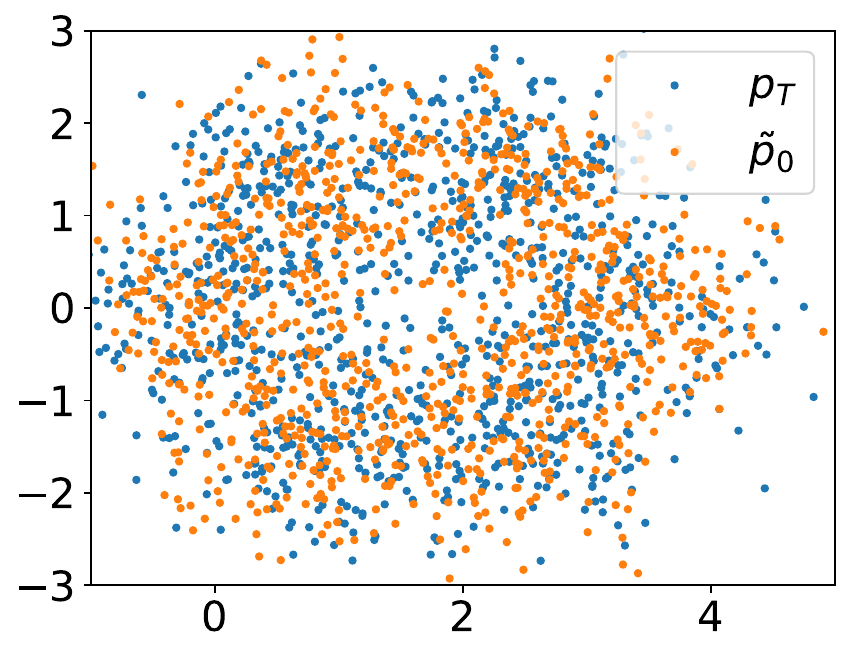} 
           &
          \includegraphics[width=3cm]{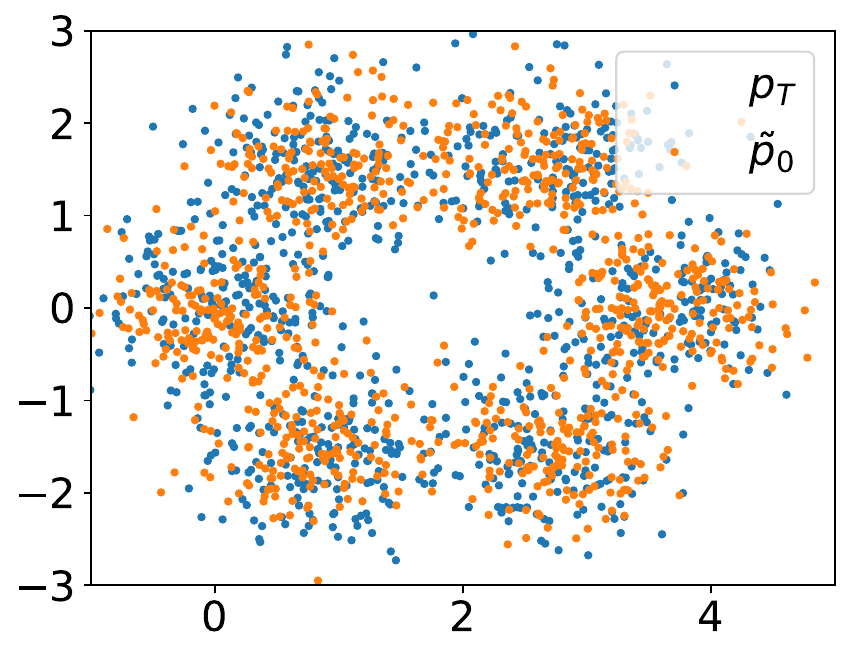}
          &
          \includegraphics[width=3cm]{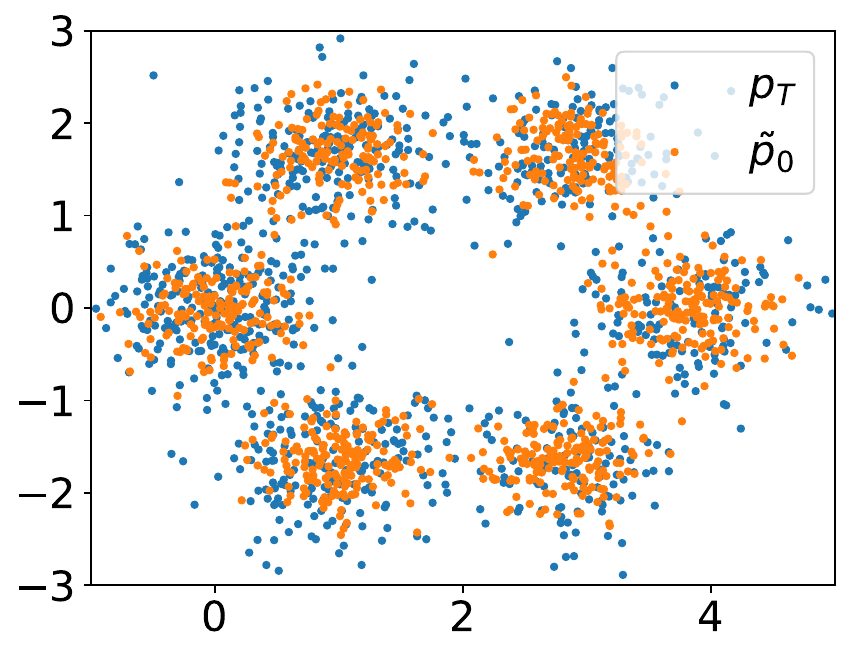}
          \\
   $T=-\ln 0.7$&
   $T=-\ln 0.8$&
   $T=-\ln 0.9$&
      $T=-\ln 0.95$
    \end{tabular}
    \caption{ Choice of $p_T$ and the approximation by $\tilde{p}_0$. For $T$ from $-\ln 0.95$ to $-\ln0.7$, $\tilde{p}_0$ can approximate $p_T$ properly, where each modes are connected to make the approximated distribution well-mixed. Then the \textsc{rdMC} can be performed properly.}
    \label{fig:pT}
    \vspace{-10pt}
\end{figure}

However, when choosing too small or too large $T$, there would be some consequence:
\begin{itemize}
    \item If $T$ is too small, the approximation of $\tilde{p}_0$ would be extremely hard. For example, if $T\rightarrow 0$, the algorithm would be similar to Langevin algorithm;
    \item If $T$ is too large, it would be wasteful to transport from $T$ to $-\ln 0.7$ since the distribution in this interval is highly homogeneous\footnote{It is possible to consider varied step size scheduling, which can be interesting future work.}.
\end{itemize}

In summary, aside from the $T \rightarrow 0$ scenario, our algorithm exhibits insensitivity to the choice of $T$ (or $\tilde{p}_0$). Selecting an appropriate $T$ can reduce computational demands when using a constant step size schedule. 

\subsection{Neal's Funnel}
Neal's Funnel is a classic demonstration of how traditional MCMC methods struggle with convergence unless specific parameterization strategies are employed. We further investigate the performance of our algorithm for this scenario. As indicated in Figure \ref{fig:funnel}, our method demonstrates more rapid convergence compared to Langevin-based MCMC approaches. Additionally, Figure \ref{fig:funnel_sample} reveals that while LMC lacks efficient exploration capabilities, ULMC fails to accurately represent density. This discrepancy stems from incorporating momentum/velocity. Our algorithm strikes an improved balance between exploration and precision, primarily due to the efficacy of the reverse diffusion path, thereby enhancing convergence speed.
\begin{figure}[H]
    \centering
    \begin{tabular}{ccc}
          \includegraphics[width=4cm]{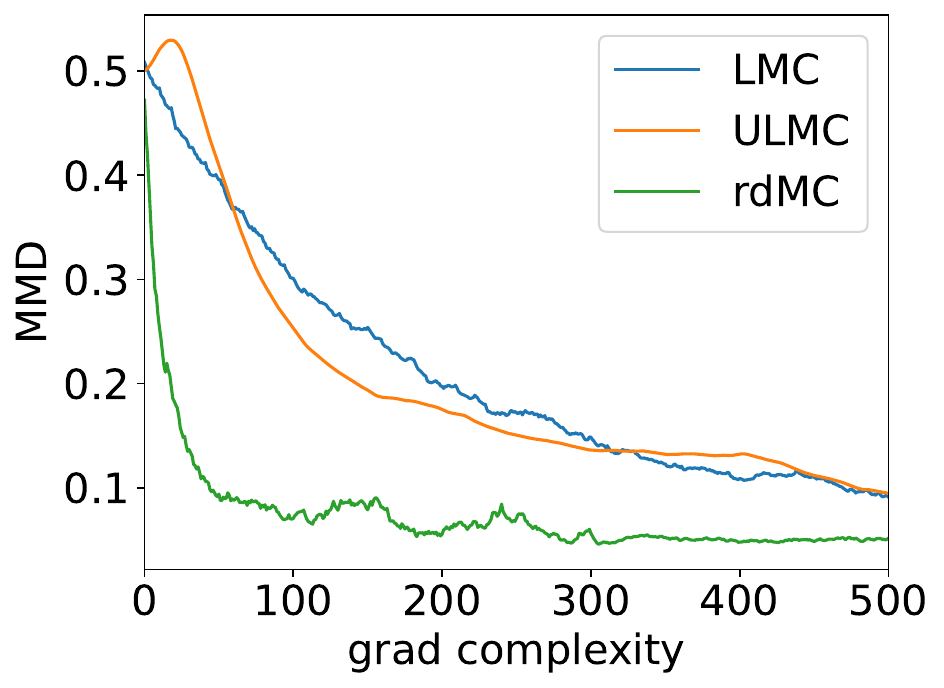} &
          \includegraphics[width=4cm]{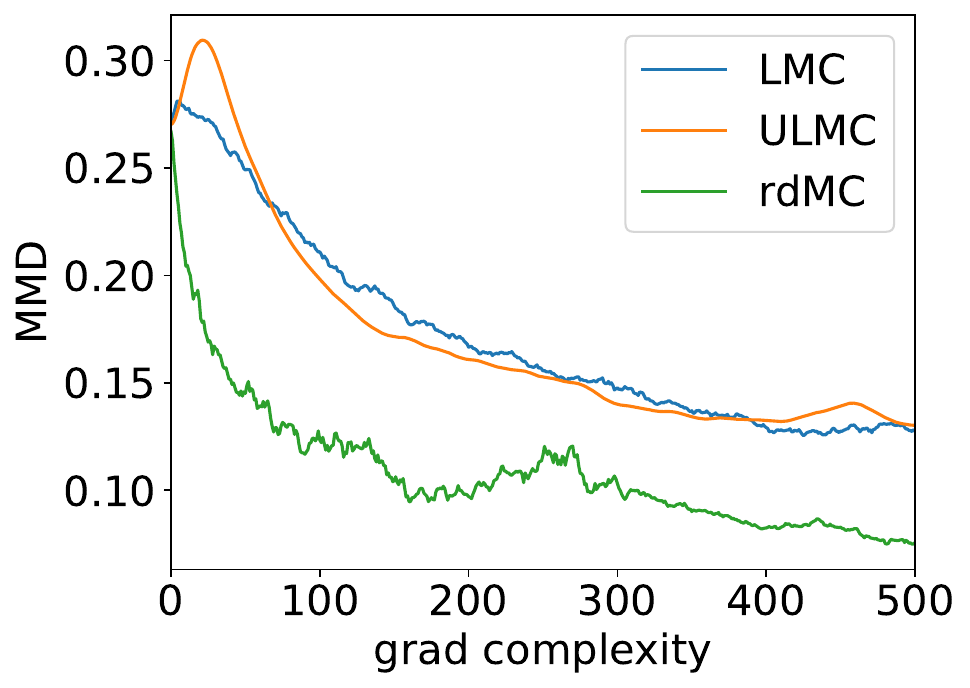} 
           &
          \includegraphics[width=4cm]{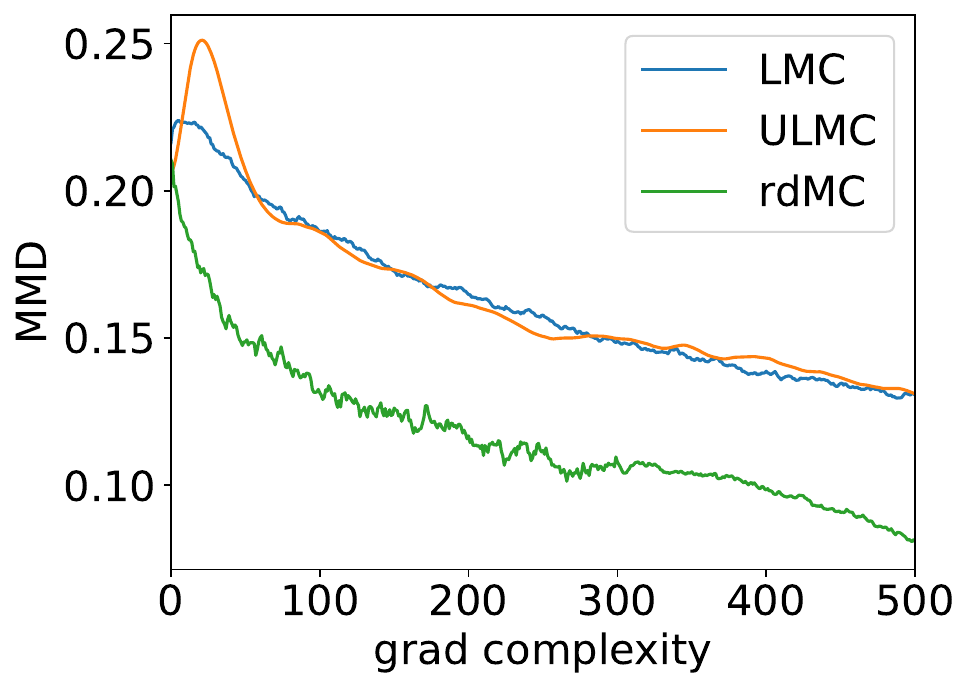}
          \\
   $d=2$&
    $d=5$&
    $d=10$\\
    \end{tabular}
    \caption{Convergence of Neal’s Funnel for \textsc{rdMC}, LMC, and ULMC.}
    \label{fig:funnel}
    \vspace{-10pt}
\end{figure}

\begin{figure}[H]
    \centering
    \begin{tabular}{cccc}
          \includegraphics[width=3cm]{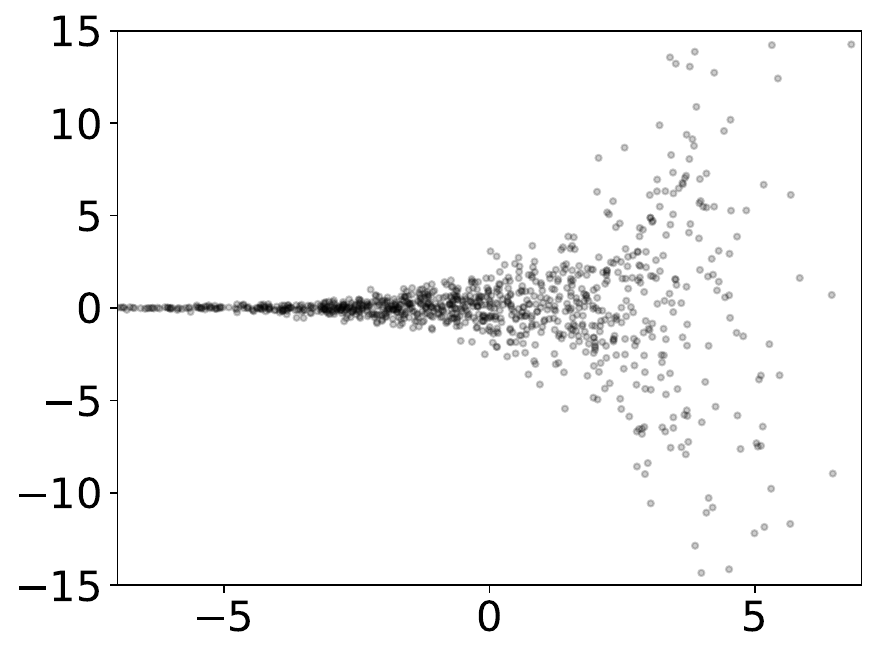} &
          \includegraphics[width=3cm]{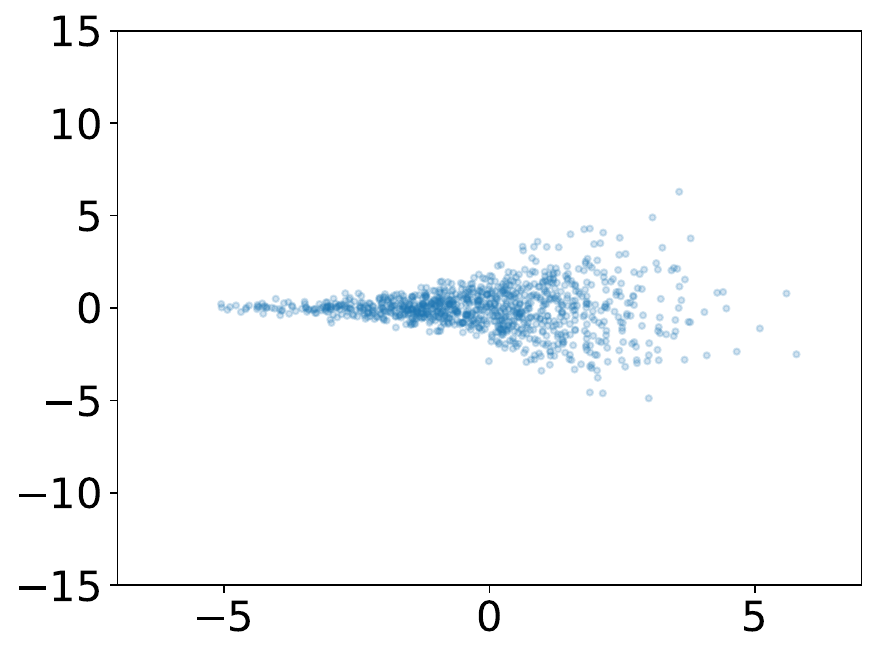} 
           &
          \includegraphics[width=3cm]{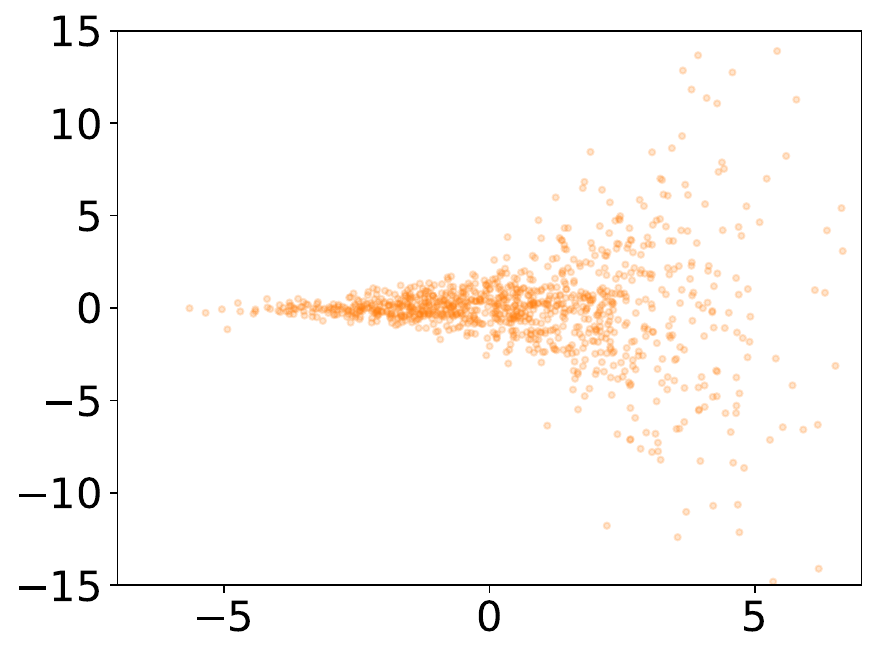}&
          \includegraphics[width=3cm]{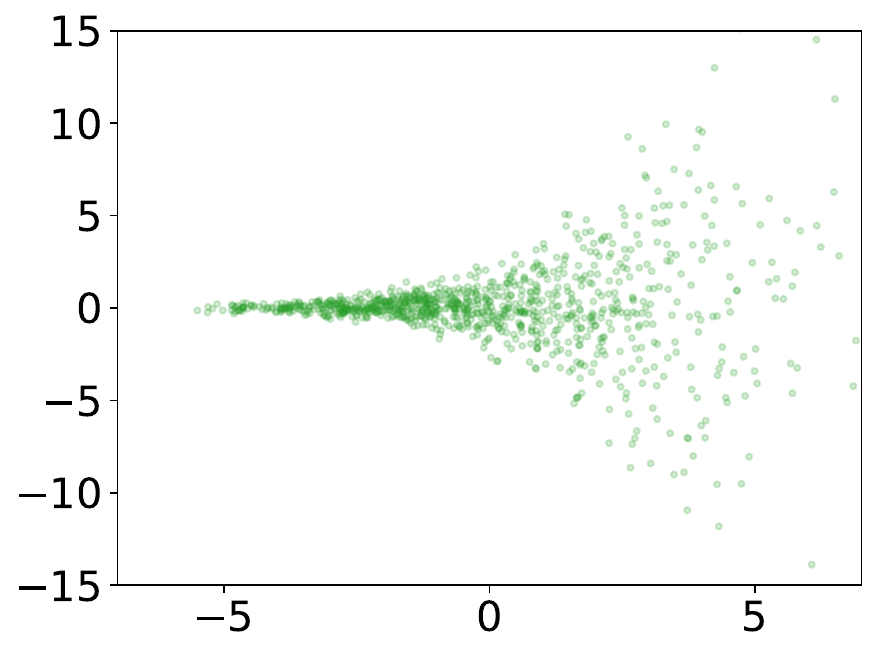}
          \\
   Ground Truth&
    LMC&
    ULMC&
    \textsc{rdMC}\\
    \end{tabular}
    \caption{Samples from Neal's Funnel.}
    \label{fig:funnel_sample}
\end{figure}

}

\end{document}